\documentclass{article}

\usepackage[nonatbib,preprint]{neurips_2026}
\usepackage{microtype}
\usepackage{graphicx}
\usepackage{subcaption}
\usepackage{booktabs} 

\usepackage{hyperref}
\usepackage{lscape}

\usepackage{amsmath}
\usepackage{amssymb}
\usepackage{mathtools}
\usepackage{amsthm}

\theoremstyle{plain}
\newtheorem{theorem}{Theorem}[section]

\newtheorem{lemma}[theorem]{Lemma}
\newtheorem{corollary}[theorem]{Corollary}
\theoremstyle{definition}

\newtheorem{assumption}[theorem]{Assumption}
\theoremstyle{remark}
\newtheorem{remark}[theorem]{Remark}

\usepackage{amsmath,amssymb,amsthm}
\usepackage{bm}
\usepackage[square,numbers]{natbib}
\usepackage{graphicx,here,comment}
\usepackage{algorithmic}
\usepackage{algorithm}
\usepackage{xcolor}
\usepackage{wrapfig}
\usepackage{footnote}
\usepackage{multirow}
\makesavenoteenv{table}  

\usepackage{mathtools}

\newcommand{\rbr}[1]{\left(#1\right)}
\newcommand{\sbr}[1]{\left[#1\right]}
\newcommand{\cbr}[1]{\left\{#1\right\}}

\newcommand{\R}{\mathbb{R}}
\newcommand{\N}{\mathbb{N}}
\newcommand{\mF}{\mathcal{F}}

\newcommand{\mH}{\mathcal{H}}

\newcommand{\mP}{\mathcal{P}}

\newcommand{\mN}{\mathcal{N}}
\newcommand{\mX}{\mathcal{X}}

\newcommand{\mR}{\mathcal{R}}
\newcommand{\mS}{\mathcal{S}}

\newcommand{\Ep}{\mathbb{E}}
\renewcommand{\Pr}{\mathbb{P}}

\renewcommand{\hat}{\widehat}
\renewcommand{\tilde}{\widetilde}

\newcommand{\argmin}{\operatornamewithlimits{argmin}}
\newcommand{\argmax}{\operatornamewithlimits{argmax}}

\def\bA{\bm{A}}
\def\bX{\mathbf{X}}

\def\bK{\mathbf{K}}
\def\bS{\mathbf{S}}
\def\bQ{\mathbf{Q}}
\def\bPsi{\mathbf{\Psi}}
\def\bPhi{\mathbf{\Phi}}

\def\bk{\bm{k}}

\def\bI{\bm{I}}

\def\bx{\bm{x}}

\def\by{\bm{y}}
\def\bz{\bm{z}}

\usepackage{color}

\usepackage{newtxtext,newtxmath}

\newcommand{\secref}[1]{Sec.~\ref{#1}}
\newcommand{\appref}[1]{Appendix~\ref{#1}}
\newcommand{\algoref}[1]{Alg.~\ref{#1}}
\newcommand{\asmpref}[1]{Assumption~\ref{#1}}

\newcommand{\thmref}[1]{Theorem~\ref{#1}}
\newcommand{\lemref}[1]{Lemma~\ref{#1}}
\newcommand{\corref}[1]{Corollary~\ref{#1}}

\newcommand{\tabref}[1]{Tab.~\ref{#1}}

\newcommand{\1}{\mbox{1}\hspace{-0.25em}\mbox{l}}

\newcommand{\sek}{k_{\mathrm{SE}}}
\newcommand{\matk}{k_{\mathrm{Mat}}}

\title{Nearly-Optimal Algorithm For \\ Adversarial Kernelized Bandits}

\author{%
  Shogo Iwazaki \\
  LY Corporation \\
  Tokyo, Japan \\
  \texttt{siwazaki@lycorp.co.jp} \\
}

\begin{document}
\maketitle

\begin{abstract}
    This paper studies kernelized bandits (also known as Gaussian process bandits) in an adversarial environment, where the reward functions in a known reproducing kernel Hilbert space (RKHS) may be adversarially chosen at each round. 
    We show that the exponential-weight algorithm achieves $\tilde{O}(\sqrt{T \gamma_T})$ adversarial regret, where $T$ and $\gamma_T$ denote the number of total rounds and the maximum information gain, respectively. For squared exponential (SE) and $\nu$-Mat\'ern kernels, we also show algorithm-independent lower bounds that guarantee the optimality of our algorithm up to polylogarithmic factors.
    Furthermore, we present a computationally efficient variant of our algorithm using Nystr\"om approximation while maintaining nearly optimal regret guarantees.
\end{abstract}

\section{Introduction}
\label{sec:intro}
The kernelized bandit (KB) problem, also known as the Gaussian process (GP) bandit, is an effective sequential decision-making framework for nonlinear reward functions. Its applications are extensive, e.g., robotics~\citep{lizotte2007automatic}, recommendation systems~\citep{yadav2024gaussian}, and hyperparameter tuning~\citep{snoek2012practical}. In prior research on KB, most existing works focus on the \emph{stochastic setting} in which the learner makes decisions from noisy observations of a common \emph{fixed} reward function over rounds.
A key desideratum in the application of KB is to develop algorithms that bypasses the restrictive assumptions in a stochastic setting, which are often invalid in practice. For example, in recommendation systems, the presence of a malicious user may critically degrade the performance of the stochastic KB algorithm~\citep{han2022adversarial}. Decision-making in the stock market may also encounter adversarial behavior from other market participants. 
To overcome the limitations of the stochastic setting, several existing works~\citep{chatterji2019online,takemori2021approximation} have studied the \emph{adversarial} KB problem, aiming to design algorithms under minimal smoothness assumptions encoded via the kernel. 

This paper addresses a significant open problem in adversarial KB: the construction of a nearly-optimal algorithm. Specifically, to our knowledge, existing algorithms fail to attain even a no-regret guarantee for the commonly used $\nu$-Mat\'ern kernels. Our algorithm and theoretical findings substantially enhance the existing theoretical guarantees in adversarial KB.

\paragraph{Contributions.} Our contributions are described below.
\begin{itemize}
    \item In \secref{sec:kernelized_exp3}, we propose \textbf{kernelized Exp3}, which is the kernelized extension of the exponential-weight algorithm for exploration and exploitation (Exp3). We provide an $\tilde{O}(\sqrt{T \gamma_T})$ regret upper bound for kernelized Exp3, where $T$ and $\gamma_T$ represent the total number of rounds and the \emph{maximum information gain} (MIG), respectively (the definition is in \secref{sec:prelim})\footnote{The notations $\tilde{O}(\cdot)$ and $\tilde{\Omega}(\cdot)$ ignore the polylogarithmic factors. A comprehensive notation table is in \appref{app:notation}.
    }. 
    In addition, this is the first guarantee for adversarial KB under an \emph{adaptive} adversary.
    \item In \secref{sec:exp3_nystrom_approx}, we propose \textbf{RLS-kernelized Exp3}, which is a variant of kernelized Exp3 that alleviates the heavy computational burden of the original kernelized Exp3. Specifically, by utilizing the existing ridge-leverage-score (RLS)-based Nystr\"om method~\citep{musco2017recursive}, the per-round computational complexity of RLS-kernelized Exp3 becomes $\tilde{O}(|\mX| \gamma_T^2 + \gamma_T^3)$, while that of the original kernelized Exp3 is $O(|\mX|^3)$. Here, $|\mX|$ denotes the cardinality of the input domain $\mX \subset \R^d$\footnote{As described in \secref{sec:kernelized_exp3}, our analysis is applicable to any compact input domain via the discretization. }.
    Furthermore, RLS-kernelized Exp3 also achieves an $\tilde{O}(\sqrt{T\gamma_T})$ regret.
    \item Finally, in \secref{sec:lower_bound}, we show the algorithm-independent lower bounds for the adversarial KB problem with SE or $\nu$-Mat\'ern kernels. Our lower bounds guarantee that the proposed algorithms are optimal up to polylogarithmic factors for SE and $\nu$-Mat\'ern kernels with $\nu > 1$.
\end{itemize}
\tabref{tab:summary} summarizes the existing algorithm and our results.

\subsection{Related works}
The KB algorithms in the stochastic setting have been widely studied, 
including e.g., GP-upper confidence bound (GP-UCB)~\citep{srinivas10gaussian}, GP-Thompson sampling (GP-TS)~\citep{chowdhury2017kernelized}, and theoretically more advanced non-adaptive sampling-based algorithms~\citep{camilleri2021high,li2022gaussian,salgia2021domain,valko2013finite}. In addition to the standard stochastic KB settings, several variants are also studied, e.g., the contextual model~\citep{krause2011contextual,valko2013finite}, the multi-objective setting~\citep{chowdhury2021no,zuluaga2016pal}, the parallel setting~\citep{desautels2014parallelizing}, the noise-free setting~\citep{bull2011convergence,iwazaki2025gaussian,iwazaki2025improvedregretanalysisgaussian,vakili2022open}, and the Bayesian setting~\citep{iwazaki2025improved,russo2014learning,Russo2014-learning,scarlett2018tight,takeno2023-randomized,takeno2026regret}. In addition to these settings, some existing studies have examined more robust or general variants of the standard stochastic KB model, which are orthogonal to the adversarial KB model in this paper. For example, robust objective settings~\citep{bogunovic2018adversarially,iwazaki2021mean,kirschner2020distributionally,nguyen2021value,saday2023robust}, adversarial corruptions of rewards~\citep{bogunovic2020corruption,bogunovic2022robust,han2022adversarial}, adversarial contextual information~\citep{sessa2019no,sessa2020learning}, and non-stationary rewards~\citep{cai2025lower,deng2022weighted,iwazaki2025near,zhou2021no} have been considered.

Regarding the adversarial KB problem, to our knowledge, \citep{chatterji2019online} is the first work to tackle it. However, their analysis is only valid under the quite restrictive rank $1$ assumption; specifically, the reward function $f_t$ at each round $t$ must have the form $f_t(\cdot) = k(\bz_t, \cdot)$. 
In addition, their lower bounds~(Theorem 43 in \citep{chatterji2019online}) only apply to an unrealistic infinite-dimensional linear kernel and do not imply lower bounds for an adversarial KB model with practically used SE or Mat\'ern kernels.\footnote{The concurrent work by \citet{zhang2026nearoptimalregretadversarialkernel} explicitly discuss this point. See Appendix B in \citep{zhang2026nearoptimalregretadversarialkernel} for details.} 
\citet{takemori2021approximation} proposed the algorithm, which broke the limitation of \citep{chatterji2019online}, called APG-Exp3. 
The core idea of \citep{takemori2021approximation} is to reduce the adversarial KB problem to an approximated (misspecified) linear bandit (LB) problem. However, due to the non-negligible effect of the linear approximation, the current analysis of APG-Exp3 fails to provide even a no-regret guarantee under the commonly-used $\nu$-Mat\'ern kernel with $d \geq \nu$ (see \tabref{tab:summary}). 
Another related work for adversarial KB is \citep{neu2024adversarial}, which studies the $K$-armed adversarial contextual KB problem and remains distinct from our setting. Their model considers independent $K$ reward functions on the RKHS defined over the context space (with $K \ll T$). In contrast, \citep{chatterji2019online,takemori2021approximation} and our analysis assume that the reward function is an element of RKHS defined on \emph{entire} decision space $\mX$, whose cardinality may be $|\mX| \gg T$ or infinite.

From a technical perspective, our algorithm construction and several parts of our analysis leverage the existing results in the adversarial LB and stochastic KB. Firstly, from the high-level view, our algorithm construction is interpreted as an extension of the existing Exp3-based algorithms for adversarial LB~\citep{bartlett2008high,lattimore2020bandit,zimmert2022return}. The differences from the existing Exp3 are the one-sample estimator of the reward function and the exploration distribution, which are redesigned by adapting the existing theoretical tools in the stochastic KB problem~\citep{cai2021lenient,camilleri2021high,hong2023optimization,vakili2021optimal}. 
Secondly, in our analysis of the regret upper bound, the existing information gain-based arguments by \citet{srinivas10gaussian} play a central role in quantifying the regret. To our knowledge, our paper is the first to bridge the adversarial KB theory with the information-gain-based arguments commonly used in stochastic KB analysis. Thirdly, regarding the RLS-kernelized Exp3 proposed in \secref{sec:exp3_nystrom_approx}, we construct our algorithm by combining our kernelized Exp3 with the existing RLS-based Nystr\"om approximation~\citep{musco2017recursive}. Furthermore, several parts of our regret analysis are motivated by the techniques in \citep{calandriello2019gaussian}, which studies the stochastic KB algorithm with Nystr\"om approximation. Finally, regarding the lower bound, our analysis reduces the adversarial KB lower bound to that of the stochastic KB analyzed in~\citep{cai2021on,scarlett2017lower}. Specifically, our hard problem instance in the proof of the lower bound is interpreted as the kernelized extension of the parameter noise model in LB (see Chap.~29 in \citep{lattimore2020bandit}), which is used to study the lower bound for adversarial LB~\citep{shamir2015complexity}.

\paragraph{Concurrent work.} The concurrent work by \citet{zhang2026nearoptimalregretadversarialkernel} studies the nearly-identical algorithm as our kernelized Exp3 in \secref{sec:kernelized_exp3}. Indeed, our kernelized Exp3 becomes identical to theirs by replacing our MVR-based exploration distribution with a G-optimal design-based one (defined in Appendix~\ref{app:exp_dist_discuss}). Furthermore, their regret analysis and ours are almost identical. The summary of the minor differences between ours and their algorithm can be found in Appendix B of \citep{zhang2026nearoptimalregretadversarialkernel}. We would like to note that our proposal of the computationally efficient nearly-optimal algorithm (RLS-kernelized Exp3 in \secref{sec:exp3_nystrom_approx}) and the lower bounds for SE and Mat\'ern kernels (in \secref{sec:lower_bound}) do not overlap with the contents in \citep{zhang2026nearoptimalregretadversarialkernel}.

\begin{table}[]
    \centering
    \caption{The summary of our results and the existing algorithm under a compact input domain $\mX \subset \R^d$. 
    Here, $D_T$ denotes the number of basis functions returned by the sub-routine in APG-Exp3. 
    For $\nu$-Mat\'ern kernel, $D_T = \tilde{O}\rbr{T^{\frac{d}{\nu}}}$~\citep{takemori2021approximation}, which are strictly greater than MIG $\gamma_T = \tilde{O}\rbr{T^{\frac{d}{2\nu+d}}}$~\citep{iwazaki2025improved,vakili2021information}. 
    Regarding the per-round computational cost in the table, we assume $|\mX| < \infty$. For the continuous input domain, the per-round computational cost can be obtained by substituting $|\mX|$ in the table with $O(T^{d/2})$ (the detail is in \secref{sec:kernelized_exp3}).}
    \begin{tabular}{ccccc}
        \toprule
        \multirow{2}{*}{Algorithm} & Regret &  Regret & Per-round & Adaptive \\
         & (General kernel) & ($\nu$-Mat\'ern) & computational cost & adversary \\ \hline \hline
        APG-Exp3 & \multirow{2}{*}{$\tilde{O}(\sqrt{T D_T})$}  & \multirow{2}{*}{$\tilde{O}\rbr{T^{\frac{\nu + d}{2\nu}}}$} & \multirow{2}{*}{$O\rbr{|\mX|D_T^2 + D_T^3}$} & \multirow{2}{*}{No} \\
        (\citet{takemori2021approximation}) & & & & \\ \hline
        Kernelized Exp3 & \multirow{2}{*}{$\tilde{O}(\sqrt{T \gamma_T})$} & \multirow{2}{*}{$\tilde{O}\rbr{T^{\frac{\nu+d}{2\nu+d}}}$} & \multirow{2}{*}{$O\rbr{|\mX|^3}$} & \multirow{2}{*}{Yes} \\
        (\textbf{Ours}, in \secref{sec:kernelized_exp3}) & & & & \\ \hline
        RLS-kernelized Exp3  & \multirow{2}{*}{$\tilde{O}(\sqrt{T \gamma_T})$} & \multirow{2}{*}{$\tilde{O}\rbr{T^{\frac{\nu+d}{2\nu+d}}}$} & \multirow{2}{*}{$\tilde{O}\rbr{|\mX|\gamma_T^2 + \gamma_T^3}$} & \multirow{2}{*}{Yes} \\
        (\textbf{Ours}, in \secref{sec:exp3_nystrom_approx}) & & & &
        \\ \hline \hline
        Lower bound & \multirow{2}{*}{N/A} &  \multirow{2}{*}{$\tilde{\Omega}\rbr{T^{\frac{\nu+d}{2\nu+d}}}$} & & \\
        (\textbf{Ours}, in \secref{sec:lower_bound}) & & & & \\
        \bottomrule
    \end{tabular}
    \label{tab:summary}
\end{table}

\section{Preliminaries}
\label{sec:prelim}

\paragraph{Problem setting.} We study the KB problem in an adversarial environment. Let $\mX \subset [0, 1]^d$ be the input domain of the problem. At each round $t$, the learner sequentially draws a query point $\bx_t \in \mX$ based on the history up to round $t-1$; subsequently, the bandit feedback of the reward $f_t(\bx_t)$ is revealed, where $f_t: \mX \rightarrow \R$ is the reward function drawn by the environment. 
This paper assumes the environment to be \emph{adaptive}. Namely, the environment can draw $f_t$ based on the learner's past history $\mH_{t-1} \coloneqq \{\bx_1, f_1(\bx_1), \ldots, \bx_{t-1}, f_{t-1}(\bx_{t-1})\}$, while ($f_t$, $\bx_t$) must be conditionally independent given $\mH_{t-1}$. Under this setup, the learner's goal is to minimize the following expected (pseudo) regret $\bar{R}_T$:
\begin{equation}
    \bar{R}_T = \sup_{\bx \in \mX} \Ep\sbr{R_T(\bx)},~\mathrm{where}~R_T(\bx) = \sum_{t \in [T]} f_t(\bx) - \sum_{t \in [T]} f_t(\bx_t)~\mathrm{and}~[T] = \{1, \ldots, T\}.
\end{equation}
In this paper, we adopt the following standard assumption of KB, which demands that the reward functions lie in a known RKHS.

\begin{assumption}
    \label{asmp:regularity}
    For some known positive definite kernel function $k: \mX \times \mX \rightarrow \R$ and some known bounded constant $B > 0$, 
    the reward function $f_t$ drawn by the environment satisfies $f_t \in \mH_k$ and $\|f_t\|_k \leq B$ for any $t \in \N_+$. 
    Here, $\mH_k$ and $\|\cdot\|_k$ denote the RKHS endowed with the kernel $k$ and its RKHS norm, respectively. Furthermore, we assume that $k(\bx, \bx) \leq 1$ for all $\bx \in \mX$.
\end{assumption}
As examples of the kernel, the following SE and $\nu$-Mat\'ern kernels are commonly studied:
\begin{align*}
    \sek(\bx, \bx') = \exp\rbr{\frac{\|\bx - \bx'\|_2^2}{2\ell^2}},~\matk(\bx, \bx') = \frac{2^{1-\nu}}{\Gamma(\nu)} \rbr{\frac{\sqrt{2\nu} \|\bx - \bx'\|_2}{\ell}}^{\nu} K_{\nu}\rbr{\frac{\sqrt{2\nu} \|\bx - \bx'\|_2}{\ell}},
\end{align*}
where $\ell > 0$ and $\nu > 0$ are the lengthscale and smoothness parameters, respectively. Furthermore, $\Gamma(\nu)$ and $K_{\nu}(\cdot)$ denote the Gamma function and the modified Bessel function of the second kind, respectively.

\paragraph{Gaussian process (GP) model.}
In a stochastic setting of KB, the GP model~\citep{Rasmussen2005-Gaussian} is a commonly used tool for guiding the learner. In the adversarial setting, although GP is not directly used in our algorithm construction, it plays an important role in quantifying regret via GP's information gain in the analysis. Let us consider a function $f$ that follows a mean-zero GP prior $f \sim \mathcal{GP}(0, k)$ characterized by a covariance kernel $k$. Then, for some training inputs $\bX = (\bx^{(1)}, \ldots, \bx^{(t)})$ and outputs $\bm{y} = (y^{(1)}, \ldots, y^{(t)})^{\top}$, the GP model assumes that outputs are generated as $y^{(i)} = f(\bx^{(i)}) + \epsilon^{(i)}$, where $\epsilon^{(i)} \sim \mN(0, \lambda)$ is the Gaussian noise with variance $\lambda > 0$, and is independent across $i \in [t]$. Under the aforementioned modeling assumptions, the posterior distribution of $f$ given $(\bX, \by)$ also follows GP~\citep{Rasmussen2005-Gaussian}. Specifically, its posterior variance $\sigma^2(\bx; \bX, \lambda)$ of $f(\bx)$ is often used to upper-bound the regret of the KB problem:
\begin{align}
    \label{eq:posterior_var}
    \sigma^2(\bx; \bX, \lambda) = k(\bx, \bx) - \bk(\bx, \bX)^{\top} \rbr{\bK(\bX, \bX) + \lambda \bI_t}^{-1} \bk(\bx, \bX),
\end{align}
where $\bk(\bx, \bX) \coloneqq [k(\bx, \bx^{(i)})]_{i \in [t]} \in \R^t$, $\bK(\bX, \bX) \coloneqq [k(\bx^{(i)}, \bx^{(j)})]_{i, j \in [t]} \in \R^{t \times t}$, and $\bI_t$ are kernel vector, kernel matrix, and $t\times t$-identity matrix, respectively.

\paragraph{Maximum information gain (MIG).}
The MIG of a GP is a widely-used complexity parameter for the stochastic KB problem~\citep{srinivas10gaussian}. Given some inputs $\bX = [\bx^{(i)}]_{i \in [t]}$, let $I(\bm{f}(\bX); \bm{y}) \coloneqq \frac{1}{2} \log \det(\bI_t + \lambda^{-1} K(\bX, \bX))$ be the mutual information between random vectors $\bm{f}(\bX) = (f(\bx^{(1)}), \ldots, f(\bx^{(t)}))^{\top} \sim \mN(\bm{0}, K(\bX, \bX))$ and $\bm{y} \coloneqq \bm{f}(\bX) + \bm{\epsilon}$, where $\bm{\epsilon} \sim \mN(\bm{0}, \bI_t)$. Namely, $I(\bm{f}(\bX); \bm{y})$ represents the mutual information between underlying function values and outputs under the Bayesian assumption of GP. Then, the MIG $\gamma_t(\lambda, \mX)$ is defined as the maximum amount of $I(\bm{f}(\bX); \bm{y})$ over all possible selections of $t$ inputs $\bX \in \mX^t$: $\gamma_t(\lambda, \mX) \coloneqq \sup_{\bX \in \mX^t} I(\bm{f}(\bX); \bm{y})$.
For simplicity, we use the notation $\gamma_t$ by omitting the dependence on $\lambda$ and $\mX$. For commonly used kernels, the growth rates of MIG are known. For example, for the SE kernel, $\gamma_t = O((\log t)^{d+1}/(\log \log t)^d)$ holds~\citep{iwazaki2026tighter}. Regarding $\nu$-Mat\'ern kernels, it is known that $\gamma_t = \tilde{O}(t^{\frac{\nu+d}{2\nu+d}})$~\citep{iwazaki2025improved,vakili2021information}.
Here, one of the well-known useful results for regret analysis of stochastic KB is the relation between MIG and the posterior variance of GP. Specifically, it is known that the cumulative posterior variance at the observed points is bounded from above by the MIG~\citep[Lemma 5.3 and 5.4 in][]{srinivas10gaussian}:
\begin{equation}
    \label{eq:sigma_2_cum_ub}
    \sum_{i \in [t]} \sigma^2 (\bx^{(i)}; \bX^{(i-1)}, \lambda) \leq \frac{2 \gamma_t}{\log (1 + \lambda^{-1})},
\end{equation}
where we define $\bX^{(i-1)}$ as $\bX^{(i-1)} = (\bx^{(1)}, \ldots, \bx^{(i-1)})$ for any $i$. Even in the adversarial KB setting, the above inequality plays a central role in upper-bounding the regret, as highlighted in \secref{sec:kernelized_exp3}.

\paragraph{Feature representation of kernel and GP.}
In our algorithm construction and analysis, we leverage the feature representation of kernels. In general, it is known that a positive definite kernel $k: \mX \times \mX \rightarrow \R$ cannot always be represented by a finite-dimensional feature map for a continuous input domain $\mX$; however, for a finite input domain $\mX$, we can always construct an $|\mX|$-dimensional feature map $\psi: \R^d \rightarrow \R^{|\mX|}$, which satisfies $k(\bx, \bx') = \psi(\bx)^{\top}\psi(\bx')$ for all $\bx, \bx' \in \mX$. 
For example, for kernel matrix $\bK(\mX, \mX) \coloneqq [k(\bx, \bx')]_{\bx, \bx' \in \mX}$, the Cholesky decomposition $\bK(\mX, \mX) \coloneqq \bm{L} \bm{L}^{\top}$ induces the feature map by defining $\psi(\bx^{(i)})$ as the $i$-th row vector of $\bm{L} \in \R^{|\mX| \times |\mX|}$.

\begin{algorithm}[t!]
    \caption{Kernelized Exp3 for adversarial kernelized bandits}
    \label{alg:exp3}
    \begin{algorithmic}[1]
        \REQUIRE Finite input domain $\mX$, kernel $k: \mX \times \mX \rightarrow \R$, learning rate $\eta > 0$, regularization parameter $\lambda \geq 1$, mixing ratio $\alpha \in (0, 1)$, confidence width parameter $\beta > 0$, 
        total budget $T \in \N_+$.
        \STATE Prepare feature map $(\psi(\bx))_{\bx \in \mX}$ (e.g., by Cholesky decomposition).
        \STATE Prepare the exploration distribution $\pi(\bx)$ by using MVR-input sequence $(\bx_t^{(\mathrm{MVR})})_{t \leq \lceil T \alpha \rceil}$:
        \begin{equation}
            \pi(\bx) = \frac{1}{\lceil T \alpha \rceil} \sum_{t=1}^{\lceil T \alpha \rceil} \1\cbr{\bx_t^{(\mathrm{MVR)}} = \bx},
        \end{equation}
        where $\bx_t^{(\mathrm{MVR})} \in \argmax_{\bx \in \mX} \sigma^2\rbr{\bx; \bX_{t-1}^{(\mathrm{MVR})}, \lambda}$ and $\bX_{t-1}^{(\mathrm{MVR})} = \left(\bx_1^{(\mathrm{MVR})}, \ldots, \bx_{t-1}^{(\mathrm{MVR})}\right)$.
        \STATE Initialize sampling distribution: $P_1(\bx) \leftarrow \alpha \pi(\bx) + (1 - \alpha) \tilde{P}_1(\bx)$, where $\tilde{P}_1(\bx) = \frac{1}{|\mX|}$.
        \FOR {$t = 1, 2, \ldots, T$}
            \STATE The environment draw $f_t$, and the learner draw $\bm{x}_{t} \sim P_t$ and receive the reward $f_t(\bx_t)$.
            \STATE Calculate the reward estimator $\hat{f}_t$ as in Eq.~\eqref{eq:reg_ips}.
            \STATE Calculate optimistic estimator $u_t$ as $u_t(\bx) = \hat{f}_t(\bx) + \beta \|\psi(\bx)\|_{G_t(\lambda)^{-1}}$.
            \STATE Update the sampling distribution:
            \begin{equation}
                P_{t+1}(\bx) = \alpha \pi(\bx) + (1 - \alpha) \tilde{P}_{t+1}(\bx),~\text{where}~\tilde{P}_{t+1}(\bx) = \frac{\exp\rbr{\eta \sum_{i=1}^t u_i(\bx)}}{\sum_{\bx' \in \mX} \exp\rbr{\eta \sum_{i=1}^t u_i(\bx')}}.
            \end{equation}
        \ENDFOR
    \end{algorithmic}
\end{algorithm}

\section{Exponential-weight algorithm for adversarial kernelized bandits}
\label{sec:kernelized_exp3} 
The pseudocode for our first algorithm, kernelized Exp3, is presented in \algoref{alg:exp3}. While \algoref{alg:exp3} assumes that $\mX$ is a finite set, the extension to a continuous input domain is obtained by discretizing $\mX$, as discussed later in this section.
At a high-level viewpoint, \algoref{alg:exp3} is constructed based on the Exp3 algorithm for adversarial LB~\citep{bartlett2008high,zimmert2022return}, using a kernel feature map. 
The primary differences lie in (i) the choice of the estimator for $f_t(\cdot)$ (Lines 7--8) and the choice of the exploration distribution (Line~2).
\paragraph{Estimator.} To use the Exp3 algorithm, at each round $t$, we require an estimator $\hat{f}_t(\cdot)$ of the reward function. The standard Exp3 algorithm in the adversarial LB adopts an unbiased estimator based on inverse propensity score (IPS); however, the IPS estimator suffers from high variance as the feature map dimension increases, making it unsuitable for a kernelized setting. To address this issue, we adopt the following regularized version of the IPS estimator, whose effectiveness has been demonstrated in the stochastic KB literature~\citep{camilleri2021high,hong2023optimization,mason2022nearly}:
\begin{equation}
    \label{eq:reg_ips}
    \hat{f}_t(\bx) = \psi(\bx)^{\top} G_t(\lambda)^{-1} \psi(\bx_t) f_t(\bx_t),~\text{where}~~G_t(\lambda) = \sum_{\bx \in \mX} P_t(\bx) \psi(\bx) \psi(\bx)^{\top} + \frac{\lambda}{T} \bI_{|\mX|}.
\end{equation}
Here, $P_t(\cdot)$ is the sampling distribution of the learner at round $t$, and $\lambda > 0$ is the regularization parameter. Note that, when $\lambda = 0$, 
$\hat{f}_t(\cdot)$ reduces to the standard IPS estimator used in LB~\citep[see, e.g., Chap.~27 in ][]{lattimore2020bandit}.
Due to the regularization, the estimator $\hat{f}_t(\bx)$ is not unbiased when $\lambda > 0$. To prevent the exploration behavior from being undermined by the bias of the estimator, we use the optimistic estimator $u_t(\cdot) \coloneqq \hat{f}_t(\cdot) + \beta \|\psi(\cdot)\|_{G_t(\lambda)^{-1}}$ (Line 8) in the exponential-weight procedures by adding the optimistic term $\beta \|\psi(\bx)\|_{G_t(\lambda)^{-1}}$, where $\|\psi(\bx)\|_{G_t(\lambda)^{-1}} \coloneqq \sqrt{\psi(\bx)^{\top} G_t(\lambda)^{-1} \psi(\bx)}$. 
Note that the addition of such a term is the common algorithm construction strategy in the adversarial LB~\citep{bartlett2008high,zimmert2022return}.

\paragraph{Exploration distribution.} As with Exp3 for adversarial LB, we adopt an exploration distribution $\pi(\bx)$ to prevent the estimator $\hat{f}_t(\cdot)$ from becoming unstable due to the extremely small sampling probabilities. As we will see later in \lemref{lem:est_var}, the variance (the second moment) of the estimator $\hat{f}_t(\cdot)$ is dominated by the posterior variance of a GP, whose inputs are sampled from the distribution $P_t$.
Based on this fact, we propose using the empirical distribution of the input sequence generated by \emph{maximum variance reduction} (MVR)~\citep{vakili2021optimal} (see, \algoref{algo:mvr} in Appendix~\ref{app:pseudocode}), which greedily minimizes the maximum posterior variance of the GP.

\paragraph{Regret upper bound.} We provide the following regret upper bound for kernelized Exp3.

\begin{theorem}[General regret upper bound for kernelized Exp3]
    \label{thm:reg_exp3}
    Suppose that Assumption~\ref{asmp:regularity} holds. Assume that $\mX$ is finite. Furthermore, set $\lambda \geq 1$, $\beta = B\sqrt{\lambda/T}$, $\eta \in (0, \overline{\eta})$, and $\alpha = 2\eta (2 B \gamma_T + \beta \sqrt{\gamma_T})$, where $\overline{\eta} = 2^{-1} (2B \gamma_T + \beta \sqrt{\gamma_T})^{-1}$.
    Then, \algoref{alg:exp3} satisfies
    \begin{equation}
        \label{eq:raw_regret_ub}
        \bar{R}_T \leq 2\alpha B T + \frac{\log |\mX|}{\eta} + 8 \eta (\lambda + T) B^2 \gamma_T + 4B\sqrt{\lambda T \gamma_T}.
    \end{equation}
\end{theorem}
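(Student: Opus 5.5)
The plan follows the standard Exp3 analysis for adversarial linear bandits (as in Chap.~27 of \citep{lattimore2020bandit} and \citep{zimmert2022return}). The regret is split into the exploration-mixing cost, the exponential-weights regret on the optimistic estimates $u_t$, and the bias of $u_t$. Write $\theta_t \in \R^{|\mX|}$ for the representer of $f_t$, so $f_t(\bx) = \psi(\bx)^{\top}\theta_t$ with $\|\theta_t\|_2 \le B$. Conditionally on $\mH_{t-1}$, the adaptive adversary fixes $f_t$ and $P_t$, and $\bx_t\sim P_t$ is independent of $f_t$. Hence
\[
\Ep_t[\hat f_t(\bx)] = \psi(\bx)^{\top}G_t(\lambda)^{-1}\Big(G_t(\lambda)-\tfrac{\lambda}{T}\bI\Big)\theta_t = f_t(\bx) - \tfrac{\lambda}{T}\psi(\bx)^{\top}G_t(\lambda)^{-1}\theta_t .
\]
By Cauchy--Schwarz in the $G_t^{-1}$-norm, and since $G_t \succeq \frac{\lambda}{T}\bI$, the bias is at most $\frac{\lambda}{T}\|\psi(\bx)\|_{G_t^{-1}}\|\theta_t\|_{G_t^{-1}} \le B\sqrt{\lambda/T}\,\|\psi(\bx)\|_{G_t^{-1}} = \beta\|\psi(\bx)\|_{G_t^{-1}}$. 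Therefore $f_t(\bx)\le \Ep_t[u_t(\bx)] \le f_t(\bx) + 2\beta\|\psi(\bx)\|_{G_t^{-1}}$. All arguments hold conditionally round by round, so the adaptive adversary is handled by the tower property.

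\textbf{Step 1 (decomposition).} For a fixed comparator $\bx^*$, write
\[
\Ep R_T(\bx^*) = \Ep\sum_t\Big[f_t(\bx^*) - \sum_{\bx}P_t(\bx)f_t(\bx)\Big].
\]
Since $P_t = \alpha\pi + (1-\alpha)\tilde P_t$ and $|f_t|\le B$, the mixing costs at most $2\alpha BT$. Optimism gives $f_t(\bx^*) \le \Ep_t u_t(\bx^*)$, and the upper side of the bias bound gives $\sum_{\bx}\tilde P_t(\bx) f_t(\bx)\ge \sum_{\bx}\tilde P_t(\bx)\Ep_t u_t(\bx) - 2\beta\sum_{\bx}\tilde P_t(\bx)\|\psi(\bx)\|_{G_t^{-1}}$. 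This yields
\[
\Ep R_T(\bx^*) \le 2\alpha BT + \Ep\sum_t\Big[u_t(\bx^*) - \langle \tilde P_t, u_t\rangle\Big] + 2\beta\,\Ep\sum_t \langle \tilde P_t, \|\psi(\cdot)\|_{G_t^{-1}}\rangle .
\]
For the last term, use $\tilde P_t\le P_t/(1-\alpha)$, Jensen, and $\sum_{\bx}P_t(\bx)\psi(\bx)^{\top}G_t^{-1}\psi(\bx) = \mathrm{tr}(G_t^{-1}(G_t-\frac{\lambda}{T}\bI))$. This trace equals $\sum_i \frac{\mu_i}{\mu_i+\lambda/T}$ over the eigenvalues $\mu_i$ of $\sum_{\bx} P_t(\bx)\psi(\bx)\psi(\bx)^{\top}$, i.e.\ an effective dimension, which is $\le 2\gamma_T$ (roughly) by the log-det characterisation of $\gamma_T$. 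This produces the $\beta T\sqrt{\gamma_T}\approx B\sqrt{\lambda T\gamma_T}$ term.

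\textbf{Step 2 (exponential weights).} The standard Exp3 potential argument gives
\[
\sum_t\big[u_t(\bx^*)-\langle\tilde P_t,u_t\rangle\big] \le \frac{\log|\mX|}{\eta} + \eta\sum_t\langle\tilde P_t,u_t^2\rangle,
\]
provided $\eta u_t(\bx)\le 1$ for all $\bx$. \textbf{This uniform boundedness is the main obstacle}, and it is exactly where $\pi$ and the choice of $\alpha$ enter. I would bound $|\hat f_t(\bx)|\le B\,\|\psi(\bx)\|_{G_t^{-1}}\|\psi(\bx_t)\|_{G_t^{-1}}$ and control $\max_{\bx}\|\psi(\bx)\|^2_{G_t^{-1}}$ using $G_t\succeq \alpha\sum_{\bx}\pi(\bx)\psi(\bx)\psi(\bx)^{\top} + \frac{\lambda}{T}\bI$. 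Factoring out $\alpha$ and $N=\lceil T\alpha\rceil$ gives
\[
\|\psi(\bx)\|^2_{G_t^{-1}} \le \frac{N}{\alpha}\,\psi(\bx)^{\top}\Big(\sum_{s\le N}\psi(\bx_s^{\mathrm{MVR}})\psi(\bx_s^{\mathrm{MVR}})^{\top}+\lambda'\bI\Big)^{-1}\psi(\bx),
\]
with $\lambda' = \lambda N/(\alpha T)\ge\lambda$. This is $\frac{N}{\alpha\lambda'}\sigma^2(\bx;\bX_N^{\mathrm{MVR}},\lambda') \le \frac{T}{\lambda}\sigma^2(\bx;\bX_N^{\mathrm{MVR}},\lambda)$. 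The MVR property together with Eq.~\eqref{eq:sigma_2_cum_ub} gives $\max_{\bx}\sigma^2(\bx;\bX_N^{\mathrm{MVR}},\lambda) \le \frac{1}{N}\sum_{s}\sigma^2(\bx_s;\bX_{s-1}^{\mathrm{MVR}},\lambda) \lesssim \gamma_T/N$, since $\sigma^2$ is monotone non-increasing in the data. Hence $\max_{\bx}\|\psi(\bx)\|^2_{G_t^{-1}}\lesssim \gamma_T/\alpha$ up to constants absorbed via $\lambda\ge1$. Then $|u_t|\lesssim (2B\gamma_T+\beta\sqrt{\gamma_T})/\alpha$, and the choice $\alpha = 2\eta(2B\gamma_T+\beta\sqrt{\gamma_T})$ makes $\eta|u_t|\le 1/2$; the condition $\eta<\overline\eta$ ensures $\alpha<1$. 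I expect some care in matching these constants exactly.

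\textbf{Step 3 (second moment).} It remains to bound $\Ep_t\langle\tilde P_t,u_t^2\rangle \le 2\Ep_t\langle\tilde P_t,\hat f_t^2\rangle + 2\beta^2\langle\tilde P_t,\|\psi\|^2_{G_t^{-1}}\rangle$. For the first part,
\[
\Ep_t \hat f_t(\bx)^2 \le B^2\,\psi(\bx)^{\top}G_t^{-1}\Big(\sum_{\bx'} P_t(\bx')\psi(\bx')\psi(\bx')^{\top}\Big)G_t^{-1}\psi(\bx) \le B^2\|\psi(\bx)\|^2_{G_t^{-1}},
\]
using $f_t(\bx_t)^2\le B^2$. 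Averaging over $\tilde P_t\le P_t/(1-\alpha)$ again gives the effective-dimension trace, at most $\approx 2\gamma_T$; this is the content of \lemref{lem:est_var} relating it to GP posterior variance. With $\beta^2=B^2\lambda/T$, summing over $T$ rounds gives at most about $8\eta(\lambda+T)B^2\gamma_T$. Collecting Steps 1--3 yields Eq.~\eqref{eq:raw_regret_ub}.
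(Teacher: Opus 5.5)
Your proposal is essentially the paper's proof. It uses the same reduction to $\sum_t\Ep[f_t(\bx^{\ast})-\sum_{\bx}P_t(\bx)f_t(\bx)]$ with mixing cost $2\alpha BT$, and the same bias identity; your two-sided bound $f_t(\bx)\le\Ep_t[u_t(\bx)]\le f_t(\bx)+2\beta\|\psi(\bx)\|_{G_t(\lambda)^{-1}}$ simply merges the paper's terms (i) and (iii). It also uses the same MVR-based uniform bound as \lemref{lem:ub_ut} and the same second-moment control via \lemref{lem:est_var}. Bounding $\tilde{P}_t\le P_t/(1-\alpha)$ directly is a slightly shorter route than the paper's detour through $M_t(\lambda)$ and gives the same constants.

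A few local repairs are needed.

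First, in Step 2 the inequality $\frac{N}{\alpha\lambda'}\sigma^2(\bx;\bX_N^{(\mathrm{MVR})},\lambda')\le\frac{T}{\lambda}\sigma^2(\bx;\bX_N^{(\mathrm{MVR})},\lambda)$ is false as written. The prefactors are equal ($\frac{N}{\alpha\lambda'}=\frac{T}{\lambda}$ exactly), and the posterior variance is nondecreasing in the noise parameter, so the inequality runs the wrong way. Instead, lower $\lambda'$ to $\lambda$ inside the inverse, as in Eq.~\eqref{eq:lambda_ceil_ub}. This gives $\frac{N}{\alpha\lambda}\sigma^2(\bx;\bX_N^{(\mathrm{MVR})},\lambda)\le\frac{N}{\alpha\lambda}\cdot\frac{4\lambda\gamma_N}{N}\le\frac{4\gamma_T}{\alpha}$. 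With this constant you get $\eta u_t\le 1$ (not $1/2$), which is all \lemref{lem:exp3_standard} requires.

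Second, bound the trace $\sum_{\bx}P_t(\bx)\|\psi(\bx)\|^2_{G_t(\lambda)^{-1}}$ by $4\gamma_T$ via \lemref{lem:est_var}. Do not bound it by $2\gamma_T$ via the log-det definition of $\gamma_T$, which does not directly control an arbitrary weighted design. The constant $2$ can actually fail: take the identity kernel ($k(\bx,\bx)=1$, $k(\bx,\bx')=0$ for $\bx\neq\bx'$), $\lambda=1$, and $P_t$ uniform on $|\mX|\gg T$ points. Then the trace tends to $T$, while $2\gamma_T=T\log 2$.

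Finally, keep the factor $(1-\alpha)$ in front of the $\tilde{P}_t$-regret in Step 1 rather than dropping it. It cancels the $1/(1-\alpha)$ and $1/\sqrt{1-\alpha}$ produced by $\tilde{P}_t\le P_t/(1-\alpha)$. With the constant $4\gamma_T$, your three terms then become exactly $\frac{\log|\mX|}{\eta}$, $8\eta(\lambda+T)B^2\gamma_T$ and $4B\sqrt{\lambda T\gamma_T}$, matching Eq.~\eqref{eq:raw_regret_ub}.
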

The full proof is provided in \appref{app:proof_reg_exp3}.
The desired regret upper bound is obtained by tuning the learning rate $\eta$ depending on $T$ and $|\mX|$, as formally stated in the following \corref{coro:ke3_reg_tuned}.

\begin{corollary}[Regret upper bound for kernelized Exp3 with tuned learning rate $\eta$]
    \label{coro:ke3_reg_tuned}
    Fix $\lambda \geq 1$ and set $\beta = B\sqrt{\lambda/T}$, $\eta = \Theta(\sqrt{\log |\mX|}/\sqrt{T \gamma_T})$, and $\alpha = 2\eta (2 B \gamma_T + \beta \sqrt{\gamma_T})$.  
    Assume $|\mX| = o(\exp(T \gamma_T^{-1}))$\footnote{The condition $|\mX| = o(\exp(T \gamma_T^{-1}))$ is required to guarantee $\eta < \overline{\eta}$, where $\overline{\eta}$ is defined in \thmref{thm:reg_exp3}. Note that, if $|\mX| = \Omega(\exp(T \gamma_T^{-1}))$, the upper bound $O(\sqrt{T \gamma_T \log |\mX|})$ becomes meaningless since $\sqrt{T \gamma_T \log |\mX|} = \Omega(T)$.}. Furthermore, suppose that $B = \Theta(1)$ and $d = \Theta(1)$ hold. Then, we have $\bar{R}_T = O(\sqrt{T \gamma_T \log |\mX|})$.
\end{corollary}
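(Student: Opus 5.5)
The plan is to plug the prescribed parameters into the general bound \eqref{eq:raw_regret_ub} of \thmref{thm:reg_exp3} and bound each of its four terms by $O(\sqrt{T\gamma_T\log|\mX|})$. Before that, we check that the hypothesis $\eta<\overline{\eta}$ holds for the tuned $\eta=c\sqrt{\log|\mX|}/\sqrt{T\gamma_T}$, with $c$ a constant. Since $\lambda\ge 1$ is fixed and $\beta=B\sqrt{\lambda/T}\le B\sqrt{\lambda}$, we have $\beta\sqrt{\gamma_T}=O(\sqrt{\gamma_T})=O(\gamma_T)$. This uses $\gamma_T\gtrsim 1$, which holds for nondegenerate kernels once $T\ge 1$; otherwise one keeps the explicit sum. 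Hence $\overline{\eta}=\Theta(1/\gamma_T)$ when $B=\Theta(1)$.

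The condition $\eta<\overline{\eta}$ then reads $c\sqrt{\log|\mX|/(T\gamma_T)}<c'/\gamma_T$, that is, $\log|\mX|<(c'/c)^2\,T/\gamma_T$. This is exactly what the assumption $|\mX|=o(\exp(T\gamma_T^{-1}))$ provides: $\log|\mX|\le \epsilon T/\gamma_T$ for any fixed $\epsilon>0$ and all large $T$. The same computation shows $\alpha=2\eta(2B\gamma_T+\beta\sqrt{\gamma_T})<1$, so $\alpha\in(0,1)$ as Algorithm~\ref{alg:exp3} requires.

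Next we bound the four terms in turn.
\begin{itemize}
\item The first term is $2\alpha BT=4\eta BT(2B\gamma_T+\beta\sqrt{\gamma_T})=O(\eta T\gamma_T)=O(\sqrt{T\gamma_T\log|\mX|})$.
\item The second term is $\log|\mX|/\eta=O(\sqrt{T\gamma_T\log|\mX|})$.
\item For the third term, $\lambda=O(1)\le T$ gives $8\eta(\lambda+T)B^2\gamma_T=O(\eta T\gamma_T)$, the same order as the first term.
\item The fourth term is $4B\sqrt{\lambda T\gamma_T}=O(\sqrt{T\gamma_T})$, which is $O(\sqrt{T\gamma_T\log|\mX|})$ once $|\mX|\ge 3$ so that $\log|\mX|\ge 1$. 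If $|\mX|\le 2$, constants absorb this case.
\end{itemize}
Summing the four bounds gives the claim.

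The only delicate step is verifying $\eta<\overline{\eta}$. There the constant $c$ in the $\Theta(\cdot)$ and the $o(\cdot)$ condition must be matched, which may force the statement to hold only for sufficiently large $T$. That is consistent with the asymptotic $O(\cdot)$ claim, so I would state it that way. The assumption $d=\Theta(1)$ is not otherwise needed here beyond ensuring the constants hidden in $\gamma_T$ are fixed.
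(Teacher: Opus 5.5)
Your route is the same as the paper's: the corollary is just \thmref{thm:reg_exp3} with $\eta$ tuned, and your term-by-term bounds on $2\alpha BT$, $\log|\mX|/\eta$, $8\eta(\lambda+T)B^2\gamma_T$ and $4B\sqrt{\lambda T\gamma_T}$ are fine. The step that is wrong is the one you flagged as delicate, namely the verification of $\eta<\overline{\eta}$.

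The hypothesis $|\mX| = o(\exp(T\gamma_T^{-1}))$ does \emph{not} give $\log|\mX| \le \epsilon T/\gamma_T$ for every fixed $\epsilon>0$. That would be the much stronger condition $|\mX| = \exp(o(T/\gamma_T))$. Taking logarithms, the hypothesis only says $\log|\mX| \le T/\gamma_T - \omega(1)$, so eventually $\log|\mX| < T/\gamma_T$ and nothing better.

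For example, $|\mX| \approx e^{T/\gamma_T}/T$ satisfies the hypothesis, yet $\log|\mX|/(T/\gamma_T) \to 1$ whenever $\gamma_T = o(T/\log T)$. Note also that $\alpha = \eta/\overline{\eta}$ exactly. So for such $|\mX|$ and a constant $c > 1/(4B)$ in $\eta = c\sqrt{\log|\mX|/(T\gamma_T)}$, you get $\eta > \overline{\eta}$ and hence $\alpha > 1$. Then $P_t$ is not even a probability distribution, and the theorem cannot be invoked. Your claim that the condition holds for any fixed $c$ and all large $T$ is therefore false.

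The repair uses the freedom in the $\Theta(\cdot)$: you must take the constant small.
\begin{itemize}
\item From $\log|\mX| < T/\gamma_T$ you get $\eta < c/\gamma_T$.
\item The requirement $c/\gamma_T \le \overline{\eta} = (4B\gamma_T + 2\beta\sqrt{\gamma_T})^{-1}$ is equivalent to $c\,(4B + 2B\sqrt{\lambda/(T\gamma_T)}) \le 1$.
\item For a nondegenerate kernel and fixed $\lambda$, $T\gamma_T \ge T\gamma_1 \ge \lambda$ once $T$ is large. Then $c \le 1/(6B)$ suffices, which is a constant because $B = \Theta(1)$.
\end{itemize}
The only price is a factor $1/c = O(B)$ in the $\log|\mX|/\eta$ term, so the $O(\sqrt{T\gamma_T\log|\mX|})$ conclusion survives. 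The statement should thus be read as holding for a sufficiently small constant in $\eta$, not an arbitrary one.
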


\paragraph{Proof overview of \thmref{thm:reg_exp3}.} The basic proof strategy follows from that of Exp3. Compared with existing analyses in adversarial LB, the primary technical challenge is how to address the high dimensionality of the feature map $\psi(\bx)$. Specifically, by following the standard Exp3 analysis, we can observe that the quadratic form $\|\psi(\bx)\|_{G_t(\lambda)^{-1}}^2$ is the central quantity for governing the analysis. For example, the second moment $\sum_{\bx \in \mX} P_t(\bx) \hat{f}_t^2(\bx)$ and the supremum of the estimator, which are crucial quantities for Exp3 analysis~\citep[e.g., see Chaps.~11 and 27 in ][]{lattimore2020bandit}, are dominated by this quadratic form.
Without regularization $\lambda = 0$, in general, the worst-case value of this quantity depends on the dimensionality of the feature map, i.e.,~$\|\psi(\bx)\|_{G_t(0)^{-1}}^2 = \Theta(|\mX|)$. However, under the regularization, we can observe a simple yet useful connection among the quadratic form, the posterior variance of the GP, and the MIG. We introduce the following lemma, which is used in the various parts of our analysis to handle the quadratic form $\|\psi(\bx)\|_{G_t(\lambda)^{-1}}^2$ via MIG.

\begin{lemma}[Quadratic form of feature map, posterior variance, and MIG]
\label{lem:est_var}
    Fix any $T \in \N_+$ and $\lambda > 0$. Fix any finite input domain $\mX$, any kernel $k: \mX \times \mX \rightarrow \R$, and any probability mass function $P: \mX \rightarrow [0, 1]$ on $\mX$. Let $\bx^{(1)}, \ldots, \bx^{(T)}, \bx^{(T+1)} \sim P$ be an i.i.d. input sequence drawn from $P$. Furthermore, let $\psi(\cdot): \mX \rightarrow \R^{|\mX|}$ be any feature map of $k$, i.e., $k(\bx, \bx') = \psi(\bx)^{\top} \psi(\bx')$ for any $\bx, \bx' \in \mX$.
    Then, for any $\bz \in \mX$, the inequality: $\|\psi(\bz)\|_{G(T, \lambda, P)^{-1}}^2 \leq \frac{T}{\lambda}\Ep_{\bX^{(T)}} \sbr{\sigma^2 (\bz; \bX^{(T)}, \lambda)}$ holds,
    where $G(T, \lambda, P) = \sum_{\bx \in \mX} P(\bx) \psi(\bx) \psi(\bx)^{\top} + \frac{\lambda}{T} \bI_{|\mX|}$ and 
    $\bX^{(T)} = (\bx^{(1)}, \ldots, \bx^{(T)})$. Furthermore, if $\forall \bx \in \mX,~k(\bx, \bx) \leq 1$, we have
    \begin{align}
        \label{eq:iid_z_error}
        \Ep_{\bx^{(T+1)}}\sbr{\|\psi(\bx^{(T+1)})\|_{G(T, \lambda, P)^{-1}}^2}
        \leq \frac{T}{\lambda}\Ep_{\bX^{(T)}, \bx^{(T+1)}} \sbr{\sigma^2 (\bx^{(T+1)}; \bX^{(T)}, \lambda)} \leq \frac{2 \gamma_T}{\lambda \log(1 + \lambda^{-1})}.
    \end{align}
    In addition, if $\lambda \geq 1$, the rightmost quantity in Eq.~\eqref{eq:iid_z_error} is bounded from above as $\frac{2 \gamma_T}{\lambda \log(1 + \lambda^{-1})} \leq 4\gamma_T$.
\end{lemma}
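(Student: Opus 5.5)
The plan is to write the GP posterior variance in feature space and then use concavity of the matrix inverse to move the expectation inside.

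\emph{Step 1 (posterior variance in feature space).} Let $\bPsi \in \R^{T\times|\mX|}$ have rows $\psi(\bx^{(i)})^{\top}$. By the push-through identity $\bPsi^{\top}(\bPsi\bPsi^{\top}+\lambda\bI_T)^{-1} = (\bPsi^{\top}\bPsi+\lambda\bI)^{-1}\bPsi^{\top}$, the definition \eqref{eq:posterior_var} becomes
\begin{equation*}
\sigma^2(\bz;\bX^{(T)},\lambda) = \psi(\bz)^{\top}\bigl(\bI - (\bPsi^{\top}\bPsi+\lambda\bI)^{-1}\bPsi^{\top}\bPsi\bigr)\psi(\bz) = \lambda\,\psi(\bz)^{\top}\rbr{\bPsi^{\top}\bPsi+\lambda\bI}^{-1}\psi(\bz).
\end{equation*}
Since $\bPsi^{\top}\bPsi+\lambda\bI = T\,\hat G$ with $\hat G \coloneqq \frac1T\sum_{i}\psi(\bx^{(i)})\psi(\bx^{(i)})^{\top} + \frac{\lambda}{T}\bI$, this gives $\frac{T}{\lambda}\sigma^2(\bz;\bX^{(T)},\lambda) = \psi(\bz)^{\top}\hat G^{-1}\psi(\bz)$.

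\emph{Step 2 (Jensen).} We have $\Ep[\hat G] = G(T,\lambda,P)$. The map $A\mapsto A^{-1}$ is operator convex on positive definite matrices, so $\Ep[\hat G^{-1}] \succeq (\Ep\hat G)^{-1}$. This also follows from convexity of $A\mapsto v^{\top}A^{-1}v$, which is the supremum over $w$ of the functions $2w^{\top}v - w^{\top}Aw$, each affine in $A$. Applying this with $v = \psi(\bz)$ gives the first claim. I expect this step to be the only substantive one.

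\emph{Step 3 (second inequality).} For the second inequality, take $\bz = \bx^{(T+1)}$, which is independent of $\bX^{(T)}$. Conditioning on $\bx^{(T+1)}$ and applying the first claim pointwise gives the middle term of \eqref{eq:iid_z_error}. By exchangeability of the i.i.d.\ sequence, $\Ep[\sigma^2(\bx^{(T+1)};\bX^{(T)},\lambda)] = \Ep[\sigma^2(\bx^{(T+1)};\bX^{(t-1)},\lambda)]$ cannot be used directly, since the conditioning set sizes differ. Instead, posterior variance is nonincreasing as points are added, so for every $t\le T+1$,
\begin{equation*}
\Ep[\sigma^2(\bx^{(T+1)};\bX^{(T)},\lambda)] \le \Ep[\sigma^2(\bx^{(T+1)};\bX^{(t-1)},\lambda)] = \Ep[\sigma^2(\bx^{(t)};\bX^{(t-1)},\lambda)].
\end{equation*}
The equality holds because $(\bX^{(t-1)},\bx^{(T+1)})$ and $(\bX^{(t-1)},\bx^{(t)})$ have the same law. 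Averaging over $t\in[T]$ gives the bound $\frac1T\Ep\sum_{t\in[T]}\sigma^2(\bx^{(t)};\bX^{(t-1)},\lambda) \le \frac{2\gamma_T}{T\log(1+\lambda^{-1})}$ by \eqref{eq:sigma_2_cum_ub}, which applies because $k(\bx,\bx)\le1$. Multiplying by $T/\lambda$ yields the stated bound.

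\emph{Step 4 (the case $\lambda\ge1$).} Finally, for $\lambda\ge1$ we have $u \coloneqq \lambda^{-1}\in(0,1]$, and $\log(1+u)\ge u/2$ on this range, for instance by concavity of $\log(1+u)$ together with $\log 2\ge 1/2$. Hence $\lambda\log(1+\lambda^{-1})\ge 1/2$, and the rightmost quantity is at most $4\gamma_T$.
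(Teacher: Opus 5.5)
Your proof is correct and follows the paper's argument essentially step for step. The steps match: the feature-space form $\sigma^2(\bz;\bX^{(T)},\lambda)=\lambda\psi(\bz)^{\top}(\sum_i\psi(\bx^{(i)})\psi(\bx^{(i)})^{\top}+\lambda\bI)^{-1}\psi(\bz)$, then Jensen via convexity of $A\mapsto A^{-1}$, then monotonicity of the posterior variance together with the same-law swap of $\bx^{(T+1)}$ and $\bx^{(t)}$ and Eq.~\eqref{eq:sigma_2_cum_ub}, and finally $\log(1+a)\ge a/2$ on $[0,1]$. Your variational justification of Jensen is a nice elementary substitute for the paper's appeal to matrix convexity, and one wording slip should be fixed: your opening line says ``concavity'' of the matrix inverse, whereas Step 2 correctly uses convexity.
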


\lemref{lem:est_var} is derived by combining the direct consequence of matrix Jensen's inequality with information gain inequality in Eq.~\eqref{eq:sigma_2_cum_ub} from \citep{srinivas10gaussian}. We give the full proof in \appref{app:est_var_proof}. 
Our proof for \thmref{thm:reg_exp3} is obtained by carefully combining \lemref{lem:est_var} with the existing Exp3 analysis and the existing MVR analysis, in e.g., \citep{cai2021lenient,vakili2021optimal}.

\begin{remark}[Comparison with \citep{camilleri2021high,hong2023optimization}]
The similar results to the inequalities in \lemref{lem:est_var} have already been reported in \citep{camilleri2021high,hong2023optimization} using the generalized notion of MIG. Specifically, Lemma C.4 in \citep{hong2023optimization} claims $\Ep_{\bx^{(T+1)}}[\|\psi(\bx^{(T+1)})\|_{G(T, \lambda, P)^{-1}}^2] = O( \gamma_{\psi, T})$, where $\gamma_{\psi, T} = \max_{P \in \mP_{\mX}} \log \det(T\lambda^{-1}\sum_{\bx \in \mX} P(\bx) \psi(\bx) \psi(\bx)^{\top} + \bI_{|\mX|})$, and $\mP_{\mX}$ is the set of all distributions over $\mX$. Here, $\gamma_{\psi, T}$ is greater than or equal to the original MIG: $\gamma_{T} \leq \gamma_{\psi, T}$ from the definition~\citep{hong2023optimization}, and the converse inequality $\gamma_{\psi, T} = O(\gamma_T)$ is provided by the concurrent work~\citep{zhang2026nearoptimalregretadversarialkernel}.
\citet{zhang2026nearoptimalregretadversarialkernel} provides the proof for the inequality $\gamma_{\psi, T} \leq 2 \gamma_T$ when $\lambda \geq 1$. See Lemma~8 in \citep{zhang2026nearoptimalregretadversarialkernel} for details. The combination of this inequality with the existing results by \citet{camilleri2021high,hong2023optimization} also leads to the same guarantees as our Eq.~\eqref{eq:iid_z_error} when $\lambda \geq 1$.
\end{remark}

\paragraph{Extension to a continuous domain.}
The extension to a continuous domain\footnote{For an exponentially large finite input domain $|\mX| = \Omega(\exp(T))$, we can also obtain a tighter regret than that in \corref{coro:ke3_reg_tuned} by the discretization argument.} can be directly obtained by relying on the discretization argument used in the stochastic KB literature~\citep[e.g.,][]{chowdhury2017kernelized,li2022gaussian}. The only additional requirement is the following uniform Lipschitz condition of RKHS.
\begin{assumption}[Uniform Lipschitz condition]
    \label{asmp:uniform_lip}
    The function class $\mF_k(B) = \{f \in \mH_k \mid \|f\|_k \leq B\}$, from which the environment chooses $(f_t)$, satisfies $\forall f \in \mF_k(B),~\forall \bx, \bx' \in \mX,~|f(\bx) - f(\bx')| \leq L(B) \|\bx - \bx'\|_{2}$
    for some constant $L(B) \in (0, \infty)$, which may depend on $B$.
\end{assumption}
The above Lipschitz condition is relatively mild. For example, SE and Mat\'ern kernels with $\nu > 1$ satisfy this condition with $L(B) = CB$ for some constant $C > 0$\footnote{This constant $C > 0$ may depend on $d$, $\ell$, and $\nu$.}~\citep[e.g., Proposition 1 and Remark 5 in ][]{shekhar2020multi}.
By \asmpref{asmp:uniform_lip}, we can ignore the regret incurred from a sufficiently fine discretization. Specifically, we use a discretized input set $\tilde{\mX} \subset \mX \subset [0, 1]^d$ as a $\Theta(1/\sqrt{T})$-net of $\mX$ with respect to $\|\cdot\|_{2}$, as with \citep{li2022gaussian}\footnote{$\epsilon$-net $\tilde{\mX}$ of $\mX$ with respect to the norm $\|\cdot\|_{2}$ is a subset of $\mX$ such that $\forall \bx \in \mX,~\exists \bz \in \tilde{\mX}, \|\bx - \bz\|_{2} \leq \epsilon$ holds.}. Then, the existence of $\tilde{\mX}$ with $|\tilde{\mX}| = O(T^{\frac{d}{2}})$ is guaranteed~(e.g., Corollary 4.2.13 in \citep{vershynin2018high}), and the regret incurred from the cumulative discretization error is at most $O(\sqrt{T})$, which does not affect the growth rate of regret. Thus, we obtain the following corollary of \thmref{thm:reg_exp3}.

\begin{corollary}[Regret upper bound for kernelized Exp3 with general compact input domain]
    \label{eq:continuous_ke3_regret}
    Suppose that Assumptions~\ref{asmp:regularity} and \ref{asmp:uniform_lip} hold. Assume $\mX \subset [0, 1]^d$. Furthermore, let $\tilde{\mX} \subset \mX$ be a $\Theta(1/\sqrt{T})$-net of $\mX$.  
    Then, when running \algoref{alg:exp3} on $\tilde{\mX}$ with $\beta = B\sqrt{\lambda/T}$, $\eta = \Theta(\sqrt{\log T}/\sqrt{T \gamma_T})$, and $\alpha = 2\eta (2 B \gamma_T + \beta \sqrt{\gamma_T})$, we have $\bar{R}_T = O(\sqrt{T \gamma_T \log T})$.
\end{corollary}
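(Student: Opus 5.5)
The plan is to reduce to the finite-domain guarantee of \corref{coro:ke3_reg_tuned}, now applied on $\tilde{\mX}$, and to pay separately for the discretization error through \asmpref{asmp:uniform_lip}.

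Fix any comparator $\bx \in \mX$ and let $[\bx] \in \tilde{\mX}$ be a net point with $\|\bx - [\bx]\|_2 \le \epsilon = \Theta(1/\sqrt{T})$. We split
\begin{equation*}
R_T(\bx) = \sum_{t \in [T]} \rbr{f_t(\bx) - f_t([\bx])} + R_T([\bx]).
\end{equation*}
For the first sum, each $f_t$ lies in $\mF_k(B)$ by \asmpref{asmp:regularity}. \asmpref{asmp:uniform_lip} then bounds every summand by $L(B)\epsilon$. This bound is deterministic, so it survives the expectation even under an adaptive adversary, and the total is at most $T L(B) \epsilon = O(L(B)\sqrt{T}) = O(\sqrt{T})$ for fixed $B$. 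Consequently $\bar{R}_T \le O(\sqrt{T}) + \sup_{\bz \in \tilde{\mX}} \Ep[R_T(\bz)]$.

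The second term is exactly the pseudo-regret of \algoref{alg:exp3} run on the finite domain $\tilde{\mX}$ against comparators in $\tilde{\mX}$. The restrictions of the $f_t$ to $\tilde{\mX}$ lie in the RKHS of $k$ restricted to $\tilde{\mX}$. Their norm is still at most $B$, since restricting a function to a subset does not increase its RKHS norm: the restriction RKHS norm is the infimum over extensions. Also $k(\bx,\bx)\le 1$ still holds. So \asmpref{asmp:regularity} holds on $\tilde{\mX}$ and \thmref{thm:reg_exp3} applies.

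The MIG on $\tilde{\mX}$ satisfies $\gamma_T(\lambda, \tilde{\mX}) \le \gamma_T(\lambda, \mX)$, because the supremum is taken over a smaller set. Hence the bound of \thmref{thm:reg_exp3} can be stated with $\gamma_T$ of $\mX$. If we want to use the parameters as literally stated with $\gamma_T(\mX)$, we should check that the proof of \thmref{thm:reg_exp3} only uses $\gamma_T$ as an upper bound on the domain's MIG, for instance in \lemref{lem:est_var}. It does, so any upper bound on the MIG may be substituted.

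Since $|\tilde{\mX}| = O(T^{d/2})$, we get $\log|\tilde{\mX}| = O(d \log T) = O(\log T)$ for $d = \Theta(1)$. Choosing $\eta = \Theta(\sqrt{\log T/(T\gamma_T)})$ balances $\log|\tilde{\mX}|/\eta$ against $8\eta(\lambda+T)B^2\gamma_T$. With $\lambda$ fixed, this gives $O(\sqrt{T\gamma_T\log T})$. The term $2\alpha B T = O(\eta B^2 \gamma_T T)$ has the same order, because $\beta\sqrt{\gamma_T} \le B\sqrt{\lambda\gamma_T/T}$ is dominated by $B\gamma_T$. The term $4B\sqrt{\lambda T\gamma_T}$ is lower order.

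The main point needing care is the requirement $\eta < \overline{\eta}$ (equivalently $\alpha < 1$), which needs $\sqrt{\log T/(T\gamma_T)} \lesssim 1/\gamma_T$, i.e.\ $\gamma_T \log T \lesssim T$. I would handle it as in the footnote to \corref{coro:ke3_reg_tuned}. If this fails, then $\sqrt{T\gamma_T\log T} = \Omega(T)$, and the claim follows trivially from $\bar{R}_T \le 2BT$, which holds because $|f_t| \le B$. Otherwise we choose the constant in $\eta$ small enough that $\eta < \overline{\eta}$. Adding the $O(\sqrt{T})$ discretization term, which is dominated by the main term, then yields $\bar{R}_T = O(\sqrt{T\gamma_T\log T})$.
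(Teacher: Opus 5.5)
Your proposal is correct and follows the paper's route: you pay $O(\sqrt{T})$ for discretization via \asmpref{asmp:uniform_lip} on a $\Theta(1/\sqrt{T})$-net of size $O(T^{d/2})$, then apply \thmref{thm:reg_exp3} on $\tilde{\mX}$ with $\log|\tilde{\mX}| = O(\log T)$ and the tuned $\eta$, as in \corref{coro:ke3_reg_tuned}. The extra steps you add are all handled correctly and are left implicit in the paper: restriction to $\tilde{\mX}$ does not increase the RKHS norm, $\gamma_T(\lambda,\tilde{\mX}) \le \gamma_T(\lambda,\mX)$ may be substituted throughout, and the case split on whether $\eta < \overline{\eta}$ holds.
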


\corref{eq:continuous_ke3_regret} shows that kernelized Exp3 achieves an $\tilde{O}(\sqrt{T \gamma_T})$ regret, which is identical to the nearly-optimal regret obtained in the stochastic KB literature~\citep{li2022gaussian,salgia2021domain,valko2013finite}. By substituting the explicit upper bounds of the MIG, we can also obtain kernel-specific regret results. For the SE and $\nu$-Mat\'ern kernels, $\gamma_T = O((\log T)^{d+1}(\log \log T)^{-d})$~\citep{iwazaki2026tighter} and $\gamma_T = \tilde{O}(T^{\frac{d}{2\nu+d}})$~\citep{iwazaki2025improved,vakili2021information}, respectively.
Thus, for SE and $\nu$-Mat\'ern kernels, we have $\bar{R}_T = O(\sqrt{T (\log T)^{d+2}(\log \log T)^{-d}})$ and $\bar{R}_T = \tilde{O}(T^{\frac{\nu+d}{2\nu+d}})$, respectively. In addition, for the linear kernel, we have $\bar{R}_T = O(\sqrt{T} \log(T))$ since $\gamma_T = O(\log T)$~\citep{srinivas10gaussian}. This result matches the regret of Exp3 in the adversarial LB up to a logarithmic factor~\citep{bartlett2008high,lattimore2020bandit,zimmert2022return}.

\paragraph{Computational cost.} The primary drawback of kernelized Exp3 is its prohibitive computational cost, stemming from the manipulation of the $|\mX|$-dimensional feature map.
To obtain the MVR sequence used in the exploration distribution, we require $O(|\mX| \sum_{t=1}^{\lceil T\alpha \rceil} t^2) = O(|\mX| (T\gamma_T)^{3/2})$ computation via one-rank update of the matrix inverse. Regarding the per-round computational cost, we need to calculate the inverse of the $|\mX| \times |\mX|$ matrix $G_t(\lambda)$, which requires $O(|\mX|^3)$ computation\footnote{Note that the difference $G_t(\lambda) - G_{t-1}(\lambda)$ is generally full-rank since $P_{t}(\bx)$ is updated from $P_{t-1}(\bx)$ for all $\bx \in \mX$; thus, we cannot update $G_t(\lambda)^{-1}$ efficiently by relying on standard low-rank update procedures.}. Since the KB problem mainly focuses on the regime where $|\mX|$ is extremely large (e.g., as in \corref{eq:continuous_ke3_regret}, we must consider the case $|\mX| = \Theta(T^{d/2})$ for a continuous domain), $O(|\mX|^3)$ computation per round is computationally infeasible. We tackle this efficiency issue in the next section.

\section{Exponential-weight algorithm with RLS-based Nystr\"om approximation}
\label{sec:exp3_nystrom_approx}
Our second algorithm is a computationally efficient variant of kernelized Exp3, called RLS-kernelized Exp3. Due to space limitations, the pseudocode is provided in \algoref{alg:exp3_nystrom_approx} in Appendix~\ref{app:pseudocode}. 
The huge computational cost in kernelized Exp3 stems from calculating the inverse of the $|\mX| \times |\mX|$-matrix $G_t(\lambda)$. We mitigate this burden via a low-rank approximation of the feature map weighted by the distribution $P_t$. Specifically, the resulting approximation is built on a Nystr\"om approximation for the \emph{weighted} kernel function $\tilde{k}_t(\bx, \bx') \coloneqq \sqrt{P_t(\bx) P_t(\bx')}k(\bx, \bx')$\footnote{For readers unfamiliar with the Nyst\"om approximation of kernels, we refer to the related KB or kernel approximation literature \citep{calandriello2018statistical,calandriello2017distributed,calandriello2019gaussian,chowdhury2021no,musco2017recursive}.}.

At each round $t$, let $\bS_t \in \R^{|\mX| \times s_t}$ be a matrix, whose only single element in each column is strictly positive, and other elements are $0$. Here, $s_t \in [|\mX|]$ is determined within the algorithm as described later. Note that we can use $\bS_t$ to extract the $s_t$ weighted columns of some matrix $\bm{M} \in \R^{|\mX| \times |\mX|}$ by multiplying from the right as $\bm{M}\bS_t$. Thus, as with \citep{musco2017recursive}, we call $\bS_t$ a \emph{sampling matrix}.
We also define the \emph{weighted} feature matrix $\bar{\bPsi}_t = (\sqrt{P_t(\bx^{(1)})} \psi(\bx^{(1)}), \ldots, \sqrt{P_t(\bx^{(|\mX|)})} \psi(\bx^{(|\mX|)})) \in \R^{|\mX| \times |\mX|}$, where we arbitrarily index the elements of $\mX$ as $\mX = \{\bx^{(i)}\}_{i\in[|\mX|]}$. Note that $G_t(\lambda) = \bar{\bPsi}_t \bar{\bPsi}_t^{\top} + \frac{\lambda}{T} \bI_{|\mX|}$ holds from the definition. Furthermore, note that $\bar{\bPsi}_t$ corresponds to the feature matrix of the weighted kernel function $\tilde{k}_t(\bx, \bx')$.
In our algorithm, we approximate $G_t(\lambda)$ by projecting $\bar{\bPsi}_t$ into the column span of $\bar{\bPsi}_t \bS_t$. To be more specific, let us define the orthogonal projection matrix $\bQ_t \coloneqq (\bar{\bPsi}_t \bS_t) [(\bar{\bPsi}_t \bS_t)^{\top} (\bar{\bPsi}_t \bS_t)]^{\dagger}(\bar{\bPsi}_t \bS_t)^{\top} \in \R^{|\mX| \times |\mX|}$, where $\bm{M}^{\dagger}$ represents the pseudo-inverse of matrix $\bm{M}$. Then, RLS-kernelized Exp3 is constructed by replacing quantities $\hat{f}_t$ and $\|\psi(\bx)\|_{G_t(\lambda)^{-1}}$ in kernelized Exp3 with their low-rank approximated versions: $\tilde{f}_t(\cdot) \coloneqq \psi(\bx)^{\top} \tilde{G}_t(\lambda)^{-1} (\bQ_t \psi(\bx_t)) f_t(\bx_t)$ and $\tilde{\sigma}_t(\bx) \coloneqq \|\psi(\bx)\|_{\tilde{G}_t(\lambda)^{-1}}$, respectively, where $\tilde{G}_t(\lambda) = (\bQ_t \bar{\bPsi}_t) (\bQ_t \bar{\bPsi}_t)^{\top} + \frac{\lambda}{T} \bI_{|\mX|}$. After elementary matrix calculations, we can obtain the following identities, which can be efficiently calculated~(See Lemma~\ref{lem:identity_nystrom} for the detailed derivation):
\begin{align}
    \label{eq:tilde_f_nystrom}
    \tilde{f}_t(\bx) &= \frac{f_t(\bx_t)}{\sqrt{P_t(\bx)P_t(\bx_t)}}\phi_t(\bx)^{\top} \rbr{\sum_{\bx' \in \mX} \phi_t(\bx') \phi_t(\bx')^{\top} + \frac{\lambda}{T} \bI_{s_t}}^{-1} \phi_t(\bx_t), \\
    \label{eq:tilde_sigma_nystrom}
    \tilde{\sigma}_t^2(\bx) &= 
    \frac{T}{\lambda P_t(\bx)} \rbr{\tilde{k}_t(\bx, \bx) - \phi_t(\bx)^{\top} \phi_t(\bx)} + \frac{1}{P_t(\bx)} \phi_t(\bx)^{\top} \rbr{\sum_{\bx' \in \mX} \phi_t(\bx') \phi_t(\bx')^{\top} + \frac{\lambda}{T} \bI_{s_t}}^{-1} \phi_t(\bx),
\end{align}
where $\phi_t(\bx) = [(\bS_t^{\top} \tilde{\bK}_t \bS_t)^{1/2}]^{\dagger} \bS_t^{\top} \tilde{\bk}_t(\bx) \in \R^{s_t}$ with the weighted kernel matrix $\tilde{\bK}_t \coloneqq [\tilde{k}_t(\bx^{(i)}, \bx^{(j)})]_{i,j \in [|\mX|]} \in \R^{|\mX| \times |\mX|}$ and the weighted kernel vector $\tilde{\bk}_t(\bx) \coloneqq [\tilde{k}_t(\bx, \bx^{(i)})]_{i \in [|\mX|]} \in \R^{|\mX|}$.
The quantity $\phi_t(\bx)$ is referred to as the \emph{Nystr\"om embedding}~\citep{calandriello2019gaussian,chowdhury2021no} associated with the weighted kernel function $\tilde{k}_t$. From the above expressions, we can easily confirm that the calculation of $[\tilde{f}_t(\bx)]_{\bx \in \mX}$ and $[\tilde{\sigma}_t^2(\bx)]_{\bx \in \mX}$ requires only $O(|\mX| s_t^2 + s_t^3)$-computation (see the first paragraph in Appendix~\ref{app:proof_random_exp3} for details). This drastically improves on the $O(|\mX|^3)$-computational cost of the original kernelized Exp3 when $s_t \ll |\mX|$.

\paragraph{Selection of the sampling matrix.}
A remaining question is how to define the sampling matrix $\bS_t \in \R^{|\mX| \times s_t}$ such that the effect of the approximation does not severely degrade the algorithm performance, while maintaining $s_t \ll |\mX|$. To do so, we leverage the RLS-based Nystr\"om approximation methods~\citep{calandriello2017distributed,musco2017recursive}, in which we can guarantee both the approximation accuracy of the kernel and small $s_t$. Specifically, we adopt the recursive RLS-Nystr\"om algorithm proposed in \citep{musco2017recursive} (\algoref{algo:recursive_RLS}) as a subroutine to obtain $\bS_t$~(Line~7 in \algoref{alg:exp3_nystrom_approx}). 

\paragraph{Regret and computational cost for RLS-kernelized Exp3.} 
By combining \lemref{lem:est_var} with the existing guarantees in \citep{musco2017recursive}, we observe 
that the recursive RLS-Nystr\"om subroutine attains sufficient approximation accuracy to ensure $\tilde{O}(\sqrt{T \gamma_T})$ regret with $s_t = \tilde{O}(\gamma_T)$ and requires only $\tilde{O}(|\mX|\gamma_T^2)$-computation with high-probability. As a result, we can obtain the following \thmref{thm:reg_exp3_nystrom_approx}, which formally describes the regret and computational cost guarantees of RLS-kernelized Exp3.
\begin{theorem}[Regret upper bound and computational cost for RLS-kernelized Exp3]
    \label{thm:reg_exp3_nystrom_approx}
    Suppose that Assumption~\ref{asmp:regularity} holds. Assume that $\mX$ is finite and $|\mX| = o(\exp(T\gamma_T^{-1}))$ holds. Furthermore, suppose that $d$ and $B$ are $\Theta(1)$. Set algorithm parameters as $\beta = B(1 + \sqrt{2})\sqrt{\lambda/T}$, $\lambda \geq 1$, $\eta = \Theta(\sqrt{\log |\mX|}/\sqrt{T \gamma_T})$, $\alpha =  2 \eta\sqrt{3} (2\sqrt{2} B \gamma_{T} + \beta \sqrt{\gamma_T})$, and $\delta = 1/T^2$. Then, when running \algoref{alg:exp3_nystrom_approx}, we have $\bar{R}_T = O(\sqrt{T \gamma_T \log |\mX|})$.
    Furthermore, with probability at least $1 - 1/T^2$, for any $t \in [T]$, the computational cost at round $t$ is $\tilde{O}(|\mX|\gamma_T^2 + \gamma_T^3)$.
\end{theorem}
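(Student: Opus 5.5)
The plan is to redo the proof of \thmref{thm:reg_exp3} with $\hat{f}_t$ and $\|\psi(\bx)\|_{G_t(\lambda)^{-1}}$ replaced by $\tilde{f}_t$ and $\tilde{\sigma}_t$. The approximation error is controlled through a spectral sandwich between $\tilde{G}_t(\lambda)$ and $G_t(\lambda)$, which holds on a high-probability event produced by the recursive RLS-Nystr\"om subroutine.

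\textbf{Step 1: the good event.} By the guarantee of \citep{musco2017recursive} applied to the weighted kernel matrix $\tilde{\bK}_t$ with ridge $\lambda/T$ and failure probability $\delta = 1/T^2$ per round, with probability at least $1-1/T^2$ (after adjusting $\delta$ by a union bound over $t$) we have, for every $t$,
\[
\bar{\bPsi}_t\bar{\bPsi}_t^{\top} - \bQ_t\bar{\bPsi}_t\bar{\bPsi}_t^{\top}\bQ_t \preceq \tfrac{\lambda}{T}\bI_{|\mX|}
\]
and $s_t = O(d_{\mathrm{eff}}\log(d_{\mathrm{eff}}/\delta))$. The effective dimension is $d_{\mathrm{eff}} = \mathrm{tr}(\bar{\bPsi}_t^{\top} G_t(\lambda)^{-1}\bar{\bPsi}_t) = \sum_{\bx} P_t(\bx)\|\psi(\bx)\|^2_{G_t(\lambda)^{-1}}$. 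By \lemref{lem:est_var} this is at most $4\gamma_T$. Hence $s_t = \tilde{O}(\gamma_T)$, and the runtime $O(|\mX|s_t^2 + s_t^3)$ coming from Eqs.~\eqref{eq:tilde_f_nystrom}--\eqref{eq:tilde_sigma_nystrom} together with the subroutine cost $\tilde{O}(|\mX|\gamma_T^2)$ gives the computational claim. The sandwich also yields $\tfrac12 G_t(\lambda) \preceq \tilde{G}_t(\lambda) \preceq G_t(\lambda)$. Therefore $\tilde{\sigma}_t^2(\bx) \le 2\|\psi(\bx)\|^2_{G_t(\lambda)^{-1}}$, and every quantity of the form $\|\cdot\|_{\tilde{G}_t^{-1}}$ costs at most a factor $\sqrt{2}$. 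This is the origin of the $\sqrt{2}$ in $\alpha$, and a further factor $\sqrt{3}$ absorbs the projection bias below.

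\textbf{Step 2: bias and optimism.} Since $\bQ_t$ is a projection, $\Ep_t[\tilde{f}_t(\bx)] = \psi(\bx)^{\top}\tilde{G}_t^{-1}\bQ_t\bar{\bPsi}_t\bar{\bPsi}_t^{\top}\theta_t$, where $f_t = \psi^{\top}\theta_t$ and $\|\theta_t\|\le B$. Using $\bQ_t\bar{\bPsi}_t\bar{\bPsi}_t^{\top} = \tilde{G}_t - \tfrac{\lambda}{T}\bI - \bQ_t\bar{\bPsi}_t\bar{\bPsi}_t^{\top}(\bI-\bQ_t)$, the bias $f_t(\bx) - \Ep_t[\tilde{f}_t(\bx)]$ splits into two parts. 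The first is a regularization term bounded by $B\sqrt{\lambda/T}\,\tilde{\sigma}_t(\bx)$. The second is a projection-residual term bounded via Cauchy--Schwarz by $\tilde{\sigma}_t(\bx)\|(\bI-\bQ_t)\bar{\bPsi}_t\|\,B \le \sqrt{2}B\sqrt{\lambda/T}\,\tilde{\sigma}_t(\bx)$, using $\|(\bI-\bQ_t)\bar{\bPsi}_t\|^2\le\lambda/T$ from the sandwich. Hence $\beta = B(1+\sqrt2)\sqrt{\lambda/T}$ makes $u_t = \tilde{f}_t + \beta\tilde{\sigma}_t$ optimistic in expectation. Optimism bounds the comparator side, while on the learner side the bonus contributes $\beta\sum_t\sum_{\bx}P_t(\bx)\tilde{\sigma}_t(\bx) \le \beta T\sqrt{8\gamma_T}$, which is $O(B\sqrt{\lambda T\gamma_T})$.

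\textbf{Step 3: stability and second moment.} The standard Exp3 potential argument requires $\eta|u_t(\bx)|\le 1$ and a bound on $\sum_{\bx}P_t(\bx)u_t(\bx)^2$. For the first requirement, $|\tilde{f}_t(\bx)| \le B\,\|\psi(\bx)\|_{\tilde{G}_t^{-1}}\|\bQ_t\psi(\bx_t)\|_{\tilde{G}_t^{-1}}$. Each norm is bounded by $\sqrt2$ times the corresponding $G_t^{-1}$-norm, and, because $G_t \succeq \alpha\sum\pi\psi\psi^{\top}$, the first part of \lemref{lem:est_var} combined with the MVR analysis gives $\max_{\bx}\|\psi(\bx)\|^2_{G_t^{-1}} = O(\gamma_T/\alpha)$. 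This is exactly the step fixing $\alpha$. The one point to check is $\|\bQ_t\psi(\bx_t)\|_{\tilde{G}_t^{-1}}$: since $\bQ_t$ commutes with $\tilde{G}_t$, this norm is at most $\|\psi(\bx_t)\|_{\tilde{G}_t^{-1}}$. For the second moment, $\Ep_t\sum_{\bx}P_t(\bx)\tilde{f}_t(\bx)^2 \le B^2\,\mathrm{tr}(\tilde{G}_t^{-1}G_t\tilde{G}_t^{-1}G_t) \le 4B^2 d_{\mathrm{eff}} = O(B^2\gamma_T)$, again by \lemref{lem:est_var}. Combining these with the Exp3 regret decomposition, the exploration cost $2\alpha BT$, and the term $\log|\mX|/\eta$, and letting the failure event contribute at most $2BT\cdot T^{-2}$, we recover the bound of \thmref{thm:reg_exp3} up to constants. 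Tuning $\eta$ as in \corref{coro:ke3_reg_tuned} then gives $O(\sqrt{T\gamma_T\log|\mX|})$.

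I expect the main obstacle to be Step~1 together with the adaptivity. The RLS guarantee must be stated for $\tilde{\bK}_t$, which depends on the history, and the sampler's own randomness makes $\tilde{f}_t$ depend on $\bQ_t$. I would handle this by conditioning on $\mH_{t-1}$ and the sampler's randomness at round $t$ (which is independent of $\bx_t$), working on the good event throughout, and accounting for its complement crudely.
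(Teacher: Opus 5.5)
Your skeleton is the paper's (a recursive RLS-Nystr\"om good event with $d_{\mathrm{eff}}\le 4\gamma_T$ via \lemref{lem:est_var}, a bias split that fixes $\beta$, an MVR-based stability bound, and a second moment controlled through $d_{\mathrm{eff}}$). However, the spectral fact that Step~1 rests on is not what the subroutine gives.

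Eq.~\eqref{eq:epsilon_acc_dict} controls the \emph{sampled} matrix $\bar{\bPsi}_t\bS_t\bS_t^{\top}\bar{\bPsi}_t^{\top}$. From it you get only one-sided statements about the projection: $G_t(\lambda)\preceq 3\tilde{G}_t(\lambda)$ (\lemref{lem:Ppsi_Qpsi}) and $\bI-\bQ_t\preceq\frac{2\lambda}{T}G_t(\lambda)^{-1}$ (\lemref{lem:ort_proj_rel}). These are the sources of the $\sqrt{3}$ in $\alpha$ and the $\sqrt{2}$ in $\beta$. Your feature-space inequality $\bar{\bPsi}_t\bar{\bPsi}_t^{\top}-\bQ_t\bar{\bPsi}_t\bar{\bPsi}_t^{\top}\bQ_t\preceq\frac{\lambda}{T}\bI$ does not follow, because the cross block $\bQ_t\bar{\bPsi}_t\bar{\bPsi}_t^{\top}(\bI-\bQ_t)$ is in general only $O(\sqrt{\lambda/T})$. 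Its consequence $\tilde{G}_t(\lambda)\preceq G_t(\lambda)$ is false in general: two-column examples satisfying Eq.~\eqref{eq:epsilon_acc_dict} already give $G_t-\tilde{G}_t$ a negative eigenvalue.

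The two one-sided facts suffice for $\tilde{\sigma}_t$, the bias and stability, but your second-moment step uses the false direction. The second moment is at most $B^2\,\mathrm{tr}\bigl(\tilde{G}_t^{-1}\Sigma_t\tilde{G}_t^{-1}\bQ_t\Sigma_t\bQ_t\bigr)$ with $\Sigma_t=\bar{\bPsi}_t\bar{\bPsi}_t^{\top}$. Replacing $\bQ_t\Sigma_t\bQ_t$ by $\Sigma_t$ is unjustified. With the regularized $G_t$ in both slots, as you wrote it, the trace is at least $|\mX|-s_t$, not $O(d_{\mathrm{eff}})$. Instead use $\bQ_t\Sigma_t\bQ_t\preceq\tilde{G}_t$, which holds by definition, to get $\mathrm{tr}(\tilde{G}_t^{-1}\Sigma_t)\le 3d_{\mathrm{eff}}\le 12\gamma_T$, as in Eqs.~\eqref{eq:rke3_exp_cond}--\eqref{eq:rke3_APf2_last}. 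Also, the bound $\max_{\bx}\|\psi(\bx)\|^2_{G_t^{-1}}\le 4\gamma_T/\alpha$ comes from the exact identity of \lemref{lem:feature_map_gp} at the MVR points (Eqs.~\eqref{eq:psi_ub_first}--\eqref{eq:psi_mvr_sigma_ub}). It does not come from the i.i.d.\ first part of \lemref{lem:est_var}.

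The failure-event bookkeeping also needs one more ingredient. The indicator of $A=\bigcap_{t}A_t$ is not measurable at round $t$, since $A_s$ for $s>t$ depends on $\bx_t$ through $P_s$. So it cannot be pulled through the round-$t$ conditional expectations behind optimism and the second moment. Moreover, off the good event the decomposition terms involve $\tilde{u}_t$, which is bounded only by $BT/\lambda+\beta\sqrt{T/\lambda}$ (\lemref{lem:sup-norm_utilde}), not by $2B$.

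The paper therefore uses $\1\{A_t\}$ in terms (i) and (iii) and inside the nonnegative second-moment terms, and reserves $A$ for the potential step. It pays $O(T\cdot\frac{BT}{\lambda}\cdot T^{-2})=O(1)$ for the complement, using $\Pr(A_t^c)\le T^{-3}$ and $\Pr(A^c)\le T^{-2}$. Your cheaper $2BT\cdot T^{-2}$ accounting is also valid, but only with the adapted indicator $\1\{\bigcap_{s\le t}A_s\}$, i.e., stopping the Exp3 potential argument at the first failed round; you should say so explicitly.

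One of your steps is cleaner than the paper's. Since $\bQ_t$ commutes with $\tilde{G}_t$, you have $\|\bQ_t\psi(\bx_t)\|_{\tilde{G}_t^{-1}}\le\|\psi(\bx_t)\|_{\tilde{G}_t^{-1}}$, so $|\tilde{f}_t(\bx)|\le 3B\max_{\bx'}\|\psi(\bx')\|^2_{G_t^{-1}}$ without the separate $(\bI-\bQ_t)$ term of \lemref{lem:ub_ut_tilde}. With the correct factor $3$, this gives $\eta\tilde{u}_t\le\sqrt{3/2}$ under the stated $\alpha$. The inequality $e^a\le 1+a+a^2$ behind \lemref{lem:exp3_standard} still holds in that range.
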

The full proof is provided in \appref{app:proof_random_exp3}.
The above theorem confirms that RLS-kernelized Exp3 is nearly optimal while offering improved computational complexity. Compared with the regret guarantees of the original kernelized Exp3~(\corref{coro:ke3_reg_tuned}), the price of the improved computational efficiency is only a constant factor in the regret upper bound, which is hidden in the implied constant of $\bar{R}_T = O(\sqrt{T \gamma_T \log |\mX|})$.

\paragraph{Remarks on Nystr\"om approximation.}
Although we leverage the recursive RLS-Nystr\"om algorithm in \citep{musco2017recursive} as a subroutine of our algorithm, other approximation methods may be applicable as long as (i) they have the spectral guarantees of the projection matrix~(Eq.~\eqref{eq:epsilon_acc_dict} in the appendix), and (ii) their computational cost guarantees are quantified by \emph{effective dimension} of the weighted kernel $\tilde{k}_t$ (see the proof of \lemref{lem:acc_comp_rls}). However, to our knowledge, no existing method provides both strictly better computational efficiency and tighter regret bounds than those in \thmref{thm:reg_exp3_nystrom_approx}.

\paragraph{Remarks on exploration distribution.}
Note that an $O(|\mX|(T\gamma_T)^{3/2})$ pre-processing cost for the preparation of the exploration distribution is required even for RLS-kernelized Exp3 (see the last paragraph in \secref{sec:kernelized_exp3}). If this overhead is unacceptable, one can approximate the MVR-sequence itself via recursive RLS-Nystr\"om, which alleviates $O(|\mX|(T\gamma_T)^{3/2})$ computation while maintaining the nearly-optimal guarantees of regret. This variant of RLS-kernelized Exp3 is given in Appendix~\ref{app:rke3_approx_mvr}. Another approach is to replace the MVR-based distribution with a simpler one, e.g., the uniform distribution. We discuss such possible choices for the exploration distribution in Appendix~\ref{app:exp_dist_discuss}; however, based on the known theoretical tools for KB, we are not aware of any choice that strictly reduces the computational cost of MVR-based exploration distributions without imposing further restrictive assumptions on the kernel.

Finally, although this paper mainly focuses on the theoretical side, the simulation experiments for supporting our superior guarantees are provided in \appref{app:sim_experiment}.

\section{Lower bounds}
\label{sec:lower_bound}
The following \thmref{thm:lb} provides the lower bounds for adversarial KB under $k = \sek$ or $k = \matk$.
\begin{theorem}[Lower bounds for adversarial KB]
    \label{thm:lb}
    Fix $\mX = [0, 1]^d$. Suppose that $k = \sek$ or $k = \matk$ holds. Suppose that $\nu, \ell, d$, and $B$ are $\Theta(1)$.
    Then, for any learner's algorithm, there exists a strategy of the environment that satisfies \asmpref{asmp:regularity} and
    \begin{equation}
        \label{eq:lb_result}
        \bar{R}_T = 
        \begin{cases}
            \Omega\rbr{\sqrt{T (\log T)^{\frac{d}{2}-1}}}~~&\mathrm{if}~~k = \sek, \\
            \Omega\rbr{T^{\frac{\nu + d}{2\nu + d}} (\log T)^{-\frac{\nu}{2\nu + d}}}~~&\mathrm{if}~~k = \matk.
        \end{cases}
    \end{equation}
\end{theorem}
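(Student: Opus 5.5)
We reduce the adversarial lower bound to the known stochastic KB lower bounds of \citet{scarlett2017lower} and \citet{cai2021on}, using a kernelized version of the parameter-noise construction (Chap.~29 in \citep{lattimore2020bandit}; \citep{shamir2015complexity}). The environment is oblivious and randomized. It first draws a hidden function $f^\ast$ uniformly from a finite hard class $\{g_1,\ldots,g_M\}\subset \mF_k(B/2)$: the class of shifted bump functions used by \citet{scarlett2017lower}, each of height $\epsilon$ supported on disjoint cells of $[0,1]^d$. It then sets $f_t(\cdot) = f^\ast(\cdot) + \xi_t\, h(\cdot)$, where $\xi_t$ are i.i.d.\ noise variables and $h \in \mH_k$ is a fixed function with $h\equiv c$ on $\mX$. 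We cannot take $h$ literally constant, because constants need not lie in the RKHS of $\sek$ or $\matk$. Instead we take $h$ to be a bounded-norm function that is approximately constant on the union of the bump supports, or we restrict attention to the region where the $g_j$ live.

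The key point of this construction is that the observation $f_t(\bx_t) = f^\ast(\bx_t) + \xi_t h(\bx_t)$ has exactly the form of a noisy stochastic KB observation. To make this work we need the following.

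\begin{enumerate}
\item We must make $\xi_t$ bounded so that $\|f_t\|_k\le B$ holds almost surely. We take $\xi_t$ to be uniform on $\{\pm 1\}$, or a truncated Gaussian, scaled so that $\|\xi_t h\|_k\le B/2$.
\item The KL-divergence arguments of \citep{scarlett2017lower,cai2021on} must still go through with this bounded noise. The only property they need is that the per-observation KL divergence between two hypotheses is $O(\Delta^2)$, where $\Delta$ is the mean gap. This holds for noise with a smooth density, e.g.\ a scaled Beta or truncated Gaussian. With Rademacher noise it fails, so we use a smooth bounded density instead.
\end{enumerate}

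Because the noise is zero-mean, the regret against any comparator reduces to the stochastic one:
\[
\Ep[R_T(\bx)] = \Ep\Big[\textstyle\sum_t f^\ast(\bx)-f^\ast(\bx_t)\Big].
\]
Taking the supremum over $\bx$ outside the expectation requires care, since $f^\ast$ is random and $\bar R_T$ places the $\sup$ outside $\Ep$. We handle this with a standard averaging step. By the probabilistic method, there exists a deterministic index $j$ such that the environment using $f^\ast=g_j$ yields stochastic regret at least the average regret over $j$. For that fixed $j$, the comparator $\bx^\ast_j$ is deterministic. Hence $\bar R_T \ge \Ep[R_T(\bx_j^\ast)]$ is at least the Bayesian-average stochastic regret, which is exactly the quantity lower-bounded in \citep{scarlett2017lower} for any algorithm.

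Finally, we plug in the rates. For the Mat\'ern kernel, \citep{scarlett2017lower} gives $\Omega(T^{\frac{\nu+d}{2\nu+d}})$. The extra $(\log T)^{-\frac{\nu}{2\nu+d}}$ factor in our statement comes from our need for bounded noise: we truncate a Gaussian at level $\sqrt{\log T}$, equivalently shrinking the effective signal-to-noise by a $\log T$ factor, and then re-optimize the bump height $\epsilon$ and cell count $M$. For the SE kernel, the same re-optimization gives $\sqrt{T(\log T)^{d/2-1}}$.

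The main obstacle is the requirement that $f_t$ lie in $\mF_k(B)$ almost surely while the noise enters additively in an RKHS where constants are not available. We first try $h=\sum_j$ of wide bumps that equals roughly $1$ on every cell, and we check that the residual non-constancy only perturbs the KL bound by constant factors. A further subtlety is that, because $h$ is not exactly constant, the noise could in principle leak information about $f^\ast$ through the varying scale $\xi_t h(\bx_t)$. This is harmless because $h$ is known to the learner and independent of $j$.
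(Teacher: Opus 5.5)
The gap is in your step~2. For any noise with bounded support (truncated Gaussian and scaled Beta just as much as Rademacher), the observation laws at $\bx_t$ under hypotheses $g_j$ and $g_{j'}$ are translates, by $g_j(\bx_t)$ and $g_{j'}(\bx_t)$, of the same compactly supported density, scaled by $h(\bx_t)$. Whenever these means differ, one law puts mass on an interval where the other has zero density. The per-observation KL divergence is then $+\infty$, not $O(\Delta^2)$. Regularity of the density can at best give a Hellinger bound of order $\Delta^2$; it never makes KL or $\chi^2$ finite. The change-of-measure step in \citep{scarlett2017lower,cai2021on}, namely Pinsker plus the KL chain rule over adaptively chosen $\bx_t$, needs KL. So you cannot feed bounded noise into those proofs as stated. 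Redoing them with Hellinger distance under adaptive sampling would be a separate argument that you have not sketched.

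The missing idea is to run the information-theoretic argument on unbounded Gaussian noise and transfer it to a bounded environment by coupling. Take the Gaussian instance $f_t = f + \sigma\eta_t k(\bm{0},\cdot)$ with $\eta_t\sim\mN(0,1)$ and $\sigma = B/(2\sqrt{2\log(2T^2)})$. There the stochastic bound applies with effective noise variance $\Theta(\sigma^2)$. The actual environment clips $\eta_t$ at $\pm\sqrt{2\log(2T^2)}$, which keeps $\|f_t\|_k\le B$ and keeps the noise symmetric, hence zero-mean.

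On the event that every $|\eta_t|$ stays below the threshold, the two environments generate identical histories (for a randomized learner, after fixing its internal randomness). This event fails with probability at most $T\cdot T^{-2}$. Since $\sum_t (f(\bx^\ast)-f(\bx_t))\le BT$, the clipped instance's regret is at least the Gaussian instance's regret minus $B$. This coupling is where $\sigma^2=\Theta(1/\log T)$ and the extra logarithmic factors come from. Note that a smaller noise variance \emph{raises} the signal-to-noise ratio, which is why the bound weakens; your phrasing has this backwards.

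You also need not make $h$ approximately constant. With $h=k(\bm{0},\cdot)$ you have $\|h\|_k=1$ and $h\ge k(\bm{0},\bm{1})>0$ on $[0,1]^d$. The Gaussian KL with input-dependent scale is then at most $(f(\bx)-f'(\bx))^2/(2\sigma^2k(\bm{0},\bm{1})^2)$, so the stochastic bound holds up to a $(d,\ell)$-dependent constant. Your averaging over $j$ to handle the supremum outside the expectation is fine.
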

The full proof is in \appref{app:proof_lb}.
From Theorem~\ref{thm:lb}, we confirm that kernelized Exp3 and RLS-kernelized Exp3 are optimal up to polylogarithmic factors for the SE and Mat\'ern kernels with $\nu > 1$. Compared with the lower bounds for the stochastic setting~\citep{scarlett2017lower}, there is an additional multiplicative logarithmic factor $(\log T)^{-1/2}$ and $(\log T)^{-\frac{\nu}{2\nu + d}}$ in the lower bounds for the SE and $\nu$-Mat\'ern kernels, respectively. As clarified in the proof sketch below, these logarithmic factors arise because our proof reduces the adversarial problem to a stochastic KB lower bound under a Gaussian noise.

\paragraph{Proof sketch.} We consider an environment that reduces the lower bound for adversarial KB to that for stochastic KB by setting: $f_t(\cdot) = f(\cdot) + \tilde{\eta}_t k(\bm{0}, \cdot)$, where $\tilde{\eta}_t \sim \mathcal{N}(0, \sigma^2)$ for some variance $\sigma^2 > 0$, and $\|f\|_k \leq B/2$. Through a slight modification of the lower bounds for stochastic KB~\citep{cai2021lenient,scarlett2017lower}, we can show that $\sup_{\bx\in \mX} \Ep[\sum_{t=1}^T f_t(\bx) - f_t(\bx_t)] = \Omega(\underline{R}(T, \sigma^2))$, where $\underline{R}(T, \sigma^2) = \sqrt{\sigma^{2} T(\log (T/\sigma^2))^{d/2}}$ and $\underline{R}(T, \sigma^2) = \sigma^{\frac{2\nu}{2\nu+d}} T^{\frac{\nu+d}{2\nu+d}}$ for $k = \sek$ and $k = \matk$, respectively.
While this constitutes a lower bound on the expected regret, the above problem instance does not satisfy \asmpref{asmp:regularity} since $\|f_t\|_k \leq B$ may be violated due to the unboundedness of the Gaussian noise $\tilde{\eta}_t$. Therefore, the only lower bound we can rigorously obtain from the above problem instance is the looser lower bound: $\Omega(\underline{R}(T, \sigma^2)) - \sup_{\bx\in \mX} \Ep[\1\{\exists t \in [T], \|f_t\|_k > B\}(\sum_{t=1}^T f_t(\bx) - f_t(\bx_t))]$, where the contribution of the event $\{\exists t \in [T], \|f_t\|_k > B\}$ in the original lower bound is explicitly eliminated (rigorous argument is in \appref{app:proof_lb}). To ensure the second term remains negligible, it is sufficient to set $\sigma^2$ as $\sigma^2 = \Theta((\log T)^{-1})$ based on the tail properties of the Gaussian noise. Then, the lower bounds $\Omega(\underline{R}(T, \Theta((\log T)^{-1})))$ matches Eq.~\eqref{eq:lb_result}.

\section{Discussions}
\label{sec:discuss}
In this section, we discuss both the limitations of our work and potential avenues for future research.

\paragraph{Algorithm without discretization.} For a continuous input domain, our algorithms require reducing the original problem to a finite input domain via discretization. Such discretization is commonly used in the stochastic KB analysis~\citep{iwazaki2025improvedregretanalysisgaussian,li2022gaussian}; however, a limitation of this approach is that the algorithm becomes computationally infeasible as the dimension $d$ increases. This drawback motivates the development of algorithms that operate directly on continuous domains. For example, the kernelized extension of continuous exponential weight and follow-the-regularized leader in the adversarial LB setting~\citep[e.g., Chaps.~27 and 28 in][]{lattimore2020bandit} may be a promising direction for future research.

\paragraph{Refining lower bounds.} Our lower bound has the undesirable dependence on the logarithmic factors. As described in \secref{sec:lower_bound}, the source of this worse dependence is that we use the lower bound for stochastic KB under the Gaussian noise model in the proof. Thus, this gap could be addressed in future research by investigating lower bounds for other noise models, such as bounded noise.

\paragraph{Kernelized variants for further advanced topics.} Finally, it is interesting to study the other kernelized variants of further advanced topics in the adversarial bandit literature, such as first-order bounds~\citep{abernethy2012interior,allenberg2006hannan}, variation-bounded regret~\citep{bubeck2018sparsity,hazan2011simple},
and best-of-both-world algorithms~\citep{bubeck2012best,lee2021achieving}. We believe that the theoretical tools leveraged in Secs.~\ref{sec:kernelized_exp3} and \ref{sec:exp3_nystrom_approx} are helpful for these future directions.

\section{Conclusion}
We investigated a nearly-optimal algorithm for adversarial KB. We showed that the kernelized version of the Exp3 algorithm achieves nearly-optimal regret by bridging the gap between the existing information-gain-based arguments for GP and the analysis of Exp3. Furthermore, we proposed a computationally efficient algorithm that leverages RLS-based Nystr\"om approximation. As noted in the discussions in the previous section, we believe that this research opens several avenues for future research on the adversarial KB problem. 

\section*{Acknowledgment}
We appreciate the helpful comments from Jonathan Scarlett and Yu-Jie Zhang on revising the paper.

\bibliography{main}

@InProceedings{cai2021on,
  title = 	 {On Lower Bounds for Standard and Robust {G}aussian Process Bandit Optimization},
  author =       {Cai, Xu and Scarlett, Jonathan},
  booktitle = 	 {Proceedings of the 38th International Conference on Machine Learning},
  pages = 	 {1216--1226},
  year = 	 {2021},
  volume = 	 {139},
  series = 	 {Proceedings of Machine Learning Research},
  publisher =    {PMLR}
}

@article{russo2014learning,
  title={Learning to optimize via information-directed sampling},
  author={Russo, Daniel and Van Roy, Benjamin},
  journal={Advances in neural information processing systems},
  volume={27},
  year={2014}
}

@inproceedings{salgia2021domain,
 author = {Salgia, Sudeep and Vakili, Sattar and Zhao, Qing},
 booktitle = {Advances in Neural Information Processing Systems},
 pages = {28836--28847},
 publisher = {Curran Associates, Inc.},
 title = {A Domain-Shrinking based {B}ayesian Optimization Algorithm with Order-Optimal Regret Performance},
 volume = {34},
 year = {2021}
}

@inproceedings{valko2013finite,
author = {Valko, Michal and Korda, Nathan and Munos, R\'{e}mi and Flaounas, Ilias and Cristianini, Nello},
title = {Finite-time analysis of kernelised contextual bandits},
year = {2013},
publisher = {AUAI Press},
booktitle = {Proceedings of the Twenty-Ninth Conference on Uncertainty in Artificial Intelligence},
pages = {654–663},
numpages = {10},
series = {UAI'13}
}

@article{Russo2014-learning,
  title={Learning to optimize via posterior sampling},
  author={Russo, Daniel and Van Roy, Benjamin},
  journal={Mathematics of Operations Research},
  volume={39},
  number={4},
  pages={1221--1243},
  year={2014},
  publisher={INFORMS}
}

@InProceedings{takeno2023-randomized,
  title = 	 {Randomized {G}aussian Process Upper Confidence Bound with Tighter {B}ayesian Regret Bounds},
  author =       {Takeno, Shion and Inatsu, Yu and Karasuyama, Masayuki},
  booktitle = 	 {Proceedings of the 40th International Conference on Machine Learning},
  pages = 	 {33490--33515},
  year = 	 {2023},
  volume = 	 {202},
  series = 	 {Proceedings of Machine Learning Research},
  publisher =    {PMLR}
}

@article{riutort2023practical,
  title={Practical Hilbert space approximate {B}ayesian {G}aussian processes for probabilistic programming},
  author={Riutort-Mayol, Gabriel and B{\"u}rkner, Paul-Christian and Andersen, Michael R and Solin, Arno and Vehtari, Aki},
  journal={Statistics and Computing},
  volume={33},
  number={1},
  pages={17},
  year={2023},
  publisher={Springer}
}

@inproceedings{cai2021lenient,
  title={Lenient regret and good-action identification in {G}aussian process bandits},
  author={Cai, Xu and Gomes, Selwyn and Scarlett, Jonathan},
  booktitle={International Conference on Machine Learning},
  pages={1183--1192},
  year={2021},
  organization={PMLR}
}

@Book{Rasmussen2005-Gaussian,
 author = {Rasmussen, Carl Edward and Williams, Christopher K. I.},
 title = {{G}aussian Processes for Machine Learning (Adaptive Computation and Machine Learning)},
 year = {2005},
 publisher = {The MIT Press}
}

@inproceedings{lizotte2007automatic,
  title={Automatic Gait Optimization With {G}aussian Process Regression},
  author={Lizotte, Daniel and Wang, Tao and Bowling, Michael and Schuurmans, Dale},
  booktitle={Proc. International Joint Conference on Artificial Intelligence (IJCAI)},
  year={2007}
}

@inproceedings{srinivas10gaussian,
	author = {Niranjan Srinivas and Andreas Krause and Sham Kakade and Matthias Seeger},
	booktitle = {Proc. International Conference on Machine Learning (ICML)},
	title = {Gaussian Process Optimization in the Bandit Setting: No Regret and Experimental Design},
	year = {2010}}

@article{bull2011convergence,
  title={Convergence rates of efficient global optimization algorithms.},
  author={Bull, Adam D},
  journal={Journal of Machine Learning Research},
  year={2011}
}

@inproceedings{krause2011contextual,
  title={Contextual {G}aussian process bandit optimization},
  author={Krause, Andreas and Ong, Cheng},
  booktitle={Proc. Neural Information Processing Systems (NeurIPS)},
  year={2011}
}

@inproceedings{snoek2012practical,
 author = {Snoek, Jasper and Larochelle, Hugo and Adams, Ryan P},
 booktitle = {Proc. Neural Information Processing Systems (NeurIPS)},
 title = {Practical {B}ayesian Optimization of Machine Learning Algorithms},
 year = {2012}
}

@article{desautels2014parallelizing,
  author  = {Thomas Desautels and Andreas Krause and Joel W. Burdick},
  title   = {Parallelizing Exploration-Exploitation Tradeoffs in {G}aussian Process Bandit Optimization},
  journal = {Journal of Machine Learning Research},
  year    = {2014},
}

@article{zuluaga2016pal,
  title={e-pal: An active learning approach to the multi-objective optimization problem},
  author={Zuluaga, Marcela and Krause, Andreas and others},
  journal={Journal of Machine Learning Research},
  year={2016}
}

@inproceedings{scarlett2017lower,
  title={Lower bounds on regret for noisy {G}aussian process bandit optimization},
  author={Scarlett, Jonathan and Bogunovic, Ilija and Cevher, Volkan},
  booktitle={Proc. Conference on Learning Theory (COLT)},
  year={2017},
}

@inproceedings{chowdhury2017kernelized,
  title={On kernelized multi-armed bandits},
  author={Sayak Ray Chowdhury and Aditya Gopalan},
  booktitle={Proc. International Conference on Machine Learning (ICML)},
  year={2017}
}

@inproceedings{bogunovic2018adversarially,
  title={Adversarially robust optimization with {G}aussian processes},
  author={Ilija Bogunovic and Jonathan Scarlett and Stefanie Jegelka and Volkan Cevher},
  booktitle={Proc. Neural Information Processing Systems (NeurIPS)},
  year={2018}
}

@book{vershynin2018high,
  title={High-dimensional probability: {A}n introduction with applications in data science},
  author={Vershynin, Roman},
  year={2018},
  publisher={Cambridge university press}
}

@inproceedings{kirschner2020distributionally,
	author = {Johannes Kirschner and Ilija Bogunovic and Stefanie Jegelka and Andreas Krause},
	booktitle = {Proc. International Conference on Artificial Intelligence and Statistics (AISTATS)},
	title = {Distributionally Robust {B}ayesian Optimization},
	year = {2020}}

@article{shekhar2020multi,
  title={Multi-scale zero-order optimization of smooth functions in an {RKHS}},
  author={Shekhar, Shubhanshu and Javidi, Tara},
  journal={arXiv preprint arXiv:2005.04832},
  year={2020}
}

@inproceedings{camilleri2021high,
  title={High-dimensional experimental design and kernel bandits},
  author={Camilleri, Romain and Jamieson, Kevin and Katz-Samuels, Julian},
  booktitle={Proc. International Conference on Machine Learning (ICML)},
  year={2021},
}

@inproceedings{vakili2021optimal,
  title={Optimal order simple regret for {G}aussian process bandits},
  author={Sattar Vakili and Nacime Bouziani and Sepehr Jalali and Alberto Bernacchia and Da-shan Shiu},
  booktitle={Proc. Neural Information Processing Systems (NeurIPS)},
  year={2021}
}

@inproceedings{iwazaki2021mean,
  title={Mean-variance analysis in {B}ayesian optimization under uncertainty},
  author={Shogo Iwazaki and Yu Inatsu and Ichiro Takeuchi},
  booktitle={Proc. International Conference on Artificial Intelligence and Statistics (AISTATS)},
  year={2021},
}

@inproceedings{nguyen2021value,
  title={Value-at-risk optimization with {G}aussian processes},
  author={Nguyen, Quoc Phong and Dai, Zhongxiang and Low, Bryan Kian Hsiang and Jaillet, Patrick},
  booktitle={Proc. International Conference on Machine Learning (ICML)},
  year={2021},
}

@inproceedings{vakili2021information,
  title={On information gain and regret bounds in {G}aussian process bandits},
  author={Vakili, Sattar and Khezeli, Kia and Picheny, Victor},
  booktitle={Proc. International Conference on Artificial Intelligence and Statistics (AISTATS)},
  year={2021},
}

@inproceedings{zhou2021no,
  title={No-regret algorithms for time-varying {B}ayesian optimization},
  author={Zhou, Xingyu and Shroff, Ness},
  booktitle={2021 55th Annual Conference on Information Sciences and Systems (CISS)},
  year={2021},
  organization={IEEE}
}

@inproceedings{li2022gaussian,
  title={{G}aussian process bandit optimization with few batches},
  author={Li, Zihan and Scarlett, Jonathan},
  booktitle={Proc. International Conference on Artificial Intelligence and Statistics (AISTATS)},
  year={2022}
}

@inproceedings{bogunovic2022robust,
  title={A robust phased elimination algorithm for corruption-tolerant {G}aussian process bandits},
  author={Bogunovic, Ilija and Li, Zihan and Krause, Andreas and Scarlett, Jonathan},
  booktitle={Proc. Neural Information Processing Systems (NeurIPS)},
  year={2022}
}

@inproceedings{deng2022weighted,
  title={Weighted {G}aussian process bandits for non-stationary environments},
  author={Deng, Yuntian and Zhou, Xingyu and Kim, Baekjin and Tewari, Ambuj and Gupta, Abhishek and Shroff, Ness},
  booktitle={Proc. International Conference on Artificial Intelligence and Statistics (AISTATS)},
  year={2022},
}

@inproceedings{vakili2022open,
  title={Open problem: {R}egret bounds for noise-free kernel-based bandits},
  author={Vakili, Sattar},
  booktitle={Proc. Conference on Learning Theory (COLT)},
  year={2022},
}

@inproceedings{hong2023optimization,
  title={An optimization-based algorithm for non-stationary kernel bandits without prior knowledge},
  author={Hong, Kihyuk and Li, Yuhang and Tewari, Ambuj},
  booktitle={Proc. International Conference on Artificial Intelligence and Statistics (AISTATS)},
  year={2023},
}

@inproceedings{salgiarandom,
  title={Random Exploration in {B}ayesian Optimization: {O}rder-Optimal Regret and Computational Efficiency},
  author={Salgia, Sudeep and Vakili, Sattar and Zhao, Qing},
  booktitle={Proc. International Conference on Machine Learning (ICML)},
  year={2024}
}

@inproceedings{iwazaki2025improvedregretanalysisgaussian,
  title={Improved Regret Analysis in {G}aussian Process Bandits: Optimality for Noiseless Reward, {RKHS} norm, and Non-Stationary Variance},
  author={Iwazaki, Shogo and Takeno, Shion},
  booktitle={International Conference on Machine Learning},
  pages={26642--26672},
  year={2025},
  organization={PMLR}
}

@InProceedings{scarlett2018tight,
  title = 	 {Tight Regret Bounds for {B}ayesian Optimization in One Dimension},
  author =       {Scarlett, Jonathan},
  booktitle = 	 {Proceedings of the 35th International Conference on Machine Learning},
  pages = 	 {4500--4508},
  year = 	 {2018},
  volume = 	 {80},
  series = 	 {Proceedings of Machine Learning Research},
  publisher =    {PMLR}
}

@inproceedings{shekhar2022instance,
  title={Instance dependent regret analysis of kernelized bandits},
  author={Shekhar, Shubhanshu and Javidi, Tara},
  booktitle={International Conference on Machine Learning},
  year={2022},
}

@phdthesis{janz2022sequential,
  title={Sequential decision making with feature-linear models},
  author={Janz, David},
  year={2022}
}

@article{solin2020hilbert,
  title={Hilbert space methods for reduced-rank Gaussian process regression},
  author={Solin, Arno and S{\"a}rkk{\"a}, Simo},
  journal={Statistics and Computing},
  year={2020},
}

@inproceedings{iwazaki2025gaussian,
  title={Gaussian Process Upper Confidence Bound Achieves Nearly-Optimal Regret in Noise-Free {G}aussian Process Bandits},
  author={Iwazaki, Shogo},
  booktitle={The Thirty-ninth Annual Conference on Neural Information Processing Systems},
  year={2025}
}

@inproceedings{iwazaki2025improved,
  title={Improved Regret Bounds for {G}aussian Process Upper Confidence Bound in {B}ayesian Optimization},
  author={Iwazaki, Shogo},
  booktitle={The Thirty-ninth Annual Conference on Neural Information Processing Systems},
  year={2025}
}

@inproceedings{iwazaki2025near,
  title={Near-Optimal Algorithm for Non-Stationary Kernelized Bandits},
  author={Iwazaki, Shogo and Takeno, Shion},
  booktitle={International Conference on Artificial Intelligence and Statistics},
  pages={406--414},
  year={2025},
  organization={PMLR}
}

@inproceedings{cai2025lower,
  title={Lower Bounds for Time-Varying Kernelized Bandits},
  author={Cai, Xu and Scarlett, Jonathan},
  booktitle={International Conference on Artificial Intelligence and Statistics},
  pages={73--81},
  year={2025},
  organization={PMLR}
}

@article{kanagawa2018gaussian,
  title={Gaussian processes and kernel methods: A review on connections and equivalences},
  author={Kanagawa, Motonobu and Hennig, Philipp and Sejdinovic, Dino and Sriperumbudur, Bharath K},
  journal={arXiv preprint arXiv:1807.02582},
  year={2018}
}

@article{musco2017recursive,
  title={Recursive sampling for the {N}ystr\"om method},
  author={Musco, Cameron and Musco, Christopher},
  journal={Advances in neural information processing systems},
  volume={30},
  year={2017}
}

@inproceedings{takemori2021approximation,
  title={Approximation theory based methods for {RKHS} bandits},
  author={Takemori, Sho and Sato, Masahiro},
  booktitle={International Conference on Machine Learning},
  year={2021},
}

@article{yadav2024gaussian,
  title={Gaussian process bandits for top-k recommendations},
  author={Yadav, Mohit and Sheldon, Daniel and Musco, Cameron},
  journal={Advances in Neural Information Processing Systems},
  year={2024}
}

@inproceedings{chatterji2019online,
  title={Online learning with kernel losses},
  author={Chatterji, Niladri and Pacchiano, Aldo and Bartlett, Peter},
  booktitle={International Conference on Machine Learning},
  year={2019}
}

@inproceedings{han2022adversarial,
  title={Adversarial attacks on {G}aussian process bandits},
  author={Han, Eric and Scarlett, Jonathan},
  booktitle={International Conference on Machine Learning},
  year={2022},
}

@article{iwazaki2026tighter,
  title={Tighter Regret Lower Bound for {G}aussian Process Bandits with Squared Exponential Kernel in Hypersphere},
  author={Iwazaki, Shogo},
  journal={arXiv preprint arXiv:2602.17940},
  year={2026}
}

@book{lattimore2020bandit,
  title={Bandit algorithms},
  author={Lattimore, Tor and Szepesv{\'a}ri, Csaba},
  year={2020},
  publisher={Cambridge University Press}
}

@inproceedings{calandriello2019gaussian,
  title={Gaussian process optimization with adaptive sketching: Scalable and no regret},
  author={Calandriello, Daniele and Carratino, Luigi and Lazaric, Alessandro and Valko, Michal and Rosasco, Lorenzo},
  booktitle={Conference on learning theory},
  year={2019},
}

@article{calandriello2018statistical,
  title={Statistical and computational trade-offs in kernel k-means},
  author={Calandriello, Daniele and Rosasco, Lorenzo},
  journal={Advances in neural information processing systems},
  volume={31},
  year={2018}
}

@inproceedings{neu2024adversarial,
  title={Adversarial contextual bandits go kernelized},
  author={Neu, Gergely and Olkhovskaya, Julia and Vakili, Sattar},
  booktitle={International Conference on Algorithmic Learning Theory},
  year={2024},
}

@article{takeno2026regret,
  title={On Regret Bounds of {T}hompson Sampling for {B}ayesian Optimization},
  author={Takeno, Shion and Iwazaki, Shogo},
  journal={arXiv preprint arXiv:2603.09276},
  year={2026}
}

@inproceedings{chowdhury2021no,
  title={No-regret algorithms for multi-task {B}ayesian optimization},
  author={Chowdhury, Sayak Ray and Gopalan, Aditya},
  booktitle={International Conference on Artificial Intelligence and Statistics},
  year={2021},
}

@inproceedings{bogunovic2020corruption,
  title={Corruption-tolerant {G}aussian process bandit optimization},
  author={Bogunovic, Ilija and Krause, Andreas and Scarlett, Jonathan},
  booktitle={International Conference on Artificial Intelligence and Statistics},
  year={2020},
}

@article{saday2023robust,
  title={Robust {B}ayesian satisficing},
  author={Saday, Artun and Y{\i}ld{\i}r{\i}m, Y Cahit and Tekin, Cem},
  journal={Advances in Neural Information Processing Systems},
  year={2023}
}

@article{sessa2019no,
  title={No-regret learning in unknown games with correlated payoffs},
  author={Sessa, Pier Giuseppe and Bogunovic, Ilija and Kamgarpour, Maryam and Krause, Andreas},
  journal={Advances in Neural Information Processing Systems},
  volume={32},
  year={2019}
}

@article{sessa2020learning,
  title={Learning to play sequential games versus unknown opponents},
  author={Sessa, Pier Giuseppe and Bogunovic, Ilija and Kamgarpour, Maryam and Krause, Andreas},
  journal={Advances in neural information processing systems},
  volume={33},
  pages={8971--8981},
  year={2020}
}

@inproceedings{shamir2015complexity,
  title={On the complexity of bandit linear optimization},
  author={Shamir, Ohad},
  booktitle={Conference on Learning Theory},
  year={2015},
}

@inproceedings{zimmert2022return,
  title={Return of the bias: Almost minimax optimal high probability bounds for adversarial linear bandits},
  author={Zimmert, Julian and Lattimore, Tor},
  booktitle={Conference on Learning Theory},
  year={2022},
}

@inproceedings{bartlett2008high,
  title={High-probability regret bounds for bandit online linear optimization},
  author={Bartlett, Peter and Dani, Varsha and Hayes, Thomas and Kakade, Sham and Rakhlin, Alexander and Tewari, Ambuj},
  booktitle={Proceedings of the 21st annual conference on learning theory-COLT 2008},
  year={2008},
}

@inproceedings{calandriello2017distributed,
  title={Distributed adaptive sampling for kernel matrix approximation},
  author={Calandriello, Daniele and Lazaric, Alessandro and Valko, Michal},
  booktitle={Artificial Intelligence and Statistics},
  year={2017},
}

@inproceedings{lee2021achieving,
  title={Achieving near instance-optimality and minimax-optimality in stochastic and adversarial linear bandits simultaneously},
  author={Lee, Chung-Wei and Luo, Haipeng and Wei, Chen-Yu and Zhang, Mengxiao and Zhang, Xiaojin},
  booktitle={International Conference on Machine Learning},
  year={2021},
}

@inproceedings{bubeck2012best,
  title={The best of both worlds: Stochastic and adversarial bandits},
  author={Bubeck, S{\'e}bastien and Slivkins, Aleksandrs},
  booktitle={Conference on Learning Theory},
  pages={42--1},
  year={2012},
  organization={JMLR Workshop and Conference Proceedings}
}

@inproceedings{hazan2011simple,
  title={A simple multi-armed bandit algorithm with optimal variation-bounded regret},
  author={Hazan, Elad and Kale, Satyen},
  booktitle={Proceedings of the 24th Annual Conference on Learning Theory},
  pages={817--820},
  year={2011},
  organization={JMLR Workshop and Conference Proceedings}
}

@inproceedings{bubeck2018sparsity,
  title={Sparsity, variance and curvature in multi-armed bandits},
  author={Bubeck, S{\'e}bastien and Cohen, Michael and Li, Yuanzhi},
  booktitle={Algorithmic Learning Theory},
  pages={111--127},
  year={2018},
  organization={PMLR}
}

@inproceedings{allenberg2006hannan,
  title={Hannan consistency in on-line learning in case of unbounded losses under partial monitoring},
  author={Allenberg, Chamy and Auer, Peter and Gy{\"o}rfi, L{\'a}szl{\'o} and Ottucs{\'a}k, Gy{\"o}rgy},
  booktitle={International Conference on Algorithmic Learning Theory},
  pages={229--243},
  year={2006},
  organization={Springer}
}

@article{abernethy2012interior,
  title={Interior-point methods for full-information and bandit online learning},
  author={Abernethy, Jacob D and Hazan, Elad and Rakhlin, Alexander},
  journal={IEEE Transactions on Information Theory},
  volume={58},
  number={7},
  pages={4164--4175},
  year={2012},
  publisher={IEEE}
}

@book{fedorov2013theory,
  title={Theory of optimal experiments},
  author={Fedorov, Valerii Vadimovich},
  year={2013},
  publisher={Elsevier}
}

@inproceedings{mason2022nearly,
  title={Nearly Optimal Algorithms for Level Set Estimation},
  author={Mason, Blake and Jain, Lalit and Mukherjee, Subhojyoti and Camilleri, Romain and Jamieson, Kevin and Nowak, Robert},
  booktitle={International Conference on Artificial Intelligence and Statistics},
  year={2022},
}

@misc{zhang2026nearoptimalregretadversarialkernel,
      title={Near-Optimal Regret in Adversarial Kernel Bandits}, 
      author={Yu-Jie Zhang and Hao Qiu and Jonathan Scarlett and Kevin Jamieson},
      year={2026},
      eprint={2605.26585},
      archivePrefix={arXiv},
      primaryClass={cs.LG},
      url={https://arxiv.org/abs/2605.26585}, 
}
\bibliographystyle{abbrvnat}

\newpage
\appendix

\onecolumn

\section{Notation table}
\label{app:notation}
\tabref{tab:notation} summarizes the notations used in this paper.
\begin{table}
    \centering
    \caption{Notation table.}
    \begin{tabular}{cl}
    \toprule
        Symbol & Definition/Description \\ \hline
        $\N_+$ & Set of natural numbers without $0$: $\N_+ \coloneqq \{1,~2, \ldots,\}$. \\
        $[t]$ & Set of elements of $\N_+$ up to $t \in \N_+$: $[t] \coloneqq \{1, \ldots, t\}$. \\
        $\|\bx\|_{\bm{M}}$ & Mahalanobis norm with respect to positive semi-definite matrix $\bm{M}$: $\|\bx\|_{\bm{M}} \coloneqq \sqrt{\bx^{\top} \bm{M}\bx}$. \\
        $\|\bx\|_2$ & L2-norm of $\bx \in \R^m$: $\|\bx\|_2 \coloneqq \sqrt{\sum_{i=1}^m x_i^2}$. \\
        $\|\bm{M}\|$ & Spectral norm of a matrix $\bm{M} \in \R^{n\times m}$: $\|\bm{M}\| \coloneqq \sup_{\bx \in \R^m} \|\bm{M}\bx\|_2/\|\bx\|_2$. \\
        $\preceq$ & Loewner order, i.e., $\bm{M}_1 \preceq \bm{M}_2$ means $\bm{M}_2 - \bm{M}_1$ is a positive semi-definite matrix. \\
        $\bm{M}^{\dagger}$ & Pseudo-inverse of matrix $\bm{M}$. \\
        $\tilde{O}(\cdot)$ & $g_1(T) = \tilde{O}(g_2(T))$ denotes $g_1(T) = O(g_2(T)(\log T)^c)$ for an absolute constant $c \geq 0$. \\
        $\tilde{\Omega}(\cdot)$ & $g_1(T) = \tilde{\Omega}(g_2(T))$ denotes $g_1(T) = \Omega(g_2(T)(\log T)^{-c})$ for an absolute constant $c \geq 0$. \\
        $\psi(\cdot)$ & Feature map of $k$ on a finite input domain $\mX$, i.e., $\psi: \mX \rightarrow \R^{|\mX|}$ satisfies \\
        & $k(\bx, \bx') = \psi(\bx)^{\top} \psi(\bx')$ for all $\bx, \bx' \in \mX$. \\
        $\bPsi$ & Feature matrix on finite input domain $\mX \coloneqq \{\bx^{(1)}, \ldots, \bx^{(|\mX|)}\}$: \\ 
        & $\bPsi \coloneqq (\psi(\bx^{(1)}), \ldots, \psi(\bx^{(|\mX|)})) \in \R^{|\mX| \times |\mX|}$. \\
        $\bm{\theta}_t$ & Feature representation of $f_t$, i.e., $\bm{\theta}_t \in \R^{|\mX|}$ satisfies $f(\cdot) = \psi(\cdot)^{\top} \bm{\theta}_t$ (see \lemref{lem:feature_map_rkhs}). \\
        $\mH_{t}$ & Learner's history of kernelized Exp3 up to round $t$: $\mH_{t} \coloneqq \{(\bx_i, f_i(\bx_i))\}_{i \in [t]}$. \\
        $\tilde{\mH}_{t}$ & Learner's history of RLS-kernelized Exp3 up to round $t$: $\tilde{\mH}_{t} \coloneqq \{(\bx_i, \bS_i, f_i(\bx_i))\}_{i \in [t]}$. \\
        $\pi(\cdot)$ & Exploration distribution $\pi(\bx) \coloneqq \frac{1}{\lceil T\alpha \rceil} \sum_{t=1}^{\lceil T\alpha \rceil} \1\{\bx = \bx_t^{(\mathrm{MVR})}\}$. $(\bx_t^{(\mathrm{MVR})})$ is generated \\ 
        & by \algoref{algo:mvr}. \\
        $\tilde{\pi}(\cdot)$ & Approximation of $\pi$: $\tilde{\pi}(\bx) \coloneqq \frac{1}{\lceil T\alpha \rceil} \sum_{t=1}^{\lceil T\alpha \rceil} \1\{\bx = \tilde{\bx}_t^{(\mathrm{MVR})}\}$. $(\tilde{\bx}_t^{(\mathrm{MVR})})$ is generated by \\ 
        & \algoref{algo:approx_mvr}. \\
        $\tilde{P}_t(\cdot)$ & Exponential-weight distribution: $\tilde{P}_t(\cdot) \coloneqq \exp(\eta \sum_{i=1}^{t-1} u_i(\cdot))/\sum_{\bx' \in \mX}\exp(\eta \sum_{i=1}^{t-1} u_i(\bx'))$ \\
        & or $\tilde{P}_t(\cdot) \coloneqq \exp(\eta \sum_{i=1}^{t-1} \tilde{u}_i(\cdot))/\sum_{\bx' \in \mX}\exp(\eta \sum_{i=1}^{t-1} \tilde{u}_i(\bx'))$. \\
        $P_t(\cdot)$ & Sampling distribution of the learner at round $t$: $P_t(\cdot) \coloneqq \alpha \pi(\cdot) + (1 - \alpha) \tilde{P}_t(\cdot)$. \\
        $\gamma_T$ & Maximum information gain up to $T$-inputs: $\gamma_T \coloneqq \sup_{\bX \in \mX^T}\frac{1}{2} \log \det(\bI_T + \lambda^{-1} \bK(\bX, \bX))$. \\
        $s_t$ & Rank determined in recursive RLS-Nystr\"om at round $t$ in RLS-kernelized Exp3. \\
        $\bS_t$ & $(|\mX|\times s_t)$-sampling matrix at round $t$ of RLS-kernelized Exp3. \\
        $\bQ_t$ & Orthogonal projection matrix into the subspace spanned by column vectors of $\bar{\bPsi}_t \bS_t$, \\
        & i.e., $\bQ_t \coloneqq (\bar{\bPsi}_t\bS_t)[(\bar{\bPsi}_t \bS_t)^{\top}(\bar{\bPsi}_t \bS_t)]^{\dagger} (\bar{\bPsi}_t \bS_t)^{\top}$. \\
        $\tilde{k}_t$ & Weighted kernel at round $t$ of RLS-kernelized Exp3: $\tilde{k}_t(\bx, \bx') \coloneqq \sqrt{P_t(\bx) P_t(\bx')}k(\bx, \bx')$. \\
        $\bar{\psi}_t(\cdot)$ & Weighted feature map $\bar{\psi}_t(\bx) \coloneqq \sqrt{P_t(\bx)} \psi(\bx)$. \\
        $\bar{\bPsi}_t$ & Weighted feature matrix on finite input domain $\mX \coloneqq \{\bx^{(1)}, \ldots, \bx^{(|\mX|)}\}$: \\
        & $\bar{\bPsi}_t \coloneqq (\bar{\psi}_t(\bx^{(1)}), \ldots, \bar{\psi}_t(\bx^{(|\mX|)})) \in \R^{|\mX| \times |\mX|}$. \\
        $\tilde{\psi}_t(\cdot)$ & Projected feature map: $\tilde{\psi}_t(\cdot) \coloneqq \bQ_t \psi(\bx)$. \\
        $G_t(\lambda)$ & Inverse weight matrix for estimator: $G_t(\lambda) \coloneqq \bar{\bPsi}_t \bar{\bPsi}_t^{\top} + \frac{\lambda}{T} \bI_{|\mX|}$. \\
        $\tilde{G}_t(\lambda)$ & Approximation of $G_t$: $\tilde{G}_t(\lambda) \coloneqq (\bQ_t \bar{\bPsi}_t) (\bQ_t\bar{\bPsi}_t)^{\top} + \frac{\lambda}{T} \bI_{|\mX|}$. \\
        $\hat{f}_t(\cdot)$ & Reward estimator for kernelized Exp3: $\hat{f}_t(\cdot) \coloneqq \psi(\bx)^{\top} G_t(\lambda)^{-1} \psi(\bx_t) f_t(\bx_t)$. \\
        $\tilde{f}_t(\cdot)$ & Approximated reward estimator for RLS-kernelized Exp3: $\tilde{f}_t(\cdot) \coloneqq \psi(\bx)^{\top} \tilde{G}_t(\lambda)^{-1} \tilde{\psi}_t(\bx_t) f_t(\bx_t)$. \\
        $u_t(\cdot)$ & Optimistic estimator for kernelized Exp3: $u_t(\cdot) \coloneqq \hat{f}_t(\cdot) + \beta \|\psi(\cdot)\|_{G_t(\lambda)^{-1}}$. \\
        $\tilde{u}_t(\cdot)$ & Approximated optimistic estimator for RLS-kernelized Exp3: $\tilde{u}_t(\cdot) \coloneqq \tilde{f}_t(\cdot) + \beta \|\psi(\cdot)\|_{\tilde{G}_t(\lambda)^{-1}}$. \\
        $\sigma^2(\cdot; \bX, \lambda)$ & Posterior variance of GP: $\sigma^2(\cdot; \bX, \lambda) \coloneqq k(\cdot, \cdot) - \bK(\cdot, \bX)^{\top}(\bK(\bX, \bX) + \lambda \bI)^{-1} \bk(\cdot, \bX)$. \\
        $\tilde{\sigma}^2(\cdot)$ & Approximation of $\|\psi(\cdot)\|_{G_t(\lambda)^{-1}}^2$: $\tilde{\sigma}^2(\cdot) \coloneqq \|\psi(\cdot)\|_{\tilde{G}_t(\lambda)^{-1}}^2$. \\
        $A_t$ & Event: $\frac{1}{2}\rbr{\bar{\bPsi}_t \bar{\bPsi}_t^{\top} + \frac{\lambda}{T} \bI_{|\mX|}} \preceq \bar{\bPsi}_t \bS_t \bS_t^{\top} \bar{\bPsi}_t^{\top} + \frac{\lambda}{T} \bI_{|\mX|} \preceq \frac{3}{2}\rbr{\bar{\bPsi}_t \bar{\bPsi}_t^{\top} + \frac{\lambda}{T} \bI_{|\mX|}}$ (see \lemref{lem:acc_comp_rls}). \\
        $A$ & $A \coloneqq \cap_{t \in [T]} A_t$. \\
        \bottomrule
    \end{tabular}
    \label{tab:notation}
\end{table}

\section{Pseudocode}
\label{app:pseudocode}
In this section, we provide the pseudocode for the algorithms that could not be included in the main text due to space limitations. 
\algoref{algo:mvr} is the pseudocode of the MVR algorithm leveraged in the preparation of the exploration distribution $\pi$ in our algorithm. Algs.~\ref{algo:recursive_RLS} and \ref{alg:exp3_nystrom_approx} are the pseudocode of the recursive RLS-Nystr\"om algorithm and the RLS-kernelized Exp3 algorithm described in \secref{sec:exp3_nystrom_approx}, respectively. 
Note that \algoref{alg:exp3_nystrom_approx} is identical to the original Alg.~2 in \citep{musco2017recursive} except for some changes of notations.
Furthermore, \algoref{alg:exp3_nystrom_approx_mvr} represents the pseudocode of the variant of RLS-kernelized Exp3, which uses an approximated MVR sequence for the exploration distribution as described in the last paragraph in \secref{sec:exp3_nystrom_approx} and \appref{app:rke3_approx_mvr}. 
Finally, \algoref{algo:approx_mvr} is the pseudocode of the algorithm for generating an approximated MVR sequence leveraged in \algoref{alg:exp3_nystrom_approx_mvr}.

\begin{algorithm}
    \caption{Maximum variance reduction (MVR)}
    \label{algo:mvr}
    \begin{algorithmic}[1]
        \item[] \textbf{input:} Input domain $\mathcal{X}$, kernel function $k: \mathcal{X} \times \mathcal{X} \to \mathbb{R}$, regularization parameter $\lambda > 0$, total budget $M$.
        \item[] \textbf{output:} MVR-sequence $(\bx_t^{(\text{MVR})})_{t \in [M]}$.
        \STATE $\bX_0^{(\text{MVR})} \leftarrow \emptyset$.
        \FOR {$t = 1, 2, \ldots, M$}
            \STATE $\bx_t^{(\mathrm{MVR})} \in \argmax_{\bx \in \mX} \sigma^2\rbr{\bx; \bX_{t-1}^{(\mathrm{MVR})}, \lambda}$.
            \STATE Update training input: $\bX_{t}^{(\mathrm{MVR})} \leftarrow  \bX_{t-1}^{(\mathrm{MVR})} \cup \{\bx_t^{(\text{MVR})}\}$.
        \ENDFOR
        \RETURN $(\bx_t^{(\text{MVR})})_{t \in [M]}$.
    \end{algorithmic}
\end{algorithm}

\begin{algorithm}
    \caption{RecursiveRLS-Nystr\"{o}m ($\mX$, $k$, $\lambda$, $\delta$)~\citep{musco2017recursive}}
    \label{algo:recursive_RLS}
    \begin{algorithmic}[1]
        \item[] \textbf{input:} Finite input set $\mathcal{X} \coloneqq \{\bx^{(1)}, \ldots, \bx^{(|\mX|)}\}$, kernel function $k: \mathcal{X} \times \mathcal{X} \to \mathbb{R}$, regularization parameter $\lambda > 0$, confidence level parameter $\delta \in (0, 1/32)$
        \item[] \textbf{output:} weighted sampling matrix $\mathbf{S} \in \mathbb{R}^{|\mX| \times s}$
        \IF{$|\mX| \leq 192 \log(1/\delta)$} 
            \RETURN $\mathbf{S} \coloneqq \mathbf{I}_{|\mX| \times |\mX|}$.
        \ENDIF
        \STATE Let $\bar{\mS}$ be a random subset of $\{1, \dots, |\mX|\}$, with each $i \in \{1, \dots, |\mX|\}$ included independently with probability $\frac{1}{2}$.
        \STATE \quad $\triangleright$ Let $\bar{\mX} = \{\bx^{(i_1)}, \bx^{(i_2)}, \dots, \bx^{(i_{|\bar{\mS}|})}\}$ for $i_j \in \bar{\mathcal{S}}$ be the data sample corresponding to $\bar{\mathcal{S}}$.
        \STATE \quad $\triangleright$ Let $\bar{\bS} = [\mathbf{e}_{i_1}, \mathbf{e}_{i_2}, \dots, \mathbf{e}_{i_{|\bar{\mathcal{S}}|}}] \in \R^{|\mX| \times |\bar{\mS}|}$ be the sampling matrix corresponding to $\bar{\mathcal{S}}$.
        \STATE \quad \quad $\triangleright$ $\mathbf{e}_{i} \in \R^{|\mX|}$ denotes the unit vector whose $i$-th element is only $1$.
        \STATE $\tilde{\bS} \leftarrow \text{RecursiveRLS-Nystr\"{o}m} (\bar{\mX}, k, \lambda, \delta/3)$. 
        \STATE $\hat{\bS} \coloneqq \bar{\mathbf{S}} \tilde{\mathbf{S}}$.
        \STATE Set $\tilde{l}_i^{\lambda} \coloneqq \frac{3}{2\lambda} \left( \mathbf{K} - \mathbf{K}\hat{\mathbf{S}} \left( \hat{\mathbf{S}}^T \mathbf{K} \hat{\mathbf{S}} + \lambda \mathbf{I} \right)^{-1} \hat{\mathbf{S}}^T \mathbf{K} \right)_{i,i}$ for each $i \in \{1, \dots, |\mX|\}$.
        \STATE Set $p_i \coloneqq \min \{1, \tilde{l}_i^{\lambda} \cdot 16 \log(\sum \tilde{l}_i^{\lambda} / \delta) \}$ for each $i \in \{1, \dots, |\mX|\}$.
        \STATE Initially set weighted sampling matrix $\bS$ to be empty. For each $i \in \{1, \dots, |\mX|\}$, with probability $p_i$, append the column $\frac{1}{\sqrt{p_i}} \mathbf{e}_i$ onto $\bS$.
        \RETURN $\bS$.
    \end{algorithmic}
\end{algorithm}

\begin{algorithm}
    \caption{Approximated-MVR($\mX$, $k$, $\lambda$, $M$, $\delta$)}
    \label{algo:approx_mvr}
    \begin{algorithmic}[1]
        \item[] \textbf{input:} Input domain $\mathcal{X}$, kernel function $k: \mathcal{X} \times \mathcal{X} \to \mathbb{R}$, regularization parameter $\lambda > 0$, total budget $M \in \N_+$, confidence level $\delta \in (0, \min \{1, M/32\})$.
        \item[] \textbf{output:} approximated MVR-sequence $(\tilde{\bx}_t^{(\text{MVR})})_{t \in [M]}$.
        \STATE $\tilde{\bX}_0^{(\text{MVR})} \leftarrow \emptyset$.
        \FOR {$t = 1, 2, \ldots, M$}
            \STATE Calculate sampling matrix: $\tilde{\bS}_t \leftarrow \text{RecursiveRLS-Nystr\"om}(\tilde{\bX}_{t-1}^{(\text{MVR})}, k, \lambda, \delta/M)$
            \STATE Calculate approximated posterior variance $\bar{\sigma}^2\rbr{\bx; \tilde{\bX}_{t-1}^{(\mathrm{MVR})}, \tilde{\bS}_t, \lambda}$:
            \begin{equation}
            \label{eq:nystrom_approx_mvr_expression}
            \begin{split}
                \bar{\sigma}^2\rbr{\bx; \tilde{\bX}_{t-1}^{(\mathrm{MVR})}, \tilde{\bS}_t, \lambda} &= 
    \frac{1}{\lambda} \rbr{k(\bx, \bx) - \bar{\phi}_t(\bx)^{\top} \bar{\phi}_t(\bx)} \\
        &~~~~+ \bar{\phi}_t(\bx)^{\top} \rbr{\sum_{\bx' \in \mX} \bar{\phi}_t(\bx') \bar{\phi}_t(\bx')^{\top} + \lambda \bI_{\tilde{s}_t}}^{-1} \bar{\phi}_t(\bx).
            \end{split}
            \end{equation}
            \STATE \quad $\triangleright$ $\bar{\phi}_t(\bx) \coloneqq [(\tilde{\bS}_t^{\top} \bK_t \tilde{\bS}_t)^{1/2}]^{\dagger} \tilde{\bS}_t^{\top} \bk_t(\bx) \in \R^{\tilde{s}_t}$ with $\tilde{\bS}_t \in \R^{(t-1) \times \tilde{s}_t}$.
            \STATE \quad $\triangleright$ $\bK_t \coloneqq \bK\rbr{\tilde{\bX}_{t-1}^{(\mathrm{MVR})}, \tilde{\bX}_{t-1}^{(\mathrm{MVR})}} \in \R^{(t-1) \times (t-1)}$, $\bk_t(\bx) \coloneqq \bk\rbr{\tilde{\bX}_{t-1}^{(\mathrm{MVR})}, \bx} \in \R^{t-1}$.
            \STATE Calculate $\tilde{\bx}_t^{(\mathrm{MVR})} \in \argmax_{\bx \in \mX} \bar{\sigma}^2\rbr{\bx; \tilde{\bX}_{t-1}^{(\mathrm{MVR})}, \tilde{\bS}_t, \lambda}$.
            \STATE Update training input: $\tilde{\bX}_{t}^{(\mathrm{MVR})} \leftarrow  \tilde{\bX}_{t-1}^{(\mathrm{MVR})} \cup \{\tilde{\bx}_t^{(\text{MVR})}\}$.
        \ENDFOR
        \RETURN $(\tilde{\bx}_t^{(\text{MVR})})_{t \in [M]}$.
    \end{algorithmic}
\end{algorithm}

\begin{algorithm}[t!]
    \caption{RLS-kernelized Exp3 for adversarial kernelized bandits}
    \label{alg:exp3_nystrom_approx}
    \begin{algorithmic}[1]
        \REQUIRE Finite input domain $\mX$, kernel $k: \mX \times \mX \rightarrow \R$, learning rate $\eta > 0$, regularization parameter $\lambda \geq 1$, mixing ratio $\alpha \in (0, 1)$, confidence width parameter $\beta > 0$, total budget $T \in \N_+$, RKHS norm upper bound $B \in (0, \infty)$, confidence level parameter $\delta \in (0, \min\{1, 3T/32\})$.
        \STATE Prepare the exploration distribution $\pi(\bx)$ by using MVR-input sequence $(\bx_t^{(\mathrm{MVR})})_{t \leq \lceil T \alpha \rceil}$:
        \begin{equation}
            \pi(\bx) = \frac{1}{\lceil T \alpha \rceil} \sum_{t=1}^{\lceil T \alpha \rceil} \1\cbr{\bx_t^{(\mathrm{MVR)}} = \bx},
        \end{equation}
        where $\bx_t^{(\mathrm{MVR})} \in \argmax_{\bx \in \mX} \sigma^2\rbr{\bx; \bX_{t-1}^{(\mathrm{MVR})}, \lambda}$ and $\bX_{t-1}^{(\mathrm{MVR})} = \left(\bx_1^{(\mathrm{MVR})}, \ldots, \bx_{t-1}^{(\mathrm{MVR})}\right)$.
        \STATE Initialize sampling distribution: $P_1(\bx) \leftarrow \alpha \pi(\bx) + (1 - \alpha) \tilde{P}_1(\bx)$, where $\tilde{P}_1(\bx) = \frac{1}{|\mX|}$.
        \FOR {$t = 1, 2, \ldots, T$}
            \STATE Draw $\bm{x}_{t} \sim P_t$.
            \STATE Receive reward $f_t(\bx_t)$.
            \STATE Construct weighted kernel function $\tilde{k}_t(\bx, \bx') = \sqrt{P_t(\bx) P_t(\bx')} k(\bx, \bx')$.
            \STATE Calculate the sampling matrix $\bm{S}_t$ by running \algoref{algo:recursive_RLS}: 
            $\bm{S}_t \leftarrow \text{RecursiveRLS-Nystr\"om}(\mX, \tilde{k}_t, \frac{\lambda}{T}, \frac{\delta}{3T})$.
            \STATE Calculate the reward estimator $\tilde{f}_t(\cdot)$ as follows:
            \begin{equation}
            \tilde{f}_t(\bx) = \frac{f_t(\bx_t)}{\sqrt{P_t(\bx)P_t(\bx_t)}}\phi_t(\bx)^{\top} \rbr{\sum_{\bx' \in \mX} \phi_t(\bx') \phi_t(\bx')^{\top} + \frac{\lambda}{T} \bI_{s_t}}^{-1} \phi_t(\bx_t).
            \end{equation}
            \STATE Calculate optimistic estimate $\tilde{u}_t$ as $\tilde{u}_t(\bx) = \tilde{f}_t(\bx) + \beta \tilde{\sigma}_t(\bx)$, where
            \begin{equation}
            \begin{split}
                \tilde{\sigma}_t^2(\bx) &= 
    \frac{T}{\lambda P_t(\bx)} \rbr{\tilde{k}_t(\bx, \bx) - \phi_t(\bx)^{\top} \phi_t(\bx)} \\
        &~~~~+ \frac{1}{P_t(\bx)} \phi_t(\bx)^{\top} \rbr{\sum_{\bx' \in \mX} \phi_t(\bx') \phi_t(\bx')^{\top} + \frac{\lambda}{T} \bI_{s_t}}^{-1} \phi_t(\bx).
            \end{split}
            \end{equation}
            \STATE Update the sampling distribution:
            \begin{equation}
                P_{t+1}(\bx) = \alpha \pi(\bx) + (1 - \alpha) \tilde{P}_{t+1}(\bx),~\text{where}~\tilde{P}_{t+1}(\bx) = \frac{\exp\rbr{\eta \sum_{i=1}^t \tilde{u}_i(\bx)}}{\sum_{\bx' \in \mX} \exp\rbr{\eta \sum_{i=1}^t \tilde{u}_i(\bx')}}.
            \end{equation}
        \ENDFOR
    \end{algorithmic}
\end{algorithm}

\begin{algorithm}[t!]
    \caption{RLS-kernelized Exp3 for adversarial kernelized bandits with approximated MVR}
    \label{alg:exp3_nystrom_approx_mvr}
    \begin{algorithmic}[1]
        \REQUIRE Finite input domain $\mX$, kernel $k: \mX \times \mX \rightarrow \R$, learning rate $\eta > 0$, regularization parameter $\lambda \geq 1$, mixing ratio $\alpha \in (0, 1)$, confidence width parameter $\beta > 0$, total budget $T \in \N_+$, RKHS norm upper bound $B \in (0, \infty)$, confidence level parameter $\delta \in (0, \min\{1, 3 \lceil T \alpha \rceil/16\})$.
        \STATE Prepare the exploration distribution $\tilde{\pi}(\bx)$ by using approximated MVR-sequence as follows:
        \begin{equation}
            \tilde{\pi}(\bx) = \frac{1}{\lceil T \alpha \rceil} \sum_{t=1}^{\lceil T \alpha \rceil} \1\cbr{\tilde{\bx}_t^{(\mathrm{MVR)}} = \bx},
        \end{equation}
        where $(\tilde{\bx}_t^{(\mathrm{MVR})})_{t \in [\lceil T \alpha \rceil]} \leftarrow \text{Approximated-MVR}(\mX, k, \lambda, \lceil T \alpha \rceil, \delta/6)$ (\algoref{algo:approx_mvr}).
        \STATE Initialize sampling distribution: $P_1(\bx) \leftarrow \alpha \tilde{\pi}(\bx) + (1 - \alpha) \tilde{P}_1(\bx)$, where $\tilde{P}_1(\bx) = \frac{1}{|\mX|}$.
        \FOR {$t = 1, 2, \ldots, T$}
            \STATE Draw $\bm{x}_{t} \sim P_t$.
            \STATE Receive reward $f_t(\bx_t)$.
            \STATE Construct weighted kernel function $\tilde{k}_t(\bx, \bx') = \sqrt{P_t(\bx) P_t(\bx')} k(\bx, \bx')$.
            \STATE Calculate the sampling matrix $\bm{S}_t$ by running \algoref{algo:recursive_RLS}: 
            $\bm{S}_t \leftarrow \text{RecursiveRLS-Nystr\"om}(\mX, \tilde{k}_t, \frac{\lambda}{T}, \frac{\delta}{6T})$.
            \STATE Calculate the reward estimator $\tilde{f}_t(\cdot)$ as follows:
            \begin{equation}
            \tilde{f}_t(\bx) = \frac{f_t(\bx_t)}{\sqrt{P_t(\bx)P_t(\bx_t)}}\phi_t(\bx)^{\top} \rbr{\sum_{\bx' \in \mX} \phi_t(\bx') \phi_t(\bx')^{\top} + \frac{\lambda}{T} \bI_{s_t}}^{-1} \phi_t(\bx_t).
            \end{equation}
            \STATE Calculate optimistic estimate $\tilde{u}_t$ as $\tilde{u}_t(\bx) = \tilde{f}_t(\bx) + \beta \tilde{\sigma}_t(\bx)$, where
            \begin{equation}
            \begin{split}
                \tilde{\sigma}_t^2(\bx) &= 
    \frac{T}{\lambda P_t(\bx)} \rbr{\tilde{k}_t(\bx, \bx) - \phi_t(\bx)^{\top} \phi_t(\bx)} \\
        &~~~~+ \frac{1}{P_t(\bx)} \phi_t(\bx)^{\top} \rbr{\sum_{\bx' \in \mX} \phi_t(\bx') \phi_t(\bx')^{\top} + \frac{\lambda}{T} \bI_{s_t}}^{-1} \phi_t(\bx).
            \end{split}
            \end{equation}
            \STATE Update the sampling distribution:
            \begin{equation}
                P_{t+1}(\bx) = \alpha \tilde{\pi}(\bx) + (1 - \alpha) \tilde{P}_{t+1}(\bx),~\text{where}~\tilde{P}_{t+1}(\bx) = \frac{\exp\rbr{\eta \sum_{i=1}^t \tilde{u}_i(\bx)}}{\sum_{\bx' \in \mX} \exp\rbr{\eta \sum_{i=1}^t \tilde{u}_i(\bx')}}.
            \end{equation}
        \ENDFOR
    \end{algorithmic}
\end{algorithm}

\section{Proof of \lemref{lem:est_var}}
\label{app:est_var_proof}
\begin{proof}[Proof of \lemref{lem:est_var}]
    Firstly, note that the following identity of the posterior variance holds (\lemref{lem:feature_map_gp}):
    \begin{align}
        \label{eq:feature_sigma2_rep}
        \sigma^2(\bx; \bX^{(T)}, \lambda) = \lambda \psi(\bx)^{\top} \left(\sum_{t=1}^T \psi(\bx^{(t)})\psi(\bx^{(t)})^{\top} + \lambda \bI_{|\mX|} \right)^{-1} \psi(\bx).
    \end{align}
    Here, from the definition of $\|\psi(\bz)\|_{G(T, \lambda, P)^{-1}}^2$, we have
    \begin{align}
        \|\psi(\bz)\|_{G(T, \lambda, P)^{-1}}^2 &= T \psi(\bz)^{\top} \rbr{T \sum_{\bx \in \mX} P(\bx) \psi(\bx) \psi(\bx)^{\top} + \lambda \bI_{|\mX|}}^{-1} \psi(\bz) \\
        &= T \psi(\bz)^{\top} \rbr{\Ep_{\bX^{(T)}}\sbr{\sum_{t=1}^T \psi(\bx^{(t)}) \psi(\bx^{(t)})^{\top} + \lambda \bI_{|\mX|}}}^{-1} \psi(\bz).
    \end{align}
    Applying Jensen's inequality for a positive definite matrix by noting the convexity of the matrix inversion, we obtain
    \begin{align}
        \|\psi(\bz)\|_{G(T, \lambda, P)^{-1}}^2 
        &\leq T \Ep_{\bX^{(T)}}\sbr{\psi(\bz)^{\top} \rbr{\sum_{t=1}^T \psi(\bx^{(t)}) \psi(\bx^{(t)})^{\top} + \lambda \bI_{|\mX|}}^{-1} \psi(\bz)} \\
        \label{eq:fix_z_error}
        &= \frac{T}{\lambda} \Ep_{\bX^{(T)}} \sbr{\sigma^2 (\bz; \bX^{(T)}, \lambda)},    
        \end{align}
    where the last equality follows from the identity in Eq.~\eqref{eq:feature_sigma2_rep}. Next, by taking the expectation, we immediately obtain the first inequality $\Ep_{\bx^{(T+1)}}\sbr{\|\psi(\bx^{(T+1)})\|_{G(T, \lambda, P)^{-1}}^2}
        \leq \frac{T}{\lambda}\Ep_{\bX^{(T)}, \bx^{(T+1)}} \sbr{\sigma^2 (\bx^{(T+1)}; \bX^{(T)}, \lambda)}$ in Eq.~\eqref{eq:iid_z_error}
    from Eq.~\eqref{eq:fix_z_error}. Regarding the second inequality in Eq.~\eqref{eq:iid_z_error}, we have
    \begin{align}
        T \Ep_{\bX^{(T)}, \bx^{(T+1)}} \sbr{\sigma^2 (\bx^{(T+1)}; \bX^{(T)}, \lambda)} &= 
        \Ep_{\bX^{(T)}, \bx^{(T+1)}} \sbr{\sum_{t=1}^T \sigma^2 (\bx^{(T+1)}; \bX^{(T)}, \lambda)} \\
        &\leq \Ep_{\bX^{(T)}, \bx^{(T+1)}} \sbr{\sum_{t=1}^T \sigma^2 (\bx^{(T+1)}; \bX^{(t-1)}, \lambda)},
    \end{align}
    where the last inequality follows from the monotonicity of the posterior variance with respect to training input data. Since $\bx^{(T+1)}$ and $\bx^{(t)}$ are identically distributed, we have $\Ep_{\bX^{(T)}, \bx^{(T+1)}} \sbr{\sum_{t=1}^T \sigma^2 (\bx^{(T+1)}; \bX^{(t-1)}, \lambda)} = \Ep_{\bX^{(T)}, \bx^{(T+1)}} \sbr{\sum_{t=1}^T \sigma^2 (\bx^{(t)}; \bX^{(t-1)}, \lambda)} \leq 2\gamma_T/\log(1 + \lambda^{-1})$, where the final inequality follows from Eq.~\eqref{eq:sigma_2_cum_ub}\footnote{Note that Eq.~\eqref{eq:sigma_2_cum_ub} requires the condition $\forall \bx \in \mX$, $k(\bx, \bx) \leq 1$. See Lemma 5.4 in \citep{srinivas10gaussian}.}. This concludes the second inequality in Eq.~\eqref{eq:iid_z_error}. Finally, the inequality $2\gamma_T/\log(1 + \lambda^{-1}) \leq 4 \gamma_T$ for $\lambda \geq 1$ is the direct consequence of the elementary inequality $\forall a \in [0, 1], \log(1 + a) \geq a/2$.
\end{proof}

\section{Proof of \thmref{thm:reg_exp3}}
\label{app:proof_reg_exp3}
From Lemma~\ref{lem:feature_map_rkhs}, each reward function $f_t$ can be rewritten as $f_t(\cdot) = \psi(\cdot)^{\top} \bm{\theta}_t$, where $\bm{\theta}_t$ is drawn by the environment.
Note that $\bm{\theta}_t$ and $\bx_t$ are conditionally independent given $\mH_{t-1}$ as defined in the problem setting.
Here, we first remind the following three basic facts used in various parts of our proof: (i) $\Ep[g(\bx_t) | \mH_{t-1}] = \sum_{\bx \in \mX} P_t(\bx) g(\bx)$ holds for any (measurable) fixed function $g$, (ii) $\Ep[g(\bx_t) | \mH_{t-1}] = \Ep[g(\bx_t) | \mH_{t-1}, \bm{\theta}_{t}]$ holds from the conditional independence of $\bx_t$ and $\bm{\theta}_t$ given $\mH_{t-1}$, and (iii) $\Ep[h(\mH_{t-1}, \bm{\theta}_t) g(\bx_t) | \mH_{t-1}, \bm{\theta}_t] = h(\mH_{t-1}, \bm{\theta}_t) \Ep[g(\bx_t) | \mH_{t-1}, \bm{\theta}_t]$ for any (measurable) fixed functions $g$ and $h$.

Next, from the tower property of conditional expectation, we have $\Ep[\sum_{t=1}^T f_t(\bx_t)] = \sum_{t=1}^T \Ep[\Ep[f_t(\bx_t) \mid \mH_{t-1}, \bm{\theta}_t]] = \sum_{t=1}^T \Ep[\sum_{\bx \in \mX} P_t(\bx) f_t(\bx)]$, where the second equality follows from
\begin{align}
    \Ep[f_t(\bx_t) \mid \mH_{t-1}, \bm{\theta}_t] 
    &= \Ep[\bm{\theta}_t^{\top} \psi(\bx_t) \mid \mH_{t-1}, \bm{\theta}_t] 
    = \bm{\theta}_t^{\top}\Ep[\psi(\bx_t) \mid \mH_{t-1}, \bm{\theta}_t] \\
    &= \bm{\theta}_t^{\top}\Ep[\psi(\bx_t) \mid \mH_{t-1}] = \sum_{\bx \in \mX} P_t(\bx) \bm{\theta}_t^{\top} \psi(\bx) = \sum_{\bx \in \mX} P_t(\bx) f_t(\bx).
\end{align}
Thus, we have
\begin{equation}
    \label{eq:first_tower_property}
    \bar{R}_T = \sum_{t=1}^T \Ep\sbr{f_t(\bx^{\ast}) - \sum_{\bx \in \mX} P_t(\bx) f_t(\bx)},
\end{equation}
where $\bx^{\ast} \in \argmax_{\bx \in \mX} \Ep[\sum_{t=1}^T f_t(\bx)]$. Note that $\bx^{\ast}$ is always well-defined since $|\mX| < \infty$. 
Furthermore,
\begin{align}
    &\Ep\sbr{f_t(\bx^{\ast}) - \sum_{\bx \in \mX} P_t(\bx) f_t(\bx)} \\
    &= \alpha \Ep\sbr{f_t(\bx^{\ast}) - \sum_{\bx \in \mX} \pi(\bx) f_t(\bx)} + (1 - \alpha) \Ep\sbr{f_t(\bx^{\ast}) - \sum_{\bx \in \mX} \tilde{P}_t(\bx) f_t(\bx)} \\
    \label{eq:first_perreg_ub}
    &\leq 2\alpha B + (1 - \alpha) \Ep\sbr{f_t(\bx^{\ast}) - \sum_{\bx \in \mX} \tilde{P}_t(\bx) f_t(\bx)},
\end{align}
where the last inequality follows from $\|f_t\|_{\infty} \leq \|f_t\|_{k} \leq B$.
Here, we further decompose the second term of the above equation by using the optimistic estimate $u_t(\bx)$ as follows:
\begin{align}
    &\Ep\sbr{f_t(\bx^{\ast}) - \sum_{\bx \in \mX} \tilde{P}_t(\bx) f_t(\bx)} \\
    \label{eq:decomp}
    & = \underbrace{\Ep\sbr{f_t(\bx^{\ast}) - u_t(\bx^{\ast})}}_{\text{(i)}} + 
    \underbrace{\Ep\sbr{u_t(\bx^{\ast}) - \sum_{\bx \in \mX} \tilde{P}_t(\bx) u_t(\bx)}}_{\text{(ii)}} 
    + \underbrace{\Ep\sbr{\sum_{\bx \in \mX} \tilde{P}_t(\bx) \rbr{u_t(\bx) -  f_t(\bx)}}}_{\text{(iii)}}.
\end{align}

\paragraph{Upper bound for the first term (i) in Eq.~\eqref{eq:decomp}.} 
From the definition of $u_t(\bx^{\ast})$, we have
\begin{equation}
    f(\bx^{\ast}) - u_t(\bx^{\ast}) 
    = f_t(\bx^{\ast}) - \hat{f}_t(\bx^{\ast}) - \beta \|\psi(\bx^{\ast})\|_{G_t(\lambda)^{-1}}.
\end{equation}
By noting the definition of $\hat{f}_t$ and $f_t(\cdot) = \psi(\cdot)^{\top} \bm{\theta}_t$ (Lemma~\ref{lem:feature_map_rkhs}), the error term $f_t(\bx^{\ast}) - \hat{f}_t(\bx^{\ast})$ in the above equation can be rewritten as follows:
\begin{align}
    f_t(\bx^{\ast}) - \hat{f}_t(\bx^{\ast}) 
    &= \psi(\bx^{\ast})^{\top} \bm{\theta}_t - \psi(\bx^{\ast})^{\top} G_t(\lambda)^{-1} \psi(\bx_t) f_t(\bx_t) \\
    &= \psi(\bx^{\ast})^{\top} \bm{\theta}_t - \psi(\bx^{\ast})^{\top} G_t(\lambda)^{-1} \psi(\bx_t) \psi(\bx_t)^{\top} \bm{\theta}_t \\
    &= \psi(\bx^{\ast})^{\top} \rbr{\bI_{|\mX|} - G_t(\lambda)^{-1} \psi(\bx_t) \psi(\bx_t)^{\top}} \bm{\theta}_t.
\end{align}
By noting that $\bm{\theta}_t$ is conditionally independent of $\bx_t$ given $\mH_{t-1}$, we have
\begin{align}
    \Ep[f_t(\bx^{\ast}) - \hat{f}_t(\bx^{\ast}) \mid \mH_{t-1}, \bm{\theta}_t]
    &= \psi(\bx^{\ast})^{\top} \rbr{\bI_{|\mX|} - G_t(\lambda)^{-1} \Ep[ \psi(\bx_t) \psi(\bx_t)^{\top} \mid \mH_{t-1}, \bm{\theta}_t]} \bm{\theta}_t \\
    &= \psi(\bx^{\ast})^{\top} \sbr{G_t(\lambda)^{-1} \rbr{G_t(\lambda) - \sum_{\bx \in \mX} P_t(\bx)\psi(\bx) \psi(\bx)^{\top} }} \bm{\theta}_t \\
    \label{eq:cond_error_i}
    &= \frac{\lambda}{T} \psi(\bx^{\ast})^{\top} G_t(\lambda)^{-1} \bm{\theta}_t,
\end{align}
where the last line follows from the definition of $G_t(\lambda) \coloneqq \sum_{\bx \in \mX} P_t(\bx)\psi(\bx) \psi(\bx)^{\top} + \frac{\lambda}{T}\bm{I}_{|\mX|}$. From Schwartz's inequality and $\|\bm{\theta}_t\|_2 = \|f_t\|_{k} \leq B$ (Lemma~\ref{lem:feature_map_rkhs}), we have
\begin{align}
    \frac{\lambda}{T} \psi(\bx^{\ast})^{\top} G_t(\lambda)^{-1} \bm{\theta}_t 
    &\leq \frac{\lambda}{T} \|\psi(\bx^{\ast})^{\top} G_t(\lambda)^{-1}\|_2 \|\bm{\theta}_t\|_2 \\
    &\leq B \frac{\lambda}{T} \sqrt{\psi(\bx^{\ast})^{\top} G_t(\lambda)^{-2} \psi(\bx^{\ast})} \\
    &\leq B \sqrt{\frac{\lambda}{T}} \sqrt{\psi(\bx^{\ast})^{\top} G_t(\lambda)^{-1} \psi(\bx^{\ast})} \\
    \label{eq:ub_schwartz}
    &= B \sqrt{\frac{\lambda}{T}} \|\psi(\bx^{\ast})\|_{G_t(\lambda)^{-1}},
\end{align}
where the last inequality follows from $\psi(\bx^{\ast})^{\top} G_t(\lambda)^{-2} \psi(\bx^{\ast}) \leq \frac{T}{\lambda} \psi(\bx^{\ast})^{\top} G_t(\lambda)^{-1} \psi(\bx^{\ast})$, which is implied by the fact that the maximum eigenvalue of $G_t(\lambda)^{-1}$ is bounded from above by $T/\lambda$. 
From Eqs.~\eqref{eq:cond_error_i} and \eqref{eq:ub_schwartz}, we have
\begin{align}
    \Ep[f_t(\bx^{\ast}) - u_t(\bx^{\ast})] 
    &= \Ep\sbr{f_t(\bx^{\ast}) - \hat{f}_t(\bx^{\ast}) - \beta \|\psi(\bx^{\ast})\|_{G_t(\lambda)^{-1}}} \\
    &= \Ep[\Ep[f_t(\bx^{\ast}) - \hat{f}_t(\bx^{\ast})\mid \mH_{t-1}, \bm{\theta}_t]] - \Ep[\beta \|\psi(\bx^{\ast})\|_{G_t(\lambda)^{-1}}] \\
    &\leq \Ep\sbr{\rbr{B \sqrt{\frac{\lambda}{T}} - \beta} \|\psi(\bx^{\ast})\|_{G_t(\lambda)^{-1}}} \\
    \label{eq:final_i}
    &= 0.
\end{align}
Thus, the first term (i) in Eq.~\eqref{eq:decomp} is bounded from above by $0$.

\paragraph{Upper bound for the second term (ii) in Eq.~\eqref{eq:decomp}.}
For any $\bx \in \mX$, the definition of the exploration distribution implies the following upper bound of $u_t(\bx)$ (Lemma~\ref{lem:ub_ut}):
\begin{equation}
    u_t(\bx) \leq \frac{2}{\alpha} (2 B \gamma_{T} + \beta \sqrt{\gamma_T}).
\end{equation}
By combining the above inequality with the definition of $\alpha$, we can observe that $\eta u_t(\bx) \leq 1$ holds for any $\bx \in \mX$; therefore, by following the standard proof for Exp3~(Lemma~\ref{lem:exp3_standard}), we obtain
\begin{align}
    \label{eq:ut_put_first_ub}
    \sum_{t=1}^T u_t(\bx^{\ast}) - \sum_{t=1}^T \sum_{\bx \in \mX} \tilde{P}_t(\bx) u_t(\bx) 
    \leq \frac{\log |\mX|}{\eta} + \eta \sum_{t=1}^T \sum_{\bx \in \mX} \tilde{P}_t(\bx) u_t^2(\bx).
\end{align}
Regarding the term $\sum_{\bx \in \mX} \tilde{P}_t(\bx) u_t^2(\bx)$, we further obtain the upper bound as follows:
\begin{align}
    \sum_{\bx \in \mX} \tilde{P}_t(\bx) u_t^2(\bx) 
    &= \sum_{\bx \in \mX} \tilde{P}_t(\bx) \rbr{\hat{f}_t(\bx) + \beta \|\psi(\bx)\|_{G_t(\lambda)^{-1}}}^2 \\
    \label{eq:ut_var_decomp}
    &\leq 2 \sum_{\bx \in \mX} \tilde{P}_t(\bx) \hat{f}_t^2(\bx) + 2 \beta^2 \sum_{\bx \in \mX} \tilde{P}_t(\bx) \|\psi(\bx)\|_{G_t(\lambda)^{-1}}^2,
\end{align}
where we used $\forall a, b \in \R,~(a + b)^2 \leq 2a^2 + 2b^2$ in the last inequality. Here, note that the following relation:
\begin{align}
    \label{eq:Ptilde_Gt_rel}
    (1-\alpha) \sum_{\bx \in \mX} \tilde{P}_t(\bx) \psi(\bx) \psi(\bx)^{\top} + \frac{\lambda}{T} \bI_{|\mX|} 
    \preceq \sum_{\bx \in \mX} P_t(\bx) \psi(\bx) \psi(\bx)^{\top} + \frac{\lambda}{T} \bI_{|\mX|} \coloneqq G_t(\lambda).
\end{align}
Thus, for $M_t(\lambda) \coloneqq \sum_{\bx \in \mX} \tilde{P}_t(\bx) \psi(\bx) \psi(\bx)^{\top} + \frac{\lambda}{\lceil T (1 - \alpha) \rceil} \bI_{|\mX|}$, we can confirm
\begin{equation}
    G_t(\lambda)^{-1} \preceq \frac{1}{1-\alpha} \rbr{\sum_{\bx \in \mX} \tilde{P}_t(\bx) \psi(\bx) \psi(\bx)^{\top} + \frac{\lambda}{T (1 - \alpha) } \bI_{|\mX|}}^{-1} \preceq \frac{1}{1-\alpha} M_t(\lambda)^{-1}.
\end{equation}
The above relation further implies 
\begin{equation}
    \label{eq:tilde_p_psi}
    \sum_{\bx \in \mX} \tilde{P}_t(\bx) \|\psi(\bx)\|_{G_t(\lambda)^{-1}}^2 
    \leq \frac{1}{1-\alpha} \sum_{\bx \in \mX} \tilde{P}_t(\bx) \|\psi(\bx)\|_{M_t(\lambda)^{-1}}^2 \leq \frac{4 \gamma_{\lceil T (1 - \alpha) \rceil}}{1-\alpha} \leq \frac{4 \gamma_{T}}{1-\alpha},
\end{equation}
where the second inequality follows from \lemref{lem:est_var}. 
Therefore, by noting the definition of $\beta$, the second term in Eq.~\eqref{eq:ut_var_decomp} is bounded from above as follows:
\begin{equation}
    \label{eq:ut_var_decomp_ub2}
    2 \beta^2 \sum_{\bx \in \mX} \tilde{P}_t(\bx) \|\psi(\bx)\|_{G_t(\lambda)^{-1}}^2 \leq \frac{8 \lambda B^2 \gamma_{T}}{(1-\alpha) T}.
\end{equation}
Next, regarding the first term $\sum_{\bx \in \mX} \tilde{P}_t(\bx) \hat{f}_t^2(\bx)$ in Eq.~\eqref{eq:ut_var_decomp}, we have
\begin{align}
    \sum_{\bx \in \mX} \tilde{P}_t(\bx) \hat{f}_t^2(\bx) 
    &= \sum_{\bx \in \mX} \tilde{P}_t(\bx) \sbr{\psi(\bx)^{\top} G_t(\lambda)^{-1} \psi(\bx_t)f_t(\bx_t)}^2 \\
    &\leq B^2 \sum_{\bx \in \mX} \tilde{P}_t(\bx) \sbr{\psi(\bx)^{\top} G_t(\lambda)^{-1} \psi(\bx_t)}^2 \\
    &= \frac{B^2}{1 - \alpha} \psi(\bx_t)^{\top} G_t(\lambda)^{-1} \sbr{\sum_{\bx \in \mX} (1 - \alpha) \tilde{P}_t(\bx) \psi(\bx) \psi(\bx)^{\top}} G_t(\lambda)^{-1} \psi(\bx_t) \\
    \label{eq:ut_var_decomp_ub1}
    &\leq \frac{B^2}{1 - \alpha} \psi(\bx_t)^{\top} G_t(\lambda)^{-1} \psi(\bx_t).
\end{align}
Thus, by aggregating Eqs.~\eqref{eq:ut_put_first_ub}, \eqref{eq:ut_var_decomp}, \eqref{eq:ut_var_decomp_ub2}, and \eqref{eq:ut_var_decomp_ub1}, we have
\begin{align}
    &\Ep\sbr{\sum_{t=1}^T u_t(\bx^{\ast}) - \sum_{t=1}^T \sum_{\bx \in \mX} \tilde{P}_t(\bx) u_t(\bx) } \\
    &\leq \frac{\log |\mX|}{\eta} + \frac{8 \eta \lambda B^2 \gamma_T}{1 - \alpha} + \frac{2\eta B^2}{1 - \alpha} \sum_{t=1}^T \Ep\sbr{\psi(\bx_t)^{\top} G_t(\lambda)^{-1} \psi(\bx_t)} \\
    \label{eq:final_ii_tower_prop}
    &= \frac{\log |\mX|}{\eta} + \frac{8 \eta \lambda B^2 \gamma_T}{1 - \alpha} + \frac{2\eta B^2}{1-\alpha} \sum_{t=1}^T \Ep\sbr{ \sum_{\bx \in \mX} P_t(\bx) \|\psi(\bx)\|_{G_t(\lambda)^{-1}}^2} \\
    \label{eq:final_ii}
    &\leq \frac{\log |\mX|}{\eta} + \frac{8 \eta \lambda B^2 \gamma_T}{1 - \alpha} + \frac{8\eta B^2 T \gamma_T}{1-\alpha},
\end{align}
where Eq.~\eqref{eq:final_ii_tower_prop} follows from the tower property of conditional expectation, and Eq.~\eqref{eq:final_ii} follows from \lemref{lem:est_var}.

\paragraph{Upper bound for the third term (iii) in Eq.~\eqref{eq:decomp}.}
By noting the definition of $u_t$, we have
\begin{align}
    \label{eq:third_iii_ub}
    \sum_{\bx \in \mX} \tilde{P}_t(\bx) \rbr{u_t(\bx) -  f_t(\bx)}
    &= \beta \sum_{\bx \in \mX} \tilde{P}_t(\bx) \|\psi(\bx)\|_{G_t(\lambda)^{-1}} + \sum_{\bx \in \mX} \tilde{P}_t(\bx) \rbr{\hat{f}_t(\bx) - f_t(\bx)}.
\end{align}
Regarding the first term in Eq.~\eqref{eq:third_iii_ub}, the applications of Jensen's inequality for the square root $\sqrt{x}$ and Eq.~\eqref{eq:tilde_p_psi} imply
\begin{align}
    \beta \sum_{\bx \in \mX} \tilde{P}_t(\bx) \|\psi(\bx)\|_{G_t(\lambda)^{-1}}
    &\leq \beta \sqrt{\sum_{\bx \in \mX} \tilde{P}_t(\bx) \|\psi(\bx)\|_{G_t(\lambda)^{-1}}^2} \\
    \label{eq:bias_ub_iii}
    &\leq 2B\sqrt{\frac{\lambda \gamma_T}{(1 - \alpha)T}}.
\end{align}
Regarding the second term in Eq.~\eqref{eq:third_iii_ub}, we have 
\begin{align}
    \hat{f}_t(\bx) - f_t(\bx) 
    &= \psi(\bx)^{\top} G_t(\lambda)^{-1} \psi(\bx_t) f_t(\bx_t) - f_t(\bx) \\
    &= \psi(\bx)^{\top} G_t(\lambda)^{-1} \psi(\bx_t) \psi(\bx_t)^{\top} \bm{\theta}_t - \psi(\bx)^{\top} \bm{\theta}_t \\
    &= \psi(\bx)^{\top} \rbr{G_t(\lambda)^{-1} \psi(\bx_t) \psi(\bx_t)^{\top} - \bI_{|\mX|}} \bm{\theta}_t \\
    &= \psi(\bx)^{\top} \sbr{G_t(\lambda)^{-1} \rbr{\psi(\bx_t) \psi(\bx_t)^{\top} - G_t(\lambda)}} \bm{\theta}_t.
\end{align}
Thus, we have the following identities from the above equation:
\begin{align}
    &\Ep\sbr{\sum_{\bx \in \mX} \tilde{P}_t(\bx)\rbr{\hat{f}_t(\bx) - f_t(\bx)}} \\
    &= \Ep\sbr{\Ep\left[ \sum_{\bx \in \mX} \tilde{P}_t(\bx)\rbr{\hat{f}_t(\bx) - f_t(\bx)} ~\middle|~ \mH_{t-1}, \bm{\theta}_t \right]} \\
    &= \Ep\sbr{\sum_{\bx \in \mX} \tilde{P}_t(\bx) \psi(\bx)^{\top} \sbr{G_t(\lambda)^{-1} \rbr{\Ep\left[\psi(\bx_t) \psi(\bx_t)^{\top}~\middle|~ \mH_{t-1}, \bm{\theta}_t\right] - G_t(\lambda)}} \bm{\theta}_t} \\
    \label{eq:exp_identity_iii}
    &= \frac{\lambda}{T}\Ep\sbr{\sum_{\bx \in \mX} \tilde{P}_t(\bx) \psi(\bx)^{\top} G_t(\lambda)^{-1} \bm{\theta}_t}.
\end{align}
The upper bound of the quantity in the above expectation can be further obtained as
\begin{align}
    \sum_{\bx \in \mX} \tilde{P}_t(\bx) \psi(\bx)^{\top} G_t(\lambda)^{-1} \bm{\theta}_t
    &\leq \sum_{\bx \in \mX} \tilde{P}_t(\bx) \sqrt{\psi(\bx)^{\top} G_t(\lambda)^{-2} \psi(\bx)} \|\bm{\theta}_t\|_2 \\
    &\leq B \sum_{\bx \in \mX} \tilde{P}_t(\bx) \sqrt{\psi(\bx)^{\top} G_t(\lambda)^{-2} \psi(\bx)} \\
    &\leq B \sqrt{\frac{T}{\lambda}} \sum_{\bx \in \mX} \tilde{P}_t(\bx) \sqrt{\psi(\bx)^{\top} G_t(\lambda)^{-1} \psi(\bx)} \\
    &\leq B \sqrt{\frac{T}{\lambda}} \sqrt{\sum_{\bx \in \mX} \tilde{P}_t(\bx) \psi(\bx)^{\top} G_t(\lambda)^{-1} \psi(\bx)} \\
    \label{eq:exp_inside_iii}
    &\leq 2B \sqrt{\frac{T \gamma_T}{(1 - \alpha)\lambda}}.
\end{align}
Thus, by aggregating Eqs.~\eqref{eq:third_iii_ub},~\eqref{eq:bias_ub_iii},~\eqref{eq:exp_identity_iii}, and \eqref{eq:exp_inside_iii}, the upper bound of the third term (iii) in Eq.~\eqref{eq:decomp} is obtained as follows:
\begin{equation}
    \label{eq:final_iii}
    \Ep\sbr{\sum_{\bx \in \mX} \tilde{P}_t(\bx) \rbr{u_t(\bx) -  f_t(\bx)}}
    \leq 4B\sqrt{\frac{\lambda \gamma_T}{(1-\alpha)T}}.
\end{equation}

\paragraph{Aggregate all the upper bounds.}
From Eqs.~\eqref{eq:first_tower_property}, ~\eqref{eq:first_perreg_ub},~\eqref{eq:decomp},~\eqref{eq:final_i},~\eqref{eq:final_ii}, and \eqref{eq:final_iii}, we finally obtain
\begin{align}
    \bar{R}_T &\leq 2\alpha B T + \frac{(1 - \alpha) \log |\mX|}{\eta} + 8 \eta \lambda B^2 \gamma_T + 8\eta B^2 T \gamma_T + 4B\sqrt{(1-\alpha) \lambda T \gamma_T} \\
    &\leq 2\alpha B T + \frac{\log |\mX|}{\eta} + 8 \eta (\lambda + T) B^2 \gamma_T + 4B\sqrt{\lambda T \gamma_T}.
\end{align} \qed

\section{Proof of \thmref{thm:reg_exp3_nystrom_approx}}
\label{app:proof_random_exp3}
\paragraph{Proof for computational complexity.} Here, for any $t \in [T]$, we define the event $A_t$ as follows:
\begin{equation}
    \label{eq:thm_rke3_proof_eps_acc}
    \frac{1}{2}\rbr{\bar{\bPsi}_t \bar{\bPsi}_t^{\top} + \frac{\lambda}{T} \bI_{|\mX|}} \preceq \bar{\bPsi}_t \bS_t \bS_t^{\top} \bar{\bPsi}_t^{\top} + \frac{\lambda}{T} \bI_{|\mX|} \preceq \frac{3}{2}\rbr{\bar{\bPsi}_t \bar{\bPsi}_t^{\top} + \frac{\lambda}{T} \bI_{|\mX|}}.
\end{equation}
From \lemref{lem:acc_comp_rls}, we can confirm that $\mathbb{P}(A_t \mid \tilde{\mH}_{t-1}) \geq 1 - 1/T^3$ for any $t \in [T]$ and $\tilde{\mH}_{t-1} \coloneqq \{(\bS_i, \bx_i, f_i(\bx_i))\}_{i \in [t-1]}$, which implies $\mathbb{P}(A_t) \geq 1 - 1/T^3$ due to the tower property of conditional expectation. By taking the union bound, we can obtain $\Pr\rbr{A} \geq 1 - 1/T^2$, where $A = \bigcap_{t \in [T]} A_t$. 
Furthermore, \lemref{lem:acc_comp_rls} suggests that, with probability at least $1-1/T^2$, the following two statements simultaneously also hold:
\begin{enumerate}
    \item The number $s_t$ satisfies $s_t = O(\gamma_T \log(T\gamma_T)) = O(\gamma_T \log(T))$ for any $t \in [T]$\footnote{Note that $\gamma_T = O(T)$ always holds.}.
    \item The computational cost of recursive RLS-Nystr\"om is $O(|\mX|\gamma_T^2 (\log(T\gamma_T))^2) = O(|\mX|\gamma_T^2 (\log T)^2)$.
\end{enumerate}
Here, we confirm the computational cost of $(\tilde{f}_t(\bx))_{\bx \in \mX}$ and $(\tilde{\sigma}_t^2(\bx))_{\bx \in \mX}$ is $O(|\mX|s_t^2 + s_t^3)$.
Firstly, the computation of Nystr\"om embedding $\phi_t(\cdot)$ requires the calculation of pseudo-inverse $[(\bS_t^{\top} \tilde{K}_t \bS_t)^{1/2}]^{\dagger}$, which require $O(s_t^3)$-computation. Once $[(\bS_t^{\top} \tilde{K}_t \bS_t)^{1/2}]^{\dagger}$ is computed, the calculation of $\phi_t(\bx) \coloneqq [(\bS_t^{\top} \tilde{K}_t \bS_t)^{1/2}]^{\dagger}\bS_t^{\top} \tilde{k}(\bx)$ can be conducted within $O(s_t^2)$ for some given $\bx \in \mX$. Thus, the computational cost for the preparation of $(\phi_t(\bx))_{\bx \in \mX}$ is $O(|\mX|s_t^2 + s_t^3)$. After calculating $(\phi_t(\bx))_{\bx \in \mX}$, we require $O(|\mX|s_t^2)$ and $O(s_t^3)$ computations for calculating $\sum_{\bx' \in \mX} \phi_t(\bx') \phi_t(\bx')^{\top}$ and $(\sum_{\bx' \in \mX} \phi_t(\bx') \phi_t(\bx')^{\top} + \lambda/T \bI_{s_t})^{-1}$, respectively. Once we obtained the inverse $(\sum_{\bx' \in \mX} \phi_t(\bx') \phi_t(\bx')^{\top} + \lambda/T \bI_{s_t})^{-1}$, the calculation of the quadratic form $\phi_t(\bx)^{\top} (\sum_{\bx' \in \mX} \phi_t(\bx') \phi_t(\bx')^{\top} + \lambda/T \bI_{s_t})^{-1} \phi_t(\bx)$ for all $\bx \in \mX$ requires $O(|\mX| s_t^2)$. In summary, given the sampling matrix $\bS_t$, the overall computational cost for round $t$ is $O(|\mX|s_t^2 + s_t^3)$. By combining this with the above two high probability statements, we can confirm that the total computational cost at each round $t$ is $\tilde{O}(|\mX|\gamma_T^2 + \gamma_T^3)$ with high probability.

\paragraph{Proof for regret upper bound.} We first define the following notations, while some of them are already defined in \secref{sec:exp3_nystrom_approx}:
\begin{itemize}
    \item Let $\bS_t \in \R^{|\mX| \times s}$ be the sampling matrix at round $t$ returned by recursive RLS-Nystr\"om.
    \item Let $\bar{\bPsi}_t \coloneqq (\bar{\psi}_t(\bx^{(1)}), \ldots, \bar{\psi}_t(\bx^{(|\mX|)})) \in \R^{|\mX| \times |\mX|}$, where $\bar{\psi}_t(\bx) \coloneqq \sqrt{P_t(\bx)} \psi(\bx)$ be a weighted feature map.
    \item Let $\bQ_t \coloneqq \bar{\bPsi}_t \bS_t \sbr{(\bar{\bPsi}_t \bS_t)^{\top} (\bar{\bPsi}_t \bS_t)}^{\dagger} (\bar{\bPsi}_t \bS_t)^{\top}$ be the projection matrix into the subspace spanned by the column vectors of $\bar{\bPsi}_t \bS_t$. We also define $\tilde{\psi}_t(\bx)$ as $\tilde{\psi}_t(\bx) = \bQ_t \psi(\bx)$.
    \item Let $\tilde{G}_t(\lambda)$ be $\tilde{G}_t(\lambda) = \bQ_t \bar{\bPsi}_t (\bQ_t \bar{\bPsi}_t)^{\top} + \frac{\lambda}{T} \bI_{|\mX|}$, which is the approximated version of $G_t(\lambda) \coloneqq \bar{\bPsi}_t\bar{\bPsi}_t^{\top} + \frac{\lambda}{T} \bI_{|\mX|} = \sum_{\bx \in \mX} P_t(\bx) \psi(\bx) \psi(\bx)^{\top} + \frac{\lambda}{T} \bI_{|\mX|}$.
    \item Let $\tilde{\bPsi}_t \coloneqq \bQ_t \bar{\bPsi}_t$ be the projected version of $\bar{\bPsi}_t$.
\end{itemize}
In the proof of this section, we define $\tilde{\mH}_{t}$ as the learner's history up to round $t$: $\tilde{\mH}_{t} = \{(\bS_i, \bx_i, f_i(\bx_i))\}_{i \in [t]}$.
Here, from the algorithm construction, note that $\bm{\theta}_t$, $\bx_t$, and $\bS_t$ are conditionally independent given $\tilde{\mH}_{t-1}$.
As described in \secref{sec:exp3_nystrom_approx}, note that the following identities hold (see also \lemref{lem:identity_nystrom}):
\begin{align}
    \tilde{f}_t(\bx) &= \psi(\bx)^{\top} \tilde{G}_t(\lambda)^{-1} \tilde{\psi}_t(\bx_t) \psi(\bx_t)^{\top} \bm{\theta}_t, \\
    \tilde{u}_t(\bx) &= \tilde{f}_t(\bx) + \beta \|\psi(\bx)\|_{\tilde{G}_t(\lambda)^{-1}}.
\end{align}
To prove \thmref{thm:reg_exp3_nystrom_approx}, as with the proof of \thmref{thm:reg_exp3}, we decompose $f_t(\bx^{\ast}) - f(\bx_t)$ as follows:
\begin{align}
    \label{eq:decomp_reg_rls_exp3}
    \Ep\sbr{f_t(\bx^{\ast}) - f_t(\bx_t)} 
    \leq 2 \alpha B + (1 - \alpha) \Ep\sbr{f_t(\bx^{\ast}) - \sum_{\bx \in \mX} \tilde{P}_t(\bx) f_t(\bx)},
\end{align}
where $\bx^{\ast} \in \argmax_{\bx \in \mX} \Ep[\sum_{t=1}^T f_t(\bx)]$. 
Now, we decompose the second term in Eq.~\eqref{eq:decomp_reg_rls_exp3} into the following three terms similarly to the proof of \thmref{thm:reg_exp3}:
\begin{align}
    &\Ep\sbr{f_t(\bx^{\ast}) - \sum_{\bx \in \mX} \tilde{P}_t(\bx) f_t(\bx)} \\
    \label{eq:thm_rke3_decomp}
    &= \underbrace{\Ep\sbr{f_t(\bx^{\ast}) - \tilde{u}_t(\bx^{\ast})}}_{\text{(i)}}
    + \underbrace{\Ep\sbr{\tilde{u}_t(\bx^{\ast}) - \sum_{\bx \in \mX} \tilde{P}_t(\bx) \tilde{u}_t(\bx)}}_{\text{(ii)}}
    + \underbrace{\Ep\sbr{\sum_{\bx \in \mX} \tilde{P}_t(\bx) \rbr{\tilde{u}_t(\bx) -  f_t(\bx)}}}_{\text{(iii)}}.
\end{align}
Below, we consider the upper bound of terms (i), (ii), and (iii).

\paragraph{Upper bound for the first term (i) in Eq.~\eqref{eq:thm_rke3_decomp}.} 
By noting the definition of $\tilde{f}_t$ and $f_t(\cdot) = \psi(\cdot)^{\top} \bm{\theta}_t$, the error term $f_t(\bx^{\ast}) - \tilde{f}_t(\bx^{\ast})$ in the above equation can be rewritten as follows:
\begin{align}
    f_t(\bx^{\ast}) - \tilde{f}_t(\bx^{\ast}) 
    &= \psi(\bx^{\ast})^{\top} \bm{\theta}_t - \psi(\bx^{\ast})^{\top} \tilde{G}_t(\lambda)^{-1} \tilde{\psi}_t(\bx_t) f_t(\bx_t) \\
    &= \psi(\bx^{\ast})^{\top} \bm{\theta}_t - \psi(\bx^{\ast})^{\top} \tilde{G}_t(\lambda)^{-1} \tilde{\psi}_t(\bx_t) \psi(\bx_t)^{\top} \bm{\theta}_t \\
    &= \psi(\bx^{\ast})^{\top} \rbr{\bI_{|\mX|} - \tilde{G}_t(\lambda)^{-1} \tilde{\psi}_t(\bx_t) \psi(\bx_t)^{\top}} \bm{\theta}_t.
\end{align}
Then, we have
\begin{align}
    &\Ep[f_t(\bx^{\ast}) - \tilde{f}_t(\bx^{\ast}) \mid \tilde{\mH}_{t-1}, \bS_t, \bm{\theta}_t] \\
    &= \psi(\bx^{\ast})^{\top} \rbr{\bI_{|\mX|} - \tilde{G}_t(\lambda)^{-1} \Ep[ \tilde{\psi}_t(\bx_t) \psi(\bx_t)^{\top} \mid \tilde{\mH}_{t-1}, \bS_t, \bm{\theta}_t]} \bm{\theta}_t \\
    &= \psi(\bx^{\ast})^{\top} \sbr{\tilde{G}_t(\lambda)^{-1} \rbr{\tilde{G}_t(\lambda) - \sum_{\bx \in \mX} P_t(\bx) \tilde{\psi}_t(\bx) \psi(\bx)^{\top} }} \bm{\theta}_t \\
    &= \psi(\bx^{\ast})^{\top} \tilde{G}_t(\lambda)^{-1} \rbr{\tilde{\bPsi}_t \tilde{\bPsi}_t^{\top} - \tilde{\bPsi}_t \bar{\bPsi}_t^{\top} + \frac{\lambda}{T} \bI_{|\mX|}} \bm{\theta}_t \\
    \label{eq:decomp_est_ftilde}
    &= \frac{\lambda}{T} \psi(\bx^{\ast})^{\top} \tilde{G}_t(\lambda)^{-1} \bm{\theta}_t + \psi(\bx^{\ast})^{\top} \tilde{G}_t(\lambda)^{-1} \rbr{\tilde{\bPsi}_t \tilde{\bPsi}_t^{\top} - \tilde{\bPsi}_t \bar{\bPsi}_t^{\top}} \bm{\theta}_t.
\end{align}
Regarding the first term in Eq.~\eqref{eq:decomp_est_ftilde}, we have
\begin{align}
    \frac{\lambda}{T} \psi(\bx^{\ast})^{\top} \tilde{G}_t(\lambda)^{-1} \bm{\theta}_t 
    &\leq \frac{\lambda}{T} \|\psi(\bx^{\ast})^{\top} \tilde{G}_t(\lambda)^{-1}\|_2 \|\bm{\theta}_t\|_2 \\
    &\leq B \frac{\lambda}{T} \sqrt{\psi(\bx^{\ast})^{\top} \tilde{G}_t(\lambda)^{-2} \psi(\bx^{\ast})} \\
    &\leq B \sqrt{\frac{\lambda}{T}} \sqrt{\psi(\bx^{\ast})^{\top} \tilde{G}_t(\lambda)^{-1} \psi(\bx^{\ast})} \\
    &= B \sqrt{\frac{\lambda}{T}} \|\psi(\bx^{\ast})\|_{\tilde{G}_t(\lambda)^{-1}}.
\end{align}
Regarding the second term in Eq.~\eqref{eq:decomp_est_ftilde}, under $(\tilde{\mH}_{t-1}, \bm{\theta}_t, \bm{S}_t)$ such that event $A_t$ is true, we have
\begin{align}
    &\psi(\bx^{\ast})^{\top} \tilde{G}_t(\lambda)^{-1} \rbr{\tilde{\bPsi}_t \tilde{\bPsi}_t^{\top} - \tilde{\bPsi}_t \bar{\bPsi}_t^{\top}} \bm{\theta}_t \\
    &= \psi(\bx^{\ast})^{\top} \tilde{G}_t(\lambda)^{-1} \rbr{\tilde{\bPsi}_t \bar{\bPsi}_t^{\top} \bQ_t^{\top} - \tilde{\bPsi}_t \bar{\bPsi}_t^{\top}} \bm{\theta}_t \\
    &= \psi(\bx^{\ast})^{\top} \tilde{G}_t(\lambda)^{-1} \tilde{\bPsi}_t \bar{\bPsi}_t^{\top} \rbr{\bQ_t - \bI_{|\mX|}} \bm{\theta}_t \\
    &\leq \|\psi(\bx^{\ast})\|_{\tilde{G}_t(\lambda)^{-1}}  \|\tilde{G}_t(\lambda)^{-1/2} \tilde{\bPsi}_t \bar{\bPsi}_t^{\top} \rbr{\bQ_t - \bI_{|\mX|}} \bm{\theta}_t \|_2 \\
    &\leq B \|\psi(\bx^{\ast})\|_{\tilde{G}_t(\lambda)^{-1}}  \|\tilde{G}_t(\lambda)^{-1/2} \tilde{\bPsi}_t\| \|\bar{\bPsi}_t^{\top} \rbr{\bQ_t - \bI_{|\mX|}} \| \\
    \label{eq:thm_ke3_proof_G}
    &\leq B \|\psi(\bx^{\ast})\|_{\tilde{G}_t(\lambda)^{-1}}  \|\bar{\bPsi}_t^{\top} \rbr{\bQ_t - \bI_{|\mX|}} \| \\
    \label{eq:thm_ke3_proof_ort}
    &\leq B \sqrt{\frac{2\lambda}{T}}\|\psi(\bx^{\ast})\|_{\tilde{G}_t(\lambda)^{-1}},
\end{align}
where Eq.~\eqref{eq:thm_ke3_proof_G} follows from $\|\tilde{G}_t(\lambda)^{-1/2} \tilde{\bPsi}_t\| = \sqrt{\|\tilde{\bPsi}_t^{\top} \tilde{G}_t(\lambda)^{-1} \tilde{\bPsi}_t\|} \leq 1$, and Eq.~\eqref{eq:thm_ke3_proof_ort} follows from 
$\|\bar{\bPsi}_t^{\top} \rbr{\bQ_t - \bI_{|\mX|}}\| = \sqrt{\|\bar{\bPsi}_t^{\top} \rbr{\bQ_t - \bI_{|\mX|}} \bar{\bPsi}_t\|} \leq \sqrt{\frac{2\lambda}{T}}$, which is implied by the fact that $\bar{\bPsi}_t^{\top} \rbr{\bI_{|\mX|} - \bQ_t} \bar{\bPsi}_t \preceq \frac{2\lambda}{T} \bar{\bPsi}_t^{\top} (\bar{\bPsi} \bar{\bPsi}_t^{\top} + \frac{\lambda}{T} \bI_{|\mX|})^{-1} \bar{\bPsi}_t \preceq \frac{2\lambda}{T} \bI_{|\mX|}$ holds from \lemref{lem:ort_proj_rel}. Thus, we have
\begin{align}
    &\Ep[f_t(\bx^{\ast}) - \tilde{u}_t(\bx^{\ast})] \\
    &\leq \Ep\sbr{\1\{A_t\} \rbr{f_t(\bx^{\ast}) - \tilde{u}_t(\bx^{\ast})}} + \Pr\rbr{A_t^c}\rbr{B + \frac{BT}{\lambda} + \beta \sqrt{\frac{T}{\lambda}}} \\
    &\leq \Ep\sbr{\1\{A_t\} \Ep\sbr{f_t(\bx^{\ast}) - \tilde{u}_t(\bx^{\ast}) \mid \tilde{\mH}_{t-1}, \bm{\theta}_t, \bS_t}} + \frac{1}{T^3}\rbr{B + \frac{BT}{\lambda} + \beta \sqrt{\frac{T}{\lambda}}} \\
    &\leq \Ep\sbr{\1\{A_t\} \rbr{B \sqrt{\frac{\lambda}{T}} \rbr{\sqrt{2} + 1} - \beta} \|\psi(\bx^{\ast})\|_{\tilde{G}_t(\lambda)^{-1}}} + \frac{1}{T^3}\rbr{B + \frac{BT}{\lambda} + \beta \sqrt{\frac{T}{\lambda}}} \\
    \label{eq:rke3_final_i}
    &= \frac{B}{T^3}\rbr{\frac{T}{\lambda} + \sqrt{2} + 2}.
\end{align}

\paragraph{Upper bound for the second term (ii) in Eq.~\eqref{eq:thm_rke3_decomp}.}
For any $\bx \in \mX$, the definition of the exploration distribution implies the following upper bound of $\tilde{u}_t(\bx)$ under event $A$ (Lemma~\ref{lem:ub_ut_tilde}):
\begin{equation}
    \tilde{u}_t(\bx) \leq \frac{2\sqrt{3}}{\alpha} \rbr{2\sqrt{2} B \gamma_{T} + \beta \sqrt{\gamma_T}}.
\end{equation}
By combining the above inequality with the definition of $\alpha$, we can observe that $\eta \tilde{u}_t(\bx) \leq 1$ holds for any $\bx \in \mX$ under $A$; therefore, by following the standard proof for Exp3~(Lemma~\ref{lem:exp3_standard}), we obtain the following inequality under event $A$:
\begin{align}
    \label{eq:rke3_ut_put_first_ub}
    \sum_{t=1}^T \tilde{u}_t(\bx^{\ast}) - \sum_{t=1}^T \sum_{\bx \in \mX} \tilde{P}_t(\bx) \tilde{u}_t(\bx) 
    \leq \frac{\log |\mX|}{\eta} + \eta \sum_{t=1}^T \sum_{\bx \in \mX} \tilde{P}_t(\bx) \tilde{u}_t^2(\bx).
\end{align}
Regarding the term $\sum_{\bx \in \mX} \tilde{P}_t(\bx) \tilde{u}_t^2(\bx)$, we further obtain the upper bound as follows:
\begin{align}
    \sum_{\bx \in \mX} \tilde{P}_t(\bx) \tilde{u}_t^2(\bx) 
    &= \sum_{\bx \in \mX} \tilde{P}_t(\bx) \rbr{\tilde{f}_t(\bx) + \beta \|\psi(\bx)\|_{\tilde{G}_t(\lambda)^{-1}}}^2 \\
    \label{eq:rke3_ut_var_decomp}
    &\leq 2 \sum_{\bx \in \mX} \tilde{P}_t(\bx) \tilde{f}_t^2(\bx) + 2 \beta^2 \sum_{\bx \in \mX} \tilde{P}_t(\bx) \|\psi(\bx)\|_{\tilde{G}_t(\lambda)^{-1}}^2,
\end{align}
As with the proof of \thmref{thm:reg_exp3} (Eqs.~\eqref{eq:Ptilde_Gt_rel}--\eqref{eq:ut_var_decomp_ub2}), under event $A$, the second term in Eq.~\eqref{eq:rke3_ut_var_decomp} is bounded from above as follows:
\begin{equation}
    \label{eq:rke3_ut_var_decomp_ub2}
    2 \beta^2 \sum_{\bx \in \mX} \tilde{P}_t(\bx) \|\psi(\bx)\|_{\tilde{G}_t(\lambda)^{-1}}^2 \leq 
    6 \beta^2 \sum_{\bx \in \mX} \tilde{P}_t(\bx) \|\psi(\bx)\|_{G_t(\lambda)^{-1}}^2 \leq \frac{24 \lambda B^2 \gamma_{T}}{(1-\alpha) T},
\end{equation}
where the first inequality follows from \lemref{lem:Ppsi_Qpsi} and event $A_t$.
Thus, we have
\begin{align}
    &\Ep\sbr{\1\{A\}\rbr{\sum_{t=1}^T \tilde{u}_t(\bx^{\ast}) - \sum_{t=1}^T \sum_{\bx \in \mX} \tilde{P}_t(\bx) \tilde{u}_t(\bx)} } \\
    &\leq \Ep\sbr{\1\{A\}\rbr{\frac{\log |\mX|}{\eta} + \eta \sum_{t=1}^T \sum_{\bx \in \mX} \tilde{P}_t(\bx) \tilde{u}_t^2(\bx)} } \\
    &\leq \frac{\log |\mX|}{\eta} + \Ep\sbr{\1\{A\}\rbr{ 2\eta \sum_{t=1}^T \sum_{\bx \in \mX} \tilde{P}_t(\bx) \tilde{f}_t^2(\bx) + 2 \eta \beta^2 \sum_{t=1}^T \sum_{\bx \in \mX} \tilde{P}_t(\bx) \|\psi(\bx)\|_{\tilde{G}_t(\lambda)^{-1}}^2} } \\
    \label{eq:rke3_A_ut_Ptilde_ut}
    &\leq \frac{\log |\mX|}{\eta} + \frac{24 \eta \lambda B^2 \gamma_{T}}{1-\alpha} + 2\eta \sum_{t=1}^T \Ep\sbr{\1\{A\} \sum_{\bx \in \mX} \tilde{P}_t(\bx) \tilde{f}_t^2(\bx) }.
\end{align}
Furthermore, we have
\begin{align}
    &\Ep\sbr{\1\{A\} \sum_{\bx \in \mX} \tilde{P}_t(\bx) \tilde{f}_t^2(\bx) } \\ 
    &\leq \Ep\sbr{\1\{A_t\} \sum_{\bx \in \mX} \tilde{P}_t(\bx) \tilde{f}_t^2(\bx) } \\
    &\leq B^2 \Ep\sbr{\1\{A_t\} \sum_{\bx \in \mX} \tilde{P}_t(\bx) \sbr{\psi(\bx)^{\top} \tilde{G}_t(\lambda)^{-1} \tilde{\psi}_t(\bx_t)}^2} \\
    &= B^2 \Ep\sbr{\1\{A_t\} \sum_{\bx \in \mX} \tilde{P}_t(\bx) \psi(\bx)^{\top} \tilde{G}_t(\lambda)^{-1} \tilde{\psi}_t(\bx_t) \tilde{\psi}_t(\bx_t)^{\top} \tilde{G}_t(\lambda)^{-1} \psi(\bx)} \\
    &= B^2 \Ep\sbr{\Ep\sbr{\1\{A_t\} \sum_{\bx \in \mX} \tilde{P}_t(\bx) \psi(\bx)^{\top} \tilde{G}_t(\lambda)^{-1} \tilde{\psi}_t(\bx_t) \tilde{\psi}_t(\bx_t)^{\top} \tilde{G}_t(\lambda)^{-1} \psi(\bx) \mid \tilde{\mH}_{t-1}, \bS_t}} \\
    \label{eq:rke3_Apf_in}
    &= B^2 \Ep\sbr{\1\{A_t\} \sum_{\bx \in \mX} \tilde{P}_t(\bx) \psi(\bx)^{\top} \tilde{G}_t(\lambda)^{-1} \Ep\sbr{\tilde{\psi}_t(\bx_t) \tilde{\psi}_t(\bx_t)^{\top} \mid \tilde{\mH}_{t-1}, \bS_t } \tilde{G}_t(\lambda)^{-1} \psi(\bx)} \\
    \label{eq:rke3_exp_cond}
    &\leq B^2 \Ep\sbr{\1\{A_t\} \sum_{\bx \in \mX} \tilde{P}_t(\bx) \psi(\bx)^{\top} \tilde{G}_t(\lambda)^{-1}  \psi(\bx)} \\
    \label{eq:rke3_tildeG_G}
    &\leq 3 B^2 \Ep\sbr{\1\{A_t\} \sum_{\bx \in \mX} \tilde{P}_t(\bx) \psi(\bx)^{\top} G_t(\lambda)^{-1}  \psi(\bx)} \\
    \label{eq:rke3_APf2_last}
    &\leq \frac{12 B^2 \gamma_T}{1 - \alpha},
\end{align}
where Eq.~\eqref{eq:rke3_Apf_in} follows from the fact that $A_t$, $\tilde{P}_t(\cdot)$, $\tilde{G}_t(\lambda)$, and $\tilde{\psi}_t(\cdot)$ are deterministic given $\tilde{\mH}_{t-1}$, $\bS_t$. Furthermore, Eq.~\eqref{eq:rke3_exp_cond} follows from $\|\tilde{G}_t(\lambda)^{-1/2} \Ep\sbr{\tilde{\psi}_t(\bx_t) \tilde{\psi}_t(\bx_t)^{\top} \mid \tilde{\mH}_{t-1}, \bS_t } \tilde{G}_t(\lambda)^{-1/2}\| \leq 1$ , which is implied by $\Ep\sbr{\tilde{\psi}_t(\bx_t) \tilde{\psi}_t(\bx_t)^{\top} \mid \tilde{\mH}_{t-1}, \bS_t} = \sum_{\bx \in \mX} P_t(\bx) \tilde{\psi}_t(\bx) \tilde{\psi}_t(\bx)^{\top}$ and $\tilde{G}_t(\lambda) = \sum_{\bx \in \mX} P_t(\bx) \tilde{\psi}_t(\bx) \tilde{\psi}_t(\bx)^{\top} + \frac{\lambda}{T} \bI_{|\mX|}$. In addition, Eq.~\eqref{eq:rke3_tildeG_G} is implied by leveraging event $A_t$ with \lemref{lem:Ppsi_Qpsi}.
Hence, from Eqs.~\eqref{eq:rke3_A_ut_Ptilde_ut} and \eqref{eq:rke3_APf2_last}, we have
\begin{align}
    &\Ep\sbr{\sum_{t=1}^T u_t(\bx^{\ast}) - \sum_{t=1}^T \sum_{\bx \in \mX} \tilde{P}_t(\bx) u_t(\bx) } \\
    &\leq \Ep\sbr{\1\{A^c\} \rbr{\sum_{t=1}^T u_t(\bx^{\ast}) - \sum_{t=1}^T \sum_{\bx \in \mX} \tilde{P}_t(\bx) u_t(\bx)} } + \frac{\log |\mX|}{\eta} + \frac{24 \eta \lambda B^2 \gamma_{T}}{1-\alpha} + \frac{24 \eta B^2 T \gamma_T}{1 - \alpha} \\
    \label{eq:rke3_Ac_uPu}
    &\leq 2 T\rbr{\frac{BT}{\lambda} + \beta \sqrt{\frac{T}{\lambda}}} \Pr(A^c) + \frac{\log |\mX|}{\eta} + \frac{24 \eta \lambda B^2 \gamma_{T}}{1-\alpha} + \frac{24 \eta B^2 T \gamma_T}{1 - \alpha} \\
    \label{eq:rke3_final_ii}
    &\leq \frac{2B}{T} \rbr{\frac{T}{\lambda} + \sqrt{2} + 1} + \frac{\log |\mX|}{\eta} + \frac{24 \eta B^2 (\lambda + T) \gamma_{T}}{1-\alpha},
\end{align}
where Eq.~\eqref{eq:rke3_Ac_uPu} follows from \lemref{lem:sup-norm_utilde}, and Eq.~\eqref{eq:rke3_final_ii} follows from $\Pr(A^c) \leq 1/T^2$ and the definition of $\beta$.

\paragraph{Upper bound for the third term (iii) in Eq.~\eqref{eq:thm_rke3_decomp}.}
From \lemref{lem:sup-norm_utilde} and the inequality $\Pr(A_t^c) \leq 1/T^3$, we have
\begin{align}
    &\Ep\sbr{\sum_{\bx \in \mX} \tilde{P}_t(\bx) \rbr{\tilde{u}_t(\bx) -  f_t(\bx)}} \\
    &= \Ep\sbr{\1\{A_t^c\}\sum_{\bx \in \mX} \tilde{P}_t(\bx) \rbr{\tilde{u}_t(\bx) -  f_t(\bx)}} + \Ep\sbr{\1\{A_t\}\sum_{\bx \in \mX} \tilde{P}_t(\bx) \rbr{\tilde{u}_t(\bx) -  f_t(\bx)}} \\
    &\leq \frac{1}{T^3} \rbr{B + \frac{BT}{\lambda} + \beta \sqrt{\frac{T}{\lambda}}}  + \Ep\sbr{\1\{A_t\}\sum_{\bx \in \mX} \tilde{P}_t(\bx) \rbr{\tilde{u}_t(\bx) -  f_t(\bx)}} \\
    \label{eq:rke3_iii_first_exp}
    &\leq \frac{B}{T^3} \rbr{\frac{T}{\lambda} + \sqrt{2} + 2}  + \Ep\sbr{\1\{A_t\}\sum_{\bx \in \mX} \tilde{P}_t(\bx) \rbr{\tilde{u}_t(\bx) -  f_t(\bx)}}.
\end{align}
Here, we can rewrite the second term as
\begin{align}
    &\Ep\sbr{\1\{A_t\}\sum_{\bx \in \mX} \tilde{P}_t(\bx) \rbr{\tilde{u}_t(\bx) -  f_t(\bx)}} \\
    \label{eq:rke3_iii_At_decomp}
    &= \Ep\sbr{\1\{A_t\}\sum_{\bx \in \mX} \tilde{P}_t(\bx) \rbr{\tilde{f}_t(\bx) -  f_t(\bx)}} + \beta \Ep\sbr{\1\{A_t\} \sum_{\bx \in \mX} \tilde{P}_t(\bx) \|\psi(\bx)\|_{\tilde{G}_t(\lambda)^{-1}}}.
\end{align}
Then, we have
\begin{align}
    &\Ep\sbr{\1\{A_t\} \sum_{\bx \in \mX} \tilde{P}_t(\bx) \rbr{\tilde{f}_t(\bx) - f_t(\bx)}} \\
    &= \Ep\sbr{\1\{A_t\} \sum_{\bx \in \mX} \tilde{P}_t(\bx) \psi(\bx)^{\top} \rbr{\tilde{G}_t(\lambda)^{-1} \tilde{\psi}_t(\bx_t) \psi(\bx_t)^{\top} - \bI_{|\mX|}} \bm{\theta}_t} \\
    &= \Ep\sbr{\1\{A_t\} \sum_{\bx \in \mX} \tilde{P}_t(\bx) \psi(\bx)^{\top} \sbr{\tilde{G}_t(\lambda)^{-1} \rbr{\tilde{\psi}_t(\bx_t) \psi(\bx_t)^{\top} - \tilde{G}_t(\lambda)}} \bm{\theta}_t} \\
    &= \Ep\sbr{\1\{A_t\} \sum_{\bx \in \mX} \tilde{P}_t(\bx) \psi(\bx)^{\top} \sbr{\tilde{G}_t(\lambda)^{-1} \rbr{ \Ep\sbr{\tilde{\psi}_t(\bx_t) \psi(\bx_t)^{\top} \mid \tilde{\mH}_{t-1}, \bS_t, \bm{\theta}_t} - \tilde{G}_t(\lambda)}} \bm{\theta}_t} \\
    &= \Ep\sbr{\1\{A_t\} \sum_{\bx \in \mX} \tilde{P}_t(\bx) \psi(\bx)^{\top} \sbr{\tilde{G}_t(\lambda)^{-1} \rbr{ \tilde{\bPsi}_t \bar{\bPsi}_t^{\top} - \tilde{\bPsi}_t \tilde{\bPsi}_t^{\top} - \frac{\lambda}{T} \bI_{|\mX|}}} \bm{\theta}_t} \\
    \label{eq:rke3_ftilde_f_decomp}
    \begin{split}
        &= \Ep\sbr{\1\{A_t\} \sum_{\bx \in \mX} \tilde{P}_t(\bx) \psi(\bx)^{\top} \tilde{G}_t(\lambda)^{-1} \tilde{\bPsi}_t \bar{\bPsi}_t^{\top} \rbr{\bI_{|\mX|} - \bQ_t} \bm{\theta}_t} \\
    &~~~~- \frac{\lambda}{T}\Ep\sbr{\1\{A_t\} \sum_{\bx \in \mX} \tilde{P}_t(\bx) \psi(\bx)^{\top} \tilde{G}_t(\lambda)^{-1} \bm{\theta}_t}.
    \end{split}
\end{align}
Similar to the derivation for the upper bound of the first term (i) in Eq.~\eqref{eq:thm_rke3_decomp}, we examine the upper bound of the above two terms. Regarding the first term, we have
\begin{align}
    &\Ep\sbr{\1\{A_t\} \sum_{\bx \in \mX} \tilde{P}_t(\bx) \psi(\bx)^{\top} \tilde{G}_t(\lambda)^{-1} \tilde{\bPsi}_t \bar{\bPsi}_t^{\top} \rbr{\bI_{|\mX|} - \bQ_t} \bm{\theta}_t} \\
    &\leq B\Ep\sbr{\1\{A_t\} \sum_{\bx \in \mX} \tilde{P}_t(\bx) \|\psi(\bx)\|_{\tilde{G}_t(\lambda)^{-1}} \left\|\tilde{G}_t(\lambda)^{-1/2} \tilde{\bPsi}_t \bar{\bPsi}_t^{\top} \rbr{\bI_{|\mX|} - \bQ_t} \right\|} \\
    \label{eq:rke3_ftilde_f_first}
    &\leq B \sqrt{\frac{2\lambda}{T}} \Ep\sbr{\1\{A_t\} \sum_{\bx \in \mX} \tilde{P}_t(\bx) \|\psi(\bx)\|_{\tilde{G}_t(\lambda)^{-1}}}.
\end{align}
Regarding the second term $-\frac{\lambda}{T}\Ep\sbr{\1\{A_t\} \sum_{\bx \in \mX} \tilde{P}_t(\bx) \psi(\bx)^{\top} \tilde{G}_t(\lambda)^{-1} \bm{\theta}_t}$, we have
\begin{align}
    -\frac{\lambda}{T}\Ep\sbr{\1\{A_t\} \sum_{\bx \in \mX} \tilde{P}_t(\bx) \psi(\bx)^{\top} \tilde{G}_t(\lambda)^{-1} \bm{\theta}_t} 
    &\leq \frac{\lambda}{T}\Ep\sbr{\1\{A_t\} \sum_{\bx \in \mX} \tilde{P}_t(\bx) |\psi(\bx)^{\top} \tilde{G}_t(\lambda)^{-1} \bm{\theta}_t|} \\
    &\leq  \frac{B\lambda}{T}\Ep\sbr{\1\{A_t\} \sum_{\bx \in \mX} \tilde{P}_t(\bx) \|\psi(\bx)^{\top} \tilde{G}_t(\lambda)^{-1}\|_2} \\
    \label{eq:rke3_ftilde_f_second}
    &\leq B\sqrt{\frac{\lambda}{T}} \Ep\sbr{\1\{A_t\} \sum_{\bx \in \mX} \tilde{P}_t(\bx) \|\psi(\bx)\|_{\tilde{G}_t(\lambda)^{-1}}}.
\end{align}
Hence, by aggregating Eqs.~\eqref{eq:rke3_iii_first_exp},~\eqref{eq:rke3_iii_At_decomp},~\eqref{eq:rke3_ftilde_f_decomp},~\eqref{eq:rke3_ftilde_f_first}, and \eqref{eq:rke3_ftilde_f_second}, we have
\begin{align}
    &\Ep\sbr{\sum_{\bx \in \mX} \tilde{P}_t(\bx) \rbr{\tilde{u}_t(\bx) -  f_t(\bx)}} \\
    &\leq \frac{B}{T^3} \rbr{\frac{T}{\lambda} + \sqrt{2} + 2} + 2(1 + \sqrt{2}) B \sqrt{\frac{\lambda}{T}} \Ep\sbr{\1\{A_t\} \sum_{\bx \in \mX} \tilde{P}_t(\bx) \|\psi(\bx)\|_{\tilde{G}_t(\lambda)^{-1}}} \\
    &\leq \frac{B}{T^3} \rbr{\frac{T}{\lambda} + \sqrt{2} + 2} + 2\sqrt{3}(1 + \sqrt{2}) B \sqrt{\frac{\lambda}{T}} \Ep\sbr{\1\{A_t\} \sum_{\bx \in \mX} \tilde{P}_t(\bx) \|\psi(\bx)\|_{G_t(\lambda)^{-1}}} \\
    \label{eq:rke3_final_iii}
    &\leq \frac{B}{T^3} \rbr{\frac{T}{\lambda} + \sqrt{2} + 2} + 4  \sqrt{3}(1 + \sqrt{2}) B \sqrt{\frac{\lambda \gamma_T}{(1-\alpha)T}}.
\end{align}

\paragraph{Aggregate all the upper bounds.}
From Eqs.~\eqref{eq:decomp_reg_rls_exp3}, ~\eqref{eq:thm_rke3_decomp},~\eqref{eq:rke3_final_i},~\eqref{eq:rke3_final_ii}, and \eqref{eq:rke3_final_iii}, we finally obtain
\begin{align}
    \begin{split}
    \bar{R}_T &\leq 2\alpha B T + (1-\alpha) \Biggl[\frac{B}{T^2}\rbr{\frac{T}{\lambda} + \sqrt{2} + 2} + \frac{2B}{T} \rbr{\frac{T}{\lambda} + \sqrt{2} + 1} \\
    &~~~+ \frac{\log |\mX|}{\eta} + \frac{24 \eta B^2 (\lambda + T) \gamma_{T}}{1-\alpha} + \frac{B}{T^2} \rbr{\frac{T}{\lambda} + \sqrt{2} + 2} + 4 \sqrt{3}(1 + \sqrt{2}) B \sqrt{\frac{\lambda T \gamma_T}{1-\alpha}}\Biggr]
    \end{split} \\
    \begin{split}
    &\leq 2\alpha B T + \frac{2B}{T^2}\rbr{\frac{T}{\lambda} + \sqrt{2} + 2} + \frac{2B}{T} \rbr{\frac{T}{\lambda} + \sqrt{2} + 1} \\
    &~~~+ \frac{\log |\mX|}{\eta} + 24 \eta B^2 (\lambda + T) \gamma_{T} + 4 \sqrt{3}(1 + \sqrt{2}) B \sqrt{\lambda T \gamma_T}.
    \end{split}
\end{align} 
In the above upper bound, the additional term $\frac{2B}{T^2}\rbr{\frac{T}{\lambda} + \sqrt{2} + 2} + \frac{2B}{T} \rbr{\frac{T}{\lambda} + \sqrt{2} + 1}$ is $O(1)$ and does not affect the growth rate of regret. Furthermore, the remaining term: $2\alpha B T + 24 \eta B^2 (\lambda + T) \gamma_{T} + 4 \sqrt{3} (1 + \sqrt{2}) B \sqrt{\lambda T \gamma_T}$ is the same as the upper bound in \thmref{thm:reg_exp3} up to constant factors. Thus, we complete the proof.
\qed

\section{Proof of \thmref{thm:lb}}
\label{app:proof_lb}

We define the noise function $g_t$ as $g_t(\cdot) = \frac{\eta_t B}{2 \sqrt{2 \log (2T^2)}} k(\bm{0}, \cdot)$, where $\eta_t \sim \mN(0, 1)$. We assume that $(\eta_t)_{t \in N_+}$ is mutually-independent. Then, we consider the following two problem instances (i.e., strategies of the environment) based on some common function $f \in \mF_k(B/2) \coloneqq \{f \in \mH_k \mid \|f\|_k \leq B/2\}$ and $(\eta_t)_{t \in \N_+}$:
\begin{itemize}
    \item \textbf{Problem instance 1}. The environment defines $f_t$ as  $f_t(\cdot) = f(\cdot) + g_t(\cdot)$.
    \item \textbf{Problem instance 2}. The environment defines $f_t$ as  $f_t(\cdot) = f(\cdot) + \1\{|\eta_t| < \sqrt{2 \log (2T^2)} \} g_t(\cdot) + \1\{|\eta_t| \geq \sqrt{2 \log (2T^2)} \} \text{sign}(\eta_t) \frac{B}{2} k(\bm{0}, \cdot)$.
\end{itemize}
In the above definition, the problem instance 1 does not always satisfy $\|f_t\|_k \leq B$. On the other hand, for any realization, problem instance 2 always satisfies $\|f_t\|_k \leq B$. Next, we derive a lower bound for problem instance 2 using problem instance 1. Here, we use the notations $\Ep^{(1)}[\cdot]$ and $\Pr^{(1)}(\cdot)$ as the expectation and probability under problem instance 1. Similarly, we also define $\Ep^{(2)}[\cdot]$ and $\Pr^{(2)}(\cdot)$. Then, for any algorithm, we have
\begin{align}
    \sup_{\bx \in \mX}\Ep^{(2)}[R_T(\bx)] 
    &= \sup_{\bx \in \mX}\Ep^{(2)}\sbr{\sum_{t=1}^T f_t(\bx) - f_t(\bx_t)} \\
    \label{eq:delete_residual}
    &= \sup_{\bx \in \mX}\Ep^{(2)}\sbr{\sum_{t=1}^T f(\bx) - f(\bx_t)} \\
    \label{eq:ind_lb}
    &\geq \sup_{\bx \in \mX}\Ep^{(2)}\sbr{ \1\cbr{\forall i \in [T], |\eta_i| < \sqrt{2 \log (2T^2)}} \sum_{t=1}^T (f(\bx) - f(\bx_t))} \\
    \label{eq:model_change}
    &= \sup_{\bx \in \mX}\Ep^{(1)}\sbr{ \1\cbr{\forall i \in [T], |\eta_i| < \sqrt{2 \log (2T^2)}} \sum_{t=1}^T (f(\bx) - f(\bx_t))} \\
    \label{eq:lb_rkhsnorm}
    &\geq \sup_{\bx \in \mX}\Ep^{(1)}\sbr{\sum_{t=1}^T f(\bx) - f(\bx_t)} - BT \Pr^{(1)}\rbr{\exists t \in [T],~|\eta_t| \geq \sqrt{2 \log (2T^2)}},
\end{align}
where:
\begin{itemize}
    \item Eq.~\eqref{eq:delete_residual} follows from the symmetry of the Gaussian distribution.  Namely, $\Ep[\1\{|\eta_t| < \sqrt{2 \log (2T^2)} \} g_t(\bx) + \1\{|\eta_t| \geq \sqrt{2 \log (2T^2)} \} \text{sign}(\eta_t) \frac{B}{2} k(\bm{0}, \bx)] = 0$ for any fixed $\bx \in \mX$.
    Furthermore, the independence between $\bx_t$ and $\eta_t$ also implies $\Ep[\1\{|\eta_t| < \sqrt{2 \log (2T^2)} \} g_t(\bx_t) + \1\{|\eta_t| \geq \sqrt{2 \log (2T^2)} \} \text{sign}(\eta_t) \frac{B}{2} k(\bm{0}, \bx_t)] = 0$.
    \item Eq.~\eqref{eq:ind_lb} follows from the fact that $\sum_{t=1}^T f(\bx^{\ast}) - f(\bx_t) > 0$ always holds for the maximizer $\bx^{\ast}$ of $f$. (Note that such $\bx^{\ast}$ always exists since $\mX$ is compact and the continuity of $f$ under $k= \sek$ and $k = \matk$.)
    \item Eq.~\eqref{eq:model_change} uses the fact that the expectation of $\1\cbr{\forall i \in [T], |\eta_i| < \sqrt{2 \log (2T^2)}} \sum_{t=1}^T (f(\bx) - f(\bx_t))$ under instance $1$ equals that under instance $2$ from the definitions. To see this rigorously, we first assume that the learner's algorithm is deterministic. Then, we can rewrite $\bx_t$ as the function of $\bm{\eta}_{t-1} \coloneqq (\eta_1, \ldots, \eta_{t-1})$ such as $\bx_t = \bx_t(\bm{\eta}_{t-1})$. Then, we can write 
    \begin{align}
        &\Ep^{(2)}\sbr{\1\cbr{\forall i \in [T], |\eta_i| < \sqrt{2 \log (2T^2)}} \sum_{t=1}^T (f(\bx) - f(\bx_t))} \\
        &= \int_{\bm{\eta}_{T-1} \in (-\sqrt{2 \log (2T^2)}, \sqrt{2 \log (2T^2)})^{T-1}} \sum_{t=1}^T \sbr{f(\bx) - f(\bx_t(\bm{\eta}_{t-1}))} \Pr_{\bm{\eta}_{T-1}}(\text{d}\bm{\eta}_{T-1}).
    \end{align}
    Note that the distribution of $\Pr_{\bm{\eta}_{T-1}}(\cdot)$ is the same across instances 1 and 2, and $\bx_t(\bm{\eta}_{t-1})$ is also common under event $\bm{\eta}_{T-1} \in (-\sqrt{2 \log (2T^2)}, \sqrt{2 \log (2T^2)})^{T-1}$, since the learner's history $\mH_{t-1} \coloneqq( \bx_1, f_1(\bx_1), \ldots, \bx_{t-1}, f_{t-1}(\bx_{t-1}))$ is the common between two instances then. 
    Thus, we can confirm $\Ep^{(2)}\sbr{\1\cbr{\forall i \in [T], |\eta_i| < \sqrt{2 \log (2T^2)}} \sum_{t=1}^T (f(\bx) - f(\bx_t))} = \Ep^{(1)}\sbr{\1\cbr{\forall i \in [T], |\eta_i| < \sqrt{2 \log (2T^2)}} \sum_{t=1}^T (f(\bx) - f(\bx_t))}$. Finally, for a stochastic algorithm of the learner, the same argument is valid by rewriting $\bx_t$ as the function form: $\bx_t(\mH_{t-1}, (z_i)_{i \in [t]})$, where $z_i \sim \mathrm{Unif}(\mX)$ is used to represent additional randomness (see, e.g., Definition 1 in \citep{shekhar2022instance}).
    \item Eq.~\eqref{eq:lb_rkhsnorm} follows from $\Ep^{(1)}[\1\{\exists t \in [T],~|\eta_t| \geq \sqrt{2 \log (2T^2)}\}\sum_{t=1}^T(f(\bx^{\ast}) - f(\bx_t))] \leq BT \Ep^{(1)}[\1\{\exists t \in [T],~|\eta_t| \geq \sqrt{2 \log (2T^2)}\}]$, which is implied by $\|f\|_{\infty} \leq \|f\|_k \leq B/2$. 
\end{itemize}
Regarding the first term in Eq.~\eqref{eq:lb_rkhsnorm}, there exists a function $f \in \mF_k(B/2)$ such that $\sup_{\bx \in \mX}\Ep^{(1)}\sbr{\sum_{t=1}^T f(\bx) - f(\bx_t)} = \Omega(\underline{R}(T, \sigma_T^2))$, where $\sigma_T^2 = \frac{B^2}{8 \log T}$ (see Lemma~\ref{lem:lower}).
Here, we defined $\underline{R}(T, \sigma)$ as $\underline{R}(T, \sigma^2) = \sqrt{\sigma^{2} T(\log (T/\sigma^2))^{d/2}}$ and $\underline{R}(T, \sigma^2) = \sigma^{\frac{2\nu}{2\nu+d}} T^{\frac{\nu+d}{2\nu+d}}$ for $k = \sek$ and $k = \matk$, respectively.
Regarding the second term, we use the union bound with the fact $\mathbb{P}(|\eta_t| \geq \sqrt{2 \log (2T^2)}) \leq 1/T^2$, which follows from the basic concentration inequality: $\forall m \geq 0, \mathbb{P}(|\eta_t| \geq m) \leq 2\exp(-m^2/2)$. Then, we obtain $BT \Pr^{(1)}\rbr{\exists t \in [T],~|\eta_t| \geq \sqrt{2 \log (2T^2)}} \leq B$. Thus, the second term is negligible compared to the first term since $B = \Theta(1)$.
Consequently, we conclude that there exists a problem instance (strategy of the environment) such that
\begin{equation}
    \sup_{\bx \in \mX}\Ep^{(2)}[R_T(\bx)] = \Omega(\underline{R}(T, \sigma_T^2)).
\end{equation}
By simplifying the constant factors in the above equation, we obtain the desired results. 
\qed

\section{RLS-kernelized Exp3 with approximated MVR-sequence}
\label{app:rke3_approx_mvr}

As described in the last paragraph of \secref{sec:exp3_nystrom_approx}, by applying recursive RLS-Nystr\"om, we can alleviate the computational cost of preparation of the MVR-based exploration distribution. \algoref{alg:exp3_nystrom_approx_mvr} provides this variant of RLS-kernelized Exp3. The difference from the original RLS-kernelized Exp3 is the definition of the exploration distribution $\tilde{\pi}$, which is calculated by the approximated MVR algorithm in \algoref{algo:approx_mvr}. 
With proper modification of the parameters and proof, we can obtain the following theorem, which provides the improved computational cost for the preparation of the exploration distribution, while maintaining the same per-round computational cost and regret upper bound in \thmref{thm:reg_exp3_nystrom_approx}.

\begin{theorem}[Regret upper bound and computational cost for RLS-kernelized Exp3 with approximated MVR sequence]
    \label{thm:reg_rke3_approx_mvr}
    Suppose that Assumption~\ref{asmp:regularity} holds. Assume that $\mX$ is finite and $|\mX| = o(\exp(T\gamma_T^{-1}))$ holds. Furthermore, suppose that $d$ and $B$ are $\Theta(1)$. Set algorithm parameters as $\beta = B(1 + \sqrt{2})\sqrt{\lambda/T}$, $\lambda \geq 1$, $\eta = \Theta(\sqrt{\log |\mX|}/\sqrt{T \gamma_T})$, $\alpha =  \eta \rbr{36\sqrt{6} B \gamma_{T} + \beta \sqrt{108\gamma_T}}$, and $\delta = 1/T^2$. Then, when running \algoref{alg:exp3_nystrom_approx_mvr}, we have $\bar{R}_T = O(\sqrt{T \gamma_T \log |\mX|})$.
    Furthermore, with probability at least $1 - 1/T^2$, the following two statements hold simultaneously:
    \begin{itemize}
        \item For any $t \in [T]$, the computational cost at round $t$ is $\tilde{O}(|\mX|\gamma_T^2 + \gamma_T^3)$.
        \item The computational cost for calculating approximated MVR sequence-based exploration distribution $\tilde{\pi}$ is $\tilde{O}\rbr{\sum_{t=1}^{\lceil T\alpha \rceil} (t\gamma_t^2 + \gamma_t^3 + |\mX|\gamma_t^2)}$.
    \end{itemize}
\end{theorem}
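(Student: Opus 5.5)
The plan is to reuse the proof of \thmref{thm:reg_exp3_nystrom_approx} almost verbatim, replacing $\pi$ by $\tilde{\pi}$. The MVR-based exploration distribution enters that proof in exactly one place: the sup-norm bound on $\tilde{u}_t$ (Lemma~\ref{lem:ub_ut_tilde}), which is what ensures $\eta \tilde{u}_t(\bx) \le 1$. Terms (i) and (iii) of Eq.~\eqref{eq:thm_rke3_decomp}, and the variance part of term (ii), rely only on $P_t = \alpha\tilde{\pi} + (1-\alpha)\tilde{P}_t$ through $\tilde{P}_t$ and through the event $A$. They therefore carry over unchanged.

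The new ingredient is a sup-norm bound on $\|\psi(\bx)\|_{G_t(\lambda)^{-1}}^2$ built from $\tilde{\pi}$. First I define an event $\tilde{A}$ on which every call of RecursiveRLS-Nystr\"om inside Approximated-MVR (run with confidence $\delta/(6M)$) returns a spectral $1/2$-accurate dictionary. On $\tilde{A}$, the approximated variance $\bar{\sigma}^2(\bx;\tilde{\bX}_{t-1},\tilde{\bS}_t,\lambda)$ is within a constant factor of the exact $\lambda^{-1}\sigma^2(\bx;\tilde{\bX}_{t-1},\lambda)$. I would derive this from \lemref{lem:Ppsi_Qpsi} and \lemref{lem:ort_proj_rel}, using the same argument that gives $\tilde{G}_t^{-1}\preceq 3G_t^{-1}$, which suggests a factor of $3$ in each direction.

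It follows that the approximated MVR point is a constant-factor approximate maximizer: $\sigma^2(\tilde{\bx}_t;\tilde{\bX}_{t-1})\ge c\,\max_{\bx}\sigma^2(\bx;\tilde{\bX}_{t-1})$ with $c = 1/9$. Combining this with monotonicity of the posterior variance and Eq.~\eqref{eq:sigma_2_cum_ub}, I get for $M=\lceil T\alpha\rceil$
\begin{equation*}
\max_{\bx}\sigma^2(\bx;\tilde{\bX}_M,\lambda) \le \frac{1}{M}\sum_{t\le M}\max_{\bx}\sigma^2(\bx;\tilde{\bX}_{t-1},\lambda) \le \frac{9}{M}\sum_{t\le M}\sigma^2(\tilde{\bx}_t;\tilde{\bX}_{t-1},\lambda) \le \frac{18\gamma_T}{M\log(1+\lambda^{-1})} .
\end{equation*}
Feeding this into the proof of Lemma~\ref{lem:ub_ut_tilde} in place of the exact MVR bound gives $\tilde{u}_t(\bx)\le \alpha^{-1}(36\sqrt6 B\gamma_T+\beta\sqrt{108\gamma_T})$ on $A\cap\tilde{A}$. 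This is consistent with the stated choice of $\alpha$: the exact constants $2\sqrt3(2\sqrt2,1)$ get multiplied by $\sqrt{9}$ or $9$ on the respective pieces. Hence $\eta\tilde{u}_t\le1$ there.

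The regret bound then follows exactly as before. On the complement $(A\cap\tilde{A})^c$, whose probability is at most $1/T^2$ by a union bound (this is why the budgets are $\delta/6$ and $\delta/(6T)$), I pay at most $O(T\cdot T/\lambda)\Pr(\cdot)=O(1)$ via \lemref{lem:sup-norm_utilde}. Since the new $\alpha$ differs from the old one only by constants, $2\alpha BT$ and the other terms remain $O(\sqrt{T\gamma_T\log|\mX|})$.

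For computation, the per-round claim is identical to \thmref{thm:reg_exp3_nystrom_approx}. For the preprocessing, at step $t$ the recursive RLS-Nystr\"om call on $t-1$ points costs $\tilde{O}(t\,d_{\mathrm{eff}}^2)$ by \lemref{lem:acc_comp_rls}, with effective dimension $\tilde{O}(\gamma_t)$ and dictionary size $\tilde{s}_t=\tilde{O}(\gamma_t)$. Forming the pseudo-inverse square root costs $O(\tilde{s}_t^3)$, and evaluating $\bar{\sigma}^2$ over all of $\mX$ via Eq.~\eqref{eq:nystrom_approx_mvr_expression} costs $O(|\mX|\tilde{s}_t^2)$. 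Summing over $t\le\lceil T\alpha\rceil$ gives the stated bound.

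The main obstacle I expect is the second step: turning the spectral accuracy of the sampled dictionary into a two-sided multiplicative bound between $\bar{\sigma}^2$ and the true posterior variance, uniformly over all $\bx\in\mX$ rather than only at the training points. I would first try to write both quantities as $\|\psi(\bx)\|^2$ in the norms induced by $\tilde{G}^{-1}$ and $G^{-1}$, with uniform weights on $\tilde{\bX}_{t-1}$, and then apply the same Loewner sandwich used for event $A_t$.
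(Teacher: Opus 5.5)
Your proposal is correct and follows the paper's argument essentially step for step. Your chain is exactly the proof of \lemref{lem:prop_approx_mvr}: a two-sided factor-$3$ comparison between $\bar{\sigma}^2$ and the exact posterior variance, then monotonicity and Eq.~\eqref{eq:sigma_2_cum_ub}, giving $36\lambda\gamma_T/\lceil T\alpha\rceil$. That bound then feeds \lemref{lem:ub_ut_tilde_approx_mvr} and the otherwise unchanged proof of \thmref{thm:reg_exp3_nystrom_approx}, with the same $\delta/6$ and $\delta/(6T)$ union-bound bookkeeping and the same cost accounting.

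One pointer on your main obstacle. You need $\tilde{\bQ}\bPsi(\tilde{\bQ}\bPsi)^{\top}+\lambda\bI \preceq 3(\bPsi\bPsi^{\top}+\lambda\bI)$ to get $\sigma^2\le 3\bar{\sigma}^2$. This follows by conjugating the lower half of the accuracy sandwich by $\tilde{\bQ}$, using $\tilde{\bQ}\bPsi\tilde{\bS}=\bPsi\tilde{\bS}$, and then applying the upper half. Do not route it through \lemref{lem:ort_proj_rel}: a triangle-inequality argument there only yields a constant larger than $3$, which would no longer match the stated choice of $\alpha$.
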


\paragraph{Computational cost.} Here, by assuming $|\mX| \gg T$, we study and simplify the computational cost $\tilde{O}\rbr{\sum_{t=1}^{\lceil T\alpha \rceil} (t\gamma_t^2 + \gamma_t^3 + |\mX|\gamma_t^2)}$ for the preparation of the exploration distribution.
Since $\alpha = \Theta(\sqrt{\gamma_T/T})$, we have $T\alpha = \Theta(\sqrt{T \gamma_T})$; thus, under $|\mX| \gg T$, we have\footnote{Here, we implicitly assume $\gamma_t = o(t)$. The condition $\gamma_t = o(t)$ generally holds except for some pathological example of the kernel $k$, e.g., $k(\bx, \bx') \coloneqq \1\{\bx = \bx'\}$.}
\begin{equation}
    \tilde{O}\rbr{\sum_{t=1}^{\lceil T\alpha \rceil} (t\gamma_t^2 + \gamma_t^3 + |\mX|\gamma_t^2)} \leq \tilde{O}\rbr{|\mX|\sqrt{T\gamma_T} \gamma_{\lceil \sqrt{T\gamma_T} \rceil}^2} = o(|\mX| (T\gamma_T)^{3/2}).
\end{equation}
Thus, we can confirm that the computational cost $\tilde{O}\rbr{\sum_{t=1}^{\lceil T\alpha \rceil} (t\gamma_t^2 + \gamma_t^3 + |\mX|\gamma_t^2)}$ is strictly better than the $O(|\mX|(T \gamma_T)^{3/2})$-computational cost in the original RLS-kernelized Exp3.

\paragraph{Summary of the proof.} The core lemma for the proof of \thmref{thm:reg_rke3_approx_mvr} is \lemref{lem:prop_approx_mvr}, which guarantees that the posterior variance under the approximated MVR sequence diminishes at the same rate as that under the exact MVR sequence. See \lemref{lem:prop_approx_mvr} and its proof for details.
If we once obtain \lemref{lem:prop_approx_mvr}, \thmref{thm:reg_rke3_approx_mvr} follows by applying the same proof as that of \thmref{thm:reg_exp3_nystrom_approx} with slight modifications.
Thus, we omit the proof of \thmref{thm:reg_rke3_approx_mvr}, while we summarize the required modifications below:
\begin{itemize}
\item In addition to events $(A_t)$ in Eq.~\eqref{eq:thm_rke3_proof_eps_acc}, we consider additional event $A_0$ such that the three events in \corref{cor:acc_comp_rls_standard} with $\tilde{\bS} = \tilde{\bS}_{t}$, $\bPsi = \bPsi_{t-1}^{(\mathrm{MVR})}$, and $\tilde{s} = \tilde{s}_t$ hold for all round $t \in [\lceil T\alpha\rceil]$ in the approximated MVR procedures (Line 1 in \algoref{alg:exp3_nystrom_approx_mvr}).
Then, we also define the modified event $\bar{A} \coloneqq \cap_{t \in [T]\cup \{0\}} A_t$ as the replacement of event $A$ in the original proof. Then, note that $\Pr(A_0) \geq 1 - 1/(2T^2)$ and $\Pr(\cap_{t \in [T]}) \geq 1 - 1/(2T^2)$ hold due to Lemma~\ref{lem:acc_comp_rls}, \corref{cor:acc_comp_rls_standard}, and the settings of the confidence level parameters (Lines 1 and 7 in \algoref{alg:exp3_nystrom_approx_mvr}). Thus, the union bound implies $\Pr(\bar{A}^c) \leq 1/T^2$, which matches the upper bound $\Pr(A^c)$ used in the proof of the original RLS-kernelized Exp3.
\item The proof for the upper bounds for terms (i) and (iii) in Eq.~\eqref{eq:thm_rke3_decomp} remains unchanged from the original proof except for the replacement of the constant factor arising from the difference in $\Pr(A_t^c)$. (In the original proof, we have $\Pr(A_t^c) \leq 1/T^3$, while $\Pr(A_t^c) \leq 1/(2T^3)$ under \algoref{alg:exp3_nystrom_approx_mvr}.)
\item Regarding the upper bounds for term (ii) in Eq.~\eqref{eq:thm_rke3_decomp}, we can leverage \lemref{lem:ub_ut_tilde_approx_mvr} under event $\bar{A}$ instead of using \lemref{lem:ub_ut_tilde} under event $A$. Note that event: $\cbr{\max_{\bx \in \mX} \sigma^2\rbr{\bx; \tilde{\bX}_{\lceil T\alpha \rceil}^{(\text{MVR})}, \lambda} \leq \frac{36 \lambda \gamma_{\lceil T\alpha \rceil}}{\lceil T\alpha \rceil}}$, which is required for the application of \lemref{lem:ub_ut_tilde_approx_mvr}, is true under event $A_0$.
Then, from \lemref{lem:ub_ut_tilde_approx_mvr}, for guaranteeing $\eta \tilde{u}_t(\bx) \leq 1$, we must set $\alpha$ such that $\frac{\eta}{\alpha} \rbr{36\sqrt{6} B \gamma_{T} + \beta \sqrt{108\gamma_T}} \leq 1$ holds. The setting of $\alpha$ in \thmref{thm:reg_rke3_approx_mvr} satisfies this condition; thus, we can apply the standard Exp3 analysis in \lemref{lem:exp3_standard} under event $A$. After that, we can directly follow the original proof of the upper bound for term (ii). By aggregating the upper bounds for terms (i), (ii), and (iii), we can obtain the desired regret guarantees.
\item Finally, we prove the statements regarding the computational costs. As with the original RLS-kernelized Exp3, the upper bound for the per-round computational cost (the first statement in \thmref{thm:reg_rke3_approx_mvr}) directly follows from \lemref{lem:acc_comp_rls}. Regarding the computational cost for the preparation of the exploration distribution, under event $A_0$, we can conclude the following two facts: (i) the computational cost of recursive RLS-Nystr\"om is $\tilde{O}(t\gamma_t^2)$, and (ii) the dimension of the Nystr\"om embedding $\tilde{s}_t$ is $\tilde{O}(\gamma_t)$, which implies the computation of Eq.~\eqref{eq:nystrom_approx_mvr_expression} for all $\bx \in \mX$ is $\tilde{O}(|\mX| \gamma_t^2 + \gamma_t^3)$. Thus, the computational cost for approximated MVR procedures at round $t$ is $\tilde{O}(t\gamma_t^2 + |\mX| \gamma_t^2 + \gamma_t^3)$, which implies the desired statement.
\end{itemize}

\section{Helper lemmas}
\label{app:helper_lemma}
\subsection{Basic properties of feature map, RKHS, and GP}
\paragraph{Overview.} The lemmas in this subsection are well-known, useful properties for handling the feature representation of the kernel~\citep{chowdhury2021no,valko2013finite}, while we provide the proof for completeness.
\begin{lemma}[Feature map and RKHS norm]
    \label{lem:feature_map_rkhs}
    Let $k: \mX \times \mX \rightarrow \R$ be a positive definite kernel function on a finite input domain $\mX$. Let $\psi: \mX \rightarrow \R^{|\mX|}$ be any feature map that satisfies $k(\bx, \bx') = \psi(\bx)^{\top} \psi(\bx')$ for all $\bx, \bx' \in \mX$. Fix any $f \in \mH_k$. Then, there exists a vector $\bm{\theta} \in \R^{|\mX|}$ such that $f(\cdot) = \psi(\cdot)^{\top} \bm{\theta}$. Furthermore, $\|\bm{\theta}\|_2 = \|f\|_{k}$ holds.
\end{lemma}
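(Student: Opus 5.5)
The plan is to work with the feature matrix $\bPsi = (\psi(\bx^{(1)}), \ldots, \psi(\bx^{(|\mX|)})) \in \R^{|\mX| \times |\mX|}$. The identity $k(\bx,\bx') = \psi(\bx)^{\top}\psi(\bx')$ gives $\bK(\mX,\mX) = \bPsi^{\top}\bPsi$, and since $k$ is positive definite, $\bK(\mX,\mX)$ is invertible, so $\bPsi$ is invertible as well. On a finite domain the RKHS $\mH_k$ is the span of the kernel sections $k(\bx^{(i)}, \cdot)$. Hence every $f \in \mH_k$ can be written as $f(\cdot) = \sum_i a_i k(\bx^{(i)}, \cdot)$ for some $\bm{a} \in \R^{|\mX|}$, and its norm is $\|f\|_k^2 = \bm{a}^{\top}\bK(\mX,\mX)\bm{a}$ by the reproducing property.

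Next I would set $\bm{\theta} \coloneqq \bPsi \bm{a} = \sum_i a_i \psi(\bx^{(i)})$. Then
\[
\psi(\bx)^{\top}\bm{\theta} = \sum_i a_i \psi(\bx)^{\top}\psi(\bx^{(i)}) = \sum_i a_i k(\bx^{(i)}, \bx) = f(\bx)
\]
for every $\bx \in \mX$, which proves existence. For the norm,
\[
\|\bm{\theta}\|_2^2 = \bm{a}^{\top}\bPsi^{\top}\bPsi\bm{a} = \bm{a}^{\top}\bK(\mX,\mX)\bm{a} = \|f\|_k^2.
\]

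The only subtle point is whether $\bm{\theta}$ is uniquely determined, since the statement asks for some $\bm{\theta}$ with matching norm. Uniqueness does hold: $\bPsi$ is square and invertible, so $\bPsi^{\top}\bm{\theta} = (f(\bx^{(i)}))_i$ determines $\bm{\theta}$. Any representation used later therefore has the stated norm.

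If one prefers not to rely on the finite-span characterization of $\mH_k$, the same conclusion follows from the finite-dimensional Moore--Aronszajn construction. No real obstacle is expected; the main care needed is justifying the invertibility of $\bPsi$ from strict positive definiteness of $k$.
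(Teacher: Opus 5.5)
Your proof is correct and is essentially the paper's argument: write $f=\sum_{\bx} c_{\bx}k(\bx,\cdot)$, set $\bm{\theta}=\sum_{\bx}c_{\bx}\psi(\bx)$ (your $\bPsi\bm{a}$), and check that $\|\bm{\theta}\|_2^2=\sum_{\bx,\bx'}c_{\bx}c_{\bx'}k(\bx,\bx')=\|f\|_k^2$. The invertibility and uniqueness discussion is not needed, because $\bm{\theta}=\bPsi\bm{a}$ already gives existence and the norm identity; it would also fail if ``positive definite'' only means positive semidefinite Gram matrices.
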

\begin{proof}
    Since $f \in \mH_k$ and $|\mX| < \infty$, the function $f$ is represented as $f(\cdot) = \sum_{\bx \in \mX} c_{\bx} k(\bx, \cdot)$ with some coefficients $c_{\bx} \in \R$. In addition, the RKHS norm $\|f\|_k$ is defined as $\|f\|_k = \sqrt{\sum_{\bx, \bx' \in \mX} c_{\bx} c_{\bx'} k(\bx, \bx')}$~\citep[e.g., see Chap.~2.3 in][]{kanagawa2018gaussian}.
    By using the feature map, we have $f(\cdot) = \psi(\cdot)^{\top} (\sum_{\bx \in \mX} c_{\bx} \psi(\bx))$. This implies that we can set $\bm{\theta} = \sum_{\bx \in \mX} c_{\bx} \psi(\bx)$. 
    Furthermore, we have
    \begin{align}
        \|\bm{\theta}\|_2 
        &= \sqrt{\sum_{j=1}^{|\mX|} \rbr{\sum_{\bx \in \mX} c_{\bx} \psi_j(\bx)}^2} 
        = \sqrt{\sum_{j=1}^{|\mX|} \sum_{\bx, \bx' \in \mX} c_{\bx} c_{\bx'} \psi_j(\bx) \psi_j(\bx')} \\
        &= \sqrt{\sum_{\bx, \bx' \in \mX} c_{\bx} c_{\bx'} \rbr{\sum_{j=1}^{|\mX|} \psi_j(\bx) \psi_j(\bx')}} 
        = \sqrt{\sum_{\bx, \bx' \in \mX} c_{\bx} c_{\bx'} \psi(\bx)^{\top} \psi(\bx')} \\
        &= \sqrt{\sum_{\bx, \bx' \in \mX} c_{\bx} c_{\bx'} k(\bx, \bx')} = \|f\|_k.
    \end{align}
    Thus, we complete the proof.
\end{proof}

\begin{lemma}[Basic matrix identities]
    \label{lem:rewrite_inv}
    Fix any $n, m \in \N_+$ and $\lambda > 0$. 
    Then, for any matrix $\bA \in \R^{n \times m}$, the following two identities hold:
    \begin{align}
        \label{eq:first_identity}
        (\bA \bA^{\top} + \lambda \bI_n)^{-1} \bA = \bA (\bA^{\top}\bA  + \lambda \bI_m)^{-1}, \\
        \bI_{m} - \bA^{\top} (\bA \bA^{\top} + \lambda \bI_n)^{-1} \bA = \lambda (\bA^{\top}\bA  + \lambda \bI_m)^{-1}.
    \end{align}
\end{lemma}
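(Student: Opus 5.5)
Both identities follow from a single commutation relation, after which each reduces to a one-line manipulation; no results from earlier in the paper are needed.

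\textbf{Step 1 (invertibility).} Note that $\bA \bA^{\top} + \lambda \bI_n$ and $\bA^{\top}\bA + \lambda \bI_m$ are both invertible. Each is the sum of a positive semidefinite matrix and $\lambda$ times the identity with $\lambda > 0$, so all of its eigenvalues are at least $\lambda$.

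\textbf{Step 2 (push-through identity, Eq.~\eqref{eq:first_identity}).} Start from the trivial relation
\begin{equation*}
\bA(\bA^{\top}\bA + \lambda \bI_m) = \bA\bA^{\top}\bA + \lambda\bA = (\bA\bA^{\top} + \lambda \bI_n)\bA.
\end{equation*}
Multiply on the left by $(\bA\bA^{\top} + \lambda \bI_n)^{-1}$ and on the right by $(\bA^{\top}\bA + \lambda \bI_m)^{-1}$. This gives exactly the first identity.

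\textbf{Step 3 (second identity).} Use Eq.~\eqref{eq:first_identity} to rewrite $\bA^{\top}(\bA\bA^{\top} + \lambda \bI_n)^{-1}\bA$ as $\bA^{\top}\bA(\bA^{\top}\bA + \lambda \bI_m)^{-1}$. Then write $\bA^{\top}\bA = (\bA^{\top}\bA + \lambda \bI_m) - \lambda \bI_m$. This yields
\begin{equation*}
\bA^{\top}(\bA\bA^{\top} + \lambda \bI_n)^{-1}\bA = \bI_m - \lambda(\bA^{\top}\bA + \lambda \bI_m)^{-1}.
\end{equation*}
Rearranging gives the second identity.

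There is no real obstacle here. The only points needing care are the invertibility check in Step 1 and keeping the sides of multiplication straight in Step 2, since the matrices are not square in general.
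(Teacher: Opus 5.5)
Your proof is correct and follows essentially the same route as the paper: the first identity comes from multiplying $\bA(\bA^{\top}\bA + \lambda \bI_m) = (\bA\bA^{\top} + \lambda \bI_n)\bA$ on both sides by the two inverses, and the second from substituting the first identity and splitting $\bA^{\top}\bA = (\bA^{\top}\bA + \lambda \bI_m) - \lambda \bI_m$. The differences are cosmetic. You check invertibility explicitly, and you apply the first identity directly to the factor $(\bA\bA^{\top}+\lambda\bI_n)^{-1}\bA$, whereas the paper implicitly uses its transposed form to place the inverse on the left.
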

\begin{proof}
    The first identity is implied by multiplying 
    $(\bA^{\top}\bA  + \lambda \bI_m)^{-1}$ and $(\bA \bA^{\top} + \lambda \bI_n)^{-1}$ from both sides of the equation below:
    \begin{equation}
        \bA (\bA^{\top}\bA  + \lambda \bI_m) = \bA \bA^{\top}\bA  + \lambda \bA = (\bA \bA^{\top} + \lambda \bI_n) \bA.
    \end{equation}
    Regarding the second identity, we have
    \begin{align}
        \bI_{m} - \bA^{\top} (\bA \bA^{\top} + \lambda \bI_n)^{-1} \bA
        &= (\bA^{\top} \bA + \lambda \bI_{m})^{-1} (\bA^{\top} \bA + \lambda \bI_{m}) - (\bA^{\top} \bA + \lambda \bI_{m})^{-1} \bA^{\top} \bA \\
        &= \lambda (\bA^{\top} \bA + \lambda \bI_{m})^{-1},
    \end{align}
    where the first equality follows from Eq.~\eqref{eq:first_identity}.
\end{proof}

\begin{lemma}[Feature map and posterior variance of GP]
    \label{lem:feature_map_gp}
    Let $k: \mX \times \mX \rightarrow \R$ be a positive definite kernel function on a finite input domain $\mX$. Let $\psi: \mX \rightarrow \R^{|\mX|}$ be any feature map that satisfies $k(\bx, \bx') = \psi(\bx)^{\top} \psi(\bx')$ for all $\bx, \bx' \in \mX$. Furthermore, let $\bX \coloneqq (\bx^{(1)}, \ldots, \bx^{(t)})$ be any training inputs of the GP, which satisfies $\bx^{(i)} \in \mX$ for any $i \in [t]$. Assume that the variance parameter $\lambda$ of the GP is strictly positive.
    Then, for any $\bx, \bx' \in \mX$, the posterior variance in Eq.~\eqref{eq:posterior_var} is represented as
    \begin{align}
    \sigma^2(\bx; \bX, \lambda) &= \lambda \psi(\bx)^{\top} \left(\sum_{i=1}^t \psi(\bx^{(i)})\psi(\bx^{(i)})^{\top} + \lambda \bI_{|\mX|}\right)^{-1} \psi(\bx) \\
    &= \lambda \psi(\bx)^{\top} \left(\bPsi \bPsi^{\top} + \lambda \bI_{|\mX|}\right)^{-1} \psi(\bx),
    \end{align}
    where $\bPsi = (\psi(\bx^{(1)}), \ldots, \psi(\bx^{(t)})) \in \R^{|\mX| \times t}$.
\end{lemma}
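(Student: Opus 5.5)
The plan is to verify the identity directly from the definition of the posterior variance in Eq.~\eqref{eq:posterior_var}, writing every kernel quantity through the feature matrix. With $\bPsi = (\psi(\bx^{(1)}), \ldots, \psi(\bx^{(t)})) \in \R^{|\mX| \times t}$ we have $\bK(\bX, \bX) = \bPsi^{\top}\bPsi$, $\bk(\bx, \bX) = \bPsi^{\top}\psi(\bx)$ and $k(\bx, \bx) = \psi(\bx)^{\top}\psi(\bx)$. Substituting these, the posterior variance becomes
\begin{equation*}
    \sigma^2(\bx; \bX, \lambda) = \psi(\bx)^{\top}\rbr{\bI_{|\mX|} - \bPsi(\bPsi^{\top}\bPsi + \lambda \bI_t)^{-1}\bPsi^{\top}}\psi(\bx).
\end{equation*}

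The key step is to identify the middle matrix with $\lambda(\bPsi\bPsi^{\top} + \lambda \bI_{|\mX|})^{-1}$. This is the second identity of \lemref{lem:rewrite_inv} applied with $\bA = \bPsi^{\top} \in \R^{t \times |\mX|}$ (so that $n = t$ and $m = |\mX|$):
\begin{equation*}
    \bI_{|\mX|} - \bPsi (\bPsi^{\top}\bPsi + \lambda \bI_t)^{-1}\bPsi^{\top} = \lambda(\bPsi\bPsi^{\top} + \lambda \bI_{|\mX|})^{-1}.
\end{equation*}
Plugging this in gives the second displayed expression in the lemma. The first expression then follows from the elementary identity $\bPsi\bPsi^{\top} = \sum_{i=1}^t \psi(\bx^{(i)})\psi(\bx^{(i)})^{\top}$.

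No step here is a real obstacle. The only points to check are the orientation of $\bA$ when invoking \lemref{lem:rewrite_inv} and the invertibility of both matrices involved. Invertibility holds because $\lambda > 0$ and both $\bPsi^{\top}\bPsi$ and $\bPsi\bPsi^{\top}$ are positive semi-definite. Finally, the argument never uses the particular choice of feature map, only the relation $k(\bx, \bx') = \psi(\bx)^{\top}\psi(\bx')$, so the statement holds for any feature map, as claimed.
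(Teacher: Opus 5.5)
Your proposal is correct and essentially matches the paper's proof: both substitute $\bK(\bX,\bX)=\bPsi^{\top}\bPsi$ and $\bk(\bx,\bX)=\bPsi^{\top}\psi(\bx)$, then invoke \lemref{lem:rewrite_inv}. The only difference is cosmetic: you apply its second identity with $\bA=\bPsi^{\top}$ in one step, whereas the paper applies the first (push-through) identity and then factors out $(\bPsi\bPsi^{\top}+\lambda\bI_{|\mX|})^{-1}$ by hand.
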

\begin{proof}
    The desired statement can be obtained by the following basic matrix calculus:
    \begin{align}
        \sigma^2(\bx; \bX, \lambda)
        &= k(\bx, \bx) - \bk(\bx, \bX)^{\top} (\bK(\bX, \bX) + \lambda \bI_t)^{-1} \bk(\bx, \bX) \\
        &= \psi(\bx)^{\top} \psi(\bx) - 
        (\bPsi^{\top} \psi(\bx))^{\top} (\bPsi^{\top} \bPsi + \lambda \bI_t)^{-1} \bPsi^{\top} \psi(\bx) \\
        &= \psi(\bx)^{\top} \psi(\bx) - 
        \psi(\bx)^{\top} \bPsi (\bPsi^{\top} \bPsi + \lambda \bI_t)^{-1} \bPsi^{\top} \psi(\bx) \\
        \label{eq:matrix_var_convert}
        &= \psi(\bx)^{\top} \psi(\bx) - 
        \psi(\bx)^{\top} (\bPsi \bPsi^{\top} + \lambda \bI_{|\mX|})^{-1} \bPsi \bPsi^{\top} \psi(\bx) \\
        &= \psi(\bx)^{\top} \sbr{\bI_{|\mX|} - (\bPsi \bPsi^{\top} + \lambda \bI_{|\mX|})^{-1} \bPsi \bPsi^{\top} } \psi(\bx) \\
        &= \psi(\bx)^{\top} (\bPsi \bPsi^{\top} + \lambda \bI_{|\mX|})^{-1} \sbr{(\bPsi \bPsi^{\top} + \lambda \bI_{|\mX|}) - \bPsi \bPsi^{\top}}  \psi(\bx) \\
        &= \lambda \psi(\bx)^{\top} (\bPsi \bPsi^{\top} + \lambda \bI_{|\mX|})^{-1} \psi(\bx).
    \end{align}
    In the above expressions, Eq.~\eqref{eq:matrix_var_convert} follows from \lemref{lem:rewrite_inv}.
\end{proof}

\subsection{Helper lemmas for Kernelized Exp3}
\paragraph{Overview.} The lemmas in this subsection are leveraged in the proof of kernelized Exp3 in \appref{app:proof_reg_exp3}. \lemref{lem:exp3_standard} is the adaptation of the standard Exp3 algorithm. For completeness, we provide the full proof.
\lemref{lem:ub_ut} is the upper bound for the optimistic estimator, which is used to guarantee the required condition to apply \lemref{lem:exp3_standard}. 
In the proof of \lemref{lem:ub_ut}, we leverage the known property of posterior variance regarding the MVR sequence, which can be found in the existing stochastic KB literature~\citep{cai2021lenient,li2022gaussian,vakili2021optimal}.

\begin{lemma}[Standard Exp3 analysis, adapted from e.g., the proof of Theorem 11.1 in \citep{lattimore2020bandit}]
\label{lem:exp3_standard}
    Fix any $T \in \N_+$, $\eta > 0$, and finite input domain $\mX$. Furthermore, for any $t \in [T]$, let $u_t: \mX \rightarrow \R$ be a function that satisfies $\eta u_t(\bx) \leq 1$ for any $\bx \in \mX$.
    Then, for any $\bz \in \mX$, the following inequality holds:
    \begin{equation}
        \sum_{t=1}^T u_t(\bz) - \sum_{t=1}^T \sum_{\bx \in \mX} \tilde{P}_t(\bx) u_t(\bx) 
        \leq \frac{\log |\mX|}{\eta} + \eta \sum_{t=1}^T \sum_{\bx \in \mX} \tilde{P}_t(\bx) u_t^2(\bx),
\end{equation}
where $\tilde{P}_{1}(\bx) = 1/|\mX|$ and
\begin{equation}
    \tilde{P}_{t+1}(\bx) = \frac{\exp\rbr{\eta \sum_{i=1}^t u_i(\bx)}}{\sum_{\bx' \in \mX} \exp\rbr{\eta \sum_{i=1}^t u_i(\bx')}}
\end{equation}
for any $t \in [T-1]$.
\end{lemma}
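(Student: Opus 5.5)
The plan is to use the standard potential-function argument for exponential weights, applied to the gains $u_t$. Define $W_t \coloneqq \sum_{\bx \in \mX} \exp\rbr{\eta \sum_{i=1}^{t} u_i(\bx)}$ with $W_0 = |\mX|$. Then $\tilde{P}_{t}(\bx) = \exp(\eta \sum_{i<t} u_i(\bx))/W_{t-1}$ for every $t \in [T]$. This holds also for $t=1$, since $\tilde{P}_1$ is uniform.

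For the lower bound on the potential, fix $\bz \in \mX$. Keeping only the $\bz$-term gives
\[
\log W_T \geq \eta \sum_{t=1}^T u_t(\bz),
\qquad\text{so}\qquad
\log(W_T/W_0) \geq \eta \sum_{t=1}^T u_t(\bz) - \log|\mX|.
\]

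For the upper bound, telescope $\log(W_T/W_0) = \sum_{t=1}^T \log(W_t/W_{t-1})$ and observe that
\[
\frac{W_t}{W_{t-1}} = \sum_{\bx \in \mX} \tilde{P}_t(\bx) \exp(\eta u_t(\bx)).
\]
The key step is the elementary inequality $e^{a} \leq 1 + a + a^2$, valid for all $a \leq 1$. We apply it with $a = \eta u_t(\bx)$, which is allowed by the hypothesis $\eta u_t(\bx) \leq 1$. Note that no lower bound on $u_t$ is needed: the inequality holds for all negative $a$, since $1+a+a^2 \geq e^{a}$ there too. This gives
\[
\frac{W_t}{W_{t-1}} \leq 1 + \eta \sum_{\bx} \tilde{P}_t(\bx) u_t(\bx) + \eta^2 \sum_{\bx} \tilde{P}_t(\bx) u_t^2(\bx).
\]
Then $\log(1+y) \leq y$ yields
\[
\log(W_t/W_{t-1}) \leq \eta \sum_{\bx} \tilde{P}_t(\bx) u_t(\bx) + \eta^2 \sum_{\bx} \tilde{P}_t(\bx) u_t^2(\bx).
\]
The argument of the logarithm is positive as a sum of exponentials, so this step is legitimate.

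Summing over $t$, comparing with the lower bound, and dividing by $\eta$ gives exactly the claimed inequality.

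The only delicate point is verifying $e^a \leq 1 + a + a^2$ on $(-\infty, 1]$. Set $h(a) = 1 + a + a^2 - e^a$. For $a \leq 0$, we have $e^a \leq 1 + a + a^2/2$ by the alternating Taylor remainder, so $h(a) \geq 0$. For $a \in [0,1]$, write $e^a = 1 + a + a^2 \sum_{k\geq 2} a^{k-2}/k!$; this is at most $1 + a + a^2(e-2)$, and $e - 2 < 1$.
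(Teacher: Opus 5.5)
Your proposal is correct and follows essentially the same route as the paper's proof: the same potential $w_j=\sum_{\bx}\exp(\eta\sum_{t\le j}u_t(\bx))$ with $w_0=|\mX|$, the same lower bound from keeping only the $\bz$-term, and the same ratio bound via $e^a\le 1+a+a^2$ for $a\le 1$ followed by $1+y\le e^y$ (equivalently $\log(1+y)\le y$). Your added verification of the elementary inequality on $(-\infty,1]$ is a welcome detail that the paper simply takes as known.
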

\begin{proof}
    Let us define $w_j$ as $w_j = \sum_{\bx \in \mX} \exp\rbr{\eta \sum_{t=1}^j u_t(\bx)}$ for any $j \in [T]$. Furthermore, we define $w_0 = |\mX|$. Then, we have
    \begin{equation}
        \label{eq:exp_ratio_eq}
        \exp\rbr{\eta \sum_{t=1}^T u_t(\bz)}
        \leq \sum_{\bx \in \mX} \exp\rbr{\eta \sum_{t=1}^T u_t(\bx)} = w_T = w_0 \prod_{j=1}^T \frac{w_j}{w_{j-1}}.
    \end{equation}
    Regarding the term $\frac{w_j}{w_{j-1}}$ in the product, by noting the definition of $\tilde{P}_t(\cdot)$, we have
    \begin{align}
        \frac{w_j}{w_{j-1}} 
        &= \sum_{\bx \in \mX} \frac{\exp\rbr{\eta \sum_{t=1}^j u_t(\bx)}}{w_{j-1}} \\
        &= \sum_{\bx \in \mX} \frac{\exp\rbr{\eta \sum_{t=1}^{j-1} u_t(\bx)}}{w_{j-1}} \exp\rbr{\eta u_j(\bx)} \\
        \label{eq:ratio_w_ub}
        &= \sum_{\bx \in \mX} \tilde{P}_j(\bx) \exp\rbr{\eta u_j(\bx)}.
    \end{align}
    Note that $\eta u_j(\bx) \leq 1$ holds. Thus, by leveraging the elementary inequalities $\forall a \leq 1, \exp(a) \leq 1 + a + a^2$ and $\forall a \in \R,~1 + a \leq \exp(a)$, we have
    \begin{align}
        \sum_{\bx \in \mX} \tilde{P}_j(\bx) \exp\rbr{\eta u_j(\bx)}
        &\leq \sum_{\bx \in \mX} \tilde{P}_j(\bx) \rbr{1 + \eta u_j(\bx) + \eta^2 u_j^2(\bx)} \\
        &= 1 + \eta \sum_{\bx \in \mX} \tilde{P}_j(\bx) u_j(\bx) + \eta^2 \sum_{\bx \in \mX} \tilde{P}_j(\bx) u_j^2(\bx) \\
        \label{eq:exp_P_ub}
        &\leq \exp\rbr{\eta \sum_{\bx \in \mX} \tilde{P}_j(\bx) u_j(\bx) + \eta^2 \sum_{\bx \in \mX} \tilde{P}_j(\bx) u_j^2(\bx)}.
    \end{align}
    By aggregating Eqs.~\eqref{eq:exp_ratio_eq}, \eqref{eq:ratio_w_ub}, and \eqref{eq:exp_P_ub}, we have
    \begin{align}
        \exp\rbr{\eta \sum_{t=1}^T u_t(\bz)} \leq |\mX| \exp\rbr{\eta \sum_{j=1}^T \sum_{\bx \in \mX} \tilde{P}_j(\bx) u_j(\bx) + \eta^2 \sum_{j=1}^T \sum_{\bx \in \mX} \tilde{P}_j(\bx) u_j^2(\bx)}.
    \end{align}
    By taking the logarithm and dividing by the factor $\eta > 0$ on both sides of the above inequality, we obtain the desired result.
\end{proof}

\begin{lemma}[Uniform upper bound of the optimistic estimator $u_t$]
\label{lem:ub_ut}
    Fix any $T \in \N_+$, $\alpha \in (0, 1)$, $\eta > 0$, $\beta > 0$, and $\lambda > 0$. Fix any finite input domain $\mX$.
    Suppose that \asmpref{asmp:regularity} holds. Then, when running \algoref{alg:exp3}, the following upper bound of the optimistic estimator $u_t$ holds for any $t \in [T]$ and $\bx \in \mX$:
    \begin{equation}
        u_t(\bx) \leq \frac{2}{\alpha} (2 B \gamma_{T} + \beta \sqrt{\gamma_T}).
    \end{equation}
\end{lemma}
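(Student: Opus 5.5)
We bound the two parts of $u_t(\bx) = \hat{f}_t(\bx) + \beta \|\psi(\bx)\|_{G_t(\lambda)^{-1}}$ separately. Both reduce to a uniform bound on the quadratic form $\|\psi(\bz)\|_{G_t(\lambda)^{-1}}^2$ over $\bz \in \mX$. Write $M \coloneqq \lceil T\alpha \rceil$.

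\emph{Estimator term.} By $|f_t(\bx_t)| \leq \|f_t\|_k \leq B$ and Cauchy--Schwarz in the $G_t(\lambda)^{-1}$ inner product,
\[
|\hat{f}_t(\bx)| \leq B \|\psi(\bx)\|_{G_t(\lambda)^{-1}} \|\psi(\bx_t)\|_{G_t(\lambda)^{-1}} \leq B \max_{\bz \in \mX} \|\psi(\bz)\|_{G_t(\lambda)^{-1}}^2 .
\]
Hence it suffices to show $\max_{\bz} \|\psi(\bz)\|_{G_t(\lambda)^{-1}}^2 \leq 4\gamma_T/\alpha$. Then $|\hat{f}_t(\bx)| \leq 4B\gamma_T/\alpha$ and $\beta\|\psi(\bx)\|_{G_t(\lambda)^{-1}} \leq 2\beta\sqrt{\gamma_T/\alpha}$. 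Since $\alpha<1$ gives $\sqrt{1/\alpha} \leq 1/\alpha$, the sum is at most $\frac{2}{\alpha}(2B\gamma_T + \beta\sqrt{\gamma_T})$, as claimed.

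\emph{Uniform quadratic-form bound.} Since $P_t \geq \alpha \pi$ pointwise, $G_t(\lambda) \succeq \alpha \sum_{\bx} \pi(\bx)\psi(\bx)\psi(\bx)^{\top} + \frac{\lambda}{T}\bI$. The definition of $\pi$ as the empirical measure of the MVR sequence gives
\[
\alpha\sum_{\bx}\pi(\bx)\psi(\bx)\psi(\bx)^{\top} = \frac{\alpha}{M}\sum_{s=1}^{M}\psi(\bx_s^{(\mathrm{MVR})})\psi(\bx_s^{(\mathrm{MVR})})^{\top}.
\]
Using $\alpha/M \geq 1/(2T)$ (since $M \leq T\alpha + 1 \leq 2T\alpha$ when $T\alpha \geq 1$; the case $T\alpha<1$ needs a separate trivial check or a slightly weaker constant), we get
\[
G_t(\lambda) \succeq \frac{1}{2T}\Bigl(\bPsi_M\bPsi_M^{\top} + 2\lambda \bI\Bigr) \succeq \frac{1}{2T}\bigl(\bPsi_M\bPsi_M^{\top} + \lambda\bI\bigr),
\]
where $\bPsi_M$ collects the MVR features. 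Inverting and applying \lemref{lem:feature_map_gp} yields
\[
\|\psi(\bz)\|_{G_t(\lambda)^{-1}}^2 \leq \frac{2T}{\lambda}\,\sigma^2\bigl(\bz;\bX_M^{(\mathrm{MVR})},\lambda\bigr).
\]

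\emph{MVR property (main step).} The standard MVR argument bounds the maximal posterior variance. For every $s \leq M$, monotonicity of the posterior variance in the data and the greedy choice give
\[
\sigma^2(\bz;\bX_M,\lambda) \leq \sigma^2(\bz;\bX_{s-1},\lambda) \leq \sigma^2(\bx_s;\bX_{s-1},\lambda).
\]
Averaging over $s$ and applying Eq.~\eqref{eq:sigma_2_cum_ub}, together with $\log(1+\lambda^{-1}) \geq 1/(2\lambda)$ for $\lambda \geq 1$, gives
\[
\max_{\bz}\sigma^2(\bz;\bX_M,\lambda) \leq \frac{4\lambda\gamma_M}{M} \leq \frac{4\lambda\gamma_T}{T\alpha}.
\]
Combining with the previous display yields $\|\psi(\bz)\|^2_{G_t(\lambda)^{-1}} \leq 8\gamma_T/\alpha$. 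This is a factor $2$ worse than needed, so the main obstacle is bookkeeping constants. To recover the factor, I would avoid the loss from the rounding $\alpha/M \geq 1/(2T)$. Instead of bounding the regularizer by $\lambda/T$ after that rounding, I would keep the exact weight $\alpha/M$: the $\frac{\lambda}{T}$ regularizer is at least $\frac{\alpha}{M}\cdot\frac{\lambda}{T\alpha}\cdot M$-scaled. Concretely, I would write $G_t \succeq \frac{\alpha}{M}(\bPsi_M\bPsi_M^{\top} + \frac{\lambda M}{T\alpha}\bI)$ and use $\frac{M}{T\alpha} \geq 1$. This gives
\[
\|\psi(\bz)\|^2 \leq \frac{M}{\alpha\lambda}\,\sigma^2 \leq \frac{M}{\alpha\lambda}\cdot\frac{4\lambda\gamma_T}{M} = \frac{4\gamma_T}{\alpha},
\]
which is exactly the needed constant.
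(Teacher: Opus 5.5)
Your final argument is correct and is the paper's proof. Cauchy--Schwarz reduces both terms to $\max_{\bz}\|\psi(\bz)\|^2_{G_t(\lambda)^{-1}}$. Then $G_t(\lambda) \succeq \frac{\alpha}{M}\bigl(\bPsi_M\bPsi_M^{\top} + \lambda \bI\bigr)$ follows from $P_t \geq \alpha\pi$ and $M=\lceil T\alpha\rceil \geq T\alpha$. Finally, the MVR bound $\max_{\bz}\sigma^2(\bz;\bX_M,\lambda)\leq 4\lambda\gamma_M/M$ uses $\lambda\geq 1$, as \algoref{alg:exp3} requires, together with $\gamma_M\leq\gamma_T$. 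You can drop your preliminary route through $\alpha/M \geq 1/(2T)$ and its case $T\alpha<1$: keeping the exact weight $\alpha/M$ sidesteps both.
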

\begin{proof}
    From the definition of $\hat{f}_t$, we have
    \begin{align}
        \hat{f}_t(\bx) 
        &= \psi(\bx)^{\top} G_t(\lambda)^{-1} \psi(\bx_t) f_t(\bx_t) \\
        &\leq \|\psi(\bx)\|_{G_t(\lambda)^{-1}} \|G_t(\lambda)^{-1/2} \psi(\bx_t) f_t(\bx_t)\|_2 \\
        &\leq B \|\psi(\bx)\|_{G_t(\lambda)^{-1}} \|\psi(\bx_t)\|_{G_t(\lambda)^{-1}} \\
        &\leq B \max_{\bx' \in \mX} \|\psi(\bx')\|_{G_t(\lambda)^{-1}}^2.
    \end{align}
    Thus, the optimistic estimator $u_t$ satisfies the following upper bound:
    \begin{align}
        u_t(\bx) 
        &= \hat{f}_t(\bx) + \beta \|\psi(\bx)\|_{G_t(\lambda)^{-1}} \\
        \label{eq:psi_max_ub}
        &\leq B \max_{\bx' \in \mX} \|\psi(\bx')\|_{G_t(\lambda)^{-1}}^2 + \beta \max_{\bx' \in \mX} \|\psi(\bx')\|_{G_t(\lambda)^{-1}}.
    \end{align}
    Furthermore, we obtain the upper bound of $\|\psi(\bx')\|_{G_t(\lambda)^{-1}}$ as follows:
    \begin{align}
        \label{eq:psi_ub_first}
        \|\psi(\bx')\|_{G_t(\lambda)^{-1}}^2
        &= \psi(\bx')^{\top} \rbr{\sum_{\bx \in \mX} P_t(\bx) \psi(\bx) \psi(\bx)^{\top} + \frac{\lambda}{T} \bI_{|\mX|}}^{-1} \psi(\bx') \\
        \label{eq:pi_ub_psd}
        &\leq \psi(\bx')^{\top} \rbr{ \alpha \sum_{\bx \in \mX} \pi(\bx) \psi(\bx) \psi(\bx)^{\top} + \frac{\lambda}{T} \bI_{|\mX|}}^{-1} \psi(\bx') \\
        \label{eq:pi_mvr_def}
        &= \frac{1}{\alpha}\psi(\bx')^{\top} \rbr{ \frac{1}{\lceil T\alpha \rceil} \sum_{j = 1}^{\lceil T\alpha \rceil} \psi(\bx_j^{(\text{MVR})}) \psi(\bx_j^{(\text{MVR})})^{\top} + \frac{\lambda}{T\alpha} \bI_{|\mX|}}^{-1} \psi(\bx') \\
        \label{eq:lambda_ceil_ub}
        &\leq \frac{1}{\alpha}\psi(\bx')^{\top} \rbr{ \frac{1}{\lceil T\alpha \rceil} \sum_{j = 1}^{\lceil T\alpha \rceil} \psi(\bx_j^{(\text{MVR})}) \psi(\bx_j^{(\text{MVR})})^{\top} + \frac{\lambda}{\lceil T\alpha \rceil} \bI_{|\mX|}}^{-1} \psi(\bx') \\
        &= \frac{\lceil T\alpha \rceil}{\alpha} \psi(\bx')^{\top} \rbr{\sum_{j = 1}^{\lceil T\alpha \rceil} \psi(\bx_j^{(\text{MVR})}) \psi(\bx_j^{(\text{MVR})})^{\top} + \lambda \bI_{|\mX|}}^{-1} \psi(\bx') \\
        \label{eq:psi_mvr_sigma}
        &= \frac{\lceil T\alpha \rceil}{\alpha \lambda} \sigma^2\rbr{\bx'; \bX_{\lceil T\alpha \rceil}^{(\text{MVR})}, \lambda} \\
        \label{eq:psi_mvr_sigma_ub}
        &\leq \frac{4 \gamma_{T}}{\alpha},
    \end{align}
    where:
    \begin{itemize}
        \item Eq.~\eqref{eq:pi_ub_psd} follows from $\sum_{\bx \in \mX} \alpha \pi(\bx) \psi(\bx) \psi(\bx)^{\top} \preceq \sum_{\bx \in \mX} P_t(\bx) \psi(\bx) \psi(\bx)^{\top}$, which is implied by the definition $P_t(\bx) = \alpha \pi(\bx) + (1 - \alpha) \tilde{P}_t(\bx)$.
        \item Eqs.~\eqref{eq:lambda_ceil_ub} and \eqref{eq:psi_mvr_sigma} follow from $\frac{\lambda}{\lceil T\alpha \rceil} \bI_{|\mX|} \preceq \frac{\lambda}{T\alpha} \bI_{|\mX|}$ and the feature representation of the posterior variance in Eq.~\eqref{eq:feature_sigma2_rep}, respectively.
        \item Eq.~\eqref{eq:psi_mvr_sigma_ub} follows from $\sigma^2\rbr{\bx'; \bX_{\lceil T\alpha \rceil}^{(\text{MVR})}, \lambda} \leq 4 \lambda \gamma_{\lceil T\alpha \rceil} / \lceil T\alpha \rceil$ and $\gamma_{\lceil T\alpha \rceil} \leq \gamma_{T}$. Indeed, from the definition of $\bX_{\lceil T\alpha \rceil}^{(\text{MVR})}$ and monotonicity with respect to the training inputs of the posterior variance, we have
        \begin{equation}
            \sigma^2\rbr{\bx'; \bX_{\lceil T\alpha \rceil}^{(\text{MVR})}, \lambda} 
            \leq \frac{1}{\lceil T\alpha \rceil} \sum_{t=1}^{\lceil T\alpha \rceil}\sigma^2\rbr{\bx_t^{(\text{MVR})}; \bX_{t-1}^{(\text{MVR})}, \lambda} \leq \frac{2 \gamma_{\lceil T\alpha \rceil}}{\lceil T\alpha \rceil \log (1 + \lambda^{-1})}.
        \end{equation}
        See \citep{cai2021lenient,li2022gaussian,vakili2021optimal} for details.
        Since $\forall a \in [0, 1], \log(1 + a) \geq a/2$ and $\lambda \geq 1$, 
        we can confirm $\sigma^2\rbr{\bx'; \bX_{\lceil T\alpha \rceil}^{(\text{MVR})}, \lambda} \leq 4 \lambda \gamma_{\lceil T\alpha \rceil} / \lceil T\alpha \rceil$.
    \end{itemize}
    Hence, by combining Eq.~\eqref{eq:psi_mvr_sigma_ub} with Eq.~\eqref{eq:psi_max_ub}, we obtain
    \begin{equation}
        \label{eq:ub_ut_last}
        u_t(\bx) \leq \frac{4 B \gamma_{T}}{\alpha} + \beta \sqrt{\frac{4 \gamma_{T}}{\alpha}}
        \leq \frac{2}{\alpha} (2 B \gamma_{T} + \beta \sqrt{\gamma_T}),
    \end{equation}
    which is the desired inequality. The last inequality in Eq.~\eqref{eq:ub_ut_last} follows from $1/\sqrt{\alpha} \leq 1/\alpha$ due to $\alpha \in (0, 1)$.
\end{proof}

\subsection{Helper lemmas for RLS-Kernelized Exp3}
\paragraph{Overview.} The lemmas in this subsection are leveraged in the proof of RLS-kernelized Exp3 in \appref{app:proof_random_exp3}. \lemref{lem:identity_nystrom} provides the identities of the approximated estimators and alternative representations based on Nyström embedding. We would like to note that, in a multi-task setting under an infinite-dimensional feature map, Lemma 8 in \cite{chowdhury2021no} already provides a similar result to that in \lemref{lem:identity_nystrom}. 
\lemref{lem:acc_comp_rls} provides the theoretical property of the computational cost and accuracy of the recursive RLS-Nystr\"om subroutine. The proof of \lemref{lem:acc_comp_rls} is obtained by combining the existing effective dimension-based result by \citep{musco2017recursive} with \lemref{lem:est_var}. 
\lemref{lem:ub_ut_tilde} provides the upper bound of approximated optimistic estimator $\tilde{u}_t$, which serves as the counterpart of \lemref{lem:ub_ut} for the proof of the original kernelized Exp3 algorithm. 
Lemmas~\ref{lem:Ppsi_Qpsi} and \ref{lem:ort_proj_rel} provide the spectral results. Roughly speaking, under the sufficient accuracy of the sampling matrix $\bS_t$, these results are useful for connecting the approximated quantities in RLS-kernelized Exp3 with non-approximated quantities in the proof. The proofs of Lemmas~\ref{lem:Ppsi_Qpsi} and \ref{lem:ort_proj_rel} are almost directly obtained by adapting the proof of \citep{calandriello2018statistical} and \citep{calandriello2019gaussian}, which study the kernel approximation for k-means clustering and the computationally efficient KB algorithm, respectively.
Finally, \lemref{lem:sup-norm_utilde} is used to obtain the upper bound on the regret arising from the unfavorable rare event, where the accuracy of recursive RLS-Nystr\"om is not sufficient. The proof of \lemref{lem:sup-norm_utilde} is obtained with elementary matrix calculations.

\begin{lemma}[Identities in Eqs.~\eqref{eq:tilde_f_nystrom} and \eqref{eq:tilde_sigma_nystrom}]
\label{lem:identity_nystrom}
    Fix any finite input domain $\mX \coloneqq \{\bx^{(1)}, \ldots, \bx^{(|\mX|)}\}$, any probability mass function $P: \mX \rightarrow [0, 1]$, any positive definite kernel $k: \mX \times \mX \rightarrow \R$, any feature map $\psi: \mX \rightarrow \R^{|\mX|}$ of $k$ such that $k(\bx, \bx') = \psi(\bx)^{\top} \psi(\bx')$ holds for any $\bx, \bx' \in \mX$. Fix any $T \in \N_+$, $\lambda > 0$, and $s \in [\mX]$.
    Furthermore, define a sampling matrix $\bS \in \R^{|\mX| \times s}$, whose only single element in each column is strictly positive, and other elements are $0$.
    In addition, define $\tilde{G}(\lambda)$ as 
    $\tilde{G}(\lambda) = (\bQ \bar{\bPsi}) (\bQ \bar{\bPsi})^{\top} + \frac{\lambda}{T} \bI_{|\mX|}$, where $\bar{\bPsi} = (\bar{\psi}(\bx^{(1)}), \ldots, \bar{\psi}(\bx^{(|\mX|)})) \in \R^{|\mX| \times |\mX|}$, $\bar{\psi}(\bx) = \sqrt{P(\bx)}\psi(\bx)$, and 
    $\bQ = \bar{\bPsi} \bS [(\bar{\bPsi} \bS)^{\top} (\bar{\bPsi} \bS)]^{\dagger} (\bar{\bPsi} \bS)^{\top}$. Then, the following identities hold:
    \begin{align}
        \label{eq:tilde_f_identity_proof}
        \psi(\bx) \tilde{G}(\lambda)^{-1} \bQ \psi(\bz) &= \frac{1}{\sqrt{P(\bx) P(\bz)}} \phi(\bx)^{\top} \rbr{\sum_{\bx' \in \mX} \phi(\bx') \phi(\bx')^{\top} + \frac{\lambda}{T} \bI_{s}}^{-1} \phi(\bz), \\
        \label{eq:tilde_sigma_identity_proof}
        \begin{split}
        \|\psi(\bx)\|_{\tilde{G}(\lambda)^{-1}}^2 &= 
        \frac{T}{\lambda P(\bx)} \rbr{\tilde{k}(\bx, \bx) - \phi(\bx)^{\top} \phi(\bx)} \\ 
        ~~~~&+ \frac{1}{P(\bx)}\phi(\bx)^{\top} \rbr{\sum_{\bx' \in \mX} \phi(\bx') \phi(\bx')^{\top} + \frac{\lambda}{T} \bI_{s}}^{-1} \phi(\bx),    
        \end{split}
    \end{align}
    where $\phi(\bx) = [(\bS^{\top} \tilde{\bK} \bS)^{1/2}]^{\dagger} \bS^{\top} \tilde{\bk}(\bx) \in \R^{s}$. Here, $\tilde{k}(\bx, \bx')$, $\tilde{\bk}(\bx) \in \R^{|\mX|}$, and $\tilde{\bK} \in \R^{|\mX| \times |\mX|}$ are defined as $\tilde{k}(\bx, \bx') = \sqrt{P(\bx) P(\bx')}k(\bx, \bx')$, $\tilde{\bk}(\bx) = [\tilde{k}(\bx, \bx^{(i)})]_{i \in [|\mX|]}$, and $\tilde{\bK} = [\tilde{k}(\bx^{(i)}, \bx^{(j)})]_{i,j \in [|\mX|]}$, respectively.
\end{lemma}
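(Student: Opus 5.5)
The plan is to rewrite everything in the $s$-dimensional coordinates supplied by the Nystr\"om embedding, so that $\bQ$ and $\tilde{G}(\lambda)$ become explicit low-rank objects, and then read off both identities. We assume $P(\bx) > 0$ for all $\bx$, which always holds in our algorithms because $\tilde{P}_t$ has full support. Under this assumption $\psi(\bx) = \bar{\psi}(\bx)/\sqrt{P(\bx)}$, so it suffices to prove the identities for $\bar{\psi}$ without the $1/\sqrt{P}$ factors. We read the left-hand side of Eq.~\eqref{eq:tilde_f_identity_proof} with $\psi(\bx)^{\top}$.

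\textbf{Step 1 (partial isometry).} Set $\bA \coloneqq \bar{\bPsi}\bS \in \R^{|\mX| \times s}$. Since $\bar{\bPsi}^{\top}\bar{\bPsi} = \tilde{\bK}$, we have $\bA^{\top}\bA = \bS^{\top}\tilde{\bK}\bS$ and $\bA^{\top}\bar{\psi}(\bx) = \bS^{\top}\tilde{\bk}(\bx)$. Define $\bm{U} \coloneqq \bA[(\bA^{\top}\bA)^{1/2}]^{\dagger}$ and let $\bm{R}$ be the orthogonal projection onto the range of $\bA^{\top}\bA$ in $\R^s$. The eigendecomposition of $\bA^{\top}\bA$ (equivalently, the SVD of $\bA$) gives three facts:
\begin{itemize}
\item $\bm{U}^{\top}\bm{U} = \bm{R}$;
\item $\bm{U}\bm{U}^{\top} = \bA(\bA^{\top}\bA)^{\dagger}\bA^{\top} = \bQ$;
\item the rows of $\bm{U}^{\top}$ lie in the range of $\bm{R}$, so $\bm{R}\bm{U}^{\top} = \bm{U}^{\top}$.
\end{itemize}
It follows that $\bm{U}^{\top}\bar{\psi}(\bx) = \phi(\bx)$, that $\bm{R}\phi(\bx) = \phi(\bx)$, and that $\bQ\bar{\psi}(\bx) = \bm{U}\phi(\bx)$. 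Writing $\bPhi \coloneqq (\phi(\bx^{(1)}), \ldots, \phi(\bx^{(|\mX|)}))$ and $\bm{W} \coloneqq \bPhi\bPhi^{\top} = \sum_{\bx'} \phi(\bx')\phi(\bx')^{\top}$, we get $\bQ\bar{\bPsi} = \bm{U}\bPhi$, and hence $\tilde{G}(\lambda) = \bm{U}\bm{W}\bm{U}^{\top} + c\,\bI_{|\mX|}$ with $c = \lambda/T$. Moreover $\bm{W}\bm{R} = \bm{R}\bm{W} = \bm{W}$, so $\bm{R}$ commutes with $(\bm{W} + c\bI_s)^{-1}$.

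\textbf{Step 2 (explicit inverse).} We claim
\[
\tilde{G}(\lambda)^{-1} = c^{-1}(\bI_{|\mX|} - \bQ) + \bm{U}(\bm{W} + c\bI_s)^{-1}\bm{U}^{\top}.
\]
To verify it, multiply out against $\bm{U}\bm{W}\bm{U}^{\top} + c\bI$ and use two cancellations:
\begin{itemize}
\item $\bm{U}^{\top}(\bI - \bQ) = \bm{U}^{\top} - \bm{R}\bm{U}^{\top} = 0$;
\item $\bm{W}\bm{U}^{\top}\bm{U} = \bm{W}\bm{R} = \bm{W}$.
\end{itemize}
The product becomes $(\bI - \bQ) + \bm{U}\bigl[\bm{W}(\bm{W}+c\bI)^{-1} + c(\bm{W}+c\bI)^{-1}\bigr]\bm{U}^{\top} = (\bI - \bQ) + \bm{U}\bm{U}^{\top} = \bI$.

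\textbf{Step 3 (read off).} Because $(\bI - \bQ)\bm{U} = 0$, we have $\tilde{G}(\lambda)^{-1}\bQ\bar{\psi}(\bz) = \bm{U}(\bm{W}+c\bI)^{-1}\bm{R}\phi(\bz)$. Taking the inner product with $\bar{\psi}(\bx)$ and using $\bm{U}^{\top}\bar{\psi}(\bx) = \phi(\bx)$ together with $\bm{R}\phi = \phi$ gives $\phi(\bx)^{\top}(\bm{W}+c\bI)^{-1}\phi(\bz)$, which is Eq.~\eqref{eq:tilde_f_identity_proof} after dividing by $\sqrt{P(\bx)P(\bz)}$.

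For the second identity, the quadratic form $\bar{\psi}(\bx)^{\top}\tilde{G}(\lambda)^{-1}\bar{\psi}(\bx)$ splits into two parts. The first is $c^{-1}\bar{\psi}(\bx)^{\top}(\bI-\bQ)\bar{\psi}(\bx) = \frac{T}{\lambda}\bigl(\tilde{k}(\bx,\bx) - \|\bm{U}^{\top}\bar{\psi}(\bx)\|_2^2\bigr) = \frac{T}{\lambda}\bigl(\tilde{k}(\bx,\bx) - \phi(\bx)^{\top}\phi(\bx)\bigr)$. The second is $\phi(\bx)^{\top}(\bm{W}+c\bI)^{-1}\phi(\bx)$. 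Dividing by $P(\bx)$ yields Eq.~\eqref{eq:tilde_sigma_identity_proof}.

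The main obstacle is the bookkeeping with pseudo-inverses when $\bS^{\top}\tilde{\bK}\bS$ is singular, for example when $\bS$ samples linearly dependent weighted features. That is why the argument tracks the projection $\bm{R}$ explicitly rather than assuming $\bm{U}^{\top}\bm{U} = \bI_s$. All the needed facts ($\bm{U}\bm{U}^{\top} = \bQ$, $\bm{R}\bm{U}^{\top} = \bm{U}^{\top}$, and $\bm{R}\phi = \phi$) follow from writing $\bA = \bm{V}\bm{\Sigma}\bm{Y}^{\top}$ in compact SVD form and computing each quantity directly.
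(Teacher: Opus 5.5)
Your proof is correct, but it takes a different route from the paper's.

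\textbf{The paper's route.} It never builds a map between the $s$-dimensional embedding space and the feature space. It first shows $\phi(\bz)^{\top}\phi(\bx) = \bar{\psi}(\bz)^{\top}\bQ\bar{\psi}(\bx) = (\bQ\bar{\psi}(\bz))^{\top}(\bQ\bar{\psi}(\bx))$. This gives the Gram identities $\bPhi^{\top}\phi(\bx) = \tilde{\bPsi}^{\top}\bar{\psi}(\bx)$ and $\bPhi^{\top}\bPhi = \tilde{\bPsi}^{\top}\tilde{\bPsi}$, with $\tilde{\bPsi} = \bQ\bar{\bPsi}$. It then applies the push-through identities of \lemref{lem:rewrite_inv}, passing through the $|\mX|\times|\mX|$ sample-space resolvent $(\bPhi^{\top}\bPhi + \frac{\lambda}{T}\bI_{|\mX|})^{-1} = (\tilde{\bPsi}^{\top}\tilde{\bPsi} + \frac{\lambda}{T}\bI_{|\mX|})^{-1}$. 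This is the same trick used for the GP posterior-variance identity.

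\textbf{Your route.} You build the partial isometry $\bm{U}$ with $\bm{U}\bm{U}^{\top} = \bQ$ and $\bm{U}^{\top}\bar{\psi} = \phi$. You then write $\tilde{G}(\lambda)^{-1}$ in closed form, split along $\mathrm{range}(\bQ)$ and its orthogonal complement.

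\textbf{What each buys.} Your argument is more structural. Both identities are read off a single formula. It also makes visible that the term $\frac{T}{\lambda P(\bx)}(\tilde{k}(\bx,\bx) - \phi(\bx)^{\top}\phi(\bx))$ is exactly the contribution of the component of $\bar{\psi}(\bx)$ orthogonal to the Nystr\"om subspace, which carries the pure regularization weight $T/\lambda$. The price is the compact SVD and explicit tracking of $\bm{R}$ when $\bS^{\top}\tilde{\bK}\bS$ is singular. The paper touches pseudo-inverses only through $([(\bS^{\top}\tilde{\bK}\bS)^{1/2}]^{\dagger})^2 = (\bS^{\top}\tilde{\bK}\bS)^{\dagger}$ and reuses an already-proved lemma, so its proof is shorter in context.

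\textbf{Minor points.} Your assumption $P(\bx) > 0$ is needed by both proofs, since otherwise the right-hand sides are undefined; the paper leaves it implicit. One wording slip: it is the columns of $\bm{U}^{\top}$, not its rows, that lie in $\mathrm{range}(\bm{R})$. Your conclusion $\bm{R}\bm{U}^{\top} = \bm{U}^{\top}$ is correct.
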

\begin{proof}
    For any $\bz, \bx \in \mX$, we first confirm the two identities: $\phi(\bz)^{\top} \phi(\bx) = (\bQ \bar{\psi}(\bz))^{\top} \bar{\psi}(\bx)$ and $\phi(\bz)^{\top} \phi(\bx) = (\bQ \bar{\psi}(\bx))^{\top} (\bQ \bar{\psi}(\bz))$. From the definition of $\phi(\cdot)$, we have
    \begin{align}
        \phi(\bz)^{\top} \phi(\bx) 
        &=  \tilde{\bk}(\bz)^{\top} \bS [(\bS^{\top} \tilde{\bK} \bS)^{1/2}]^{\dagger}
        [(\bS^{\top} \tilde{\bK} \bS)^{1/2}]^{\dagger} \bS^{\top} \tilde{\bk}(\bx) \\
        &=  \tilde{\bk}(\bz)^{\top} \bS (\bS^{\top} \tilde{\bK} \bS)^{\dagger} \bS^{\top} \tilde{\bk}(\bx) \\
        &=  \bar{\psi}(\bz)^{\top} \bar{\bPsi} \bS (\bS^{\top} \bar{\bPsi}^{\top} \bar{\bPsi} \bS)^{\dagger} \bS^{\top} \bar{\bPsi}^{\top} \bar{\psi}(\bx) \\
        &=  \bar{\psi}(\bz)^{\top} \bQ  \bar{\psi}(\bx) \\
        &=  \bar{\psi}(\bz)^{\top} \bQ^{\top} \bar{\psi}(\bx) \\
        \label{eq:basic_phi_identity1}
        &=  (\bQ \bar{\psi}(\bz))^{\top} \bar{\psi}(\bx).
    \end{align}
    Furthermore, since the orthogonal projection matrix $\bQ$ satisfies $\bQ^{\top} = \bQ$ and $\bQ^2 = \bQ$, we also obtain
    \begin{equation}
        \label{eq:basic_phi_identity2}
        \phi(\bz)^{\top} \phi(\bx) = \bar{\psi}(\bz)^{\top} \bQ  \bar{\psi}(\bx) = \bar{\psi}(\bz)^{\top} \bQ^2 \bar{\psi}(\bx) = (\bQ\bar{\psi}(\bz))^{\top} (\bQ \bar{\psi}(\bx)).
    \end{equation}
    Here, we define $\tilde{\bPsi} \in \R^{|\mX| \times |\mX|}$ and $\bPhi \in \R^{s \times |\mX|}$ as $\tilde{\bPsi} = \bQ \bar{\bPsi}$ and $\bPhi = (\phi(\bx^{(1)}), \ldots, \phi(\bx^{|\mX|}))$, respectively. By leveraging the identities $\phi(\bz)^{\top} \phi(\bx) = (\bQ \bar{\psi}(\bz))^{\top} \bar{\psi}(\bx)$ and $\phi(\bz)^{\top} \phi(\bx) = (\bQ \bar{\psi}(\bx))^{\top} (\bQ \bar{\psi}(\bz))$ shown in Eqs.~\eqref{eq:basic_phi_identity1} and \eqref{eq:basic_phi_identity2},  we have
    \begin{align}
        \label{eq:basic_phi_identity3}
        \bPhi^{\top} \phi(\bx) = \tilde{\bPsi}^{\top} \bar{\psi}(\bx)~~\mathrm{and}~~\bPhi^{\top} \bPhi = \tilde{\bPsi}^{\top} \tilde{\bPsi}.
    \end{align}
    Using the above identities, we prove Eqs.~\eqref{eq:tilde_f_identity_proof} and \eqref{eq:tilde_sigma_identity_proof}. Firstly, regarding Eq.~\eqref{eq:tilde_f_identity_proof}, by using the unit vector $\bm{e}_{\bz} \in \R^{|\mX|}$ such that $(\bm{e}_{\bz})_i = \1\{\bz = \bx^{(i)}\}$, 
    we have
    \begin{align}
        \phi(\bx)^{\top} \rbr{\sum_{\bx' \in \mX} \phi(\bx') \phi(\bx')^{\top} + \frac{\lambda}{T} \bI_{s}}^{-1} \phi(\bz) 
        &= \phi(\bx)^{\top} \rbr{\bPhi \bPhi^{\top} + \frac{\lambda}{T} \bI_{s}}^{-1} \bPhi \bm{e}_{\bz} \\
        \label{eq:ftilde_matrix_identity_1}
        &= \phi(\bx)^{\top} \bPhi \rbr{\bPhi^{\top} \bPhi  + \frac{\lambda}{T} \bI_{|\mX|}}^{-1}  \bm{e}_{\bz} \\
        \label{eq:ftilde_basic_phi_identity}
        &= \bar{\psi}(\bx)^{\top} \tilde{\bPsi} \rbr{\tilde{\bPsi}^{\top} \tilde{\bPsi}  + \frac{\lambda}{T} \bI_{|\mX|}}^{-1}  \bm{e}_{\bz} \\
        \label{eq:ftilde_matrix_identity_2}
        &= \bar{\psi}(\bx)^{\top} \rbr{\tilde{\bPsi}\tilde{\bPsi}^{\top}  + \frac{\lambda}{T} \bI_{|\mX|}}^{-1} \tilde{\bPsi} \bm{e}_{\bz} \\
        &= \bar{\psi}(\bx)^{\top} \rbr{(\bQ \bar{\bPsi})(\bQ\bar{\bPsi})^{\top}  + \frac{\lambda}{T} \bI_{|\mX|}}^{-1} \bQ \bar{\psi}(\bz),
    \end{align}
    where Eqs.~\eqref{eq:ftilde_matrix_identity_1} and \eqref{eq:ftilde_matrix_identity_2} follows from \lemref{lem:rewrite_inv}, and Eq.~\eqref{eq:ftilde_basic_phi_identity} uses Eq.~\eqref{eq:basic_phi_identity3}. Since $\tilde{G}(\lambda) = (\bQ \bar{\bPsi})(\bQ\bar{\bPsi})^{\top}  + \frac{\lambda}{T} \bI_{|\mX|}$ and $\bar{\psi}(\bx) = \sqrt{P(\bx)} \psi(\bx)$, the above equation implies Eq.~\eqref{eq:tilde_f_identity_proof}. Regarding Eq.~\eqref{eq:tilde_sigma_identity_proof}, we have
    \begin{align}
        &\tilde{k}(\bx, \bx) - \phi(\bx)^{\top} \phi(\bx) + \frac{\lambda}{T} \phi(\bx)^{\top} \rbr{\sum_{\bx' \in \mX} \phi(\bx') \phi(\bx')^{\top} + \frac{\lambda}{T} \bI_{s}}^{-1} \phi(\bx) \\
        &= \tilde{k}(\bx, \bx) - \phi(\bx)^{\top} \sbr{\bI_{s} - \frac{\lambda}{T} \rbr{\bPhi \bPhi^{\top} + \frac{\lambda}{T} \bI_{s}}^{-1}} \phi(\bx) \\
        \label{eq:tildesigma_matrix_identity_1}
        &= \tilde{k}(\bx, \bx) - \phi(\bx)^{\top} \bPhi \rbr{\bPhi^{\top} \bPhi + \frac{\lambda}{T} \bI_{|\mX|}}^{-1} \bPhi^{\top} \phi(\bx) \\
        \label{eq:tildesigma_basic_phi_identity}
        &= \bar{\psi}(\bx)^{\top}\bar{\psi}(\bx) - \bar{\psi}(\bx)^{\top} \tilde{\bPsi}  \rbr{\tilde{\bPsi}^{\top} \tilde{\bPsi} + \frac{\lambda}{T} \bI_{|\mX|}}^{-1}  \tilde{\bPsi}^{\top} \bar{\psi}(\bx) \\
        &= \bar{\psi}(\bx)^{\top}\rbr{\bI_{|\mX|} - \tilde{\bPsi}  \rbr{\tilde{\bPsi}^{\top} \tilde{\bPsi} + \frac{\lambda}{T} \bI_{|\mX|}}^{-1}  \tilde{\bPsi}^{\top}}\bar{\psi}(\bx) \\
        \label{eq:tildesigma_matrix_identity_2}
        &= \frac{\lambda}{T}\bar{\psi}(\bx)^{\top}\rbr{\tilde{\bPsi} \tilde{\bPsi}^{\top} + \frac{\lambda}{T} \bI_{|\mX|}}^{-1} \bar{\psi}(\bx),
    \end{align}
    where Eqs.~\eqref{eq:tildesigma_matrix_identity_1} and \eqref{eq:tildesigma_matrix_identity_2} follows from \lemref{lem:rewrite_inv}, and Eq.~\eqref{eq:tildesigma_basic_phi_identity} uses Eq.~\eqref{eq:basic_phi_identity3}. By aligning the above equation, we obtain Eq.~\eqref{eq:tilde_sigma_identity_proof}.
\end{proof}

\begin{lemma}[Accuracy and computational cost of recursive RLS-Nystr\"om with weighted kernel, adapted from Theorem~8 in \citep{musco2017recursive}]
    \label{lem:acc_comp_rls}
    Fix any $T \in \N_+$, $\lambda \geq 1$, $\delta \in (0, 1/32)$, any finite input domain $\mX \coloneqq \{\bx^{(1)}, \ldots, \bx^{(|\mX|)}\}$, any probability mass function $P: \mX \rightarrow [0, 1]$, and any positive definite kernel function $k: \mX \times \mX \rightarrow \R$. Furthermore, let $\psi(\cdot): \mX \rightarrow \R^{|\mX|}$ be a feature map of $k$ such that $k(\bx, \bx') = \psi(\bx)^{\top} \psi(\bx')$ holds for any $\bx, \bx' \in \mX$. We also define the weighted kernel $\tilde{k}$ as  $\tilde{k}(\bx, \bx') = \bar{\psi}(\bx)^{\top} \bar{\psi}(\bx')$, where $\bar{\psi}(\bx) =\sqrt{P(\bx)} \psi(\bx)$.
    Then, when running \algoref{algo:recursive_RLS} with the inputs $(\mX,~\tilde{k},~\lambda/T,~\delta)$, the following three statements simultaneously hold with probability at least $1 - 3\delta$:
    \begin{enumerate}
        \item The number of sampled columns $s \in [|\mX|]$ satisfies $s \leq s_{\mathrm{max}}(4\gamma_T, \delta)$, where $s_{\mathrm{max}}(w, z) = 384(w+1) \log((w+1)/z)$.
        \item The computational time of \algoref{algo:recursive_RLS} is at most $O(|\mX| s_{\mathrm{max}}^2(4\gamma_T, \delta))$.
        \item The weighed sampling matrix $\bS \in \R^{|\mX| \times s}$ returned by \algoref{algo:recursive_RLS} satisfies the following:
        \begin{equation}
            \label{eq:epsilon_acc_dict}
            \frac{1}{2}\rbr{\bar{\bPsi} \bar{\bPsi}^{\top} + \frac{\lambda}{T} \bI_{|\mX|}} \preceq \bar{\bPsi} \bS \bS^{\top} \bar{\bPsi}^{\top} + \frac{\lambda}{T} \bI_{|\mX|} \preceq \frac{3}{2}\rbr{\bar{\bPsi} \bar{\bPsi}^{\top} + \frac{\lambda}{T} \bI_{|\mX|}},
        \end{equation}
        where $\bar{\bPsi} = (\bar{\psi}(\bx^{(1)}), \ldots, \bar{\psi}(\bx^{(|\mX|)})) \in \R^{|\mX| \times |\mX|}$.
    \end{enumerate}
    In statements 1 and 2, $\gamma_T \coloneqq \gamma_T(\lambda, \mX)$ denotes the maximum information gain of kernel $k$ on $\mX$ with the variance parameter $\lambda$.
\end{lemma}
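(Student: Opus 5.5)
The plan is to invoke Theorem~8 of \citep{musco2017recursive} directly on the weighted kernel matrix $\tilde{\bK} = \bar{\bPsi}^{\top}\bar{\bPsi}$ with ridge parameter $\lambda/T$ and confidence $\delta$. With probability at least $1-3\delta$ that theorem gives three guarantees. First, the returned weighted sampling matrix $\bS$ is a $(1/2)$-spectral approximation in the sense of Eq.~\eqref{eq:epsilon_acc_dict}; this is their guarantee $\tilde{\bK} - \tilde{\bK}\bS(\bS^\top\tilde{\bK}\bS+\tfrac{\lambda}{T}\bI)^{-1}\bS^\top\tilde{\bK} \preceq \tfrac{\lambda}{T}\cdot$ stuff, equivalently the two-sided bound $\tfrac12(\bar{\bPsi}\bar{\bPsi}^\top + \tfrac{\lambda}{T}\bI)\preceq \bar{\bPsi}\bS\bS^\top\bar{\bPsi}^\top+\tfrac{\lambda}{T}\bI \preceq \tfrac32(\cdots)$ in the feature-space form used by their leverage-score sampling analysis. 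Second, the number of columns satisfies $s \le 384(d_{\mathrm{eff}}+1)\log((d_{\mathrm{eff}}+1)/\delta)$. Third, the running time is $O(|\mX| s^2)$ kernel evaluations plus arithmetic. Here $d_{\mathrm{eff}} \coloneqq \mathrm{tr}\bigl(\tilde{\bK}(\tilde{\bK}+\tfrac{\lambda}{T}\bI)^{-1}\bigr)$ is the effective dimension at ridge $\lambda/T$. Since the constants in \algoref{algo:recursive_RLS} match theirs, the only remaining work is to replace $d_{\mathrm{eff}}$ by $4\gamma_T$, using monotonicity of $s_{\max}(w,z)$ in $w$.

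To bound the effective dimension, I would rewrite it in feature space using \lemref{lem:rewrite_inv}:
\begin{equation*}
d_{\mathrm{eff}} = \mathrm{tr}\bigl(\bar{\bPsi}^\top(\bar{\bPsi}\bar{\bPsi}^\top+\tfrac{\lambda}{T}\bI)^{-1}\bar{\bPsi}\bigr) = \sum_{\bx\in\mX} P(\bx)\,\psi(\bx)^\top G(T,\lambda,P)^{-1}\psi(\bx) = \Ep_{\bx\sim P}\bigl[\|\psi(\bx)\|^2_{G(T,\lambda,P)^{-1}}\bigr].
\end{equation*}
This is exactly the quantity controlled by Eq.~\eqref{eq:iid_z_error} of \lemref{lem:est_var}. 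Using $k(\bx,\bx)\le1$ from Assumption~\ref{asmp:regularity} and $\lambda\ge1$, \lemref{lem:est_var} gives $d_{\mathrm{eff}}\le 4\gamma_T$. Substituting into $s_{\max}$ yields statements 1 and 2.

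The main obstacle is matching the exact form of the Musco--Musco guarantee to Eq.~\eqref{eq:epsilon_acc_dict}. Their Theorem~8 is stated as a kernel-matrix approximation bound together with leverage-score oversampling. I would instead use the intermediate fact in their proof: each column $i$ is kept with probability $p_i\ge \min\{1, 16\,\tilde l_i\log(\sum\tilde l_i/\delta)\}$, where the $\tilde l_i$ overestimate the true ridge leverage scores $\bar\psi(\bx^{(i)})^\top(\bar{\bPsi}\bar{\bPsi}^\top+\tfrac{\lambda}{T}\bI)^{-1}\bar\psi(\bx^{(i)})$. From this, a matrix Chernoff bound applied to $\sum_i (\mathbb{1}_i/p_i)\,G^{-1/2}\bar\psi_i\bar\psi_i^\top G^{-1/2}$ gives the $(1\pm\tfrac12)$ spectral bound with probability at least $1-\delta$. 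If that intermediate statement cannot be quoted cleanly, I would redo this Chernoff step myself; the other statements are bookkeeping.
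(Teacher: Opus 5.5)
Your proposal is correct and follows the paper's proof: invoke Theorem~8 of \citep{musco2017recursive} on $\tilde{\bK}=\bar{\bPsi}^{\top}\bar{\bPsi}$ with ridge $\lambda/T$, rewrite $d_{\mathrm{eff}}$ via \lemref{lem:rewrite_inv} as $\sum_{\bx} P(\bx)\|\psi(\bx)\|^2_{G(T,\lambda,P)^{-1}}$, and bound it by $4\gamma_T$ with \lemref{lem:est_var}; your explicit use of $k(\bx,\bx)\le 1$ and of the monotonicity of $s_{\max}$ in $w$ fills in steps the paper leaves implicit. The obstacle you anticipate does not arise, because that theorem already states the two-sided spectral bound in the feature-space form of Eq.~\eqref{eq:epsilon_acc_dict}, so the paper takes statement 3 directly from it without redoing any Chernoff argument.
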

\begin{proof}
    Let $\bar{\bK} \coloneqq [\tilde{k}(\bx^{(i)}, \bx^{(j)})]_{i, j \in \mX} \in \R^{|\mX| \times |\mX|}$ be the gram matrix of the weighted kernel function $\tilde{k}$. Let us define the following quantity $d_{\text{eff}}(\bar{\bK})$:
    \begin{equation}
        d_{\text{eff}}(\bar{\bK}) = \mathrm{Tr}\rbr{\bar{\bK} \rbr{\bar{\bK} + \frac{\lambda}{T} \bI_{|\mX|}}^{-1}}.
    \end{equation}
    Here, note that $\bar{\bK} = \bar{\bPsi}^{\top} \bar{\bPsi}$ holds.
    Then, with probability at least $1 - 3\delta$, statement 3 and the following two statements simultaneously hold as the direct consequence of Theorem 8 in \citep{musco2017recursive}:
    \begin{itemize}
        \item The number of sampled columns $s \in [|\mX|]$ satisfies $s \leq s_{\mathrm{max}}(d_{\text{eff}}(\bar{\bK}), \delta)$.
        \item The computational time of \algoref{algo:recursive_RLS} is at most $O(|\mX| s_{\mathrm{max}}^2(d_{\text{eff}}(\bar{\bK}), \delta))$.
    \end{itemize}
    Thus, the remaining interests are to prove the inequality $d_{\text{eff}}(\bar{\bK}) \leq 4 \gamma_T$. By leveraging the feature representation $\bar{\bK} = \bar{\bPsi}^{\top} \bar{\bPsi}$, we obtain the desired inequality as follows:
    \begin{align}
        d_{\text{eff}}(\bar{\bK}) 
        &= \mathrm{Tr}\rbr{\bar{\bK} \rbr{\bar{\bK} + \frac{\lambda}{T} \bI_{|\mX|}}^{-1}} \\
        &= \mathrm{Tr}\rbr{\bar{\bPsi}^{\top} \bar{\bPsi} \rbr{\bar{\bPsi}^{\top} \bar{\bPsi} + \frac{\lambda}{T} \bI_{|\mX|}}^{-1}} \\
        \label{eq:deff_conj_matrix}
        &= \mathrm{Tr}\rbr{\bar{\bPsi}^{\top} \rbr{\bar{\bPsi} \bar{\bPsi}^{\top} + \frac{\lambda}{T} \bI_{|\mX|}}^{-1} \bar{\bPsi}} \\
        &= \sum_{\bx \in \mX} \bar{\psi}(\bx) \rbr{\sum_{\bx' \in \mX} \bar{\psi}(\bx') \bar{\psi}(\bx')^{\top} + \frac{\lambda}{T} \bI_{|\mX|}}^{-1} \bar{\psi}(\bx) \\
        &= \sum_{\bx \in \mX} P(\bx) \psi(\bx) \rbr{\sum_{\bx' \in \mX} P(\bx') \psi(\bx') \psi(\bx')^{\top} + \frac{\lambda}{T} \bI_{|\mX|}}^{-1} \psi(\bx) \\
        \label{eq:deff_P_gamma}
        &\leq 4\gamma_T,
    \end{align}
    where Eq.~\eqref{eq:deff_conj_matrix} follows from \lemref{lem:rewrite_inv}, and Eq.~\eqref{eq:deff_P_gamma} follows from \lemref{lem:est_var} with $\lambda \geq 1$.
    Thus, we complete the proof.
\end{proof}

\begin{lemma}[Uniform upper bound of the approximated optimistic estimator $\tilde{u}_t$]
\label{lem:ub_ut_tilde}
    Fix any $T \in \N_+$, $\alpha \in (0, 1)$, $\eta > 0$, $\beta > 0$, and $\lambda > 0$. Fix any finite input domain $\mX$.
    Suppose that \asmpref{asmp:regularity} holds. Then, when running \algoref{alg:exp3_nystrom_approx}, under event $\cap_{t \in [T]} A_t$ (Eq.~\eqref{eq:thm_rke3_proof_eps_acc}), the following upper bound of the approximated optimistic estimator $\tilde{u}_t$ holds for any $t \in [T]$ and $\bx \in \mX$:
    \begin{equation}
        \tilde{u}_t(\bx) \leq \frac{4 \sqrt{6} B \gamma_{T}}{\alpha} + 2 \beta \sqrt{\frac{3 \gamma_{T}}{\alpha}}
        \leq \frac{2\sqrt{3}}{\alpha} \rbr{2\sqrt{2} B \gamma_{T} + \beta \sqrt{\gamma_T}}.
    \end{equation}
\end{lemma}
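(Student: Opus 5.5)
This mirrors the proof of \lemref{lem:ub_ut}, with $G_t(\lambda)$ replaced by $\tilde{G}_t(\lambda)$ and the event $A_t$ used to move between them.

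\textbf{Bounding $\tilde{f}_t$.} By Cauchy--Schwarz in the $\tilde{G}_t(\lambda)^{-1}$ geometry and $|f_t(\bx_t)| \leq B$,
\begin{equation*}
\tilde{f}_t(\bx) = \psi(\bx)^{\top}\tilde{G}_t(\lambda)^{-1}\bQ_t\psi(\bx_t) f_t(\bx_t) \leq B\,\|\psi(\bx)\|_{\tilde{G}_t(\lambda)^{-1}}\,\|\bQ_t\psi(\bx_t)\|_{\tilde{G}_t(\lambda)^{-1}}.
\end{equation*}
To control the projected factor, note that $\bQ_t$ is an orthogonal projection commuting with $\tilde{\bPsi}_t\tilde{\bPsi}_t^{\top} = \bQ_t\bar{\bPsi}_t\bar{\bPsi}_t^{\top}\bQ_t$. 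Hence $\tilde{G}_t(\lambda)^{-1}$ is block diagonal on $\mathrm{range}(\bQ_t)$ and its complement. This gives
\begin{equation*}
\|\bQ_t\psi(\bx_t)\|_{\tilde{G}_t(\lambda)^{-1}} \leq \|\psi(\bx_t)\|_{\tilde{G}_t(\lambda)^{-1}}.
\end{equation*}
Both factors are then at most $\max_{\bx'}\|\psi(\bx')\|_{\tilde{G}_t(\lambda)^{-1}}$, so
\begin{equation*}
\tilde{u}_t(\bx) \leq B\max_{\bx'}\|\psi(\bx')\|^2_{\tilde{G}_t(\lambda)^{-1}} + \beta\max_{\bx'}\|\psi(\bx')\|_{\tilde{G}_t(\lambda)^{-1}}.
\end{equation*}

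\textbf{Comparing $\tilde{G}_t(\lambda)^{-1}$ with $G_t(\lambda)^{-1}$.} This is the main obstacle. I would invoke \lemref{lem:Ppsi_Qpsi}, as is done at Eq.~\eqref{eq:rke3_tildeG_G}, which under $A_t$ gives
\begin{equation*}
\|\psi(\bx)\|^2_{\tilde{G}_t(\lambda)^{-1}} \leq 3\|\psi(\bx)\|^2_{G_t(\lambda)^{-1}}.
\end{equation*}
Intuitively, $A_t$ gives $\bar{\bPsi}_t\bS_t\bS_t^{\top}\bar{\bPsi}_t^{\top} + \frac{\lambda}{T}\bI \succeq \frac{1}{2}G_t(\lambda)$. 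Moreover, $\bQ_t\bar{\bPsi}_t\bar{\bPsi}_t^{\top}\bQ_t \succeq \bQ_t\bar{\bPsi}_t\bS_t\bS_t^{\top}\bar{\bPsi}_t^{\top}\bQ_t = \bar{\bPsi}_t\bS_t\bS_t^{\top}\bar{\bPsi}_t^{\top}$, because $\bS_t\bS_t^{\top}$ is diagonal with nonnegative entries and $\bQ_t$ fixes the columns of $\bar{\bPsi}_t\bS_t$. Together these yield $\tilde{G}_t(\lambda) \succeq \frac{1}{2}G_t(\lambda)$, hence a constant of $2 \leq 3$. If the factor-2 argument has a gap, I would fall back on the paper's constant 3 from \lemref{lem:Ppsi_Qpsi}.

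\textbf{Bounding $\|\psi(\bx)\|^2_{G_t(\lambda)^{-1}}$ and assembling.} Eqs.~\eqref{eq:psi_ub_first}--\eqref{eq:psi_mvr_sigma_ub} in \lemref{lem:ub_ut} give $\|\psi(\bx)\|^2_{G_t(\lambda)^{-1}} \leq 4\gamma_T/\alpha$. Therefore $\max_{\bx'}\|\psi(\bx')\|^2_{\tilde{G}_t(\lambda)^{-1}} \leq 12\gamma_T/\alpha$, and
\begin{equation*}
\tilde{u}_t(\bx) \leq \frac{12B\gamma_T}{\alpha} + 2\beta\sqrt{\frac{3\gamma_T}{\alpha}}.
\end{equation*}

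\textbf{Stated constant in the $\tilde{f}_t$ term.} The lemma has $4\sqrt{6}\approx 9.8$ instead of $12$. To recover it, I would avoid the crude step of bounding both factors in the product by the maximum. Instead, I would bound the $\bx_t$-factor via $\tilde{G}_t(\lambda) \succeq \frac{1}{2}G_t(\lambda)$, giving a constant $2$, and the $\bx$-factor via \lemref{lem:Ppsi_Qpsi}, giving a constant $3$. This yields
\begin{equation*}
\sqrt{2\cdot 3}\cdot\frac{4\gamma_T}{\alpha} = \frac{4\sqrt{6}\,\gamma_T}{\alpha}
\end{equation*}
for the $B$ term.

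\textbf{Final form.} Using $1/\sqrt{\alpha} \leq 1/\alpha$ as in \lemref{lem:ub_ut}, and factoring out $2\sqrt{3}/\alpha$, gives the second displayed inequality.
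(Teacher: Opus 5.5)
Your main line is sound and differs from the paper's route. Since $\bQ_t$ commutes with $\tilde{\bPsi}_t\tilde{\bPsi}_t^{\top}=\bQ_t\bar{\bPsi}_t\bar{\bPsi}_t^{\top}\bQ_t$, it commutes with $\tilde{G}_t(\lambda)^{-1}$. Hence $\|\bQ_t\psi(\bx_t)\|_{\tilde{G}_t(\lambda)^{-1}}\le\|\psi(\bx_t)\|_{\tilde{G}_t(\lambda)^{-1}}$, and \lemref{lem:Ppsi_Qpsi} then rigorously gives $\tilde{u}_t(\bx)\le 12B\gamma_T/\alpha+2\beta\sqrt{3\gamma_T/\alpha}$.

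The paper instead splits $\bQ_t\psi(\bx_t)=\psi(\bx_t)-(\bI_{|\mX|}-\bQ_t)\psi(\bx_t)$. It bounds the first piece by $3B\max_{\bx'}\|\psi(\bx')\|^2_{G_t(\lambda)^{-1}}$ using \lemref{lem:Ppsi_Qpsi}, and the second by $\sqrt{6}B\max_{\bx'}\|\psi(\bx')\|^2_{G_t(\lambda)^{-1}}$ using \lemref{lem:Ppsi_Qpsi} and \lemref{lem:ort_proj_rel}. However, Eq.~\eqref{eq:ub_ut_tilde_psi_max_ub} keeps only the $\sqrt{6}B$ term. So that route honestly delivers the coefficient $4(3+\sqrt{6})$, not $4\sqrt{6}$. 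Your commutation argument avoids \lemref{lem:ort_proj_rel} entirely and gives the better coefficient $12$.

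The gap is your final step, which claims to recover $4\sqrt{6}$. The inequality $\bQ_t\bar{\bPsi}_t\bar{\bPsi}_t^{\top}\bQ_t\succeq\bar{\bPsi}_t\bS_t\bS_t^{\top}\bar{\bPsi}_t^{\top}$ would need $\bS_t\bS_t^{\top}\preceq\bI_{|\mX|}$. But recursive RLS-Nystr\"om appends columns $\bm{e}_i/\sqrt{p_i}$ with $p_i\le 1$, so the nonzero diagonal entries of $\bS_t\bS_t^{\top}$ are $1/p_i\ge 1$; nonnegativity is not enough.

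In fact $\tilde{G}_t(\lambda)\succeq\frac{1}{2}G_t(\lambda)$ can fail under $A_t$. Work in units with $\lambda/T=1$ (otherwise rescale all $\bar{\psi}_t$ by $\sqrt{\lambda/T}$), and take $\bar{\psi}_t(\bx^{(1)})=(\sqrt{10},0)^{\top}$, $\bar{\psi}_t(\bx^{(2)})=(\sqrt{1.5},\sqrt{0.9})^{\top}$, and $\bS_t=\bm{e}_1/\sqrt{0.8}$. Both inequalities in $A_t$ hold: the relevant $2\times 2$ difference matrices have positive diagonals and determinants $0.025$ and $6.675$. Here $\tilde{G}_t(\lambda)=\mathrm{diag}(12.5,1)$, yet $v=(1,10)^{\top}$ gives $v^{\top}G_t(\lambda)v\approx 225.7>225=2v^{\top}\tilde{G}_t(\lambda)v$.

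So the $\sqrt{2\cdot 3}$ step is unsupported. What you have actually proved is the weaker bound with $12$ in place of $4\sqrt{6}$, i.e. $\tilde{u}_t(\bx)\le\frac{2\sqrt{3}}{\alpha}(2\sqrt{3}B\gamma_T+\beta\sqrt{\gamma_T})$. The displayed $4\sqrt{6}$ is established neither by your argument nor, strictly, by the paper's. This constant only enters through the choice of $\alpha$ that guarantees $\eta\tilde{u}_t\le 1$. The honest fix is therefore to state the lemma with $12$ and rescale $\alpha$ in \thmref{thm:reg_exp3_nystrom_approx} by a constant factor. Alternatively, prove by other means a comparison $G_t(\lambda)\preceq c\,\tilde{G}_t(\lambda)$ with $c\le\sqrt{6}$.
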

\begin{proof}
    From the definition of $\tilde{f}_t$, we have
    \begin{align}
        \tilde{f}_t(\bx) 
        &= \psi(\bx)^{\top} \tilde{G}_t(\lambda)^{-1} \tilde{\psi}_t(\bx_t) f_t(\bx_t) \\
        &= \psi(\bx)^{\top} \tilde{G}_t(\lambda)^{-1} \psi(\bx_t) f_t(\bx_t) - \psi(\bx)^{\top} \tilde{G}_t(\lambda)^{-1} \rbr{\psi(\bx_t) - \bQ_t \psi(\bx_t)}  f_t(\bx_t) \\
        &= \psi(\bx)^{\top} \tilde{G}_t(\lambda)^{-1} \psi(\bx_t) f_t(\bx_t) - \psi(\bx)^{\top} \tilde{G}_t(\lambda)^{-1} (\bI_{|\mX|} - \bQ_t) \psi(\bx_t) f_t(\bx_t) \\
        &\leq B |\psi(\bx)^{\top} \tilde{G}_t(\lambda)^{-1} \psi(\bx_t)| + B |\psi(\bx)^{\top} \tilde{G}_t(\lambda)^{-1} (\bI_{|\mX|} - \bQ_t) \psi(\bx_t)|.
    \end{align}
    The first term of the above inequality is bounded from above as
    \begin{equation}
        B |\psi(\bx)^{\top} \tilde{G}_t(\lambda)^{-1} \psi(\bx_t)| \leq B \|\psi(\bx)\|_{\tilde{G}_t(\lambda)^{-1}} \|\psi(\bx_t)\|_{\tilde{G}_t(\lambda)^{-1}} \leq 3B \max_{\bx' \in \mX} \|\psi(\bx')\|_{G_t(\lambda)^{-1}}^2,
    \end{equation}
    where the last inequality follows from \lemref{lem:Ppsi_Qpsi}.
    Regarding the second term $B |\psi(\bx)^{\top} \tilde{G}_t(\lambda)^{-1} (\bI_{|\mX|} - \bQ_t) \psi(\bx_t)|$, we have
    \begin{align}
        &B |\psi(\bx)^{\top} \tilde{G}_t(\lambda)^{-1} (\bI_{|\mX|} - \bQ_t) \psi(\bx_t)| \\
        \label{eq:ub_ut_tilde_ftsec_sh}
        &\leq B \|\tilde{G}_t(\lambda)^{-1} \psi(\bx)\|_2 \|(\bI_{|\mX|} - \bQ_t) \psi(\bx_t)\|_2 \\
        &= B \sqrt{\psi(\bx)^{\top} \tilde{G}_t(\lambda)^{-2} \psi(\bx)} \sqrt{\psi(\bx_t)^{\top} 
        (\bI_{|\mX|} - \bQ_t)^2 \psi(\bx_t)} \\
        \label{eq:ub_ut_tilde_ftsec_meig}
        &\leq B \sqrt{\frac{T}{\lambda}}\|\psi(\bx)\|_{\tilde{G}_t(\lambda)^{-1}} \sqrt{\psi(\bx_t)^{\top} (\bI_{|\mX|} - \bQ_t) \psi(\bx_t)} \\
        \label{eq:ub_ut_tilde_ftsec_pslem}
        &\leq B \sqrt{\frac{3T}{\lambda}} \|\psi(\bx)\|_{G_t(\lambda)^{-1}} \sqrt{\frac{2\lambda}{T}} \|\psi(\bx_t)\|_{G_t(\lambda)^{-1}} \\
        \label{eq:ub_ut_tilde_second}
        &\leq \sqrt{6} B  \max_{\bx' \in \mX} \|\psi(\bx')\|_{G_t(\lambda)^{-1}}^2,
    \end{align}
    where:
    \begin{itemize}
        \item Eq.~\eqref{eq:ub_ut_tilde_ftsec_sh} follows from Schwartz's inequality.
        \item Eq.~\eqref{eq:ub_ut_tilde_ftsec_meig} follows from $\psi(\bx)^{\top} \tilde{G}_t(\lambda)^{-2} \psi(\bx) \leq \|\tilde{G}_t(\lambda)^{-1}\| \|\psi(\bx)\|_{\tilde{G}_t(\lambda)^{-1}}^2 \leq \frac{T}{\lambda} \|\psi(\bx)\|_{\tilde{G}_t(\lambda)^{-1}}^2$. Furthermore, note that $(\bI_{|\mX|} - \bQ_t)^{2} = (\bI_{|\mX|} - \bQ_t)$ holds since the matrix $\bI_{|\mX|} - \bQ_t$ is also a projection matrix.
        \item Eq.~\eqref{eq:ub_ut_tilde_ftsec_pslem} follows from $\|\psi(\bx)\|_{\tilde{G}_t(\lambda)^{-1}} \leq \sqrt{3}\|\psi(\bx)\|_{G_t(\lambda)^{-1}}$
        and $\psi(\bx_t)^{\top} (\bI_{|\mX|} - \bQ_t) \psi(\bx_t) \leq \frac{2\lambda}{T}\psi(\bx_t)^{\top} G_t(\lambda)^{-1} \psi(\bx_t)$, which are implied by \lemref{lem:Ppsi_Qpsi} and \lemref{lem:ort_proj_rel}, respectively.
    \end{itemize}
    Thus, the approximated optimistic estimator $\tilde{u}_t$ satisfies the following upper bound:
    \begin{align}
        \tilde{u}_t(\bx) 
        &= \tilde{f}_t(\bx) + \beta \|\psi(\bx)\|_{\tilde{G}_t(\lambda)^{-1}} \\
        \label{eq:ub_ut_tilde_psi_max_ub}
        &\leq \sqrt{6} B \max_{\bx' \in \mX} \|\psi(\bx')\|_{G_t(\lambda)^{-1}}^2 + \sqrt{3} \beta \max_{\bx' \in \mX} \|\psi(\bx')\|_{G_t(\lambda)^{-1}}.
    \end{align}
    Furthermore, as with the proof of \lemref{lem:ub_ut} (Eqs.~\eqref{eq:psi_ub_first}--\eqref{eq:psi_mvr_sigma_ub}), we obtain the upper bound of $\|\psi(\bx')\|_{G_t(\lambda)^{-1}}^2$ as $\|\psi(\bx')\|_{G_t(\lambda)^{-1}}^2 \leq 4\gamma_T/\alpha$.
    Hence, by combining Eq.~\eqref{eq:ub_ut_tilde_psi_max_ub} with this upper bound of $\|\psi(\bx')\|_{G_t(\lambda)^{-1}}^2$, we obtain
    \begin{equation}
        \label{eq:ub_ut_tilde_last}
        \tilde{u}_t(\bx) \leq \frac{4 \sqrt{6} B \gamma_{T}}{\alpha} + 2 \beta \sqrt{\frac{3 \gamma_{T}}{\alpha}}
        \leq \frac{2\sqrt{3}}{\alpha} \rbr{2\sqrt{2} B \gamma_{T} + \beta \sqrt{\gamma_T}}.
    \end{equation}
\end{proof}

\begin{lemma}[Approximation accuracy of multiplicative form for the weighted kernel matrix, adapted from the proof of Lemma~6 in \citep{calandriello2019gaussian}]
    \label{lem:Ppsi_Qpsi}
    Assume that the sampling matrix $\bS \in \R^{|\mX| \times s}$ satisfies Eq.~\eqref{eq:epsilon_acc_dict}.
    Then, under the same notations in Lemma~\ref{lem:acc_comp_rls}, the following relation holds:
    \begin{equation}
        \sum_{\bx \in \mX} P(\bx) \psi(\bx) \psi(\bx)^{\top} + \frac{\lambda}{T} \bI_{|\mX|} \preceq 3 \rbr{ \bQ \bar{\bPsi} (\bQ \bar{\bPsi})^{\top} + \frac{\lambda}{T} \bI_{|\mX|}},
    \end{equation}
    where $\bQ \coloneqq \bar{\bPsi} \bS \sbr{(\bar{\bPsi} \bS)^{\top} \bar{\bPsi} \bS}^{\dagger} (\bar{\bPsi} \bS)^{\top}$ is the projection matrix into the subspace spanned by the column vectors of $\bar{\bPsi} \bS$.
\end{lemma}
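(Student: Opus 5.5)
The plan is to compare $\bar{\bPsi}\bar{\bPsi}^{\top}$ with $\bQ\bar{\bPsi}\bar{\bPsi}^{\top}\bQ$. The gap between them is controlled by the residual $(\bI_{|\mX|}-\bQ)\bar{\bPsi}$, and Eq.~\eqref{eq:epsilon_acc_dict} will show that this residual is small relative to $\frac{\lambda}{T}\bI_{|\mX|}$. Throughout, write $\mu \coloneqq \lambda/T$. Note that $\sum_{\bx} P(\bx)\psi(\bx)\psi(\bx)^{\top} = \bar{\bPsi}\bar{\bPsi}^{\top}$ and $\bQ\bar{\bPsi}(\bQ\bar{\bPsi})^{\top} = \bQ\bar{\bPsi}\bar{\bPsi}^{\top}\bQ$.

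\textbf{Step 1 (residual bound).} I will show $(\bI_{|\mX|}-\bQ)\bar{\bPsi}\bar{\bPsi}^{\top}(\bI_{|\mX|}-\bQ) \preceq 2\mu\,\bI_{|\mX|}$. This is the standard Nystr\"om argument of \citep{calandriello2019gaussian}.
\begin{itemize}
\item Set $\bA \coloneqq \bar{\bPsi}\bS$. Since $\bQ$ is the projection onto $\mathrm{col}(\bA)$, we have $(\bI-\bQ)\bA\bA^{\top}(\bI-\bQ) = 0$.
\item From the left inequality of Eq.~\eqref{eq:epsilon_acc_dict}, $\bar{\bPsi}\bar{\bPsi}^{\top} \preceq 2\bA\bA^{\top} + \mu\bI$.
\item Conjugating by $\bI-\bQ$ gives $(\bI-\bQ)\bar{\bPsi}\bar{\bPsi}^{\top}(\bI-\bQ) \preceq \mu(\bI-\bQ) \preceq \mu\bI$.
\end{itemize}
This actually yields constant $1$ rather than $2$. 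Either constant suffices; I will keep some slack in case the constant in the target inequality needs it.

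\textbf{Step 2 (decomposition).} Split $\bar{\bPsi} = \bQ\bar{\bPsi} + (\bI-\bQ)\bar{\bPsi}$. Applying $(a+b)(a+b)^{\top} \preceq 2aa^{\top} + 2bb^{\top}$, i.e.\ $(\bm{u}+\bm{v})(\bm{u}+\bm{v})^{\top}\preceq 2\bm{u}\bm{u}^{\top}+2\bm{v}\bm{v}^{\top}$ in Loewner order, gives
\[
\bar{\bPsi}\bar{\bPsi}^{\top} \preceq 2\bQ\bar{\bPsi}\bar{\bPsi}^{\top}\bQ + 2(\bI-\bQ)\bar{\bPsi}\bar{\bPsi}^{\top}(\bI-\bQ).
\]
Bounding the second term by Step~1, its contribution is at most $2\mu\bI$.

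\textbf{Step 3 (assemble).} Adding $\mu\bI$ to both sides yields $\bar{\bPsi}\bar{\bPsi}^{\top} + \mu\bI \preceq 2\bQ\bar{\bPsi}\bar{\bPsi}^{\top}\bQ + 3\mu\bI$. The right-hand side is at most $3\bigl(\bQ\bar{\bPsi}\bar{\bPsi}^{\top}\bQ + \mu\bI\bigr)$, which is the claim.

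If the crude factor $2$ in Step~2 turned out to be too lossy, I would instead use the cross-term form with Young's inequality. Here it is not needed, since $2 \le 3$ on the first term and $2\mu + \mu = 3\mu$ on the identity term.

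\textbf{Main obstacle.} The delicate point is Step~1. It requires careful use of the fact that $\bQ$ annihilates exactly the range of $\bar{\bPsi}\bS$ after projecting with $\bI-\bQ$, so that only the regularization term survives. Only the lower inequality of Eq.~\eqref{eq:epsilon_acc_dict} is used here; the rest is routine Loewner-order bookkeeping.
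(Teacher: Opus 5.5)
Your proof is correct, but it takes a different route from the paper's.

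\textbf{The paper's route.} It conjugates by $\bQ$. It first uses the \emph{upper} half of Eq.~\eqref{eq:epsilon_acc_dict} to get $\bar{\bPsi}\bar{\bPsi}^{\top} \succeq \frac{2}{3}\bar{\bPsi}\bS\bS^{\top}\bar{\bPsi}^{\top} - \frac{\mu}{3}\bI$. It then sandwiches this between copies of $\bQ$, using $\bQ\bar{\bPsi}\bS = \bar{\bPsi}\bS$. Finally it applies the \emph{lower} half to return to $\bar{\bPsi}\bar{\bPsi}^{\top}$. The result is $\bar{\bPsi}\bar{\bPsi}^{\top} \preceq 3\bQ\bar{\bPsi}\bar{\bPsi}^{\top}\bQ + 2\mu\bI$ with $\mu = \lambda/T$.

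\textbf{Your route.} You conjugate by $\bI-\bQ$ instead. Using $(\bI-\bQ)\bar{\bPsi}\bS = 0$, you bound the residual by $\mu(\bI-\bQ)$. You then recombine via $(a+b)(a+b)^{\top} \preceq 2aa^{\top} + 2bb^{\top}$, reaching the slightly sharper intermediate $\bar{\bPsi}\bar{\bPsi}^{\top} \preceq 2\bQ\bar{\bPsi}\bar{\bPsi}^{\top}\bQ + 2\mu\bI$.

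\textbf{What each buys.} Your argument needs only the lower half of Eq.~\eqref{eq:epsilon_acc_dict}. This is natural: $\bQ$ depends only on the column span of $\bar{\bPsi}\bS$. Rescaling the columns of $\bS$ upward therefore preserves both the lower half and the conclusion, while eventually violating the upper half. Your Step~1 is also a sharper version of the residual bound the paper proves separately in \lemref{lem:ort_proj_rel}: it gives constant $1$ where the paper's use of that lemma gives $\bar{\bPsi}^{\top}(\bI-\bQ)\bar{\bPsi} \preceq 2\mu\bI$. The paper's route is a direct transcription of Lemma~6 in \citep{calandriello2019gaussian} and avoids any decomposition.

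\textbf{One correction.} Your remark that ``either constant suffices'' in Step~1 is wrong. With the residual bound $(\bI-\bQ)\bar{\bPsi}\bar{\bPsi}^{\top}(\bI-\bQ) \preceq 2\mu\bI$, Steps~2--3 would only give $\bar{\bPsi}\bar{\bPsi}^{\top} + \mu\bI \preceq 2\bQ\bar{\bPsi}\bar{\bPsi}^{\top}\bQ + 5\mu\bI$, which does not imply the claim. The optimal Young split $(a+b)(a+b)^{\top} \preceq (1+\epsilon)aa^{\top} + (1+\epsilon^{-1})bb^{\top}$ with $\epsilon \le 2$ does not help either: it only tolerates a residual constant up to $4/3$. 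So the constant $1$ you actually derive is essential to this route, not slack. Since Steps~2--3 do use it, the proof as executed stands.
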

\begin{proof}
    Since $\bQ$ is the orthogonal projection matrix into the column span of $\bar{\bPsi} \bS$, the identity $\bQ \bar{\bPsi} \bS = \bar{\bPsi} \bS$ holds. 
    Furthermore, by aligning Eq.~\eqref{eq:epsilon_acc_dict}, we can confirm
    \begin{align}
        \bar{\bPsi} \bar{\bPsi}^{\top} \succeq \frac{2}{3} \bar{\bPsi} \bS \bS^{\top} \bar{\bPsi}^{\top} - \frac{\lambda}{3T} \bI_{|\mX|}~~\mathrm{and}~~\bar{\bPsi} \bS \bS^{\top} \bar{\bPsi}^{\top} \succeq  \frac{1}{2} \bar{\bPsi} \bar{\bPsi}^{\top} - \frac{\lambda}{2T} \bI_{|\mX|}.
    \end{align}
    Thus, we have
    \begin{align}
        \bQ \bar{\bPsi} (\bQ \bar{\bPsi})^{\top} 
        = \bQ \bar{\bPsi} \bar{\bPsi}^{\top} \bQ^{\top} 
        &\succeq \frac{2}{3} \bQ \bar{\bPsi} \bS \bS^{\top} \bar{\bPsi}^{\top} \bQ^{\top} - \frac{\lambda}{3T} \bQ \bQ^{\top} \\
        &= \frac{2}{3} \bar{\bPsi} \bS \bS^{\top} \bar{\bPsi}^{\top} - \frac{\lambda}{3T} \bQ \\
        &\succeq \frac{1}{3} \bar{\bPsi} \bar{\bPsi}^{\top} - \frac{\lambda}{3T} \bI_{|\mX|} - \frac{\lambda}{3T} \bQ \\
        &\succeq \frac{1}{3} \bar{\bPsi} \bar{\bPsi}^{\top} - \frac{2\lambda}{3T} \bI_{|\mX|},
    \end{align}
    where the last line follows from $\bI_{|\mX|} \succeq \bQ$, which is implied by the fact that the eigenvalue of the orthogonal projection matrix is $1$ or $0$. By aligning the above relation, we obtain
    \begin{equation}
        \bar{\bPsi} \bar{\bPsi}^{\top} + \frac{\lambda}{T} \bI_{|\mX|} \preceq 3\rbr{\bQ \bar{\bPsi} (\bQ \bar{\bPsi})^{\top} + \frac{\lambda}{T} \bI_{|\mX|}}.
    \end{equation}
    Note that $\sum_{\bx \in \mX} P(\bx) \psi(\bx) \psi(\bx)^{\top}$ can be rewritten as $\sum_{\bx \in \mX} P(\bx) \psi(\bx) \psi(\bx)^{\top} = \bar{\bPsi} \bar{\bPsi}^{\top}$. Thus, the above inequality is the desired statement.
\end{proof}

\begin{lemma}[Approximation accuracy of the difference form for the weighted kernel matrix, adapted from the proof of Lemma~1 in \citep{calandriello2018statistical}]
    \label{lem:ort_proj_rel}
    Assume that the sampling matrix $\bS \in \R^{|\mX| \times s}$ satisfies Eq.~\eqref{eq:epsilon_acc_dict}.
    Then, under the same notations in Lemma~\ref{lem:acc_comp_rls}, the following relation holds:
    \begin{equation}
        \bI_{|\mX|} - \bQ \preceq \frac{2\lambda}{T} \rbr{\bar{\bPsi} \bar{\bPsi}^{\top} + \frac{\lambda}{T} \bI_{|\mX|}}^{-1},
    \end{equation}
    where $\bQ = \bar{\bPsi} \bS \sbr{(\bar{\bPsi} \bS)^{\top} \bar{\bPsi} \bS}^{\dagger} (\bar{\bPsi} \bS)^{\top}$.
\end{lemma}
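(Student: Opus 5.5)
We need to prove $\bI - \bQ \preceq \frac{2\lambda}{T}(\bar{\bPsi}\bar{\bPsi}^{\top} + \frac{\lambda}{T}\bI)^{-1}$, where $\bQ$ is the orthogonal projection onto the column span of $\bar{\bPsi}\bS$. The plan is to sandwich the orthogonal projector using a regularized projector built from $\bar{\bPsi}\bS$, and then transfer the bound via Eq.~\eqref{eq:epsilon_acc_dict}. Write $\bA = \bar{\bPsi}\bS$ and $c = \lambda/T$.

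\textbf{Step 1: compare $\bI - \bQ$ with a regularized residual.} Since $\bQ$ projects onto $\mathrm{range}(\bA)$, the matrix $\bA\bA^{\top}$ is diagonalized together with $\bQ$. On $\mathrm{range}(\bA)^{\perp}$ we have $\bA\bA^{\top} = 0$. Hence
\begin{equation}
c(\bA\bA^{\top} + c\bI)^{-1} = \bI \quad\text{there,}
\end{equation}
matching $\bI - \bQ = \bI$. On $\mathrm{range}(\bA)$, the matrix $\bI - \bQ$ vanishes while $c(\bA\bA^{\top}+c\bI)^{-1}$ is positive definite. Together this gives
\begin{equation}
\bI - \bQ \preceq c(\bA\bA^{\top} + c\bI)^{-1} = c\bigl(\bar{\bPsi}\bS\bS^{\top}\bar{\bPsi}^{\top} + c\bI\bigr)^{-1}.
\end{equation}
This is the step I expect to require care: the matrices must commute, which they do because $\bQ$ is a spectral projector of $\bA\bA^{\top}$ (its range is the span of eigenvectors with nonzero eigenvalues). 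I would argue it via the eigendecomposition of $\bA\bA^{\top}$ to keep it clean.

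\textbf{Step 2: invert the accuracy condition.} The left inequality of Eq.~\eqref{eq:epsilon_acc_dict} states
\begin{equation}
\bar{\bPsi}\bS\bS^{\top}\bar{\bPsi}^{\top} + c\bI \succeq \tfrac{1}{2}(\bar{\bPsi}\bar{\bPsi}^{\top} + c\bI),
\end{equation}
with both sides positive definite. Because matrix inversion is operator-monotone decreasing, this yields
\begin{equation}
(\bar{\bPsi}\bS\bS^{\top}\bar{\bPsi}^{\top} + c\bI)^{-1} \preceq 2(\bar{\bPsi}\bar{\bPsi}^{\top} + c\bI)^{-1}.
\end{equation}

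\textbf{Step 3: combine.} Chaining Steps 1 and 2 gives $\bI - \bQ \preceq 2c(\bar{\bPsi}\bar{\bPsi}^{\top} + c\bI)^{-1}$, which is the claim with $c = \lambda/T$. Only the lower half of Eq.~\eqref{eq:epsilon_acc_dict} is used.
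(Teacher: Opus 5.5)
Your proof is correct and is essentially the paper's argument. The paper also first establishes $\bI_{|\mX|} - \bQ \preceq \frac{\lambda}{T}(\bar{\bPsi}\bS\bS^{\top}\bar{\bPsi}^{\top} + \frac{\lambda}{T}\bI_{|\mX|})^{-1}$, phrased as $\bA\bA^{\top}(\bA\bA^{\top}+\frac{\lambda}{T}\bI_{|\mX|})^{-1} \preceq \bA\bA^{\top}(\bA\bA^{\top})^{\dagger} = \bQ$ with $\bA = \bar{\bPsi}\bS$, and then inverts the lower half of Eq.~\eqref{eq:epsilon_acc_dict} to get the factor $2$; your eigendecomposition argument just makes explicit the commutation that the paper invokes as a ``basic fact.''
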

\begin{proof}
    By leveraging the definition of matrix inverse, we have
    \begin{align}
        \bI_{|\mX|} &= \rbr{\bar{\bPsi} \bS (\bar{\bPsi} \bS)^{\top} + \frac{\lambda}{T} \bI_{|\mX|}} \rbr{\bar{\bPsi} \bS (\bar{\bPsi} \bS)^{\top} + \frac{\lambda}{T} \bI_{|\mX|}}^{-1} \\
        &= \bar{\bPsi} \bS (\bar{\bPsi} \bS)^{\top} \rbr{\bar{\bPsi} \bS (\bar{\bPsi} \bS)^{\top} + \frac{\lambda}{T} \bI_{|\mX|}}^{-1} + \frac{\lambda}{T} \rbr{\bar{\bPsi}\bS (\bar{\bPsi} \bS)^{\top} + \frac{\lambda}{T} \bI_{|\mX|}}^{-1} \\
        \label{eq:third_to_last_pseudo_inv}
        &\preceq \bar{\bPsi} \bS (\bar{\bPsi} \bS)^{\top} \rbr{\bar{\bPsi} \bS (\bar{\bPsi} \bS)^{\top}}^{\dagger} + \frac{\lambda}{T} \rbr{\bar{\bPsi} \bS (\bar{\bPsi} \bS)^{\top} + \frac{\lambda}{T} \bI_{|\mX|}}^{-1} \\
        \label{eq:sec_to_last_2lam}
        &\preceq \bar{\bPsi} \bS (\bar{\bPsi} \bS)^{\top} \rbr{\bar{\bPsi} \bS (\bar{\bPsi} \bS)^{\top}}^{\dagger} + \frac{2\lambda}{T} \rbr{\bar{\bPsi} \bar{\bPsi}^{\top} + \frac{\lambda}{T} \bI_{|\mX|}}^{-1} \\
        \label{eq:last_Q}
        &\preceq \bQ + \frac{2\lambda}{T} \rbr{\bar{\bPsi} \bar{\bPsi}^{\top} + \frac{\lambda}{T} \bI_{|\mX|}}^{-1},
    \end{align}
    where the Eq.~\eqref{eq:third_to_last_pseudo_inv} follows from the basic fact that $\bm{M}\bm{M}^{\top}(\bm{M}\bm{M}^{\top} + \lambda \bI)^{-1} \preceq \bm{M}\bm{M}^{\top}(\bm{M}\bm{M}^{\top})^{\dagger}$ is valid for any matrix $\bm{M}$ and $\lambda > 0$, and Eq.~\eqref{eq:sec_to_last_2lam} follows from Eq.~\eqref{eq:epsilon_acc_dict}. In addition, Eq.~\eqref{eq:last_Q} follows from the uniqueness of the orthogonal projection matrix by noting that $\bar{\bPsi} \bS (\bar{\bPsi} \bS)^{\top} \rbr{\bar{\bPsi} \bS (\bar{\bPsi} \bS)^{\top}}^{\dagger}$ is also orthogonal projection matrix into the column span of $\bar{\bPsi} \bS$ from the definition of the pseudo inverse. 
    The above expression is the desired statement.
\end{proof}

\begin{lemma}[Upper bound for sup-norm of $\tilde{u}_t(\cdot)$]
    \label{lem:sup-norm_utilde}
    Suppose that Assumption~\ref{asmp:regularity} holds.
    Then, for any $\bx \in \mX$ and $t \in [T]$, the approximated optimistic estimator $\tilde{u}_t$ used in RLS-kernelized Exp3 (\algoref{alg:exp3_nystrom_approx}) or its variant (\algoref{alg:exp3_nystrom_approx_mvr}) satisfies
    \begin{equation}
        |\tilde{u}_t(\bx)| \leq \frac{BT}{\lambda} + \beta \sqrt{\frac{T}{\lambda}}.
    \end{equation}
\end{lemma}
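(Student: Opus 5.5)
The bound follows from two elementary facts: $\tilde{G}_t(\lambda) \succeq \frac{\lambda}{T}\bI_{|\mX|}$ always, and $\bQ_t$ is an orthogonal projection. Neither fact uses the accuracy event $A_t$, so the bound holds for every realization of $\bS_t$. This is exactly what is needed when controlling the contribution of the rare event $A^c$.

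\emph{Bonus term.} Since $\tilde{G}_t(\lambda) \succeq \frac{\lambda}{T}\bI_{|\mX|}$, we have $\tilde{G}_t(\lambda)^{-1} \preceq \frac{T}{\lambda}\bI_{|\mX|}$. Together with $\|\psi(\bx)\|_2^2 = k(\bx,\bx) \le 1$ from Assumption~\ref{asmp:regularity}, this gives
\[
\|\psi(\bx)\|_{\tilde{G}_t(\lambda)^{-1}} \le \sqrt{T/\lambda}.
\]
Hence $\beta \|\psi(\bx)\|_{\tilde{G}_t(\lambda)^{-1}} \le \beta\sqrt{T/\lambda}$.

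\emph{Estimator term.} I use the representation $\tilde{f}_t(\bx) = \psi(\bx)^{\top} \tilde{G}_t(\lambda)^{-1} \bQ_t \psi(\bx_t) f_t(\bx_t)$ from \lemref{lem:identity_nystrom}; the same identity covers \algoref{alg:exp3_nystrom_approx_mvr}, since only $\pi$ changes there. By Cauchy--Schwarz,
\[
|\tilde{f}_t(\bx)| \le \|\psi(\bx)\|_2 \,\|\tilde{G}_t(\lambda)^{-1}\|\, \|\bQ_t\|\, \|\psi(\bx_t)\|_2\, |f_t(\bx_t)|.
\]
Each factor is controlled as follows:
\begin{itemize}
\item $\|\psi(\bx)\|_2 \le 1$ and $\|\psi(\bx_t)\|_2 \le 1$, since $k(\bx,\bx)\le 1$;
\item $\|\tilde{G}_t(\lambda)^{-1}\| \le T/\lambda$, from the lower bound on $\tilde{G}_t(\lambda)$ above;
\item $\|\bQ_t\| \le 1$, since $\bQ_t$ is an orthogonal projection with eigenvalues in $\{0,1\}$;
\item $|f_t(\bx_t)| \le \|f_t\|_k \sqrt{k(\bx_t,\bx_t)} \le B$, by the reproducing property.
\end{itemize}
Therefore $|\tilde{f}_t(\bx)| \le BT/\lambda$.

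\emph{Conclusion.} The triangle inequality on $\tilde{u}_t = \tilde{f}_t + \beta\|\psi(\cdot)\|_{\tilde{G}_t(\lambda)^{-1}}$ yields $|\tilde{u}_t(\bx)| \le \frac{BT}{\lambda} + \beta\sqrt{\frac{T}{\lambda}}$.

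The only step needing any care is confirming that the pseudo-inverse appearing in the definition of $\bQ_t$ still yields a genuine orthogonal projection, so that $\|\bQ_t\| \le 1$. This is standard: $\bm{M}(\bm{M}^{\top}\bm{M})^{\dagger}\bm{M}^{\top}$ is the orthogonal projection onto the column span of $\bm{M}$. No step is expected to be an obstacle.
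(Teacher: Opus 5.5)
Your proposal is correct and follows essentially the same route as the paper. Both arguments bound the bonus term via $\tilde{G}_t(\lambda) \succeq \frac{\lambda}{T}\bI_{|\mX|}$ and $\|\psi(\bx)\|_2 \le 1$, and bound $\tilde{f}_t$ by Cauchy--Schwarz using $\|\tilde{G}_t(\lambda)^{-1}\| \le T/\lambda$, $\|\bQ_t\| \le 1$, and $|f_t(\bx_t)| \le B$. Your version bounds $|\tilde{f}_t(\bx)|$ directly, which is slightly cleaner than the paper's one-sided bound given the absolute value in the statement.
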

\begin{proof}
    From the definition of $\tilde{f}_t(\bx)$, we have
    \begin{align}
    \tilde{f}_t(\bx) 
    &= \psi(\bx)^{\top} \tilde{G}_t(\lambda)^{-1} \tilde{\psi}(\bx_t) f_t(\bx) \\
    &\leq B \|\psi(\bx)\|_{\tilde{G}_t(\lambda)^{-1}}  \|\tilde{\psi}(\bx_t)\|_{\tilde{G}_t(\lambda)^{-1}} \\
    &\leq B \|\psi(\bx)\|_2 \|\tilde{G}_t(\lambda)^{-1}\| \|\tilde{\psi}(\bx_t)\|_2 \\
    &\leq \frac{BT}{\lambda},
\end{align}
where the first inequality is the consequence of Schwartz's inequality and $\|f_t\|_{\infty} \leq \|f_t\|_k \leq B$, and the second inequality follows from the definition of the matrix operator norm. 
Furthermore, the third inequality follows from $\|\psi(\bx)\|_2 = \sqrt{k(\bx, \bx)} \leq 1$, $\|\tilde{\psi}(\bx)\|_2 = \|\bQ_t \psi(\bx)\|_2 \leq \|\bQ_t\| \|\psi(\bx)\|_2 \leq \sqrt{k(\bx, \bx)} \leq 1$, and the fact that the minimum eigenvalue of $\tilde{G}_t(\lambda)$ is not smaller than $\lambda/T$.
In addition, similarly to the above upper bound, we have
\begin{align}
    \|\psi(\bx)\|_{\tilde{G}_t(\lambda)^{-1}} 
    \leq \sqrt{\|\tilde{G}_t(\lambda)^{-1}\|} \|\psi(\bx)\|_2 \leq \sqrt{\frac{T}{\lambda}}.
\end{align}
Thus, for any $\bx \in \mX$ and $t \in [T]$, the approximated optimistic estimator $\tilde{u}_t$ satisfies
\begin{equation}
    |\tilde{u}_t(\bx)| \leq |\tilde{f}_t(\bx)| + \beta \|\psi(\bx)\|_{\tilde{G}_t(\lambda)^{-1}} \leq \frac{BT}{\lambda} + \beta \sqrt{\frac{T}{\lambda}},
\end{equation}
which is the desired inequality.
\end{proof}

\subsection{Helper lemmas for RLS-Kernelized Exp3 with approximated MVR sequence}
\paragraph{Overview.} The lemmas in this subsection are leveraged in the proof of a variant of RLS-kernelized Exp3 using the approximated MVR sequence in \appref{app:rke3_approx_mvr}. \corref{cor:acc_comp_rls_standard} and \lemref{lem:Ppsi_Qpsi_standard} provide the approximation properties related to the standard kernel functions. We omit the proof of \corref{cor:acc_comp_rls_standard} since it directly follows by noting the fact that the effective dimension of the kernel is upper bounded by MIG (see, e.g., \cite{calandriello2019gaussian}). Furthermore, \lemref{lem:Ppsi_Qpsi_standard} also directly follows by adapting the proof of \lemref{lem:Ppsi_Qpsi} for the standard kernel instead of the weighted kernel; thus, we omit the proof of \lemref{lem:Ppsi_Qpsi_standard}. \lemref{lem:prop_approx_mvr} shows that the maximum of the posterior variance is also upper-bounded by MIG even under the approximated MVR sequence. This lemma serves as a replacement for the upper bound on the posterior variance in the exact MVR sequence in the proof. Finally, based on the result of \lemref{lem:Ppsi_Qpsi_standard}, we can prove \lemref{lem:ub_ut_tilde_approx_mvr}, which serves as a replacement for \lemref{lem:ub_ut_tilde} in the proof of original RLS-kernelized Exp3.

\begin{corollary}[Accuracy and computational cost of recursive RLS-Nystr\"om with standard kernel, adapted from Theorem~8 in \citep{musco2017recursive}]
    \label{cor:acc_comp_rls_standard}
    Fix any $T \in \N_+$, $\lambda \geq 1$, $\delta \in (0, 1/32)$, any finite input set $\bX \coloneqq \{\bx^{(1)}, \ldots, \bx^{(T)}\}$, and any positive definite kernel function $k: \mX \times \mX \rightarrow \R$. Furthermore, let $\psi(\cdot): \mX \rightarrow \R^{|\mX|}$ be a feature map of $k$ such that $k(\bx, \bx') = \psi(\bx)^{\top} \psi(\bx')$ holds for any $\bx, \bx' \in \mX$. Then, when running \algoref{algo:recursive_RLS} with the inputs $(\bX,~k,~\lambda,~\delta)$, the following three statements simultaneously hold with probability at least $1 - 3\delta$:
    \begin{enumerate}
        \item The number of sampled columns $\tilde{s} \in [T]$ satisfies $\tilde{s} \leq s_{\mathrm{max}}(\gamma_T, \delta)$, where $s_{\mathrm{max}}(w, z) = 384(w+1) \log((w+1)/z)$.
        \item The computational time of \algoref{algo:recursive_RLS} is at most $O(T s_{\mathrm{max}}^2(\gamma_T, \delta))$.
        \item The weighed sampling matrix $\tilde{\bS} \in \R^{T \times \tilde{s}}$ returned by \algoref{algo:recursive_RLS} satisfies the following:
        \begin{equation}
            \label{eq:standard_epsilon_acc_dict}
            \frac{1}{2}\rbr{\bPsi \bPsi^{\top} + \lambda \bI_{|\mX|}} \preceq \bPsi \tilde{\bS} \tilde{\bS}^{\top} \bPsi^{\top} + \lambda \bI_{|\mX|} \preceq \frac{3}{2} \rbr{\bPsi \bPsi^{\top} + \lambda \bI_{|\mX|}},
        \end{equation}
        where $\bPsi = (\psi(\bx^{(1)}), \ldots, \psi(\bx^{(T)})) \in \R^{|\mX| \times T}$.
    \end{enumerate}
    In statements 1 and 2, $\gamma_T \coloneqq \gamma_T(\lambda, \mX)$ denotes the maximum information gain of kernel $k$ on $\mX$ with the variance parameter $\lambda$.
\end{corollary}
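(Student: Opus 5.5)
The plan is to apply Theorem~8 of \citep{musco2017recursive} unchanged, exactly as in the proof of \lemref{lem:acc_comp_rls}, but to the unweighted Gram matrix of the $T$ inputs. The only new ingredient is a bound on the effective dimension by the MIG. I can reach the stated constant only up to a factor $2$, and I say below exactly where that factor comes from.

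\textbf{Setup.} Let $\bK \coloneqq \bK(\bX,\bX) = \bPsi^{\top}\bPsi \in \R^{T\times T}$ and define
\begin{equation*}
d_{\mathrm{eff}}(\bK) \coloneqq \mathrm{Tr}\rbr{\bK(\bK+\lambda\bI_T)^{-1}}.
\end{equation*}
Running \algoref{algo:recursive_RLS} on $(\bX,k,\lambda,\delta)$, Theorem~8 of \citep{musco2017recursive} gives the following with probability at least $1-3\delta$:
\begin{enumerate}
\item $\tilde{s}\le s_{\max}(d_{\mathrm{eff}}(\bK),\delta)$;
\item the running time is $O\rbr{T\, s_{\max}^2(d_{\mathrm{eff}}(\bK),\delta)}$;
\item $\frac12(\bK+\lambda\bI_T)\preceq \bK^{1/2}\tilde{\bS}\tilde{\bS}^{\top}\bK^{1/2}+\lambda\bI_T \preceq \frac32(\bK+\lambda\bI_T)$ in its original form.
\end{enumerate}
Statement~3 is converted into \eqref{eq:standard_epsilon_acc_dict} by the same congruence-transfer argument as in \lemref{lem:acc_comp_rls}. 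Concretely, write $\bPsi=\bm{U}\bm{\Sigma}\bm{V}^{\top}$ and use $\bK^{1/2}=\bm{V}\bm{\Sigma}\bm{V}^{\top}$ together with \lemref{lem:rewrite_inv}. The two-sided spectral bound on the $T$-dimensional side then becomes the bound on the $|\mX|$-dimensional side; the orthogonal complement of the column span of $\bPsi$ contributes only $\lambda$ on every term. Since $s_{\max}(w,z)$ is increasing in $w$, statements~1 and~2 follow once $d_{\mathrm{eff}}(\bK)$ is controlled.

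\textbf{Bounding $d_{\mathrm{eff}}(\bK)$.} I will use a leave-one-out identity:
\begin{equation*}
[\lambda(\bK+\lambda\bI_T)^{-1}]_{ii}=\frac{\lambda}{\lambda+\sigma^2_{-i}}, \qquad \sigma^2_{-i}\coloneqq\sigma^2(\bx^{(i)};\bX_{\setminus i},\lambda).
\end{equation*}
This follows from the Schur complement applied to $\bK+\lambda\bI_T$. Summing over $i$ gives
\begin{equation*}
d_{\mathrm{eff}}(\bK)=\sum_{i\in[T]}\frac{\sigma^2_{-i}}{\lambda+\sigma^2_{-i}}.
\end{equation*}
The set $\bX_{\setminus i}$ contains $\bX^{(i-1)}$, and the posterior variance is monotone in the training inputs; moreover $b\mapsto b/(1+b)$ is increasing. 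Hence, writing $b_i\coloneqq\sigma^2(\bx^{(i)};\bX^{(i-1)},\lambda)/\lambda$, we get $d_{\mathrm{eff}}(\bK)\le\sum_i b_i/(1+b_i)$. On the other side, the chain rule for $\log\det$ gives $\log\det(\bI_T+\lambda^{-1}\bK)=\sum_i\log(1+b_i)$.

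\textbf{Main obstacle.} What remains is comparing $\sum_i b_i/(1+b_i)$ with $\gamma_T=\sup\tfrac12\log\det(\bI_T+\lambda^{-1}\bK)$. The elementary inequality $b/(1+b)\le\log(1+b)$ for $b\ge 0$ gives only
\begin{equation*}
d_{\mathrm{eff}}(\bK)\le\log\det(\bI_T+\lambda^{-1}\bK)\le 2\gamma_T .
\end{equation*}
The $\tfrac12$ in this paper's definition of the mutual information cannot be recovered termwise. Since $k(\bx,\bx)\le1$ and $\lambda\ge1$ force $b_i\in[0,1]$, I would first try a sharper constant $c<1$ in $b/(1+b)\le c\log(1+b)$ on $[0,1]$. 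No such $c<1$ exists, because the ratio tends to $1$ as $b\to0$.

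The fallback is therefore to read $s_{\max}(\gamma_T,\delta)$ in the convention of \citep{calandriello2019gaussian}, where the information gain is $\log\det(\bI+\lambda^{-1}\bK)$ without the factor $\tfrac12$. In that convention the displayed chain gives $d_{\mathrm{eff}}(\bK)\le\gamma_T$ directly. I expect this constant bookkeeping to be the only delicate point: with this paper's own definition of $\gamma_T$, the argument yields statements~1 and~2 only with $s_{\max}(2\gamma_T,\delta)$, which is weaker than the stated bound.
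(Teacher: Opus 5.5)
Your route is the paper's route. The paper omits the proof and says only that the corollary follows from Theorem~8 of \citep{musco2017recursive} once the effective dimension is bounded by the MIG, citing \citep{calandriello2019gaussian}; that is exactly your reduction. Your SVD transfer of the spectral guarantee in statement~3 is valid, and is harmless even if Theorem~8 is already phrased in feature space, as \lemref{lem:acc_comp_rls} treats it. Your leave-one-out chain is also correct, though the detour is unnecessary: the eigenvalue-wise inequality $\mu/(\mu+\lambda)\le\log(1+\mu/\lambda)$ gives $d_{\mathrm{eff}}(\bK)\le\log\det(\bI_T+\lambda^{-1}\bK)$ directly.

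Your factor-$2$ caveat is correct, and it is not an artifact of your method. Under this paper's normalization $\gamma_T=\sup\frac12\log\det(\bI_T+\lambda^{-1}\bK)$, the inequality $d_{\mathrm{eff}}(\bK)\le\gamma_T$ that the paper's one-line justification invokes can fail.

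Take $k(\bx,\bx')=\1\{\bx=\bx'\}$, $\lambda=1$, and $T$ distinct inputs. Then $\bK=\bI_T$, so $d_{\mathrm{eff}}(\bK)=T/2$, while $\gamma_T=\frac{T}{2}\log 2$. The value of $\gamma_T$ holds because a multiset with multiplicities $m_j$ gives $\sum_j\log(1+m_j)\le T\log 2$. Letting $\lambda\to\infty$ in the same example, the ratio $d_{\mathrm{eff}}(\bK)/\gamma_T\to 2$, so your constant is sharp.

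Consequently, Theorem~8 together with monotonicity of $s_{\max}$ yields statements~1 and~2 only with $s_{\max}(2\gamma_T,\delta)$. Equivalently, they hold with $\gamma_T$ measured in the un-halved $\log\det$ convention of \citep{calandriello2019gaussian}. The statement as printed is not supported by the cited argument.

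This does not affect anything downstream. Statement~3 is unchanged, and \lemref{lem:prop_approx_mvr} and \thmref{thm:reg_rke3_approx_mvr} use statements~1--2 only through $\tilde{O}(\gamma_t)$ cost bounds.
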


\begin{lemma}[Approximation accuracy of multiplicative form for the standard kernel matrix, adapted from the proof of Lemma~6 in \citep{calandriello2019gaussian}]
    \label{lem:Ppsi_Qpsi_standard}
    Assume that the sampling matrix $\tilde{\bS} \in \R^{|\mX| \times s}$ satisfies Eq.~\eqref{eq:standard_epsilon_acc_dict}.
    Then, under the same notations in \corref{cor:acc_comp_rls_standard}, the following relation holds:
    \begin{equation}
        \frac{1}{3}\rbr{ \tilde{\bQ} \bPsi (\tilde{\bQ} \bPsi)^{\top} + \lambda \bI_{|\mX|}} \preceq \sum_{\bx \in \mX} \psi(\bx) \psi(\bx)^{\top} + \lambda \bI_{|\mX|} \preceq 3 \rbr{ \tilde{\bQ} \bPsi (\tilde{\bQ} \bPsi)^{\top} + \lambda \bI_{|\mX|}},
    \end{equation}
    where $\tilde{\bQ} \coloneqq \bPsi \tilde{\bS} \sbr{(\bPsi \tilde{\bS})^{\top} \bPsi \tilde{\bS}}^{\dagger} (\bPsi \tilde{\bS})^{\top}$ is the projection matrix into the subspace spanned by the column vectors of $\bPsi \tilde{\bS}$.
\end{lemma}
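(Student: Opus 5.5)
We need to prove Lemma~\ref{lem:Ppsi_Qpsi_standard}: a two-sided Loewner relation between $\bPsi\bPsi^{\top}+\lambda\bI_{|\mX|}$ and $\tilde{\bQ}\bPsi(\tilde{\bQ}\bPsi)^{\top}+\lambda\bI_{|\mX|}$. (The middle term written as $\sum_{\bx\in\mX}\psi(\bx)\psi(\bx)^{\top}$ should be read as $\bPsi\bPsi^{\top}=\sum_{i}\psi(\bx^{(i)})\psi(\bx^{(i)})^{\top}$, consistent with Eq.~\eqref{eq:standard_epsilon_acc_dict}.)

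The plan is to repeat the argument of Lemma~\ref{lem:Ppsi_Qpsi} with $\bar{\bPsi}$ replaced by $\bPsi$ and $\lambda/T$ replaced by $\lambda$, and then add a separate, easier argument for the new left-hand inequality.

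\emph{Upper inequality.} I would use two facts about $\tilde{\bQ}$: it is an orthogonal projection onto the column span of $\bPsi\tilde{\bS}$, so $\tilde{\bQ}\bPsi\tilde{\bS}=\bPsi\tilde{\bS}$; and $\tilde{\bQ}\preceq\bI_{|\mX|}$. Rearranging the upper half of Eq.~\eqref{eq:standard_epsilon_acc_dict} gives
\[
\bPsi\bPsi^{\top}\succeq\tfrac{2}{3}\bPsi\tilde{\bS}\tilde{\bS}^{\top}\bPsi^{\top}-\tfrac{\lambda}{3}\bI_{|\mX|}.
\]
Conjugating by $\tilde{\bQ}$ preserves the Loewner order. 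Using $\tilde{\bQ}\bPsi\tilde{\bS}=\bPsi\tilde{\bS}$ and $\tilde{\bQ}\tilde{\bQ}^{\top}=\tilde{\bQ}$, this yields
\[
\tilde{\bQ}\bPsi\bPsi^{\top}\tilde{\bQ}\succeq\tfrac{2}{3}\bPsi\tilde{\bS}\tilde{\bS}^{\top}\bPsi^{\top}-\tfrac{\lambda}{3}\tilde{\bQ}.
\]
The lower half of Eq.~\eqref{eq:standard_epsilon_acc_dict} gives $\bPsi\tilde{\bS}\tilde{\bS}^{\top}\bPsi^{\top}\succeq\tfrac12\bPsi\bPsi^{\top}-\tfrac{\lambda}{2}\bI$. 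Substituting this and using $\tilde{\bQ}\preceq\bI$, we get
\[
\tilde{\bQ}\bPsi(\tilde{\bQ}\bPsi)^{\top}\succeq\tfrac13\bPsi\bPsi^{\top}-\tfrac{2\lambda}{3}\bI.
\]
Adding $\lambda\bI$ to both sides and multiplying by $3$ gives $\bPsi\bPsi^{\top}+\lambda\bI\preceq 3(\tilde{\bQ}\bPsi(\tilde{\bQ}\bPsi)^{\top}+\lambda\bI)$.

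\emph{Lower inequality.} I would show directly that $\tilde{\bQ}\bPsi\bPsi^{\top}\tilde{\bQ}\preceq\bPsi\bPsi^{\top}+\lambda\bI$, which gives the claim with room to spare. The relevant positive semidefinite expression is
\[
\bPsi\bPsi^{\top}-\tilde{\bQ}\bPsi\bPsi^{\top}\tilde{\bQ}.
\]
A projection does not dominate this in general, so I would not expect this difference to be positive semidefinite on its own.

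Instead I would chain comparisons. Projecting can only shrink the sampled part: $\tilde{\bQ}\bPsi\tilde{\bS}\tilde{\bS}^{\top}\bPsi^{\top}\tilde{\bQ}=\bPsi\tilde{\bS}\tilde{\bS}^{\top}\bPsi^{\top}$. Conjugating the lower half of Eq.~\eqref{eq:standard_epsilon_acc_dict} by $\tilde{\bQ}$ gives
\[
\tilde{\bQ}\bPsi\bPsi^{\top}\tilde{\bQ}\preceq 2\bPsi\tilde{\bS}\tilde{\bS}^{\top}\bPsi^{\top}+\lambda\tilde{\bQ}.
\]
The upper half of Eq.~\eqref{eq:standard_epsilon_acc_dict} gives $2\bPsi\tilde{\bS}\tilde{\bS}^{\top}\bPsi^{\top}\preceq 3\bPsi\bPsi^{\top}+\lambda\bI$. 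Hence
\[
\tilde{\bQ}\bPsi(\tilde{\bQ}\bPsi)^{\top}+\lambda\bI\preceq 3\bPsi\bPsi^{\top}+3\lambda\bI,
\]
which is exactly the left inequality with constant $1/3$.

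The main obstacle is the lower inequality: the naive route through $\bPsi\bPsi^{\top}-\tilde{\bQ}\bPsi\bPsi^{\top}\tilde{\bQ}\succeq 0$ is false in general. The key step is to route the comparison through the sampled matrix $\bPsi\tilde{\bS}\tilde{\bS}^{\top}\bPsi^{\top}$, which $\tilde{\bQ}$ fixes. The rest is bookkeeping of constants that mirrors Lemma~\ref{lem:Ppsi_Qpsi}.
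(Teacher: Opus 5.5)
Your proof is correct, and the upper inequality is exactly the paper's argument for Lemma~\ref{lem:Ppsi_Qpsi} with $\bar{\bPsi}\to\bPsi$ and $\lambda/T\to\lambda$. That is all the paper's ``adapt the proof'' remark actually covers, since Lemma~\ref{lem:Ppsi_Qpsi} proves only one direction.

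Your lower-inequality argument correctly supplies the missing direction. You conjugate the lower half of Eq.~\eqref{eq:standard_epsilon_acc_dict} by $\tilde{\bQ}$, use $\tilde{\bQ}\bPsi\tilde{\bS}=\bPsi\tilde{\bS}$, and then apply the upper half together with $\tilde{\bQ}\preceq\bI$.

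One presentational point: what you actually establish is $\tilde{\bQ}\bPsi\bPsi^{\top}\tilde{\bQ}\preceq 3\bPsi\bPsi^{\top}+2\lambda\bI$. This is not the stronger $\tilde{\bQ}\bPsi\bPsi^{\top}\tilde{\bQ}\preceq\bPsi\bPsi^{\top}+\lambda\bI$ you first announce. The weaker bound is exactly what the constant $1/3$ requires, so you should state it as your intermediate claim from the start.
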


\begin{lemma}[Property of the approximated MVR-sequence]
\label{lem:prop_approx_mvr}
    Fix any finite input domain $\mX$. Fix any $T \in \N_+$, $\lambda \geq 0$, $\delta \in (0, \min\{1, T/32\})$, any positive definite kernel $k: \mX \times \mX \rightarrow \R$ with $\forall \bx \in \mX, k(\bx, \bx) \leq 1$. 
    Then, when running \algoref{algo:approx_mvr} with the input $(\mX, k, \lambda, T, \delta)$, the following event holds with probability at least $1 - 3\delta$:
    \begin{equation}
        \label{eq:prop_approx_mvr}
        \max_{\bx \in \mX} \sigma^2\rbr{\bx; \tilde{\bX}_{T}^{(\text{MVR})}, \lambda} \leq \frac{36 \lambda \gamma_{T}}{T}.
    \end{equation}
\end{lemma}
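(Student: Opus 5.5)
We follow the exact MVR argument used in the proof of \lemref{lem:ub_ut}, and absorb the Nystr\"om error with a multiplicative constant at two places.

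\textbf{Good event.} For each $t \in [T]$, let $E_t$ be the event that the three statements of \corref{cor:acc_comp_rls_standard} hold for $\tilde{\bS}_t$, computed on $\tilde{\bX}_{t-1}^{(\mathrm{MVR})}$ with confidence $\delta/T$. Each $E_t$ has probability at least $1 - 3\delta/T$, so a union bound gives $\Pr(\cap_t E_t) \geq 1 - 3\delta$. We work on $\cap_t E_t$ from now on.

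\textbf{Step 1: $\bar{\sigma}^2$ is within constants of $\sigma^2$.} Fix $t$ and $\bx$.
\begin{itemize}
\item Repeating the matrix calculation in \lemref{lem:identity_nystrom}, with unit weights and regularizer $\lambda$ in place of $\lambda/T$, shows that Eq.~\eqref{eq:nystrom_approx_mvr_expression} equals $\psi(\bx)^{\top}(\tilde{\bQ}_t\bPsi_{t-1}(\tilde{\bQ}_t\bPsi_{t-1})^{\top} + \lambda \bI)^{-1}\psi(\bx)$. Here $\bPsi_{t-1}$ is the feature matrix of $\tilde{\bX}_{t-1}^{(\mathrm{MVR})}$ and $\tilde{\bQ}_t$ is the associated projection.
\item By \lemref{lem:feature_map_gp}, $\sigma^2(\bx;\tilde{\bX}_{t-1}^{(\mathrm{MVR})},\lambda)/\lambda = \psi(\bx)^{\top}(\bPsi_{t-1}\bPsi_{t-1}^{\top}+\lambda\bI)^{-1}\psi(\bx)$.
\item The two-sided bound of \lemref{lem:Ppsi_Qpsi_standard} then gives
\[
\tfrac{1}{3}\,\sigma^2(\bx;\tilde{\bX}_{t-1}^{(\mathrm{MVR})},\lambda) \leq \lambda\,\bar{\sigma}^2(\bx;\tilde{\bX}_{t-1}^{(\mathrm{MVR})},\tilde{\bS}_t,\lambda) \leq 3\,\sigma^2(\bx;\tilde{\bX}_{t-1}^{(\mathrm{MVR})},\lambda).
\]
\end{itemize}
This is the step I expect to be the main obstacle. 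The identification of $\bar{\sigma}^2$ must be checked carefully, because the sums in Eq.~\eqref{eq:nystrom_approx_mvr_expression} run over the training points rather than over $\mX$, and the scaling by $1/\lambda$ must be tracked. The lower half of the bound must also be verified. It should follow from $\tilde{\bQ}_t\bPsi_{t-1}(\tilde{\bQ}_t\bPsi_{t-1})^{\top} \preceq \bPsi_{t-1}\bPsi_{t-1}^{\top}$, since $\tilde{\bQ}_t$ is an orthogonal projection; that inequality alone already gives a factor $1$ in place of $1/3$.

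\textbf{Step 2: approximate greedy is near-greedy.} Since $\tilde{\bx}_t^{(\mathrm{MVR})}$ maximizes $\bar{\sigma}^2$, Step 1 yields, for every $\bx\in\mX$,
\[
\sigma^2(\bx;\tilde{\bX}_{t-1}^{(\mathrm{MVR})},\lambda) \leq 3\lambda\,\bar{\sigma}^2(\bx) \leq 3\lambda\,\bar{\sigma}^2(\tilde{\bx}_t^{(\mathrm{MVR})}) \leq 9\,\sigma^2(\tilde{\bx}_t^{(\mathrm{MVR})};\tilde{\bX}_{t-1}^{(\mathrm{MVR})},\lambda).
\]

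\textbf{Step 3: conclude via information gain.} Fix $\bx$. The posterior variance is monotone in the training set, so $\sigma^2(\bx;\tilde{\bX}_T^{(\mathrm{MVR})},\lambda) \leq \sigma^2(\bx;\tilde{\bX}_{t-1}^{(\mathrm{MVR})},\lambda)$ for every $t \leq T$. Averaging over $t$ and applying Step 2,
\[
\sigma^2(\bx;\tilde{\bX}_T^{(\mathrm{MVR})},\lambda) \leq \frac{9}{T}\sum_{t=1}^T \sigma^2(\tilde{\bx}_t^{(\mathrm{MVR})};\tilde{\bX}_{t-1}^{(\mathrm{MVR})},\lambda) \leq \frac{9}{T}\cdot\frac{2\gamma_T}{\log(1+\lambda^{-1})},
\]
where the last inequality is Eq.~\eqref{eq:sigma_2_cum_ub}. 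For $\lambda\geq1$ we have $2/\log(1+\lambda^{-1}) \leq 4\lambda$, exactly as in \lemref{lem:est_var}, so the right-hand side is at most $36\lambda\gamma_T/T$. Taking the maximum over $\bx$ gives Eq.~\eqref{eq:prop_approx_mvr}.

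The statement only assumes $\lambda \geq 0$. I would use $\lambda \geq 1$, consistent with how the lemma is applied, to obtain the constant $36$.
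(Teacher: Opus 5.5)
Your proof is the paper's argument step for step. It uses the same union bound over the $T$ Nystr\"om calls, the same two-sided comparison via \lemref{lem:Ppsi_Qpsi_standard}, and the same near-greedy chain with constant $9$. It then closes the same way, with monotonicity, averaging, Eq.~\eqref{eq:sigma_2_cum_ub} and $\log(1+a)\geq a/2$; the last step needs $\lambda\geq 1$ in the paper as well.

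Your scaling is the correct one. Eq.~\eqref{eq:nystrom_approx_mvr_expression} equals $\psi(\bx)^{\top}(\tilde{\bQ}_t\bPsi_{t-1}(\tilde{\bQ}_t\bPsi_{t-1})^{\top}+\lambda\bI)^{-1}\psi(\bx)$ with no prefactor $\lambda$. So the comparison is with $\lambda\bar{\sigma}^2$, and the paper's extra $\lambda$ is harmless only because it cancels in the chain.

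The one thing that would fail is your suggested verification of the lower half. You are right that it needs checking, since the paper only writes out the other direction. However, conjugating a PSD matrix by an orthogonal projection does not decrease it in Loewner order.

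Take $\bPsi_{t-1}=(\bm{e}_1,(\bm{e}_1+\bm{e}_2)/\sqrt{2})$ and $\tilde{\bQ}_t=\bm{e}_1\bm{e}_1^{\top}$, the projection onto the first column. Then
$$\bPsi_{t-1}\bPsi_{t-1}^{\top}-\tilde{\bQ}_t\bPsi_{t-1}\bPsi_{t-1}^{\top}\tilde{\bQ}_t=\begin{pmatrix}0&1/2\\1/2&1/2\end{pmatrix},$$
which is indefinite. With $\lambda=1$ and $\psi(\bx)=\bm{e}_1$ (unit-norm features throughout), you get $\sigma^2/\lambda=3/7>2/5=\bar{\sigma}^2$, so the factor $1$ is false.

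The lower half genuinely uses both sides of the accuracy event Eq.~\eqref{eq:standard_epsilon_acc_dict}. Write $\bPsi,\tilde{\bS},\tilde{\bQ}$ for $\bPsi_{t-1},\tilde{\bS}_t,\tilde{\bQ}_t$. The left side gives $\bPsi\bPsi^{\top}\preceq 2\bPsi\tilde{\bS}\tilde{\bS}^{\top}\bPsi^{\top}+\lambda\bI$. Conjugate this by $\tilde{\bQ}$, then use $\tilde{\bQ}\bPsi\tilde{\bS}=\bPsi\tilde{\bS}$, $\tilde{\bQ}\preceq\bI$ and the right side:
\[
\tilde{\bQ}\bPsi\bPsi^{\top}\tilde{\bQ}\preceq 2\bPsi\tilde{\bS}\tilde{\bS}^{\top}\bPsi^{\top}+\lambda\tilde{\bQ}\preceq 3\bPsi\bPsi^{\top}+2\lambda\bI .
\]
Hence $(\bPsi\bPsi^{\top}+\lambda\bI)^{-1}\preceq 3(\tilde{\bQ}\bPsi\bPsi^{\top}\tilde{\bQ}+\lambda\bI)^{-1}$, which is exactly the factor $3$ your Step 2 needs.
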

\begin{proof}
    Let $\bPsi_t = (\psi(\tilde{\bx}_1^{(\mathrm{MVR})}), \ldots, \psi(\tilde{\bx}_t^{(\mathrm{MVR})})) \in \R^{|\mX| \times t}$ be a feature matrix defined by the approximated MVR-sequence. Furthermore, let $\tilde{\bS}_t \in \R^{t \times \tilde{s}_t}$ be the sampling matrix returned by the recursive RLS-Nystr\"om algorithm at round $t$ of \algoref{algo:approx_mvr} (Line 3).
    Then, from \corref{cor:acc_comp_rls_standard} and the union bound, we can confirm that the following statement holds with probability at least $1 - 3\delta$:
    \begin{equation}
        \label{eq:union_eps_acc}
        \forall t \in [T],~
        \frac{1}{2}\rbr{\bPsi_{t-1} \bPsi_{t-1}^{\top} + \lambda \bI_{|\mX|}} \preceq \bPsi_{t-1} \tilde{\bS}_{t-1} \tilde{\bS}_{t-1}^{\top} \bPsi_{t-1}^{\top} + \lambda \bI_{|\mX|} \preceq \frac{3}{2} \rbr{\bPsi_{t-1} \bPsi_{t-1}^{\top} + \lambda \bI_{|\mX|}}.
    \end{equation}
    Therefore, it is enough to show the desired statement under the above event.
    Henceforth, we suppose that event \eqref{eq:union_eps_acc} holds in the following arguments.
    By using \lemref{lem:Ppsi_Qpsi_standard}, we can obtain the following inequality:
    \begin{align}
        \bar{\sigma}^2\rbr{\bx; \tilde{\bX}_{t-1}^{(\mathrm{MVR})}, \tilde{\bS}_t, \lambda} &= \lambda \psi(\bx)^{\top} \rbr{ \tilde{\bQ}_t \bPsi_{t-1} (\tilde{\bQ}_t \bPsi_{t-1})^{\top} + \lambda \bI_{|\mX|}}^{-1} \psi(\bx) \\
        &\leq 3 \lambda \psi(\bx)^{\top} \rbr{ \bPsi_{t-1} \bPsi_{t-1}^{\top} + \lambda \bI_{|\mX|}}^{-1} \psi(\bx) \\
        &= 3 \sigma^2\rbr{\bx; \tilde{\bX}_{t-1}^{(\mathrm{MVR})}, \lambda},
        \end{align}
    where $\tilde{\bQ}_t = \bPsi_{t-1} \tilde{\bS}_t \sbr{(\bPsi_{t-1} \tilde{\bS}_t)^{\top} \bPsi_{t-1} \tilde{\bS}_t}^{\dagger} (\bPsi_{t-1} \tilde{\bS}_t)^{\top}$. Note that the identity of $\psi(\bx)^{\top} \rbr{ \tilde{\bQ}_t \bPsi_{t-1} (\tilde{\bQ}_t \bPsi_{t-1})^{\top} + \lambda \bI_{|\mX|}}^{-1} \psi(\bx)$ and Eq.~\eqref{eq:nystrom_approx_mvr_expression} is straightforwardly obtained by the proof of \lemref{lem:identity_nystrom} as the special case.
    Similar to the above inequalities, we can also obtain the converse inequality $\sigma^2\rbr{\bx; \tilde{\bX}_{t-1}^{(\mathrm{MVR})}, \lambda} \leq 3\bar{\sigma}^2\rbr{\bx; \tilde{\bX}_{t-1}^{(\mathrm{MVR})}, \tilde{\bS}_t, \lambda}$ by using \lemref{lem:Ppsi_Qpsi_standard}. Thus, for any $\bx \in \mX$ and $t \in [T]$, we have
    \begin{align}
        \label{eq:barsigma_sigma_rel}
        \frac{1}{3} \sigma^2\rbr{\bx; \tilde{\bX}_{t-1}^{(\mathrm{MVR})}, \lambda} \leq \bar{\sigma}^2\rbr{\bx; \tilde{\bX}_{t-1}^{(\mathrm{MVR})}, \lambda} \leq 3 \sigma^2\rbr{\bx; \tilde{\bX}_{t-1}^{(\mathrm{MVR})}, \lambda}.
    \end{align}
    Here, note that the following statement also holds from the definition of $\tilde{\bx}_t^{(\text{MVR})}$:
    \begin{equation}
        \label{eq:barsigma_xt_rel}
        \forall t \in [T],~\max_{\bx \in \mX} \bar{\sigma}^2\rbr{\bx; \tilde{\bX}_{t-1}^{(\mathrm{MVR})}, \tilde{\bS}_t, \lambda} \leq \bar{\sigma}^2\rbr{\tilde{\bx}_t^{(\mathrm{MVR})}; \tilde{\bX}_{t-1}^{(\mathrm{MVR})}, \tilde{\bS}_t, \lambda}.
    \end{equation}
    By combining Eq.~\eqref{eq:barsigma_sigma_rel} with Eq.~\eqref{eq:barsigma_xt_rel}, we have
    \begin{align}
        \label{eq:apmvr_mean}
        \max_{\bx \in \mX} \sigma^2\rbr{\bx; \tilde{\bX}_{T}^{(\text{MVR})}, \lambda} 
        &\leq \frac{1}{T} \sum_{t=1}^T \max_{\bx \in \mX} \sigma^2\rbr{\bx; \tilde{\bX}_{t-1}^{(\text{MVR})}, \lambda} \\
        &\leq \frac{3}{T} \sum_{t=1}^T \max_{\bx \in \mX} \bar{\sigma}^2\rbr{\bx; \tilde{\bX}_{t-1}^{(\text{MVR})}, \tilde{\bS}_t, \lambda} \\
        &\leq \frac{3}{T} \sum_{t=1}^T \bar{\sigma}^2\rbr{\tilde{\bx}_t^{(\mathrm{MVR})}; \tilde{\bX}_{t-1}^{(\text{MVR})}, \tilde{\bS}_t, \lambda} \\
        &\leq \frac{9}{T} \sum_{t=1}^T \sigma^2\rbr{\tilde{\bx}_t^{(\mathrm{MVR})}; \tilde{\bX}_{t-1}^{(\text{MVR})}, \lambda} \\
        \label{eq:apmvr_mig}
        &\leq \frac{18 \gamma_T}{T \log (1 + \lambda^{-1})} \\
        \label{eq:apmvr_elm}
        &\leq \frac{36 \lambda \gamma_T}{T},
    \end{align}
    where Eq.~\eqref{eq:apmvr_mean} uses the monotonicity of the posterior variance with respect to the training input of the GP, Eq.~\eqref{eq:apmvr_mig} follows from Eq.~\eqref{eq:sigma_2_cum_ub}, and Eq.~\eqref{eq:apmvr_elm} follows from the elementary inequality $\forall a \in [0, 1], \log(1 + a) \geq a/2$.
\end{proof}

\begin{lemma}[Uniform upper bound of the approximated optimistic estimator $\tilde{u}_t$ with approximated MVR sequence]
\label{lem:ub_ut_tilde_approx_mvr}
    Fix any $T \in \N_+$, $\alpha \in (0, 1)$, $\eta > 0$, $\beta > 0$, and $\lambda > 0$. Fix any finite input domain $\mX$.
    Suppose that \asmpref{asmp:regularity} holds. Then, when running \algoref{alg:exp3_nystrom_approx_mvr}, under event $\cap_{t \in [T]} A_t$ (see Eq.~\eqref{eq:thm_rke3_proof_eps_acc}) and $\max_{\bx \in \mX} \sigma^2\rbr{\bx; \tilde{\bX}_{\lceil T\alpha \rceil}^{(\text{MVR})}, \lambda} \leq \frac{36 \lambda \gamma_{\lceil T\alpha \rceil}}{\lceil T\alpha \rceil}$ (see Eq.~\eqref{eq:prop_approx_mvr}), the following upper bound of the approximated optimistic estimator $\tilde{u}_t$ holds for any $t \in [T]$ and $\bx \in \mX$:
    \begin{equation}
        \tilde{u}_t(\bx) \leq \frac{36 \sqrt{6} B \gamma_{T}}{\alpha} + \beta \sqrt{\frac{108 \gamma_{T}}{\alpha}}
        \leq \frac{1}{\alpha} \rbr{36\sqrt{6} B \gamma_{T} + \beta \sqrt{108\gamma_T}}.
    \end{equation}
\end{lemma}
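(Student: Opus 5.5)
The plan is to copy the proof of \lemref{lem:ub_ut_tilde}. The only change is in how we bound $\max_{\bx'}\|\psi(\bx')\|_{G_t(\lambda)^{-1}}^2$: the exact MVR sequence is replaced by the approximated one. The approximation-related bounds carry over verbatim, because under $\cap_{t\in[T]}A_t$ they use only \lemref{lem:Ppsi_Qpsi} and \lemref{lem:ort_proj_rel}, and neither involves the exploration distribution. Concretely, we split $\tilde{f}_t(\bx)=\psi(\bx)^\top\tilde G_t(\lambda)^{-1}\psi(\bx_t)f_t(\bx_t)-\psi(\bx)^\top\tilde G_t(\lambda)^{-1}(\bI_{|\mX|}-\bQ_t)\psi(\bx_t)f_t(\bx_t)$ and bound the two pieces as before. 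The first piece is at most $3B\max_{\bx'}\|\psi(\bx')\|^2_{G_t(\lambda)^{-1}}$ and the second at most $\sqrt6 B\max_{\bx'}\|\psi(\bx')\|^2_{G_t(\lambda)^{-1}}$. The bonus term satisfies $\beta\|\psi(\bx)\|_{\tilde G_t(\lambda)^{-1}}\le\sqrt3\beta\max_{\bx'}\|\psi(\bx')\|_{G_t(\lambda)^{-1}}$. Hence
\[
\tilde u_t(\bx)\le (3+\sqrt6)B\,M_t+\sqrt3\,\beta\sqrt{M_t},\qquad M_t\coloneqq\max_{\bx'\in\mX}\|\psi(\bx')\|^2_{G_t(\lambda)^{-1}} .
\]
(The earlier lemma compresses the coefficient $3+\sqrt6$ to $\sqrt6$ in its final display. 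To be safe, I would keep $3+\sqrt6\le 2\sqrt6$ and check that this still fits under the stated constant.)

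Next we bound $M_t$ by repeating the chain in Eqs.~\eqref{eq:psi_ub_first}--\eqref{eq:psi_mvr_sigma}, with $\tilde\pi$ in place of $\pi$. Since $P_t\ge\alpha\tilde\pi$ and $\lambda/\lceil T\alpha\rceil\le\lambda/(T\alpha)$, we get
\[
\|\psi(\bx')\|^2_{G_t(\lambda)^{-1}}\le\frac{\lceil T\alpha\rceil}{\alpha\lambda}\,\sigma^2\bigl(\bx';\tilde\bX^{(\mathrm{MVR})}_{\lceil T\alpha\rceil},\lambda\bigr).
\]
Applying the assumed event Eq.~\eqref{eq:prop_approx_mvr} with $T$ replaced by $\lceil T\alpha\rceil$, and then $\gamma_{\lceil T\alpha\rceil}\le\gamma_T$, gives $M_t\le 36\gamma_T/\alpha$.

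Finally we substitute: $\tilde u_t(\bx)\le 2\sqrt6\,B\cdot 36\gamma_T/\alpha+\sqrt3\,\beta\sqrt{36\gamma_T/\alpha}$. The bonus part equals $\beta\sqrt{108\gamma_T/\alpha}$, as required. Using $\alpha\in(0,1)$, so $1/\sqrt\alpha\le1/\alpha$, yields the second displayed inequality.

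The main obstacle is the constant bookkeeping in the $B$ term. With the honest coefficient $3+\sqrt6$, multiplying by $36\gamma_T/\alpha$ overshoots $36\sqrt6$. I would first try to sharpen the first-piece bound, using $|\psi(\bx)^\top\tilde G^{-1}\psi(\bx_t)|\le\|\psi(\bx)\|_{\tilde G^{-1}}\|\psi(\bx_t)\|_{\tilde G^{-1}}$ together with the fact that the relevant combination can be merged. If that fails, the fallback is to follow the paper's convention from \lemref{lem:ub_ut_tilde} and absorb the discrepancy into the leading constant: only the order matters for the regret theorem, and $\alpha$ can be enlarged by a constant factor without changing it.
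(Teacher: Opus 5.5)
Your outline is the paper's: bound $\tilde{f}_t$ and the bonus by $M_t$, then use $P_t\ge\alpha\tilde{\pi}$ and the assumed approximate-MVR event to get $M_t\le 36\gamma_T/\alpha$. That second half is fine. You are also right that the display the paper's proof invokes, Eq.~\eqref{eq:ub_ut_tilde_psi_max_ub}, does not follow from the two piece-bounds in the proof of \lemref{lem:ub_ut_tilde}, because the $3BM_t$ piece is dropped. However, your proposal does not close the $B$-term either, so as written it does not prove the stated inequality. First, $3+\sqrt6\le 2\sqrt6$ is false, since $3>\sqrt6$. Second, even granting it, your substitution gives $72\sqrt6\,B\gamma_T/\alpha$, twice the target. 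Third, the sharpening you sketch also falls short. Because $\bQ_t$ commutes with $\tilde{G}_t(\lambda)$, you can merge the two pieces to $|\tilde{f}_t(\bx)|\le B\|\psi(\bx)\|_{\tilde{G}_t(\lambda)^{-1}}\|\psi(\bx_t)\|_{\tilde{G}_t(\lambda)^{-1}}$. But \lemref{lem:Ppsi_Qpsi} then costs a factor $3$, giving $108B\gamma_T/\alpha$, which exceeds $36\sqrt6\,B\gamma_T/\alpha\approx 88.2\,B\gamma_T/\alpha$. Finally, absorbing constants proves a different, weaker lemma, which suffices for \thmref{thm:reg_rke3_approx_mvr} only after enlarging $\alpha$; it is not the statement asked.

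The missing idea is to compare with $G_t(\lambda)^{-1}$ directly, rather than through $\tilde{G}_t(\lambda)^{-1}\preceq 3G_t(\lambda)^{-1}$. Two facts hold: $\bQ_t\tilde{G}_t(\lambda)=\tilde{G}_t(\lambda)\bQ_t$, and $\bQ_t\tilde{G}_t(\lambda)\bQ_t=\bQ_t\bar{\bPsi}_t\bar{\bPsi}_t^{\top}\bQ_t+\frac{\lambda}{T}\bQ_t=\bQ_tG_t(\lambda)\bQ_t$. Together they give $\tilde{G}_t(\lambda)^{-1}\bQ_t=\bQ_t\tilde{G}_t(\lambda)^{-1}\bQ_t=(\bQ_tG_t(\lambda)\bQ_t)^{\dagger}$. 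The block-inverse (Schur complement) formula then gives $(\bQ_tG_t(\lambda)\bQ_t)^{\dagger}\preceq G_t(\lambda)^{-1}$: the inverse of a compression is dominated by the compression of the inverse. Cauchy--Schwarz in this positive semidefinite form yields
\[
|\tilde{f}_t(\bx)|\le B\,\|\psi(\bx)\|_{G_t(\lambda)^{-1}}\|\psi(\bx_t)\|_{G_t(\lambda)^{-1}}\le BM_t,
\]
and this step does not even use $A_t$. Combine it with your bonus bound $\beta\|\psi(\bx)\|_{\tilde{G}_t(\lambda)^{-1}}\le\sqrt3\,\beta\sqrt{M_t}$ and with $M_t\le 36\gamma_T/\alpha$. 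This gives $\tilde{u}_t(\bx)\le 36B\gamma_T/\alpha+\beta\sqrt{108\gamma_T/\alpha}$, which implies the stated inequality. The same observation also repairs the constant in \lemref{lem:ub_ut_tilde}.
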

\begin{proof}
    By applying the proof of \lemref{lem:ub_ut_tilde}, we can confirm the following inequality (see Eq.~\eqref{eq:ub_ut_tilde_psi_max_ub}).
    \begin{align}
        \tilde{u}_t(\bx) 
        \leq \sqrt{6} B \max_{\bx' \in \mX} \|\psi(\bx')\|_{G_t(\lambda)^{-1}}^2 + \sqrt{3} \beta \max_{\bx' \in \mX} \|\psi(\bx')\|_{G_t(\lambda)^{-1}}.
    \end{align}
    Then, we obtain the upper bound of $\|\psi(\bx')\|_{G_t(\lambda)^{-1}}^2$ as 
    \begin{align}
        \|\psi(\bx')\|_{G_t(\lambda)^{-1}}^2
        &\leq
        \frac{\lceil T\alpha \rceil}{\alpha \lambda} \sigma^2\rbr{\bx'; \bX_{\lceil T\alpha \rceil}^{(\text{MVR})}, \lambda} \\
        &\leq \frac{36 \gamma_{T}}{\alpha},
    \end{align}
    where the first inequality follows from the same arguments provided in Eqs.~\eqref{eq:psi_ub_first}--\eqref{eq:psi_mvr_sigma}, and the second inequality follows from event $\max_{\bx \in \mX} \sigma^2\rbr{\bx; \tilde{\bX}_{\lceil T\alpha \rceil}^{(\text{MVR})}, \lambda} \leq \frac{36 \lambda \gamma_{\lceil T\alpha \rceil}}{\lceil T\alpha \rceil}$.
    Hence, we obtain
    \begin{equation}
        \label{eq:ub_ut_tilde_last}
        \tilde{u}_t(\bx) \leq \frac{36 \sqrt{6} B \gamma_{T}}{\alpha} + \beta \sqrt{\frac{108 \gamma_{T}}{\alpha}}
        \leq \frac{1}{\alpha} \rbr{36\sqrt{6} B \gamma_{T} + \beta \sqrt{108\gamma_T}}.
    \end{equation}
\end{proof}

\subsection{Helper lemma for lower bounds}

\begin{lemma}[Application of the lower bounds for stochastic setting, adapted from Theorem 2 in \citep{cai2021on}]
    \label{lem:lower}
    Fix any $d \in \N_+$, $B, \ell, \nu, \sigma^2 > 0$ and $\mX = [0, 1]^d$. 
    Fix kernel function $k: \mX \times \mX \rightarrow \R$. Assume that $B$, $d$, $\ell$, and $\nu$ are $\Theta(1)$. 
    Furthermore, let us consider the adversarial KB problem with $f_t(\cdot) \coloneqq f(\cdot) + \tilde{\eta}_t k(\bm{0}, \cdot)$, where $f \in \mF_{k}(B) \coloneqq \{f \in \mH_{k} \mid \|f\|_k \leq B\}$ and $\tilde{\eta}_t \sim \mN(0, \sigma^2)$. In addition, we assume $\sigma^2 = O(T)$ with a sufficiently small implied constant.
    Then, for any algorithm, there exists a function $f \in \mF_k(B)$ such that
    \begin{equation}
        \sup_{\bx \in \mX} \Ep\sbr{\sum_{t=1}^T f(\bx) - f(\bx_t)} =
        \begin{cases}
            \Omega\rbr{\sqrt{T\sigma^2 (\log T)^{d/2}}}~~&\mathrm{if}~~k = \sek, \\
            \Omega\rbr{\sigma^{\frac{2\nu}{2\nu + d}} T^{\frac{\nu +d}{2\nu+d}}}~~&\mathrm{if}~~k = \matk.
        \end{cases}
    \end{equation}
\end{lemma}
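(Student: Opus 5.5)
The plan is to reduce Lemma~\ref{lem:lower} to the known lower bounds for stochastic GP bandits with Gaussian observation noise (Theorem~2 in \citep{cai2021on}, building on \citep{scarlett2017lower}). The key observation is about what the learner sees. At round $t$ the learner observes $f_t(\bx_t) = f(\bx_t) + \tilde{\eta}_t k(\bm{0}, \bx_t)$. This is a noisy evaluation of $f$ whose noise is Gaussian and independent of the past, with conditional variance $\sigma^2 k(\bm{0},\bx_t)^2 \le \sigma^2$, because $k(\bm{0},\bx)\in(0,1]$ for the SE and Mat\'ern kernels. The noise is therefore heteroscedastic rather than exactly $\mN(0,\sigma^2)$, and the whole argument will have to accommodate this.

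First I would fix the hard function class used in \citep{cai2021on,scarlett2017lower}: a packing of $M$ bump functions $f_1,\ldots,f_M \in \mF_k(B)$ with disjoint supports, each of height $\Delta$. The number of bumps is chosen as a function of $(T,\sigma^2)$ so that each $f_m$ has RKHS norm at most $B$. The bound on the regret compared with $\bx^\ast$ then follows by the usual route. The regret is at least $\Delta$ times the expected number of rounds spent outside the bump region of $f_m$, averaged over $m$. I would control this via KL divergences between the observation laws under $f_m$ and under the zero function $f_0$.

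The main step is the per-round KL bound. Conditionally on $\mH_{t-1}$ and $\bx_t$, the two observation distributions are Gaussians with the same variance $\sigma^2 k(\bm{0},\bx_t)^2$ and means differing by $f_m(\bx_t)$. Their divergence is therefore $f_m(\bx_t)^2 / (2\sigma^2 k(\bm{0},\bx_t)^2)$. This is the main obstacle: the factor $k(\bm{0},\bx_t)^{-2}$ can make the divergence larger than in the homoscedastic case. However, on $\mX=[0,1]^d$ with $\ell=\Theta(1)$ and $d=\Theta(1)$, the quantity $k(\bm{0},\bx)$ is bounded below by a positive constant $c_k$. For example, $\sek(\bm{0},\bx)\ge \exp(-d/(2\ell^2))$, and the Mat\'ern kernel is continuous and positive on a compact set, so a similar constant lower bound holds. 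Hence the divergence is at most $c_k^{-2} f_m(\bx_t)^2/(2\sigma^2)$. This is exactly the homoscedastic bound with $\sigma^2$ replaced by $c_k^2\sigma^2$, a change by a constant factor only.

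With this bound, the chain-rule/Pinsker (or Bretagnolle--Huber) argument of \citep{cai2021on} goes through verbatim with effective noise level $\Theta(\sigma^2)$. Optimizing $\Delta$ and the packing size yields $\Omega(\sqrt{T\sigma^2(\log(T/\sigma^2))^{d/2}})$ for SE and $\Omega(\sigma^{2\nu/(2\nu+d)}T^{(\nu+d)/(2\nu+d)})$ for Mat\'ern. The hypothesis $\sigma^2=O(T)$ with a small constant is needed at two points: it keeps the bump width below the domain size, and it ensures $\log(T/\sigma^2)=\Theta(\log T)$ in the regime used, which gives the stated SE form.

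Two further remarks close the reduction. A learner that additionally knew $\tilde{\eta}_t$ could not do worse, but that is not needed here. Randomized algorithms are handled as usual by conditioning on internal randomness. Finally, the regret of $f$ against $\sup_{\bx}$ coincides with the quantity bounded in the stochastic lower bound, since the maximizer of each $f_m$ is attained on the compact domain $\mX$.
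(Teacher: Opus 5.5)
Your proposal is correct and follows essentially the paper's argument. The paper likewise reruns the proof of Theorem~2 in \citep{cai2021on}, changing only the per-region KL term to $\max_{\bx\in\mR_j}(f(\bx)-f'(\bx))^2/(2\sigma^2k^2(\bm{0},\bx))$. It bounds this via $k(\bm{0},\bx)\ge k(\bm{0},\bm{1})$ on $[0,1]^d$, which amounts to a homoscedastic instance with noise variance $\sigma^2k^2(\bm{0},\bm{1})$ (your $c_k^2\sigma^2$).

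One small correction to your closing remark: $\sigma^2=O(T)$ alone does not give $\log(T/\sigma^2)=\Theta(\log T)$. For $\sigma^2=cT$, the stated SE form $\sqrt{T\sigma^2(\log T)^{d/2}}$ would even exceed the trivial $2BT$ bound. What the argument actually yields is the $\log(T/\sigma^2)$ form used in the main text. That form dominates the $(\log T)^{d/2}$ form whenever $\sigma^2\le 1$, which covers the application $\sigma^2=\Theta(1/\log T)$.
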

The above lemma is obtained with a subtle modification of the proof of Theorem~2 in \citep{cai2021on}. Thus, we omit the full proof while providing a description of the required modification. The only part that we require modification in the proof of Theorem~2 in \citep{cai2021on} is the upper bound of Kullback–Leibler (KL) divergence in the change of measure argument. In this section, hereafter, we use the same notations as those in \citep{cai2021on}. When we apply the proof of \citep{cai2021on}, the maximum KL divergence term: $\bar{D}_{f, f'}^{j}$ under the two functions $f, f'$ and some partition $\mR_j \subset \mX$ indexed with $j$ becomes different from the original proof. Since they study the Gaussian noise model whose variance $\sigma^2$ is homogeneous over the input domain, the term $\bar{D}_{f, f'}^{j}$ is given as 
\begin{equation}
    \bar{D}_{f, f'}^{j} = \max_{\bx \in \mR_j} \frac{(f(\bx) - f(\bx'))^2}{2\sigma^2}.
\end{equation}
See Lemma 1 and Eqs.~(9) and (19) in \citep{cai2021on}.
On the other hand, the term $\bar{D}_{f, f'}^{j}$ in our model becomes the following form due to the input dependence of the noise term $\tilde{\eta}_t k(\bm{0}, \cdot)$:
\begin{equation}
    \bar{D}_{f, f'}^{j} = \max_{\bx \in \mR_j} \frac{(f(\bx) - f'(\bx))^2}{2 \sigma^2 k^2(\bm{0}, \bx)}. 
\end{equation}
The upper bound of the above term $\bar{D}_{f, f'}^{j}$ can be further obtained as 
\begin{equation}
    \bar{D}_{f, f'}^{j} \leq \max_{\bx \in \mR_j} \frac{(f(\bx) - f'(\bx))^2}{2 \sigma^2 k^2(\bm{0}, \bm{1})},
\end{equation}
where $\bm{1} = (1, \ldots, 1) \in \R^{d}$. 
By using the above upper bound instead of the exact value of the KL divergence $\frac{(f(\bx) - f'(\bx))^2}{2\sigma^2}$, we can obtain the desired lower bound by directly following the proof of \citep{cai2021on}. Specifically, the resulting lower bound matches the lower bound in \citep{cai2021on} with the smaller noise variance $\sigma^2 k^2(\bm{0}, \bm{1})$. Note that $k^2(\bm{0}, \bm{1})$ depends on only $d$ and $\ell$, which are $\Theta(1)$; thus, we can recover the stochastic KB lower bound up to $(d, \ell)$-dependent constant factors.

\section{Additional discussion about exploration distribution}
\label{app:exp_dist_discuss}
Our algorithm uses the empirical distribution of the MVR sequence as the exploration distribution $\pi$. 
In this section, we discuss the other possible choices of $\pi$.

\paragraph{Exploration distribution based on generalized G-optimal design.} The role of exploration distribution in our proof is to obtain Lemmas~\ref{lem:ub_ut} and \ref{lem:ub_ut_tilde}, which are required for applying Exp3 analysis by guaranteeing $\forall \bx \in \mX, \eta u_t(\bx) \leq 1$ and $\forall \bx \in \mX, \eta \tilde{u}_t(\bx) \leq 1$~(see Lemma~\ref{lem:exp3_standard}).
Although we used the MVR-based empirical distribution to obtain Lemmas~\ref{lem:ub_ut} and \ref{lem:ub_ut_tilde}, this is not the only choice. The requirement of $\pi(\cdot)$ for extending our proof of Lemmas~\ref{lem:ub_ut} and \ref{lem:ub_ut_tilde} is that the upper bound of the following supremum of the quadratic form increases with $O(\gamma_{\lceil T \alpha\rceil})$:
\begin{equation}
    \label{eq:necessary_cond_for_pi}
    \sup_{\bx \in \mX} \psi(\bx) \rbr{\sum_{\bx' \in \mX} \pi(\bx') \psi(\bx') \psi(\bx')^{\top} + \frac{\lambda}{T\alpha} \bI_{|\mX|}}^{-1} \psi(\bx) = O(\gamma_{\lceil T \alpha \rceil}).
\end{equation}
See Eqs.~\eqref{eq:psi_ub_first}--\eqref{eq:psi_mvr_sigma_ub} in the proof of Lemma~\ref{lem:ub_ut}. 
From the above fact, the natural choice is to choose the following distribution $\pi^{\ast}$ by minimizing the left-hand side of the above expression:
\begin{equation}
    \pi^{\ast} \in \argmin_{P \in \mP_{\mX}}\sup_{\bx \in \mX} \psi(\bx) \rbr{\sum_{\bx' \in \mX} P(\bx') \psi(\bx') \psi(\bx')^{\top} + \frac{\lambda}{T\alpha} \bI_{|\mX|}}^{-1} \psi(\bx),
\end{equation}
where $\mP_{\mX} \coloneqq \{P: \mX \rightarrow [0, 1] \mid \sum_{\bx \in \mX} P(\bx) = 1\}$ denotes the set of the all probability mass function on $\mX$. This $\pi^{\ast}$ is the generalization of G-optimal design~\citep[e.g., Chap.~21 in][]{lattimore2020bandit} and is used in stochastic KB literature~\citep{camilleri2021high}. Even if we replace $\pi$ with $\pi^{\ast}$ in Algs.~\ref{alg:exp3} and \ref{alg:exp3_nystrom_approx}, we can obtain the same regret guarantees as those in Theorems~\ref{thm:reg_exp3} and \ref{thm:reg_exp3_nystrom_approx}, and \corref{coro:ke3_reg_tuned}. 
Then, the resulting algorithm with $\pi^{\ast}$ can be interpreted as the kernelized extension of Exp3 with Kiefer-Wolfowitz exploration in adversarial LB~(Alg.~3 in \cite{zimmert2022return}).
The drawback of this approach is the computational cost of $\pi^{\ast}$. To our knowledge, even in the calculation of the near-optimal approximation of $\pi^{\ast}$, the total computational cost is $\Omega(|\mX|^3)$.
For example, the well-known iterative method based on the Frank-Wolfe algorithm~\citep[e.g., Chap.~21 in ][]{lattimore2020bandit} requires $O(|\mX| \log \log |\mX|)$ iterations with $O(|\mX|^3)$ per-iteration complexity\footnote{Specifically, even if we rely on the one-rank update formula of the matrix inverse, the calculation of the maximum of the quadratic term per-iteration requires $O(|\mX|^2)$-calculation of the quadratic term for all input $\bx \in \mX$, which leads to $O(|\mX|^3)$ per-iteration complexity.}. 
For typical KB problem setting with $|\mX| \gg T$, the calculation cost of $\pi^{\ast}$ is strictly worse than $O(|\mX|(T\gamma_T)^{3/2})$ computational cost of MVR-based exploration distribution $\pi$.

\paragraph{Uniform distribution.} 
Another possible choice is the uniform distribution $\pi_{\mathrm{unif}}(\cdot) \coloneqq 1/|\mX|$. If we can use $\pi_{\mathrm{unif}}$, the computational cost for exploration distribution prior to the algorithm run is completely omitted.
In this case, by applying \lemref{lem:est_var} to the quadratic term in Eq.~\eqref{eq:necessary_cond_for_pi}, we observe that 
\begin{equation}
    \sup_{\bx \in \mX} \psi(\bx) \rbr{\sum_{\bx' \in \mX} \pi_{\mathrm{unif}}(\bx') \psi(\bx') \psi(\bx')^{\top} + \frac{\lambda}{T\alpha} \bI_{|\mX|}}^{-1} \psi(\bx) \leq \frac{\lceil T\alpha \rceil}{\lambda} \sup_{\bx \in \mX} \Ep_{\bX_{\text{unif}}}[\sigma^2(\bx; \bX_{\text{unif}}, \lambda)],
\end{equation}
where $\bX_{\text{unif}} \coloneqq (\bx_{\text{unif}}^{(1)}, \ldots, \bx_{\text{unif}}^{(\lceil T \alpha \rceil)})$ with $\bx_{\text{unif}}^{(1)}, \ldots, \bx_{\text{unif}}^{(\lceil T \alpha \rceil)} \sim_{\mathrm{i.i.d.}} \pi_{\mathrm{unif}}$. Thus, if $\sup_{\bx \in \mX} \Ep_{\bX_{\text{unif}}}[\sigma^2(\bx; \bX_{\text{unif}}, \lambda)] = O\rbr{\frac{\lambda \gamma_{\lceil T\alpha \rceil}}{\lceil T\alpha \rceil}}$, we can verify the condition \eqref{eq:necessary_cond_for_pi}. The examination for the upper bound $\sup_{\bx \in \mX} \Ep_{\bX_{\text{unif}}}[\sigma^2(\bx; \bX_{\text{unif}}, \lambda)] = O\rbr{\frac{\lambda \gamma_{\lceil T\alpha \rceil}}{\lceil T\alpha \rceil}}$ is found in the stochastic KB literature~\citep{salgiarandom}. \citet{salgiarandom} show that $\sup_{\bx \in \mX} \sigma^2(\bx; \bX_{\text{unif}}, \lambda) = O\rbr{\frac{\lambda \gamma_{\lceil T\alpha \rceil}}{\lceil T\alpha \rceil}}$ asymptotically holds with high probability under the \emph{uniform boundedness assumption} of the eigenfunction of the kernel with respect to $\pi_{\mathrm{unif}}$. However, to our knowledge, the validity of the uniform boundedness assumption for commonly-used kernels in KB (such as SE or $\nu$-Mat\'ern kernels) is unclear\footnote{To our knowledge, the examples of the kernels for which the uniform boundedness assumption is rigorously verified are $1/2$-Mat\'ern kernel with $d = 1$ (discussed in \citep{janz2022sequential}) and Hilbert-space approximation of SE and $\nu$-Mat\'ern kernels~\citep{riutort2023practical,solin2020hilbert}.}. For example, see discussion in Chap.~4.4 in \citep{janz2022sequential}. Thus, we believe that further careful theoretical examination is desired for the utilization of $\pi_{\text{unif}}$.

\section{Simulation experiments}
\label{app:sim_experiment}
We conduct simulation experiments to confirm the empirical behavior of our algorithms. 

\paragraph{Input domain, kernel, and strategy of the environment.} We conduct the following $4$ settings of input domain $\mX$:
\begin{itemize}
    \item \textbf{Setting 1.} We set $\mX$ as $400$ uniformly spaced grid points of $[0, 1]$. 
    \item \textbf{Setting 2.} We set $\mX$ as $20 \times 20$ uniformly spaced grid points of $[0, 1]^2$. 
    \item \textbf{Setting 3.} We set $\mX$ as $10000$ uniformly spaced grid points of $[0, 1]$. 
    \item \textbf{Setting 4.} We set $\mX$ as $100 \times 100$ uniformly spaced grid points of $[0, 1]^2$. 
\end{itemize}
For each setting of input domain, we conduct experiments with three kernel functions: (i)SE kernel, (ii) $\nu$-Mat\'ern kernel with $\nu = 5/2$, and (iii) $\nu$-Mat\'ern kernel with $\nu = 3/2$. For each kernel, we set the lengthscale parameter $\ell$ as $\ell = 0.3\sqrt{d}$. To define $f_t$ in the experiment, we consider the fully-adversarial environment. Specifically, with some pre-defined candidate set $\mF \subset \mF_k(B)$ and the learner's sampling distribution $P_t$, we assume that the environment chooses $f_t$ so that the learner's conditional expected instantaneous regret becomes highest, as follows: $f_t \in \max_{f \in \mF} (\max_{\bx \in \mX} f(\bx) - \sum_{\bx' \in \mX} P_t(\bx') f(\bx'))$. We define the candidate set $\mF$ as $\mF = (f^{(i)})_{i \in [1000]}$ with $f^{(i)}(\cdot) = \min\{1, B/\|\tilde{f}^{(i)}\|_k\} \tilde{f}^{(i)}(\cdot)$ and $\tilde{f}^{(i)}(\cdot) = \sum_{m=1}^{100} c^{(m, i)} k(\bx^{(m, i)}, \cdot)$, where $\bx^{(m, i)} \sim \mathrm{Uniform}(\mX)$ and $c^{(m, i)} \sim \mathrm{Uniform}(-1, 1)$ are independent random variables. Note that we can calculate $\|\tilde{f}^{(i)}\|_k$ as $\|\tilde{f}^{(i)}\|_k = \sqrt{\sum_{m, m'}c^{(m,i)}c^{(m',i)}k(\bx^{(m,i)}, \bx^{(m',i)}) }$ under the definition of $\tilde{f}^{(i)}$. Regarding the RKHS norm upper bound $B$, we set $B = 2$.

\paragraph{Algorithms.} We compare the following algorithms:

\begin{itemize}
    \item \textbf{Random.} At each round $t$, this algorithm draws $\bx_t$ uniformly at random from $\mX$.
    \item \textbf{APG-Exp3.} This is the linear approximated version of Exp3 algorithm proposed in \citep{takemori2021approximation}. We use the G-optimal design $\tilde{\pi}^{\ast} \coloneqq \min_{P \in \mP_{\mX}} \max_{\bx \in \mX}\|\bm{N}(\bx)\|_{(\sum_{\bx' \in \mX}P(\bx') \bm{N}(\bx') \bm{N}(\bx')^{\top})^{-1}}$ as the exploration distribution, where $\bm{N}(\bx) \in \R^{D_T}$ is the feature vector based on the Newton basis functions. To calculate $\tilde{\pi}^{\ast}$, we use the Frank-Wolfe algorithm~\citep{fedorov2013theory}.
    The basis function $\bm{N}(\bx)$ and $D_T$ are determined in the P-greedy algorithm as described in \citep{takemori2021approximation}. We fix the admissible error $\mathfrak{e}$ of the P-greedy algorithm as the theoretically suggested value $\mathfrak{e} = (\log |\mX|)/T$.
    Regarding the choice of the learning rate $\eta > 0$ and mixing ratio $\alpha \in (0, 1)$\footnote{The parameter $\alpha$ is defined as notation $\gamma$ in \citep{takemori2021approximation}.}, we set $\eta = c_1 / \sqrt{D_T T}$ and $\alpha = c_2 \eta D_T$ for some constants $c_1 > 0$ and $c_2 > 0$. See \citep{takemori2021approximation} for details. Here, these settings are based on the theoretically suggested diminishing rate of $\eta$ and $\alpha$ for achieving $\tilde{O}(\sqrt{TD_T})$ regret proved in \citep{takemori2021approximation}. 
    For each kernel, we fix the constants $c_1$ and $c_2$ based on the average performance over $10$ different seeds under setting 1 described above, with $T = 50$. Specifically, for each kernel, we choose the constants $c_1$ and $c_2$ by grid search across $c_1, c_2 \in \{0.1, 0.5, 1.0, 5.0, 10.0\}$. 
    The selected settings of $c_1, c_2$ are $(c_1, c_2) = (0.5, 0.1)$, $(c_1, c_2) = (0.5, 0.1)$, and $(c_1, c_2) = (0.5, 0.1)$ for the SE, $5/2$-Mat\'ern, and $3/2$-Mat\'ern kernels, respectively.
    \item \textbf{Kernelized Exp3.} This is the kernelized Exp3 algorithm proposed in \secref{sec:kernelized_exp3}. Regarding the choice of the parameters, based on the theoretically suggested values in \thmref{thm:reg_exp3}, 
    we set $\eta = c_3 / \sqrt{T \bar{\gamma}_T}$ and $\alpha = c_4 \eta \bar{\gamma}_T$ with some constants $c_3 > 0$ and $c_4 > 0$. Here, $\bar{\gamma}_T$ denotes a known growth rate upper bound on MIG, which is defined as $\bar{\gamma}_T = (\log T)^{d+1}(\log \log T)^{-d}$ and $\bar{\gamma}_T = T^{\frac{d}{2\nu+d}} (\log T)^{\frac{4\nu+d}{2\nu+d}}$ for $k = \sek$ and $k = \matk$, respectively~\citep{iwazaki2025improved,iwazaki2026tighter}. For each kernel, to choose constants $c_3$, $c_4$, and the regularization parameter $\lambda \geq 1$, we conduct the grid search across $c_3, c_4 \in \{0.1, 0.5, 1.0, 5.0, 10.0\}$ and $\lambda \in \{1.0, 5.0, 10.0\}$ as with the parameter selections of APG-Exp3. Here, we set the confidence width parameter $\beta$ as the theoretically suggested value $\beta \coloneqq B\sqrt{\lambda/T}$ in \thmref{thm:reg_exp3}. Here, we omit the full experiments on kernelized Exp3 for the large-scale input domain setting (settings 3 and 4) due to its high computational cost; however, as a baseline for the computational time, we report the estimated per-round computational time for kernelized Exp3 by running the algorithm for only 1 round.
    The selected settings of $c_3, c_4, \lambda$ are $(c_3, c_4, \lambda) = (5.0, 0.1, 1.0)$, $(c_3, c_4, \lambda) = (10.0, 0.1, 10.0)$, and $(c_3, c_4, \lambda) = (5.0, 0.1, 5.0)$ for the SE, $5/2$-Mat\'ern, and $3/2$-Mat\'ern kernels, respectively.
    \item \textbf{RLS-Kernelized Exp3.} This is the RLS-based kernelized Exp3 algorithm proposed in \secref{sec:kernelized_exp3}. Regarding the choice of $\eta$, $\alpha$, and $\gamma$, we use the same settings as used in kernelized Exp3.
    We set the confidence width parameter $\beta$ as $\beta = B (1 + \sqrt{2})\sqrt{\lambda/T}$ as suggested in \thmref{thm:reg_exp3_nystrom_approx}. Furthermore, we fix $\delta = 0.1$.
\end{itemize}

\paragraph{Resource for computation.} Our experiments are conducted by AMD EPYC 7702P 64-core processor with 16 GB RAM. 

\paragraph{Results.} We report the average regret and per-round computational time over $10$ trials with different seeds. \tabref{tab:results} shows the results.
Firstly, we confirm that the regret of our proposed algorithms is substantially lower than that of other baseline methods. Secondly, we confirm that the computational time of our algorithms is worse than that of the baseline algorithms. 
However, in the large input domain $|\mX| = 10000$~(settings 3 and 4), the per-round cost of RLS-kernelized Exp3 is improved compared with the estimated per-round cost of the original kernelized Exp3.

In our experiments, we confirm that the computational time of RLS-kernelized Exp3 for SE kernel is worse than that of $5/2$-Mat\'ern and $3/2$-Mat\'ern. 
This phenomenon appears to contradict our theoretical guarantees, 
where the kernel with the small information gain leads to small computational cost of RLS-kernelized Exp3. However, as described in the algorithm setting, the regularization parameter $\lambda = 1.0$ chosen by the grid search for SE-kernel are smaller than $\lambda=10.0$ and $\lambda=5.0$ for $5/2$-Mat\'ern and $3/2$-Mat\'ern kernels.
Therefore, the computational time for SE kernel in our experiments exhibits larger computational time than those for $5/2$-Mat\'ern and $3/2$-Mat\'ern kernels, since the small regularization parameter $\lambda$ leads to the large information gain $\gamma_T$.
We also found the computational cost of RLS-kernelized Exp3 exceeds that of APG-Exp3. We observe that the reason for this phenomenon is the large constant factor for $\tilde{O}(|\mX|\gamma_T^2)$-computation of the recursive Nystr\"om algorithm provided in \lemref{lem:acc_comp_rls}.

\begin{landscape}
\begin{table}[]
    \centering
    \caption{Experimental results for the average regret and per-round computational cost with $T \in \{50, 100, 150\}$. The abbreviations K-Exp3 and RLS-K-Exp3 in the table denote our kernelized Exp3 and RLS-kernelized Exp3 algorithms, respectively. All numerical values are rounded to one decimal place. 
    Two standard errors are shown in parentheses. Note that we do not conduct experiments of kernelized Exp3 in settings 3 and 4, while we report the estimated per-round computational time by running the algorithm only $1$ round.
    }
    \begin{tabular}{llllllllllll}
    \toprule
                           & & \multicolumn{1}{c}{} & \multicolumn{3}{c}{SE Kernel} & \multicolumn{3}{c}{5/2-Matern} & \multicolumn{3}{c}{3/2-Matern} \\
                           & &                     & $T=50$ & $T=100$ & $T=150$ & $T=50$  & $T=100$ & $T=150$  & $T=50$  & $T=100$ & $T=150$ \\ \hline \hline
 \multirow{16}{*}{Regret} & \multirow{4}{*}{Setting 1} & Random               &  89.2 (2.8) & 178.0 (4.3) & 268.1 (4.4) & 84.4 (2.6) & 168.3 (3.7) & 253.6 (4.1) & 78.1 (4.0) & 156.4 (7.6) & 234.4 (7.2) \\
                          & & APG-Exp3                   &  28.2 (7.7) & 42.9 (9.9) & 60.7 (10.7) & 28.0 (5.5) & 48.2 (8.0) & 69.3 (4.5) & 23.5 (4.3) & 42.9 (4.9) & 60.6 (6.4) \\
                          & & K-Exp3            & 18.8 (3.0) & 22.4 (10.3) & 34.9 (13.9) & 16.1 (4.7) & 32.7 (8.5) & 41.5 (6.1) & 14.8 (4.7) & 28.2 (7.9) & 42.9 (11.7) \\
                          & & RLS-K-Exp3        & 20.4 (5.8) & 27.9 (4.8) & 43.7 (11.3) & 19.1 (4.7) & 31.0 (5.6) & 51.9 (6.5) & 14.7 (6.7) & 24.5 (7.9) & 41.4 (13.2) \\ \cmidrule(lr){2-12}
& \multirow{4}{*}{Setting 2} & Random         & 79.6 (2.3) & 158.2 (2.8) & 235.1 (3.0) & 75.0 (2.1) & 149.2 (2.6) & 221.5 (2.8) & 72.4 (2.0) & 144.0 (2.4) & 213.7 (2.7) \\
                          & & APG-Exp3                    & 35.8 (7.6) & 50.8 (7.5) & 69.0 (10.2) & 29.1 (7.5) & 55.5 (13.6) & 73.1 (12.3) & 36.6 (9.3) & 80.5 (12.6) & 129.6 (19.1) \\
                         & & K-Exp3      &   18.4 (4.0) & 25.6 (4.5) & 40.3 (7.7) & 12.9 (3.2) & 22.6 (5.9) & 26.3 (8.2) & 13.3 (4.4) & 25.0 (8.5) & 24.2 (8.2) \\
                          & & RLS-K-Exp3  &  16.1 (3.1) & 32.6 (5.5) & 50.3 (10.4) & 11.1 (4.5) & 21.5 (6.0) & 20.9 (5.7) & 16.8 (4.0) & 26.0 (8.2) & 29.9 (6.5) \\ \cmidrule(lr){2-12}
& \multirow{4}{*}{Setting 3} & Random                      &  89.2 (2.8) & 178.1 (4.3) & 268.2 (4.4) & 84.5 (2.6) & 168.4 (3.7) & 253.7 (4.1) & 81.8 (2.5) & 163.0 (3.5) & 245.5 (3.9)  \\
                         &  & APG-Exp3                   & 27.4 (6.4) & 45.4 (6.7) & 65.9 (8.2) & 31.0 (4.8) & 47.1 (8.0) & 67.7 (11.4) & 23.5 (5.8) & 41.5 (6.4) & 62.6 (11.1) \\
                          & & K-Exp3      &    \multicolumn{9}{c}{N/A}   \\
                          & & RLS-K-Exp3  &      20.3 (5.4) & 26.0 (10.3) & 37.4 (13.0) & 26.7 (5.2) & 37.6 (8.4) & 57.8 (11.6) & 21.9 (5.7) & 44.8 (11.2) & 44.7 (9.1) \\ \cmidrule(lr){2-12}
& \multirow{4}{*}{Setting 4} & Random                     &  76.4 (2.5) & 152.7 (4.8) & 229.2 (5.5) & 72.9 (3.4) & 145.3 (3.4) & 219.4 (3.3) & 70.4 (3.3) & 140.4 (3.3) & 211.9 (3.2) \\
                          & & APG-Exp3                    & 35.8 (3.9) & 73.0 (7.3) & 95.3 (14.7) & 33.1 (5.1) & 61.9 (7.6) & 116.6 (12.1) & 38.8 (5.7) & 87.3 (7.7) & 132.4 (13.0) \\
                          & & K-Exp3      &     \multicolumn{9}{c}{N/A}     \\
                          & & RLS-K-Exp3  &   18.7 (5.2) & 30.1 (8.8) & 48.9 (10.8) & 18.3 (4.5) & 35.6 (7.2) & 38.0 (5.4) & 18.0 (5.6) & 24.0 (4.0) & 28.4 (7.7)  \\ \hline
 & \multirow{4}{*}{Setting 1} & Random     & 0.0 (0.0) & 0.0 (0.0) & 0.0 (0.0) & 0.0 (0.0) & 0.0 (0.0) & 0.0 (0.0) & 0.0 (0.0) & 0.0 (0.0) & 0.0 (0.0)  \\
                          & & APG-Exp3                   &  0.0 (0.0) & 0.0 (0.0) & 0.0 (0.0) & 0.0 (0.0) & 0.0 (0.0) & 0.0 (0.0) & 0.0 (0.0) & 0.0 (0.0) & 0.0 (0.0)   \\
                          & & K-Exp3      &   0.0 (0.0) & 0.0 (0.0) & 0.0 (0.0) & 0.0 (0.0) & 0.0 (0.0) & 0.0 (0.0) & 0.0 (0.0) & 0.0 (0.0) & 0.0 (0.0) \\
                          & & RLS-K-Exp3  &   0.1 (0.0) & 0.1 (0.0) & 0.1 (0.0) & 0.1 (0.0) & 0.1 (0.0) & 0.1 (0.0) & 0.1 (0.0) & 0.1 (0.0) & 0.1 (0.0)  \\ \cmidrule(lr){2-12}
& \multirow{4}{*}{Setting 2} & Random                      &  0.0 (0.0) & 0.0 (0.0) & 0.0 (0.0) & 0.0 (0.0) & 0.0 (0.0) & 0.0 (0.0) & 0.0 (0.0) & 0.0 (0.0) & 0.0 (0.0)  \\
                          & & APG-Exp3                   & 0.0 (0.0) & 0.0 (0.0) & 0.0 (0.0) & 0.0 (0.0) & 0.0 (0.0) & 0.0 (0.0) & 0.0 (0.0) & 0.0 (0.0) & 0.0 (0.0)  \\
                          & & K-Exp3      &  0.0 (0.0) & 0.0 (0.0) & 0.0 (0.0) & 0.0 (0.0) & 0.0 (0.0) & 0.0 (0.0) & 0.0 (0.0) & 0.0 (0.0) & 0.0 (0.0)  \\
    Per-round             & & RLS-K-Exp3         &  0.1 (0.0) & 0.1 (0.0) & 0.1 (0.0) & 0.1 (0.0) & 0.1 (0.0) & 0.1 (0.0) & 0.1 (0.0) & 0.1 (0.0) & 0.1 (0.0)  \\ \cmidrule(lr){2-12}
 time cost [s] & \multirow{4}{*}{Setting 3} & Random                      & 0.0 (0.0) & 0.0 (0.0) & 0.0 (0.0) & 0.0 (0.0) & 0.0 (0.0) & 0.0 (0.0) & 0.0 (0.0) & 0.0 (0.0) & 0.0 (0.0)  \\
                          & & APG-Exp3                    &   0.0 (0.0) & 0.0 (0.0) & 0.0 (0.0) & 0.0 (0.0) & 0.0 (0.0) & 0.0 (0.0) & 0.0 (0.0) & 0.0 (0.0) & 0.0 (0.0)  \\
                          & & K-Exp3      &  \multicolumn{9}{c}{39.5 (0.0)}   \\
                          & & RLS-K-Exp3  & 0.9 (0.0) & 1.1 (0.0) & 1.3 (0.0) & 0.2 (0.0) & 0.4 (0.0) & 0.6 (0.0) & 0.5 (0.0) & 0.7 (0.0) & 1.1 (0.0)    \\ \cmidrule(lr){2-12}
& \multirow{4}{*}{Setting 4} & Random                      &   0.0 (0.0) & 0.0 (0.0) & 0.0 (0.0) & 0.0 (0.0) & 0.0 (0.0) & 0.0 (0.0) & 0.0 (0.0) & 0.0 (0.0) & 0.0 (0.0) \\
                          & & APG-Exp3                    &  0.0 (0.0) & 0.0 (0.0) & 0.0 (0.0) & 0.0 (0.0) & 0.0 (0.0) & 0.0 (0.0) & 0.0 (0.0) & 0.0 (0.0) & 0.0 (0.0)  \\
                          & & K-Exp3      &  \multicolumn{9}{c}{40.6 (0.0)}   \\
                          & & RLS-K-Exp3  &        1.5 (0.0) & 2.2 (0.0) & 3.4 (0.0) & 0.3 (0.0) & 0.5 (0.0) & 1.0 (0.0) & 0.7 (0.0) & 1.4 (0.0) & 2.5 (0.0) \\
    \bottomrule
\end{tabular}
    \label{tab:results}
\end{table}
\end{landscape}

\end{document}